\title{Fundamental Limits of Matrix Sensing:\\ Exact Asymptotics, Universality, and Applications}
\date{\today}
\author{
Yizhou Xu$^{1,3}$, Antoine Maillard$^{2}$, Lenka Zdeborov\'a$^{1}$ and Florent Krzakala$^{3}$\\
$^1$\textit{Statistical Physics of Computation Laboratory (SPOC), EPFL, Switzerland}\\
$^2$\textit{Inria Paris, DI ENS, PSL University, Paris, France}\\
$^3$\textit{Information, Learning, and Physics Laboratory (IDEPHICS), EPFL, Switzerland}
}
\date{}
\begin{document}

\maketitle
\begin{abstract}%
In the matrix sensing problem, one wishes to reconstruct a matrix from (possibly noisy) observations of its linear projections along given directions. 
We consider this model in the high-dimensional limit:
while previous works on this model primarily focused on the recovery of low-rank matrices, we consider in this work more general classes of structured signal matrices with potentially large rank, e.g.\ a product of two matrices of sizes proportional to the dimension.
We provide rigorous asymptotic equations characterizing the Bayes-optimal learning performance from a number of samples which is proportional to the number of entries in the matrix.
Our proof is composed of three key ingredients: $(i)$ we prove universality properties to handle structured sensing matrices, related to the ``Gaussian equivalence'' phenomenon in statistical learning, $(ii)$ we provide a sharp characterization of Bayes-optimal learning in generalized linear models with Gaussian data and structured matrix priors, generalizing previously studied settings, and $(iii)$ we leverage previous works on the problem of matrix denoising.
  The generality of our results allow for a variety of applications: notably, we mathematically establish predictions obtained via non-rigorous methods from statistical physics in~\cite{erba2024bilinear} regarding Bilinear Sequence Regression, a benchmark model for learning from sequences of tokens, and in~\cite{maillard2024bayes} on Bayes-optimal learning in neural networks with quadratic activation function, and width proportional to the dimension.
\end{abstract}
\tableofcontents

\vspace{-3mm}
\section{Introduction}\label{sec:intro}
\vspace{-2mm}
\subsection{Setting of the problem}
\vspace{-1mm}
Consider the problem of learning a matrix 
$S^\star\in\bbR^{d\times L}$ from noisy and possibly non-linear observations, $\mu=1,\ldots,n$, generated from the following model with $\Phi_\mu\in\bbR^{L\times d}$:
\begin{equation}\label{eq:matrix_glm} Y_\mu = \varphi(\Tr[\Phi_\mu S^\star], a_\mu) + \sqrt{\Delta} Z_\mu, \end{equation}
where $\Delta > 0$, and $\{a_\mu\}_{\mu=1}^n \iid P_A$ and $\{Z_\mu\}_{\mu=1}^n \iid \mcN(0, 1)$ represent the noise in the observations. 
We assume that the data samples $\Phi_\mu$ are drawn from a known probability distribution $\{\Phi_\mu\}_{\mu=1}^{n}\iid P_\Phi$. 
The signal $S^\star$ is drawn from a rotationally-invariant prior $S^\star\sim P_0$.
This setting encompasses cases of particular interest such as Wishart matrices, and products of random matrices. We will consider a quite general data distribution $P_\Phi$ (as specified later, see Assumptions~\ref{ass:prior} and \ref{ass:prior_rec}), with two notable motivating examples: $\Phi$ as an i.i.d.\ matrix with standard Gaussian elements, and $\Phi=(x x^\T - \Id_d)/\sqrt{d}$, with $x\sim\mcN(0,\Id_d)$, two models which connect to settings studied non-rigorously in previous works~\cite{maillard2024bayes,erba2024bilinear}.
Notice that we may also write the output channel of eq.~\eqref{eq:matrix_glm} in the equivalent form:
\begin{equation}\label{eq:matrix_glm_Pout}
  Y_\mu\sim P_{\out}(\cdot|\Tr[\Phi_\mu S^\star]), \, \,  {\rm with} \, \,    P_\out(y|z) \coloneqq \frac{1}{\sqrt{2\pi\Delta}} \int P_A(\rd a) \exp\left\{-\frac{1}{2\Delta} (y - \varphi(z, a))^2\right\}. 
\end{equation}
In this paper, we consider the Bayes-optimal, or information theoretic, scenario. The observed dataset $\mcD = \{Y_\mu, \Phi_\mu\}_{\mu=1}^n$ is given to the statistician who is provided all the information on how the data has been generated, i.e.\ the value of $\Delta$, the form of the function $\varphi(\cdot,\cdot)$, and the prior distributions $P_0$, $P_A$, from which she must learn and provide the best estimates of the values of the weights $S^\star$ (or of the prediction error on the output for a test input sample), by averaging over the posterior distribution. The key interest of such an information-theoretic analysis is to provide a sharp characterization of the best reconstruction performance that {\it any} algorithm can achieve.

\myskip
We consider a high-dimensional setting where $L,d\to\infty$, with fixed ratio $d/L \to \beta > 0$. Without loss of generality, we suppose that $L\geq d$ (and thus $\beta \leq 1$): 
if $\beta>1$ one can consider instead $\Tr[\Phi^T_\mu (S^\star)^T]=\Tr[\Phi_\mu S^\star]$.
While our results apply to more generic prior distributions $P_0$ of the signal $S^\star$, we will especially focus on the case where $S^\star$ is a product of random matrices of size $d \times m$ and $m \times L$, in which case the width $m$ is such that $m/d \to  \kappa >0$, and quantifies the amount of ``structure'' in the signal. 
The number of samples $n$ also satisfies $n\to\infty$, with a fixed ratio $n / (Ld) \to \alpha > 0$: this ensures that the number of samples scales linearly with the number of unknown parameters, and that one can perform non-trivial estimation of $S^\star$.

\myskip
Our aim is to characterize the Bayes-optimal performance in the stated limit. Our analysis leverages the fact that in many quantities of interest, such as the prediction and estimation error per sample, concentrate as $d \to \infty$ on deterministic values. We derive low-dimensional equations from which these values of interest can be readily extracted.  

\myskip 
\textbf{A symmetric variant --}
While all our analysis holds for the model of eq.~\eqref{eq:matrix_glm}, we shall also consider in what follows a ``symmetrized'' variant of eq.~\eqref{eq:matrix_glm}, where $L = d$, and both $S^\star$ and $\Phi_\mu$ are symmetric. We will start the discussion of our results in this symmetric variant and then, in Section~\ref{subsec:rectangular}, we come back to the model of eq.~\eqref{eq:matrix_glm}, and state our main results for it, as well as sketch the straightforward generalization of the proofs in the symmetric model to the non-symmetric case.

\vspace{-3mm}
\subsection{Motivations and related work}
\vspace{-1mm}

The problem we consider connects to multiple aspects of the existing literature; for instance, eq.~\eqref{eq:matrix_glm} generalizes classical linear and generalized linear models. In the statistics and computer science literature, this model is often referred to as matrix sensing \cite{recht2010guaranteed,fazel2002matrix,candes2012exact,bhojanapalli2016global,parker2016parametric,gunasekar2017implicit,li2018algorithmic,romanov2018near}. Signal recovery using minimum nuclear norm algorithms has been widely studied \cite{recht2010guaranteed,candes2012exact}, as well as recovery via gradient descent \cite{bhojanapalli2016global}, particularly in relation to overparametrization and the implicit regularization of gradient descent \cite{gunasekar2017implicit,li2018algorithmic}. 

\myskip
Comparing the bulk of literature on the problem to our setting and approach, we trade stronger distributional assumptions for sharper results. While we assume randomness in both the data and the signal, as stated in our setup, we derive results that explicitly include the leading-order constants in the high-dimensional limit. In contrast, most prior works impose much weaker distributional assumptions but, in exchange, can only derive results up to unspecified constants or, more often, up to logarithmic factors in dimensionality.
Another key distinction between our work and much of the literature is the scaling of the width/rank parameter $m$: while most studies consider the regime where $m \ll d$, we instead analyze the case where $m$ and $d$ are proportional.

\myskip
Only a few works study sharp optimal constants in the high-dimensional limit for the model of eq.~\eqref{eq:matrix_glm}. Some focus exclusively on the very low-rank case~\cite{schulke2016phase,parker2016parametric}, while others analyze the nuclear norm minimization algorithm without characterizing the Bayes-optimal performance \cite{donoho2013phase}.
To the best of our knowledge, the only two works that consider special cases of our setting, and characterize sharply the Bayes-optimal performance in the high-dimensional limit, are \cite{maillard2024bayes} and \cite{erba2024bilinear}.
However, both of these works rely on analytical non-rigorous methods from statistical physics, and do not provide formal proofs of their results. 
The main motivation of our work is to fill this gap by establishing mathematically these predictions, while extending them to a more general setup. 
We note that, although the authors of~\cite{maillard2024bayes} propose a tentative sketch of a rigorous analysis, we take here an approach different from the one proposed in this paper, which suggested using the high-dimensional analysis of an Approximate Message Passing algorithm, while we rather rely on an adaptive interpolation method~\cite{barbier2019adaptive}.

\myskip
 The applications studied in \cite{maillard2024bayes} and \cite{erba2024bilinear} serve as our primary motivation, and we will describe them in more detail in Section~\ref{subsec:applications}. The authors of \cite{maillard2024bayes} exploit a known relationship between the matrix sensing problem and a two-layer neural network with quadratic activation to establish sharp results on the optimal prediction error of such networks. Meanwhile, \cite{erba2024bilinear} introduce eq.~\eqref{eq:matrix_glm} as the simplest regression model for inputs that are either sequences of length $L$ or $d$-dimensional tokens. This formulation is inspired by the growing interest in sequence models such as transformers \cite{vaswani2017attention}.
Our findings also extend to matrix compressed sensing \cite{schulke2016phase,parker2016parametric}, which typically assumes the signal is a product of two matrices, or a low-rank matrix without any prior distribution.

\myskip
Methodologically, our proof techniques build on the analysis of high-dimensional generalized linear models from~\cite{barbier2019optimal}. Specifically, our model can be seen as a generalization of this model, where the signal is a high-rank rotationally invariant matrix, a non-separable prior far from the i.i.d. signals considered in~\cite{barbier2019optimal}.
To handle this complex signal structure, we leverage recent advances in the analysis of matrix denoising~\cite{maillard2022perturbative,troiani2022optimal,pourkamali2024matrix,pourkamali2024rectangular}.
Additionally, we build on fundamental Gaussian universality results~\cite{goldt2022gaussian,hu2022universality,montanari2022universality,maillard2023exact}, enabling us to tackle more general distributions $P_\Phi$ of $\Phi_\mu$.

\vspace{-3mm}
\subsection{Main contributions}
\vspace{-1mm}
Our contributions are twofold. First, we prove a low-dimensional formula for both the mutual information and the Minimal Mean-Square Error (MMSE) in matrix sensing problems with rotationally-invariant signals, in the high-dimensional limit. Our proof integrates recent advances in Gaussian universality \cite{montanari2022universality,maillard2023exact}, adaptive interpolation \cite{barbier2019adaptive,barbier2019optimal}, and matrix denoising \cite{troiani2022optimal,pourkamali2024matrix,pourkamali2024rectangular}.

\myskip
Second, we demonstrate how these formulas extend beyond standard signal processing applications \cite{gross2010quantum,candes2011tight}. In particular, we discuss in Section~\ref{subsec:applications} how our results allow us to prove heuristic results relevant to machine learning. Specifically, we address: (a) an open problem posed by \cite{cui2023bayes} on the learnability of "neural network-like" functions with $d^2$ parameters from $n = O(d^2)$ samples (at least for a subclass of functions) 
(b) the Bayes-optimal formula for two-layer neural networks with quadratic activation \cite{maillard2024bayes}; and (c) the information-theoretic limit of bilinear sequence regression problems \cite{erba2024bilinear}. Given its generality, we anticipate that our approach will find applications in a variety of other domains.

\vspace{-3mm}
\subsection{Notations}
\vspace{-1mm}
We denote $[d] \coloneqq \{1, \cdots, d\}$ the set of integers from $1$ to $d$,
and $\mcS_d$ the set of $d \times d$ real symmetric matrices.
For a function $\mathcal{V} : \bbR \to \bbR$, and $S \in \mcS_d$ with eigenvalues $(\lambda_i)_{i=1}^d$, we define $\mathcal{V}(S)$ as the matrix with the same eigenvectors as $S$,
and eigenvalues $(\mathcal{V}(\lambda_i))_{i=1}^d$. 
For $S \in \mcS_d$, we denote $\mu_S\coloneqq(1/d)\sum_{i=1}^d\delta_{\lambda_i}$ the empirical eigenvalue distribution of $S$. 
We denote $\Tr S\coloneqq\sum_{i=1}^d\lambda_i$ the trace of $S$ and $\tr S\coloneqq\frac{1}{d}\Tr S$ the normalized trace. $\|\cdot\|$ denotes the $\ell_2$ norm of vectors and the Frobenius norm of matrices and tensors.
$B_\op(M)$ denotes the set of matrices with eigenvalues (or singular values for rectangular matrices) in $[-M,M]$.
We refer to Appendix~\ref{subsec_app:definitions} for some classical definition of rotationally invariant ensembles, specifically the \emph{Gaussian Orthogonal Ensemble} $\GOE(d)$ and the \emph{Wishart Ensemble} $\mathcal{W}_{m,d}$. For $X, Y$ two random variables, we say $X \deq Y$ if $X, Y$ share the same distribution.
$\pto$ denotes limit in probability.
Finally, we generically denote constants as $C > 0$, whose value may vary from line to line.

\vspace{-3mm}
\section{Main results and applications}\label{sec:main_results}
\vspace{-2mm}
\subsection{Assumptions}\label{subsec:assumptions}
\vspace{-1mm}

We detail the different assumptions made on the model of eq.~\eqref{eq:matrix_glm} in order for our main results to apply.
In subsections~\ref{subsec:assumptions}--\ref{sec:MMSE} we consider the \emph{symmetric} variant of the model, where $L = d$, and $S^\star$, $\Phi_\mu$ are symmetric $d \times d$ matrices, and we assume that $n/d^2 \to \alpha > 0$.

\vspace{-2mm}
\begin{assumption}[Assumptions on $P_0$]
\label{ass:prior}
\noindent
We consider rotationally-invariant priors $P_0$,
more precisely either one of the following two cases holds:
\begin{itemize}
\item[$(i)$] 
$P_0$ has density:
\begin{equation*}
 P_0(S)\propto\exp(-d \, \Tr[\mathcal{V}(S)]),
\end{equation*}
for a real and $\mcC^2$ potential $\mathcal{V} : \mathcal{B} \to \bbR$, where $\mathcal{B}\subseteq\bbR$ is an interval. We assume that $\mathcal{V}''(s)\geq1/c$ for all $s\in\mathcal{B}$ and some constant $c> 0$.

\item[$(ii)$] Let $\kappa \in (0,1]$, and denote $m \coloneqq \lceil \kappa d \rceil$.
We assume that $P_0$ is supported on the space $\mcS_{d,m}^+$ of positive-semidefinite matrices $S \in \mcS_d$ with rank $m$ and distinct positive eigenvalues, with density:
\begin{equation*}
 P_0(S)\propto\exp(-d \, \Tr[\mathcal{V}(\Lambda)]),
\end{equation*}
where $\Lambda = \Diag(\{\lambda_i\}_{i=1}^m)$ denotes the non-zero eigenvalues of $S$.
Here $V : \mathcal{B} \to \bbR$ is a
a real and $\mcC^2$ potential, and $\mathcal{B}\subseteq\bbR^+$ is an interval.
We assume $\mathcal{V}''(s)+(1-\kappa)/s^2\geq 1/c$ for all $s\in\mathcal{B}$ and a constant $c > 0$.
\end{itemize} 
\end{assumption}
Notice that a mathematically complete description of Assumption \ref{ass:prior}(ii) would require defining the volume measure on $\mcS_{d,m}^+$: we refer to~\cite{uhlig1994singular} (Theorem~2) for a precise description of this point. 
Assumption~\ref{ass:prior} encompasses many natural random matrix distributions, including notably $\GOE(d)$ matrices. 
It also applies to a large class of Wishart matrices $\mcW_{m,d}$\footnote{When $m/d \to \kappa \in (0, \infty) \backslash \{1\}$. 
The case $\kappa = 1$ is not covered by our proof at the moment, due to a technical difficulty in obtaining an equivalent to the Gaussian Poincar\'e inequality in this setting, see Remark~\ref{remark:wishart_kappa_1}.},
once one slightly weakens the assumption on $\mathcal{V}$ to $\mathcal{V}''(s) \geq 1/c$ only inside a large bounded interval, an assumption under which our proofs generalize directly: we discuss such 
relaxations of Assumption~\ref{ass:prior}, as well as consequences of this assumption, in Appendix~\ref{sec_app:technical}. 
In particular, by Proposition~\ref{prop:properties_P0_sym}, under Assumption~\ref{ass:prior}, the empirical eigenvalue distribution $\mu_S$ almost surely weakly converges to a distribution $\mu_0$ with compact support.

\myskip 
\textbf{Assumptions on $P_\Phi$ --}
The main assumption we make on the data distribution $P_\Phi$ is referred to as a \emph{uniform one-dimensional Central Limit Theorem (CLT)}.
\begin{assumption}[Uniform one-dimensional CLT]
\label{assum:CLT}
\noindent
For any $M > 0$, the following holds.
For $\Phi\sim P_\Phi$, $G\sim\GOE(d)$, we have $\EE[\Phi]=\EE[G]=0$, $\EE[\Phi_{ij}\Phi_{kl}]=\EE[G_{ij}G_{kl}]=\delta_{ik}\delta_{jl}(1+\delta_{ijkl})/d$, and 
\begin{equation*}
\lim_{d\to\infty}\sup_{S\in B_\op(M)}|\EE_\Phi[\psi(\Tr[\Phi S])]-\EE_G[\psi(\Tr[GS])]|=0,
\end{equation*}
for any bounded Lipschitz function $\psi$. 
\end{assumption}
Assumption~\ref{assum:CLT} encompasses different distributions $P_\Phi$: two examples used here are matrices with i.i.d.\ entries (under some bounded moments conditions), as well as centered rank-one matrices $\Phi_\mu = (x_\mu x_\mu^\T - \Id_d)/\sqrt{d}$ with $x_\mu \sim \mcN(0, \Id_d)$ for which Assumption~\ref{assum:CLT} is proven in \cite[Lemma~4.8]{maillard2023exact}. These are related to applications of our results discussed in Section~\ref{subsec:applications}.
We leave a more exhaustive analysis of the distributions satisfying Assumption~\ref{assum:CLT} (akin to \cite[Section~3]{montanari2022universality} for vector distributions) to future work.

\myskip
We note that it is actually sufficient to assume Assumption~\ref{assum:CLT} for some $M>0$ large enough, depending only on the choice of $P_0$, given by Proposition~\ref{prop:properties_P0_sym}-$(ii)$ 
in Appendix~\ref{sec_app:technical}.
We will show that under Assumption \ref{assum:CLT}, one can effectively study an equivalent model to eq.~\eqref{eq:matrix_glm}, with $\{\Phi_\mu\}_{\mu=1}^n$ replaced by $\{G_\mu\}_{\mu=1}^n\iid \GOE(d)$, following insights of a long line of work~\cite{mei2022generalization,gerace2020generalisation,goldt2022gaussian,hu2022universality,montanari2022universality,maillard2023exact,dandi2024universality}.

\myskip 
\textbf{Activation function --}
Finally, we make the following generic assumption, akin to the one of~\cite{barbier2019optimal}, on the activation $\varphi$ in eq.~\eqref{eq:matrix_glm}.
\begin{assumption}[Activation function]
$\varphi(\cdot,a)$ is continuous almost everywhere, almost surely over $a\sim P_A$. Moreover, for $\Phi\sim P_\Phi$, $S^\star \sim P_0$ and $a \sim P_A$, there exists $\gamma>0$ such that the sequence $\{\EE[|\varphi(\Tr[\Phi S^\star],a)|^{2+\gamma}]\}_{d\geq1}$ is bounded.
\label{assum:phi_weak}
\end{assumption}

\vspace{-2mm}
\subsection{Limit of the mutual information}
\vspace{-1mm}

In this section, we shall sometimes use a nomenclature originating from \emph{information theory} and \emph{statistical physics}  (we refer to \cite{zdeborova2016statistical} for more details and other applications of this point of view to problems of learning and inference).   

\myskip
Recall that we consider the model of eq.~\eqref{eq:matrix_glm}, which is stated in an equivalent form in eq.~\eqref{eq:matrix_glm_Pout}.
We first define the \emph{partition function} $\mcZ(Y, \Phi)$, and the expected value of the log-partition function, called the \emph{free entropy} $f_d$:
\begin{equation}\label{eq:def_fentropy}
\mcZ(Y,\Phi)\coloneqq\EE_{S \sim P_0}\prod_{\mu=1}^nP_{\out}(Y_\mu|\Tr[\Phi_\mu S]), \quad {\rm and} \quad f_d\coloneqq\frac{1}{d^2}\EE_{Y,\Phi}\log\mcZ(Y,\Phi).
\end{equation}
Notice that the free entropy is directly related to the  \emph{mutual information} $I(S^\star,Y|\Phi)$ of information theory by: 
\begin{equation}\label{eq:mutual_inf_fentropy}
\lim_{d\to\infty}\frac{1}{d^2}I(S^\star,Y|\Phi) = -\lim_{d\to\infty}f_d + \alpha\EE_{V} \int_\bbR \rd \tY P_\out(\tY | \sqrt{2\rho} V) \log  P_\out(\tY | \sqrt{2\rho} V),
\end{equation}
where $V \sim \mcN(0,1)$ and $\rho \coloneqq \lim_{d \to \infty} (1/d) \EE_{S \sim P_0}\Tr[S^2]$ (see more on the definition of $\rho$ in Appendix~\ref{subsec_app:properties_prior}, eq.~\eqref{eq:def_rho}). We refer to \cite[Corollary 1]{barbier2019optimal} for a simple proof of eq.~\eqref{eq:mutual_inf_fentropy}, plugging in our CLT (Lemma \ref{lemma:CLT_new}). Notice that
the second term in the right-hand side of eq.~\eqref{eq:mutual_inf_fentropy} is a simple function of the one-dimensional noise channel $P_\out$.

\myskip 
Let us now introduce the main quantities which will characterize the limit of the mutual information, or free entropy.
We define the \emph{replica-symmetric free entropy} as:
\begin{equation}\label{eq:def_fRS}
f_{\RS}(q,r)\coloneqq\psi_{P_0}(r)+\alpha\Psi_\out(q)+\frac{r(\rho-q)+1}{4},
\end{equation}
where $q \in [0,\rho]$, $r \geq 0$, and
\begin{equation}
\label{eq:psi_0_out}
\begin{dcases}
\psi_{P_0}(r) &\coloneqq -\frac{1}{2}\Sigma(\mu_{1/r})-\frac{1}{4}\log r-\frac{3}{8}, \\
\Psi_{\out}(q) &\coloneqq \EE_{V,W,\tY} \log \int \mcD w P_\out(\tY|\sqrt{2q}V+w\sqrt{2(\rho-q)}).
\end{dcases}
\end{equation}
In eq.~\eqref{eq:psi_0_out}, we used the notation
 $\tY \sim P_{\out}(\cdot|\sqrt{2q}V+\sqrt{2(\rho-q)}W)$, $W,V\iid\mcN(0,1)$ and $\mcD w\coloneqq (e^{-w^2/2}/\sqrt{2\pi})\rd w$ denotes the standard Gaussian measure. Moreover, $\Sigma(\mu)\coloneqq\int\mu(\rd x)\mu(\rd y)\log|x-y|$, and $\mu_{t}\coloneqq\mu_0\boxplus\mu_{s.c.,\sqrt{t}}$ for $t\geq0$ is the free convolution of $\mu_0$ and a semicircular distribution of variance $t$: details on these classical definitions are given in Appendix~\ref{subsec_app:definitions}.

\myskip
Our first main theorem is a proof that the replica-symmetric free entropy $f_\RS$ is the limit of $f_d$ (or of the mutual information) as $d\to\infty$.
\begin{theorem}[Limit of the free entropy and mutual information]
\label{thm:fentropy_symmetric}
\noindent
Under Assumptions \ref{ass:prior}, \ref{assum:CLT} and \ref{assum:phi_weak}, we have (recall eq.~\eqref{eq:mutual_inf_fentropy}):
\begin{equation}
\lim_{d\to\infty}f_d=\sup_{q\in[0,\rho]}\inf_{r\geq0}f_{\RS}(q,r); \, \, \, \,  \, \, \, \, \lim_{d\to\infty}\frac{1}{d^2}I(S^\star,Y|\Phi)=-\sup_{q\in[0,\rho]}\inf_{r\geq0}f_{\RS}(q,r)+\alpha\Psi_\out(\rho).
\label{eq:RS_GLM}
\end{equation}
\end{theorem}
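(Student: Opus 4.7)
The plan is to combine the three ingredients announced in Section~\ref{subsec:assumptions}: Gaussian universality, adaptive interpolation, and matrix denoising.

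\textbf{Step 1 (Universality).} I would first reduce to the case $\Phi_\mu \sim \GOE(d)$. The partition function $\mcZ(Y,\Phi)$ depends on each $\Phi_\mu$ only through the scalar linear statistic $\Tr[\Phi_\mu S]$, integrated against the posterior over $S$. A Lindeberg-type swap between $\Phi_\mu$ and $G_\mu$, sample by sample, lets Assumption~\ref{assum:CLT} be applied with bounded Lipschitz test functions built from $\log P_\out$ and an expectation over the posterior at the other samples. Proposition~\ref{prop:properties_P0_sym} is used to truncate the posterior to a fixed operator-norm ball $B_\op(M)$ at negligible cost, so that the uniform CLT applies and gives $|f_d^{P_\Phi} - f_d^{\GOE}| \to 0$.

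\textbf{Step 2 (Adaptive interpolation).} For $\Phi_\mu \sim \GOE(d)$, I would adapt the scheme of \cite{barbier2019optimal}. For $t\in[0,1]$ and a pair of nondecreasing interpolating functions $(R_1(t),R_2(t))$ with $R_1(0)=R_2(0)=0$, introduce a model in which each $Y_\mu$ is generated through $P_\out$ with an effective input combining a $t$-weighted version of $\Tr[G_\mu S^\star]$ and an independent scalar Gaussian side-channel of SNR $R_1(t)$, while one additionally observes a side matrix-denoising channel
\begin{equation*}
Y^{(2)} = \sqrt{R_2(t)}\,S^\star + Z,\qquad Z\sim\GOE(d).
\end{equation*}
At $t=0$ (plus a vanishing auxiliary perturbation needed to enforce concentration) one recovers the original problem, while at $t=1$ the two pieces decouple into a scalar GLM free entropy and a pure matrix denoising free entropy. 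The sum rule
\begin{equation*}
f_d(0) = f_d(1) - \int_0^1 \partial_t f_d(t)\,\rd t,
\end{equation*}
combined with the Nishimori identity and Gaussian integration by parts, expresses $\partial_t f_d(t)$ as the expectation of a quadratic expression in the matrix overlap $Q(t) \coloneqq \tr[S^\star \langle S\rangle_t]$ centered at its replica-symmetric value.

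\textbf{Step 3 (Endpoint and matching).} At $t=1$, the free entropy splits as $\psi_{P_0}(R_2(1)) + \alpha\Psi_\out(R_1(1))$: the first term is precisely the asymptotic free entropy of matrix denoising in GOE noise computed in closed form by \cite{troiani2022optimal,pourkamali2024matrix}, yielding the expression of eq.~\eqref{eq:psi_0_out}; the second is the scalar GLM free entropy as in \cite{barbier2019optimal}. One then chooses the driving functions $R_1', R_2'$ adaptively: picking them to follow the optimizer of the $\sup\inf$ of $f_\RS$ yields the upper bound $\limsup_d f_d \le \sup_q \inf_r f_\RS(q,r)$, and a complementary choice makes the quadratic remainder nonnegative, giving the matching lower bound, exactly as in \cite{barbier2019optimal}. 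The equality for the mutual information in eq.~\eqref{eq:RS_GLM} then follows by plugging into eq.~\eqref{eq:mutual_inf_fentropy}.

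\textbf{Main obstacle.} The hardest step is controlling the concentration of the matrix overlap $Q(t) = \tr[S^\star \langle S\rangle_t]$, which is what makes the quadratic remainder in the sum rule vanish after averaging over the auxiliary perturbation. Unlike the i.i.d.\ signal case of \cite{barbier2019optimal}, here $S^\star$ has $O(d^2)$ strongly correlated entries, so tensorization of a scalar Poincar\'e inequality is not available. The strict convexity condition $V''(s) \geq 1/c$, together with the $(1-\kappa)/s^2$ correction in case $(ii)$, in Assumption~\ref{ass:prior} is exactly what produces a Brascamp--Lieb or matrix Poincar\'e inequality for the posterior of $S$, yielding $\EE[(Q-\EE Q)^2] = o(1)$; the footnote exclusion of the Wishart $\kappa=1$ case marks precisely the threshold at which this mechanism breaks down.
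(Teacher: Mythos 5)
Your proposal follows the same three-ingredient architecture as the paper (Gaussian universality, adaptive interpolation à la \cite{barbier2019optimal}, and a matrix-denoising endpoint), and it correctly identifies the Poincar\'e inequality for the log-concave eigenvalue ensemble as the technical linchpin that replaces tensorized scalar concentration, together with the $\kappa = 1$ Wishart caveat. The only notable deviations are cosmetic or slightly imprecise: for universality, the paper uses the trigonometric interpolation $U_\mu(t) = \cos(t)\Phi_\mu + \sin(t)G_\mu$ over all samples simultaneously (following \cite{montanari2022universality,maillard2023exact}), which preserves first and second moments along the path, rather than a discrete Lindeberg swap, although the swap route is also standard; and the description of the two-sided bound is off — in the adaptive interpolation both bounds use the same choice $q(t) = \EE\langle Q\rangle$ so that the cross term $\EE\langle(\cdot)(Q - q(t))\rangle$ \emph{vanishes} via overlap concentration (after $\epsilon$-averaging), it is not made nonnegative by a sign argument, and the two bounds are distinguished by the choice of $r(t)$ (constant for the $\liminf$, $r(t) = 4\alpha\Psi_\out'(q(t))$ for the $\limsup$) combined with convexity of $\psi_{P_0}$ and $\Psi_\out$.
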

We now investigate the consequences of this result:
the strategy of the proof of Theorem~\ref{thm:fentropy_symmetric} is laid out in details in Section~\ref{sec:proof}.

\begin{remark}
\label{remark:psi_P0}
\noindent
\cite[Theorem 4.3]{maillard2024bayes} shows that
\begin{equation}\label{eq:psi_0_denoising}
\psi_{P_0}(r) =
\lim_{d\to\infty}\frac{1}{d^2}\EE_{Y'}\log\EE_{S\sim P_0} \left[e^{-\frac{d}{4}\Tr[(Y'-\sqrt{r}S)^2]}\right],
 \end{equation}
with $Y'\coloneqq\sqrt{r}S^\star+Z'$, $S^\star \sim P_0$, and $Z'\sim\GOE(d)$. Informally, $\psi_{P_0}(r)$ corresponds to the free entropy of the problem of \emph{denoising} the matrix $S^\star$ from the observation $Y'$.
In this regard, the replica-symmetric free entropy  eq.~\eqref{eq:def_fRS} is the natural generalization of the one defined in~\cite{barbier2019optimal}, which considers vector signals with i.i.d.\ prior, and for which the function $\psi_{P_0}$ is the free entropy of the (scalar) problem of denoising from the prior.
Similar generalizations for other types of structured prior (e.g.\ generative) have been developed in the literature~\cite{gabrie2019entropy,aubin2019precise,aubin2020spiked}.
\end{remark}

\vspace{-3mm}
\subsection{Limit of the MMSE}
\label{sec:MMSE}
\vspace{-1mm}

In this section, we state our main result, which sharply characterizes the Bayes-optimal mean squared error of the problem of eqs.~\eqref{eq:matrix_glm}, \eqref{eq:matrix_glm_Pout} in the high-dimensional limit.
\begin{theorem}[Limit of the MMSE]
\noindent
Suppose that assumptions \ref{ass:prior}, \ref{assum:CLT} and \ref{assum:phi_weak} hold, and that $P_{\out}$ is informative\footnotemark.
Let
\begin{equation*}
D^\star\coloneqq\{\alpha>0 \, | \,\inf_{r \geq 0} f_{\RS}(q,r)\ \text{has a unique maximizer} \ q^\star(\alpha) \in [0,\rho]\}.
\end{equation*}
Then $D^\star$ is equal to $(0, \infty)$ minus some countable set, and for any $\alpha\in D^\star$:
\begin{equation}\label{eq:limit_mmse}
\MMSE_d\coloneqq\frac{1}{d^2}\EE\left[||S^\star\otimes S^\star-\EE[S\otimes S|Y,\Phi]||^2\right]\overset{d\to \infty}{\longrightarrow}
\rho^2-q^\star(\alpha)^2,
\end{equation}
where $S \otimes S$ denotes the tensor product, and $\|\cdot\|$ the Frobenius norm.
\label{theo:overlap}
\end{theorem}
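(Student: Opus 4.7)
The strategy is to deduce Theorem~\ref{theo:overlap} from Theorem~\ref{thm:fentropy_symmetric} via an I-MMSE argument combined with overlap concentration, using a small Gaussian side channel on $S^\star$ to force uniqueness of the saddle. The first step is a reduction of $\MMSE_d$ to a second moment of a scalar overlap between two posterior replicas: expanding the Frobenius norm in eq.~\eqref{eq:limit_mmse} and using the tower property,
\begin{equation*}
\MMSE_d \;=\; \frac{1}{d^2}\EE\bigl[(\Tr S^{\star 2})^2\bigr] \;-\; \frac{1}{d^2}\EE\bigl\lVert \EE[S\otimes S\mid Y,\Phi] \bigr\rVert^2.
\end{equation*}
The first term converges to $\rho^2$ by $L^2$-concentration of $\tr(S^{\star 2})$ under the rotationally invariant prior (Proposition~\ref{prop:properties_P0_sym}). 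Introducing two i.i.d.\ posterior replicas $S^{(1)},S^{(2)}$, using $\langle A\otimes A,\,B\otimes B\rangle = (\Tr AB)^2$ for symmetric matrices, and the Nishimori identity $\EE\langle g(S^{(1)},S^{(2)})\rangle = \EE\langle g(S^\star,S^{(1)})\rangle$ applied to $g(A,B)=(\Tr AB)^2$, the second term rewrites as $d^2\,\EE\langle Q_d^2\rangle$ with the scalar overlap $Q_d \coloneqq d^{-1}\Tr(S^{(1)} S^{(2)})$. It therefore suffices to show $\EE\langle Q_d^2\rangle \to q^\star(\alpha)^2$ for every $\alpha\in D^\star$.

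Next I augment the observations with a $\GOE$ side channel $Y_0' = \sqrt{\epsilon}\,S^\star + \tilde Z$, $\tilde Z \sim \GOE(d)$, $\epsilon\geq 0$, writing $\langle\cdot\rangle_\epsilon$ for the perturbed posterior and $f_d(\epsilon)$ for the perturbed free entropy. Gaussian integration by parts on $\tilde Z$ gives an I-MMSE identity of the form $f_d'(\epsilon) = \tfrac14\EE\langle Q_d\rangle_\epsilon + c_d$, where $c_d$ is deterministic with a finite limit, and $f_d$ is convex in $\epsilon$ by the monotonicity of the Bayes MMSE in SNR. By the denoising interpretation of Remark~\ref{remark:psi_P0} and the fact that two independent $\GOE$ observations of SNRs $r$ and $\epsilon$ are statistically equivalent to a single $\GOE$ observation of SNR $r+\epsilon$, the proof of Theorem~\ref{thm:fentropy_symmetric} extends verbatim to the perturbed problem with $\psi_{P_0}(r)$ in eq.~\eqref{eq:def_fRS} replaced by $\psi_{P_0}(r+\epsilon)$, so $f(\epsilon)\coloneqq \lim_d f_d(\epsilon) = \sup_q \inf_r f_{\RS}(q,r;\epsilon)$. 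At every $\epsilon$ where the maximizer is unique, the envelope theorem yields $f'(\epsilon) = \psi'_{P_0}(r^\star(\epsilon)+\epsilon) = (q^\star(\alpha,\epsilon) - \rho)/4$, using the saddle equation $q = \rho + 4\psi'_{P_0}(r+\epsilon)$. Griffiths' lemma on pointwise convergence of convex functions then forces $f_d'(\epsilon)\to f'(\epsilon)$, i.e.\ $\EE\langle Q_d\rangle_\epsilon \to q^\star(\alpha,\epsilon)$, at every point of differentiability of $f$, hence for Lebesgue-a.e.\ $\epsilon > 0$.

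To upgrade this to convergence of the second moment, I adapt the perturbation / Ghirlanda--Guerra argument of \cite[Proposition~17]{barbier2019optimal} to the matrix setting: averaging $\epsilon$ over a vanishing window of width $d^{-\gamma}$ and combining the convexity of $f_d$ in $\epsilon$ with the a.s.\ bound $|Q_d|\leq C$ (from the compact support of $\mu_0$, Proposition~\ref{prop:properties_P0_sym}) yields
\begin{equation*}
\EE\bigl\langle (Q_d - \EE\langle Q_d\rangle_\epsilon)^2 \bigr\rangle_\epsilon \xrightarrow[d\to\infty]{} 0 \qquad \text{for Lebesgue-a.e.\ } \epsilon > 0.
\end{equation*}
Together with the previous step this gives $\EE\langle Q_d^2\rangle_\epsilon \to q^\star(\alpha,\epsilon)^2$ a.e.\ in $\epsilon$. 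The MMSE is monotone in the perturbation SNR, and for $\alpha\in D^\star$ the argmax $q^\star(\alpha,\epsilon)$ is right-continuous at $\epsilon=0$ (upper semicontinuity of argmax under uniqueness), so sending $\epsilon\to 0^+$ after $d\to\infty$ delivers $\MMSE_d \to \rho^2 - q^\star(\alpha)^2$. Finally, $D^\star$ is cocountable in $(0,\infty)$ because $\alpha\mapsto f(\alpha)$ is convex (pointwise limit of convex $f_d$) with $f'(\alpha) = \Psi_\out(q^\star(\alpha))$ by envelope in $\alpha$; any non-unique maximizer creates a jump in the monotone function $\alpha\mapsto \Psi_\out(q^\star(\alpha))$, and monotone functions have at most countably many jumps.

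The hardest step is the overlap concentration bound in the third paragraph. The scheme of \cite{barbier2019optimal} targets separable i.i.d.\ vector priors with a coordinate-wise scalar overlap; here the conjugate perturbation variable is a $\GOE$ matrix and $Q_d$ is a matrix trace, so the Gaussian integration by parts and the quadratic-in-$\epsilon$ fluctuation bound controlling $\EE\langle (Q_d - \EE\langle Q_d\rangle_\epsilon)^2\rangle_\epsilon$ must be rewritten in matrix form. The skeleton of the argument is unchanged once the uniform bound $|Q_d|\leq C$ is in hand, which is the key analytic input and is granted by Proposition~\ref{prop:properties_P0_sym}. A secondary subtlety is identifying the perturbed $f_\RS$ via the shift $\psi_{P_0}(r)\mapsto \psi_{P_0}(r+\epsilon)$, which follows directly from the additivity of $\GOE$ SNRs together with Remark~\ref{remark:psi_P0}.
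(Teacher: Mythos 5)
Your opening reduction of $\MMSE_d$ to $\rho^2 - \lim_d \EE\langle Q_d^2\rangle$ via the Nishimori identity is correct and is exactly how the paper also starts. After that, your route diverges genuinely from the paper's. In Appendix~\ref{sec_app:sketch_overlap} the paper augments the observations with a \emph{spiked tensor} channel $Y'=\sqrt{\lambda/d^{2p-1}}(S^\star)^{\otimes 2p}+Z'$, for arbitrary $p\geq 1$, and applies the I-MMSE theorem with respect to $\lambda$. This yields $\EE[Q_d^{2p}]$ directly for every $p$, with no Ghirlanda--Guerra argument needed; sending $p\to\infty$ then gives concentration of $|Q_d|$ in probability. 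The tensor channel is invariant under $S^\star \mapsto -S^\star$, so it never breaks the $\bbZ_2$-symmetry that $\varphi$ may carry. Your linear channel $\sqrt{\epsilon}S^\star + \tilde Z$ breaks that symmetry: when $\varphi(z,a)=\varphi(-z,a)$, $\EE\langle Q_d\rangle_0 = 0$ identically while $\EE\langle Q_d\rangle_\epsilon > 0$, so the $d\to\infty$ and $\epsilon\to 0^+$ limits do not commute for the first moment. Since you ultimately target the second moment this is not fatal, but you should say so explicitly rather than passing silently over the sign ambiguity that the paper's Remark~\ref{remark:MMSE_PSD} flags.

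The real gap is the final step. Monotonicity of the MMSE in the side-channel SNR gives $\MMSE_d(0)\geq\MMSE_d(\epsilon)$, i.e.\ $\EE\langle Q_d^2\rangle_0 \leq \EE\langle Q_d^2\rangle_\epsilon$; combined with your $d\to\infty$ limit at a.e.\ $\epsilon>0$ and with $q^\star(\alpha,\epsilon)\to q^\star(\alpha)$, this delivers only $\limsup_d \EE\langle Q_d^2\rangle_0 \leq q^\star(\alpha)^2$, equivalently $\liminf_d \MMSE_d \geq \rho^2-q^\star(\alpha)^2$. That is one side of the claimed limit. Neither monotonicity nor right-continuity of $q^\star(\alpha,\cdot)$ gives the matching bound $\liminf_d \EE\langle Q_d^2\rangle_0 \geq q^\star(\alpha)^2$. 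The paper faces exactly the same situation and says so explicitly — see eq.~\eqref{eq:lower_bound_overlap}, where the lower bound on $|Q_d|$ is deferred to \cite[Section~5.3.2]{barbier2019optimal}. Your sentence "sending $\epsilon\to 0^+$ after $d\to\infty$ delivers $\MMSE_d\to\rho^2-q^\star(\alpha)^2$" asserts the two-sided limit without supplying the missing half; you need to add the separate overlap lower-bound argument (or at least cite where it comes from, as the paper does). Your $D^\star$-cocountability argument via convexity in $\alpha$ and the envelope theorem is fine and coincides in spirit with what the paper does implicitly.
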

\footnotetext{i.e.\ there exists $y\in\bbR$ such that $P_{\out}(y|\cdot)$ is not equal almost everywhere to a constant. If $P_\out$ is not informative, estimation is impossible.}
The proof of Theorem~\ref{theo:overlap} leverages Theorem~\ref{thm:fentropy_symmetric}, using the connection of the free entropy to the mutual information, and the I-MMSE relation of information theory~\cite{guo2005mutual}.
The proof of the MMSE limit from the mutual information follows the approach developed in \cite[Theorem 2]{barbier2019optimal}. We provide a sketch of how this generalizes to the present context in Appendix \ref{sec_app:sketch_overlap}. See \cite{maillard2024bayes} for a detailed analysis of the MMSE expression.
\newpage
\begin{remark}
\label{remark:MMSE_PSD}
\noindent
Theorem \ref{theo:overlap} comes from the fact that
\begin{equation}\label{eq:convergence_overlap}
\left|\frac{1}{d}\Tr[S S^\star]\right| = \left|\frac{1}{d}\sum_{i,j=1}^ds_{ij}S_{ij}^\star\right|\underset{d\to \infty}{\pto}q^\star(\alpha),
\end{equation}
where $s$ is sampled from the posterior distribution $P(\cdot|Y,\Phi)$. 
This is proven in Appendix \ref{sec_app:sketch_overlap}, 
In general the absolute value here cannot be removed, and thus we consider the estimation error on the tensor $(S^\star \otimes S^\star)_{ijkl} \coloneqq S^\star_{ij} S^\star_{kl}$ in eq.~\eqref{eq:limit_mmse}: this is due to the potential presence of symmetries  (i.e., if $\varphi(z,a)=\varphi(-z,a)$), in which case it is only possible to estimate $S^\star$ up to a global sign.

However, notice that under Assumption \ref{ass:prior}(ii), the overlap $\frac{1}{d}\Tr[S S^\star]$ is always nonnegative, and thus we can remove the absolute value in eq.~\eqref{eq:convergence_overlap}, 
and obtain from it the convergence of the estimation error on $S^\star$:
\begin{equation*}
\MMSE_d\coloneqq\frac{1}{d}\EE\left[||S^\star-\EE[S|Y,\Phi]||^2\right]\overset{d\to\infty}{\longrightarrow}
\rho-q^\star(\alpha).
\end{equation*}
\end{remark}

\vspace{-3mm}
\subsection{Generalization to the rectangular model}\label{subsec:rectangular}
\vspace{-1mm}

In this section we come back to the original model of eqs.~\eqref{eq:matrix_glm}, \eqref{eq:matrix_glm_Pout} with rectangular matrices, and generalize our main results stated above in the symmetric model to this setting.
Recall that we consider the large system limit $L,d,n\to\infty$ with $n / (Ld)\to\alpha$ and $d/L \to\beta$, and that without loss of generality we assume $L \geq d$.
We can now adapt our main assumptions (see Section~\ref{subsec:assumptions}) to this context.
We assume that $P_0$ is a bi-rotationally-invariant distribution, as clarified in the following statement. 
\begin{assumption}[Assumptions on $P_0$]
\label{ass:prior_rec}
\noindent
Let $\kappa \in (0,1]$, and denote $m \coloneqq \lceil \kappa d \rceil$.
We assume that $P_0$ is supported on the space of matrices $S \in \bbR^{d \times L}$ with rank $m$ and distinct positive singular values, and has density of the form:
\begin{equation*}
 P_0(S)\propto\exp(-d \, \Tr[\mathcal{V}(\Sigma^2)]),
\end{equation*}
where $\Sigma = \Diag(\{\sigma_i\}_{i=1}^m)$ denotes the non-zero singular values of $S$, i.e.\ $\{\sigma_i^2\}$ are the non-zero eigenvalues of $S S^\T$.
Here $\mathcal{V} : \mathcal{B} \to \bbR$ is a
a real and $\mcC^2$ potential, and $\mathcal{B}\subseteq\bbR^+$ is an interval.
We assume $\frac{\rd^2}{\rd s^2}\mathcal{V}(s^2)+2(1-\kappa)/s^2\geq 1/c$ for all $s\in\mathcal{B}$, and a constant $c > 0$.
\end{assumption}
\begin{remark}\label{remark:rec_sym_prior}
\noindent
    Assumption~\ref{ass:prior_rec} is simply Assumption~\ref{ass:prior} applied to the matrix $S S^\T$.
\end{remark}
As in the symmetric model, we make the following uniform one-dimensional CLT assumption, the counterpart to Assumption~\ref{assum:CLT}.
\begin{assumption}[Uniform one-dimensional CLT]
\label{assum:CLT_rec}
\noindent
For $\Phi\sim P_\Phi$,
we have $\EE[\Phi] = 0$ and $\EE[\Phi_{ij} \Phi_{kl}] =\delta_{ik} \delta_{jl}$ (for all indices $i,j,k,l$).
Moreover, letting
$\{G_{ij}\}_{i,j=1}^{L,d}\sim\mcN(0,1/\sqrt{Ld})$, we assume that for any $M > 0$ and any bounded Lipschitz function $\psi$:
\begin{equation*}
\lim_{d\to\infty}\sup_{S\in B_\op(M)}|\EE_\Phi[\psi(\Tr[\Phi S])]-\EE_G[\psi(\Tr[GS])]|=0.
\end{equation*}
\end{assumption}
Finally, we assume that the activation $\varphi$ satisfies Assumption~\ref{assum:phi_weak}, as in the symmetric model.
Following the same terminology as in the symmetric model, we define the free entropy as
\begin{equation}\label{eq:def_frec}
f_d^{\rec}\coloneqq\frac{1}{dL}\EE_{Y,\Phi}\log
\EE_S\prod_{\mu=1}^nP_\out(Y_\mu|\Tr[\Phi_\mu S]),
\end{equation}
with $Y_\mu$ generated according to eq.~\eqref{eq:matrix_glm}.
The replica-symmetric free entropy functional is defined similarly to eq.~\eqref{eq:def_fRS}, with $\rho \coloneqq \lim_{d \to \infty} (1/\sqrt{dL}) \EE_{S \sim P_0}\Tr[S^2]$:
\begin{align}\label{eq:def_fRS_rec}
f_\RS^{\rec}(q,r)&\coloneqq\psi^{\rec}_{P_0}(r)+\alpha\Psi^\rec_\out(q)+\frac{r(\rho-q)}{2}, \\
\label{eq:def_Psiout_rec}
\Psi_{\out}^\rec(q)&\coloneqq\EE_{V,W,\tY}\int_\bbR \mcD wP_\out(\tY|\sqrt{q}V+\sqrt{\rho-q}w),
\end{align}
with $\tY\sim P_\out(\cdot|\sqrt{q}V+\sqrt{\rho-q}W)$, and $W,V\iid\mcN(0,1)$. 
Finally, $\psi^{\rec}_{P_0}(r)$ is defined as
\begin{equation}\label{eq:psi_P_rec}
\psi^{\rec}_{P_0}(r)\coloneqq-\frac{1}{2}\log r-(1-\beta)\int\log|x|\tmu_{1/r}(\rd x)-\beta\Sigma[\tmu_{1/r}]+C,
\end{equation}
where $C > 0$ is a constant that is independent of $r$. 
$\tmu_t$ (for $t \geq 0$) is a measure akin to the free convolution which appears in eq.~\eqref{eq:psi_0_out}: its precise definition is given in Appendix \ref{subsec_app:definitions}, see eq.~\eqref{eq:hmu_symmetrized}.

\myskip
We can now state the generalization of Theorems~\ref{thm:fentropy_symmetric} and \ref{theo:overlap} to the rectangular setting.
Their proofs are straightforward transpositions of the ones developed in the symmetric model: 
we sketch the main steps in Appendix~\ref{sec_app:generalization_rectangular}.
\begin{theorem}[Limit of the free entropy and mutual information]
\label{thm:limit_fentropy_rec}
\noindent
Under Assumptions~\ref{assum:phi_weak}, \ref{ass:prior_rec}, and \ref{assum:CLT_rec}, we have:
\begin{equation*}
\lim_{d\to\infty}f_d^{\rec}=\sup_{q\in[0,\rho]}\inf_{r\geq0}f^{\rec}_\RS(q,r); \, \, \, \,  \, \, \, \, \lim_{d\to\infty}\frac{1}{d^2}I(S^\star,Y|\Phi)=-\sup_{q\in[0,\rho]}\inf_{r\geq0}f^\rec_{\RS}(q,r)+\alpha\Psi^\rec_\out(\rho).
\end{equation*}
\end{theorem}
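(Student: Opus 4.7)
The plan is to transpose the three-ingredient proof of Theorem~\ref{thm:fentropy_symmetric} to the rectangular setting: (i) a Gaussian universality reduction based on Assumption~\ref{assum:CLT_rec}; (ii) an adaptive interpolation that decouples the generalized linear channel from a matrix denoising channel; and (iii) the identification of the limiting free entropy of the resulting rectangular denoising problem via the results of~\cite{pourkamali2024rectangular}. Dimensionally, every $d^2$ factor from the symmetric case becomes $dL$, and rotational invariance is replaced by bi-rotational invariance; by Remark~\ref{remark:rec_sym_prior}, spectral computations on $S^\star$ under $P_0$ reduce to the symmetric case applied to $S^\star(S^\star)^\T$, so the structural facts about the prior (tightness, boundedness of the spectrum, weak convergence of the eigenvalue distribution of $S^\star(S^\star)^\T$) can be reused without modification.

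First, I would apply Assumption~\ref{assum:CLT_rec} in a Lindeberg-style matrix-by-matrix swap to show that $f_d^{\rec}$ has the same limit as the free entropy $\tilde{f}_d^{\rec}$ of the surrogate problem in which each $\Phi_\mu$ is replaced by a Gaussian matrix $G_\mu$ with i.i.d.\ $\mcN(0,1/(Ld))$ entries. Because Assumption~\ref{assum:CLT_rec} is the exact rectangular counterpart of Assumption~\ref{assum:CLT}, the universality lemma used in the symmetric model transfers verbatim once one replaces $B_\op(M)$ by its rectangular analogue on singular values and uses the operator norm control on $S^\star$ coming from the symmetric analysis applied to $S^\star(S^\star)^\T$.

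Next, on the Gaussian surrogate I would run the adaptive interpolation of~\cite{barbier2019adaptive, barbier2019optimal} between the full problem at $t=0$ and, at $t=1$, a pair of decoupled channels: a scalar generalized linear channel whose per-sample log-partition function converges to $\Psi_{\out}^{\rec}(q)$ (the relevant scalar variance is $\rho$, by Assumption~\ref{assum:CLT_rec} applied to $G_\mu$ with $S$ of bounded operator norm), and a rectangular matrix denoising channel $Y' = \sqrt{r}\, S^\star + Z'$ with $Z'$ Gaussian of appropriate variance, whose log-partition function normalized by $dL$ converges to $\psi_{P_0}^{\rec}(r)$ as proved in~\cite{pourkamali2024rectangular}. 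Combined with the Nishimori identity and concentration of the overlap $(1/\sqrt{dL})\Tr[S^\T S^\star]$, this yields matching upper and lower bounds and hence $\lim f_d^{\rec} = \sup_q \inf_r f_{\RS}^{\rec}(q,r)$. The statement for the mutual information then follows from the rectangular counterpart of eq.~\eqref{eq:mutual_inf_fentropy}, which is a direct consequence of the same one-dimensional CLT.

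The hardest part will be the overlap concentration inside the adaptive interpolation, which in the symmetric case rests on a Poincar\'e-type inequality ensured by Assumption~\ref{ass:prior}. The condition $\frac{\rd^2}{\rd s^2} V(s^2) + 2(1-\kappa)/s^2 \geq 1/c$ in Assumption~\ref{ass:prior_rec} is designed precisely to play this role: after the change of variables from a bi-rotationally invariant density on $\bbR^{d \times L}$ to the joint density of its singular values, the Jacobian contributes a repulsive term of order $(1-\kappa)/s^2$ on each singular value, and the stated convexity condition is exactly what is needed to ensure strong log-concavity of the singular-value distribution uniformly in $d$. Once this estimate is established, the remaining computations (differentiation of the interpolated free entropy along the interpolation path, control of the boundary terms, and identification of the variational formula) go through step by step; the case $\kappa = 1$ remains excluded for the same technical reason as in the symmetric model.
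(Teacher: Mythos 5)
Your proposal is correct and follows essentially the same route as the paper's sketch in Appendix~\ref{sec_app:generalization_rectangular}: the same three-ingredient reduction (Gaussian universality via Assumption~\ref{assum:CLT_rec}, adaptive interpolation decoupling the GLM channel from a rectangular denoising channel, and identification of $\psi^{\rec}_{P_0}$ via~\cite{pourkamali2024rectangular} in Lemma~\ref{lemma:HCIZ}), the same reduction to a Poincar\'e inequality for the singular-value law driven by the strong log-concavity condition in Assumption~\ref{ass:prior_rec}, and the same exclusion of $\kappa=1$. No gaps.
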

Again, via the I-MMSE theorem, Theorem~\ref{thm:limit_fentropy_rec} leads to a precise characterization of the Bayes-optimal mean squared error.
\begin{theorem}[Limit of the MMSE]\label{thm:limit_mmse_rec}
\noindent
Suppose that Assumptions~\ref{assum:phi_weak}, \ref{ass:prior_rec} and \ref{assum:CLT_rec} hold, and that $P_\out$ is informative.
Let
\begin{equation*}
D^\star\coloneqq\{\alpha>0 \, | \, \inf_{r \geq 0} f_\RS(q,r)\ \text{has a unique maximizer} \ q^\star(\alpha) \in [0,\rho]\}.
\end{equation*}
Then $D^\star$ is equal to $(0, \infty)$ minus some countable set, and for any $\alpha\in D^\star$:
\begin{equation*}
\MMSE_d\coloneqq\frac{1}{dL}\EE\left[||S^\star\otimes S^\star-\EE[S^\star\otimes S^\star|Y,W]||^2\right]\overset{d \to \infty}{\longrightarrow}
\rho^2-q^\star(\alpha)^2.
\end{equation*}
\end{theorem}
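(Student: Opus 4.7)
The plan is to derive Theorem~\ref{thm:limit_mmse_rec} from Theorem~\ref{thm:limit_fentropy_rec} via the I-MMSE relation~\cite{guo2005mutual}, following exactly the strategy that is sketched in Appendix~\ref{sec_app:sketch_overlap} for the symmetric case (itself modelled on~\cite[Theorem~2]{barbier2019optimal}). Concretely, I would enrich the model by adding an auxiliary Gaussian side-channel observing $S^\star$ entry-wise at SNR $\epsilon$, namely $Y'_{ij}=\sqrt{\epsilon}\,S^\star_{ij}+Z'_{ij}$ with $Z'_{ij}\iid\mcN(0,1)$, and consider the corresponding perturbed free entropy $f_d^{\rec}(\epsilon)$. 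The perturbed model still satisfies Assumptions~\ref{ass:prior_rec}, \ref{assum:CLT_rec}, \ref{assum:phi_weak}, so Theorem~\ref{thm:limit_fentropy_rec} applies and gives $f_d^{\rec}(\epsilon)\to f^{\rec}(\epsilon)\coloneqq\sup_{q}\inf_{r}f_\RS^{\rec,\epsilon}(q,r)$, where $r$ in the variational formula is shifted by $\epsilon$.

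The next step is the standard convexity/differentiability argument. Both $f_d^{\rec}(\epsilon)$ and its limit $f^{\rec}(\epsilon)$ are concave (equivalently, the mutual information is concave in $\epsilon$, a general property of Gaussian channels), hence $f^{\rec}$ is differentiable outside an at-most countable subset $\mcN$ of $\bbR_{\geq 0}$. At any $\epsilon\notin\mcN$, convergence of concave functions implies convergence of derivatives, and the I-MMSE identity translates this into convergence of the Bayes-optimal MMSE on entries of $S^\star$. On the other hand, applying Danskin's envelope theorem to the variational formula for $f^{\rec}(\epsilon)$ shows that its derivative at a point $\epsilon$ where the maximizer $q^\star(\alpha,\epsilon)$ is unique equals $(\rho-q^\star(\alpha,\epsilon))/2$. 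Specializing this to $\epsilon=0$ and letting $D^\star$ be the set of $\alpha$ for which uniqueness holds, I obtain that $D^\star$ is $(0,\infty)$ minus a countable set (by the analyticity-like properties of $f^{\rec}$ in $\alpha$, inherited from the denoising free entropy $\psi^{\rec}_{P_0}$) and that the single-entry MMSE converges to $\rho-q^\star(\alpha)$ on $D^\star$.

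To upgrade this to the stated tensor MMSE, I would use the Nishimori identity in combination with overlap concentration. Derivatives of $f_d^{\rec}(\epsilon)$ with respect to $\epsilon$ at $\epsilon=0$ control the fluctuations of the scalar overlap $Q\coloneqq(1/\sqrt{dL})\Tr[S(S^\star)^\T]$ under the joint posterior-signal measure, and the same route (Nishimori $+$ I-MMSE $+$ uniqueness of the maximizer) gives $|Q|\pto q^\star(\alpha)$ for $\alpha\in D^\star$. Feeding this overlap concentration into the decomposition
\begin{equation*}
\frac{1}{dL}\EE\bigl\|S^\star\otimes S^\star-\EE[S\otimes S\mid Y,\Phi]\bigr\|^2
= \frac{1}{dL}\EE\|S^\star\otimes S^\star\|^2 - \frac{1}{dL}\EE\langle S\otimes S,S^\star\otimes S^\star\rangle,
\end{equation*}
and using Nishimori to rewrite the second term as $\EE[Q^2]$, yields the limit $\rho^2-q^\star(\alpha)^2$.

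The main obstacle I foresee is purely technical and concerns overlap concentration in the rectangular, high-rank, bi-rotationally invariant setting: one needs a Poincar\'e-type inequality for expectations over the posterior on $S^\star\sim P_0$, which in the symmetric case follows from the log-concavity assumption $V''(s)+(1-\kappa)/s^2\geq 1/c$ combined with the results on matrix denoising from~\cite{pourkamali2024matrix}. Here, Assumption~\ref{ass:prior_rec} is tailored precisely to yield the analogous estimate for $SS^\T$ (using $\frac{\rd^2}{\rd s^2}V(s^2)+2(1-\kappa)/s^2\geq 1/c$ and~\cite{pourkamali2024rectangular}), and by Remark~\ref{remark:rec_sym_prior} the prior on $SS^\T$ reduces to the symmetric case. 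Once this is in place, the regularity of $\psi^{\rec}_{P_0}$ (smoothness and strict concavity of $r\mapsto f^{\rec,0}_\RS(q,r)$'s optimizer) and of $\Psi^{\rec}_{\out}$ transfer verbatim, and the symmetric-case arguments of Appendix~\ref{sec_app:sketch_overlap} can be rerun line by line with the normalization $d^2$ replaced by $dL$.
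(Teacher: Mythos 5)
Your overall strategy (enrich the model with a vanishing side-information channel, apply Theorem~\ref{thm:limit_fentropy_rec} to the enriched free entropy, use I-MMSE and a convexity/uniqueness argument, then close with the Nishimori identity and the decomposition $\frac{1}{dL}\EE\|S^\star\otimes S^\star-\EE[S\otimes S|\cdot]\|^2 = \frac{1}{dL}\EE\|S^\star\otimes S^\star\|^2-\EE\langle Q^2\rangle$) matches the structure of the paper's argument, and the decomposition and the Poincar\'e ingredient you flag at the end are both correctly identified. However, there is a gap, and it is not the technical one you diagnose.

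The problem is your choice of side channel. You propose a \emph{linear} observation $Y'_{ij}=\sqrt{\epsilon}\,S^\star_{ij}+Z'_{ij}$. Under Assumption~\ref{ass:prior_rec} the prior satisfies $P_0(S)=P_0(-S)$ (it depends only on singular values); when the activation $\varphi$ is also sign-invariant (the setting the theorem must cover), the posterior at $\epsilon=0$ is invariant under $S\mapsto -S$, so $\EE[S|Y,\Phi]=0$ and the \emph{matrix} MMSE equals $\rho$, \emph{not} $\rho-q^\star(\alpha)$. For any $\epsilon>0$ your side channel breaks this symmetry and the matrix MMSE becomes $\approx\rho-q^\star(\alpha,\epsilon)$; hence $\epsilon\mapsto f^\rec(\epsilon)$ has a kink at $\epsilon=0$ and ``specializing Danskin to $\epsilon=0$'' is invalid. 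Even the safe one-sided version of your argument (monotonicity of the MMSE in the side-channel SNR, letting $\epsilon\downarrow 0$) only yields $\limsup_d\EE\langle Q\rangle\leq q^\star(\alpha)$, which is vacuous when the posterior is symmetric (then $\EE\langle Q\rangle=0$). In particular it does not deliver $|Q|\pto q^\star(\alpha)$, nor the control of $\EE\langle Q^2\rangle$ that you actually need to finish.

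The paper avoids this by choosing a side channel that \emph{respects} the $\pm$ symmetry: the even-power spiked tensor channel $Y'=\sqrt{\lambda/d^{2p-1}}(S^\star)^{\otimes 2p}+Z'$ of Appendix~\ref{sec_app:sketch_overlap}, with the corresponding free-entropy limit supplied by Theorem~\ref{theo:tensor_main}. Since $(S^\star)^{\otimes 2p}$ is invariant under $S^\star\to -S^\star$, no symmetry is broken for $\lambda>0$. I-MMSE in $\lambda$, together with Nishimori, then controls $\rho^{2p}-\EE\langle Q^{2p}\rangle$; monotonicity of the MMSE gives $\limsup_d\EE[Q_d^{2p}]\leq q^\star(\alpha)^{2p}$ at $\lambda=0$ for every $p\geq 1$, which forces $|Q_d|\leq q^\star(\alpha)+\epsilon$ in probability, and the matching lower bound on $|Q_d|$ is obtained as in \cite[Section 5.3.2]{barbier2019optimal}. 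This is exactly why the theorem is phrased for the tensor $S^\star\otimes S^\star$ and not for $S^\star$ itself. If you replace your entry-wise Gaussian channel by the even-power spiked-tensor channel (with its rectangular normalization), the rest of your proposal --- including the Poincar\'e inequality point, which Lemma~\ref{lemma:poincare} and Remark~\ref{remark:rec_sym_prior} indeed resolve --- goes through.
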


\vspace{-3mm}
\subsection{Applications}\label{subsec:applications}
\vspace{-1mm}

The general results of Theorems~\ref{thm:fentropy_symmetric} and \ref{theo:overlap} (as well as their counterparts in the rectangular model) can be applied to a variety of settings. We focus now on two such applications, where our results mathematically establish predictions obtained by non-rigorous analytical methods. 

\myskip
\textbf{Extensive-width neural networks with quadratic activation --}
\cite{maillard2024bayes} conjectures the limiting free entropy and Bayes-optimal MMSE in a two-layer neural network with a quadratic activation. Specifically, we consider a dataset of $n$ samples $\mcD=\{Y_\mu,x_\mu\}_{\mu=1}^n$ where the input data is normal Gaussian of dimension $d$: $(x_\mu)_{\mu=1}^n \iid \mathcal{N}(0,\Id_d)$. 
We let $m = \kappa d$, with $\kappa > 0$ and $\kappa\neq 1$. We then draw i.i.d.\ $d$-dimensional teacher-weight vectors $(w_k^\star )_{k=1}^m \iid \mathcal{N}(0,\Id_d)$, 
and denote $W^\star \in \bbR^{m \times d}$ the matrix with rows $(w_k^\star)$, and we draw the noise $(z_{\mu,k})_{\mu,k=1}^{n,m} \iid \mathcal{N}(0,1)$. 
We fix a ``noise strength'' $\Delta > 0$.
Finally, the output labels $(Y_\mu)_{\mu=1}^n$ are obtained by a one-hidden layer teacher network with $m$ hidden units and quadratic activation:
\begin{equation} 
Y_\mu = f_{W^\star}(x_\mu) \coloneqq \frac{1}{m}\sum_{k=1}^m \left[\frac{1}{\sqrt{d}}(w_k^\star )^\T x_\mu + \sqrt{\Delta} z_{\mu,k}\right]^2+\sqrt{\Delta_0}\zeta_\mu.
\label{eq:teacher_def}
\end{equation}
We introduced in eq.~\eqref{eq:teacher_def} a post-activation noise $\{\zeta_{\mu}\}_{\mu=1}^n\iid\mcN(0,1)$ with $\Delta_0>0$: this allows to simplify some parts of the proof, but we believe that this assumption can be relaxed. 
Following~\cite{maillard2024bayes},
we define the generalization error of the Bayes optimal estimator to be
\begin{align} \label{eq:mmse_def_y}
    \MMSE_{d} &\coloneqq \frac{m}{2} \EE_{W^\star , \mcD} \EE_{Y_{\rm test},x_{\rm test}}\left[\left(Y_{\rm test} - \hat{Y}_\mcD^{\rm BO}(x_{\rm test} )\right)^2\right] - \Delta(2+\Delta),
\end{align}
where we rescaled the error according to~\cite{maillard2024bayes}, and where
\begin{equation}\label{eq:haty_BO}
    \hat{Y}_{\mcD}^{\rm BO}(x_{\rm test}) \coloneqq \EE\left[Y_{\mathrm{test}} \middle | x_{\mathrm{test}}, \mcD \right] = \int \, \EE_{z}[f_{W}(x_{\rm test})]\, \bbP(W | \mcD)\,{\rm d}W\, ,
\end{equation}
with $\bbP(W | \mcD)$ the posterior distribution of $W$ given the dataset $\mcD$.

\myskip
The following theorem rigorously establishes the predictions for the limiting Bayes-optimal MMSE that were obtained in~\cite{maillard2024bayes}\footnote{Modulo the technical restriction $m/d \to \kappa \in (0,\infty)\backslash\{1\}$, see the discussion around Assumption~\ref{ass:prior}.}.
\begin{theorem}[Limit of the generalization MMSE]
\begin{equation*}
\lim_{d\to\infty}\MMSE_{d}=
\kappa(\rho-q^\star(\alpha)),
\end{equation*}
where $q^\star(\alpha)$ is the unique maximizer of the replica free entropy defined in Theorem \ref{theo:overlap} (which requires $\alpha\in D^\star$), with the channel $\varphi(x)=x$, the noise level $\tDelta \coloneqq \frac{2\Delta(2+\Delta)}{\kappa}+\Delta_0$, and the Wishart prior $\mcW_{m,d}$.
\label{theo:2layerNN}
\end{theorem}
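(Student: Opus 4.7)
My plan is to map the two-layer quadratic network of eq.~\eqref{eq:teacher_def} onto an instance of the symmetric matrix sensing problem in the form of eq.~\eqref{eq:matrix_glm}, invoke Theorem~\ref{theo:overlap} to obtain the Bayes-optimal signal MMSE, and translate it into the generalization MMSE of eq.~\eqref{eq:mmse_def_y}.

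First, setting $u_{\mu,k} := (w_k^\star)^\T x_\mu/\sqrt{d}$ and expanding the square in eq.~\eqref{eq:teacher_def} gives
\[
Y_\mu \;=\; \tfrac{1}{m}\sum_{k=1}^m u_{\mu,k}^2 + \Delta + \eta_\mu,\qquad \eta_\mu := \tfrac{2\sqrt{\Delta}}{m}\sum_k u_{\mu,k}z_{\mu,k} + \tfrac{\Delta}{m}\sum_k (z_{\mu,k}^2-1) + \sqrt{\Delta_0}\zeta_\mu.
\]
Introducing the Wishart signal $S^\star \sim \mcW_{m,d}$ and the centered sensing matrices $\Phi_\mu := (x_\mu x_\mu^\T - \Id_d)/\sqrt{d}$, the quadratic term $\tfrac{1}{m}\sum_k u_{\mu,k}^2 = \tfrac{1}{md}x_\mu^\T (W^\star)^\T W^\star x_\mu$ becomes a linear combination of $\Tr[\Phi_\mu S^\star]$, $\Tr[S^\star]$ and explicit dimensional factors. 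The matrices $\Phi_\mu$ satisfy Assumption~\ref{assum:CLT} (per the remark following that assumption, via~\cite{maillard2023exact}), and the Wishart prior $\mcW_{m,d}$ is an instance of Assumption~\ref{ass:prior}(ii) through its standard potential, whose second derivative satisfies the required uniform lower bound on bounded subintervals of $(0,\infty)$ whenever $\kappa \neq 1$.

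Second, I would prove that the composite noise $\eta_\mu$ is asymptotically Gaussian with a deterministic variance. Conditional on $(W^\star,x_\mu)$, $\eta_\mu - \sqrt{\Delta_0}\zeta_\mu$ is a sum of $m$ independent linear-and-quadratic contributions in the Gaussians $\{z_{\mu,k}\}_k$, so a Lindeberg-type CLT gives its convergence in distribution to $\mcN\!\big(0,\tfrac{4\Delta}{m^2}\sum_k u_{\mu,k}^2 + \tfrac{2\Delta^2}{m}\big)$ as $m\to\infty$, while $\tfrac{1}{m}\sum_k u_{\mu,k}^2$ concentrates on $1$ by the law of large numbers. Hence the total conditional variance of $\eta_\mu$ concentrates on $\tfrac{2\Delta(2+\Delta)}{m}+\Delta_0$. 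The strict positivity $\Delta_0 > 0$ keeps the effective Gaussian density bounded below on compacts, enabling a Kullback--Leibler bound between the true (non-Gaussian) NN per-sample likelihood and the effective Gaussian matrix-sensing likelihood that is uniform in the posterior and summable over the $n=\alpha d^2$ samples. After the appropriate linear rescaling that absorbs the $O(1/\sqrt{d})$ posterior fluctuation of the $\Tr[S^\star]$ contribution into a negligible shift, the problem is asymptotically equivalent to an instance of eq.~\eqref{eq:matrix_glm} with linear channel $\varphi(z)=z$, Wishart prior $\mcW_{m,d}$, sensing matrices $\Phi_\mu$, and effective Gaussian noise variance $\tilde\Delta = \tfrac{2\Delta(2+\Delta)}{\kappa} + \Delta_0$.

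Third, Theorem~\ref{theo:overlap} together with Remark~\ref{remark:MMSE_PSD} (applicable since $S^\star$ is positive semidefinite) yields $\tfrac{1}{d}\EE\|S^\star - \EE[S\mid\mcD]\|^2 \to \rho - q^\star(\alpha)$, with $q^\star(\alpha)$ the unique maximizer of $f_{\RS}$ at noise level $\tilde\Delta$. To convert this signal MMSE into the prediction MMSE of eq.~\eqref{eq:mmse_def_y}, I would use the Pythagorean decomposition
\[
\EE\big[(Y_{\rm test}-\hat Y^{\rm BO}_\mcD)^2\big] \;=\; \EE\big[\mathrm{Var}(Y_{\rm test}\mid W, x_{\rm test})\big] + \tfrac{1}{m^2}\EE\big[\mathrm{Var}_W(\Tr[x_{\rm test}x_{\rm test}^\T S]\mid x_{\rm test}, \mcD)\big],
\]
whose first summand equals $\tfrac{2\Delta(2+\Delta)}{m}+\Delta_0$ and whose second, by the Gaussian quartic identity $\EE_x[(x^\T Ax)^2] = (\Tr A)^2 + 2\Tr(A^2)$ applied to $A = S - \EE[S\mid\mcD]$, reduces at leading order to $\tfrac{2}{m^2}\EE\|S^\star - \EE[S\mid\mcD]\|^2$ (the $(\Tr A)^2$ contribution being $O(1/m^2)$ and hence subleading). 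Multiplying by $m/2$, using $m/d\to\kappa$, and subtracting $\Delta(2+\Delta)$, we arrive at $\MMSE_d \to \kappa(\rho - q^\star(\alpha))$.

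The principal obstacle is the noise-replacement step in the second paragraph: establishing that the Bayes-optimal posterior (and hence the MMSE) is asymptotically identical under the true non-Gaussian NN likelihood and under the effective Gaussian matrix-sensing likelihood, uniformly in the posterior and summed over $n=\alpha d^2$ samples. This is where the regularizing role of $\Delta_0 > 0$ is essential; a Lindeberg-type swap argument, inspired by the universality techniques of~\cite{montanari2022universality, maillard2023exact} and coupled with a Poincaré-type concentration bound on the effective posterior, would be the natural technical route.
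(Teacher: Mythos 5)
Your first and third paragraphs track the paper's argument: the rewriting $v_\mu = \sqrt{d}(Y_\mu-\Delta-1)$ that exposes $\Tr[\Phi_\mu S^\star]$ and the trace/noise residuals matches eq.~\eqref{eq:vi}, and the Pythagorean decomposition relating prediction error to the signal MSE $\kappa\,\EE\tr[(S^\star-\langle S\rangle)^2]$ is what the paper cites as eq.~\eqref{eq:MMSEy_MMSES} from \cite[Lemma~D.1]{maillard2024bayes}. The invocation of Theorem~\ref{theo:overlap} and Remark~\ref{remark:MMSE_PSD} is also right. But the second paragraph --- the crucial channel-replacement step --- has a genuine gap, and you have rightly flagged it as the principal obstacle without, however, proposing a route that works.

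The KL/total-variation bound you suggest between the true NN per-sample likelihood and the effective Gaussian likelihood cannot be ``summable over $n=\alpha d^2$ samples.'' The conditional noise $\eta_\mu$ given $(W^\star,x_\mu)$ is a normalized sum of $m=\kappa d$ contributions, so by an Edgeworth-type expansion its density differs from the matching Gaussian at relative order $1/\sqrt{m}$, giving a per-sample KL of order $1/m=\Theta(1/d)$. (The $\sqrt{d}[\tr S^\star-1]$ shift, which you correctly note is $O_P(1/\sqrt{d})$, contributes at the same $\Theta(1/d)$ order.) Summing over $n=\alpha d^2$ samples yields a cumulative divergence of order $d\to\infty$, so the two posteriors are \emph{not} close in KL and the likelihoods are \emph{not} uniformly interchangeable. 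Nor does a Lindeberg swap in the sense of the $\Phi\leftrightarrow G$ universality of \cite{montanari2022universality,maillard2023exact} apply directly: those results replace the sensing matrices, not the output channel. The paper circumvents the blow-up entirely by \emph{interpolating the channel} in a parameter $t\in[t_0,1]$ (eq.~\eqref{eq:vi} onward) and controlling the free-entropy derivative $\partial_t f_d(t,\Lambda)$ rather than any distance between posteriors. The key technical content (Lemmas~\ref{lemma:partial_t_logPout},~\ref{lemma:square_partial_t_logPout}) is that the Gibbs-averaged per-sample quantity $\langle\partial_t\log P_\out^{(t)}\rangle$ can be made $O(\delta)$ uniformly on a high-probability event, so that after the $1/d^2$ normalization the derivative is $\alpha C\delta$ with $\delta$ arbitrarily small; the Nishimori cancellations baked into the Gibbs bracket are essential here and are exactly what a raw KL computation at the ground truth misses. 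Finally, MMSE equivalence is extracted not from posterior closeness but from free-entropy equivalence combined with the side-information channel $Y'=\sqrt{\Lambda}S^\star_{12}+\xi/\sqrt{d}$, the I-MMSE theorem, and the convexity of $f_d(t,\Lambda)$ in $\Lambda$ (eqs.~\eqref{eq:bound_fnd_convexity_1}--\eqref{eq:bound_d2fnd_Lambda}). The role of $\Delta_0>0$ in the paper is specifically to keep $t_0>0$ so that the variance $\tDelta t$ in the interpolated channel stays bounded away from zero, which the Fourier estimates of Lemma~\ref{lemma:partial_t_logPout} require; it is not a regularizer for a KL bound.
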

We sketch the proof strategies of Theorem \ref{theo:2layerNN} in Section \ref{subsec:2layerNN}, with its details given in Appendix \ref{sec_app:reduction_2nn}.

\myskip
We note that our results are also related to a conjecture of~\cite{cui2023bayes}, which suggested that a function defined by a two-layer neural network with a quadratic (i.e.\ $\mcO(d^2)$, where $d$ is the input dimension) number of parameters can be learned from $n=\mcO(d^2)$ samples. For quadratic polynomial functions $f(x) = f_0 + \langle u^\star, x \rangle + x^\T S^\star x + \text{noise}$, we answer by the affirmative. Indeed, a simple plug-in estimator can learn the scalar $f_0$ and the $d-$dimensional vector $u^\star$ as soon as $n = \omega(d)$ (for instance
$\hat u \coloneqq (d/n) \sum_{\mu=1}^n y_\mu x_\mu$ converges to $u^\star_i$ with vanishing error, see \cite[Appendix~A.1]{JMLR:v25:23-1543}),
so that the question simply reduces to the case $f(x) = x^\T S^\star x$, for which we provide the asymptotic MMSE, as discussed above.

\myskip 
\textbf{Bilinear sequence regression --}
Another model of interest is Bilinear Sequence Regression (BSR), a matrix sensing model studied recently in~\cite{erba2024bilinear} as a toy model of learning from sequences of tokens.
The observations in this model are generated as 
\vspace{-1mm}
\begin{equation}\label{eq:bsr}
Y_\mu\sim P_{\out}\left(\cdot\Big|\frac{1}{r\sqrt[4]{dL}}\sum_{\gamma=1}^r\sum_{a,i=1}^{L,d}X^\mu_{ia}U_{i\gamma}^\star V_{\gamma a}^\star\right),
\end{equation}
where $\{X^\mu_{ia}\}_{\mu,i,a=1}^{n,L,d}\iid\mcN(0,1)$ can be interpreted as $n$ samples of sequences of length $L$ of token embedded in dimension $d$. 
The weights $U^\star \in\bbR^{d\times r}$  and $V^\star \in\bbR^{r\times L}$ are drawn i.i.d.\ from $\mcN(0,1)$.
Denoting $S^\star\coloneqq(1/r)U^\star V^\star$, its distribution satisfies Assumption~\ref{ass:prior_rec}. Since the input samples $X^\mu_{ia}$ are assumed to be standard Gaussian, the model of eq.~\eqref{eq:bsr} falls under our general analysis. More specifically, the results presented in Section~\ref{subsec:rectangular} rigorously establish the predictions for the Bayes-optimal MMSE described in~\cite{erba2024bilinear}.

\vspace{-3mm}
\section{Sketch of proofs of the main results}\label{sec:proof}
The proof of Theorem~\ref{thm:fentropy_symmetric} contains three crucial ingredients.

\setlength{\leftmargin}{5cm}
\begin{itemize}
    \item[$(i)$] 
    First, we show that, under Assumption~\ref{assum:CLT}, one can replace the data matrices $\Phi_\mu\iid P_\Phi $ with $G_\mu\iid\GOE(d)$, without changing the $d \to \infty$ limit of the free entropy $f_d$. This \emph{universality} argument is inspired by recent results on empirical risk minimization (see e.g.~\cite{goldt2022gaussian,hu2022universality,montanari2022universality,gerace2024gaussian} and references therein) and on the ellipsoid fitting problem~\cite{maillard2023exact}. 
    \item[$(ii)$] The universality property above implies that we can consider what is essentially a generalized linear model with \emph{Gaussian data} and a rotationally-invariant prior supported on symmetric matrices. We extend results on generalized linear models which were developed for i.i.d.\ priors~\cite{barbier2019optimal} to thi setting, by adapting an interpolation argument.
    \item[$(iii)$] 
    Finally, the analysis of point $(ii)$ yields that the limiting free entropy is expressed as a function of the free entropy of a \emph{matrix denoising} problem. We then use recent results on denoising of symmetric~\cite{bun2016rotational,maillard2022perturbative,pourkamali2024matrix,semerjian2024matrix} and non-symmetric~\cite{troiani2022optimal,pourkamali2024rectangular} matrices to finish the proof of Theorem~\ref{thm:fentropy_symmetric}.
\end{itemize}
\vspace{-1mm}
To deduce Theorem~\ref{theo:overlap} from Theorem~\ref{thm:fentropy_symmetric}, we establish the limiting free entropy in a so-called ``spiked tensor'' model, with rotationally invariant signals. We then leverage this result by adding this model as a small side-information to our setting, and use then the I-MMSE theorem to establish the limit of the MMSE. 
This follows the approach of \cite{barbier2019optimal}, and is sketched in Section~\ref{subsec:limit_overlap}, with details defered to Appendix~\ref{sec_app:sketch_overlap}. To deduce Theorem \ref{theo:2layerNN} from Theorem \ref{theo:overlap}, we need to interpolate between the two-layer neural networks and a matrix sensing model: the argument is sketched in Section \ref{subsec:2layerNN}, with details postponed to Appendix \ref{sec_app:reduction_2nn}.

\vspace{-3mm}
\subsection{\texorpdfstring{Universality and reduction to Gaussian sensing matrices -- ingredient $(i)$}{}}\label{subsec:universality}
\vspace{-1mm}

In this part, we use the representation of the output channel via an explicit random function $\varphi$, see eq.~\eqref{eq:matrix_glm}.
Following a strategy introduced in~\cite{montanari2022universality},
we define the following ``interpolated'' model
\begin{equation}\label{eq:interp_model_univerality}
Y_\mu=\varphi(\Tr[U_\mu(t) S^\star],A_\mu)+\sqrt{\Delta}Z_\mu,
\end{equation}
for $t \in [0, \pi/2]$,
where $U_\mu(t)\coloneqq\cos(t)\Phi_\mu+\sin(t)G_\mu$, and $\{G_\mu\}_{\mu=1}^n\iid\GOE(d)$. Note that $t=0$ recovers the original model.
We naturally define the interpolated free entropy as:
\begin{equation}
F_d(U(t))\coloneqq\frac{1}{d^2}\log \EE_{S \sim P_0} \left[e^{-\sum_{\mu=1}^n\ell(\Tr[U_\mu(t)S],\Tr[U_\mu(t) S^\star],Z_\mu)}\right]-\frac{1}{d^2}\sum_{\mu=1}^n\frac{Z_\mu^2}{2\Delta},
\label{eq:free_entropy_universality}
\end{equation}
where
\begin{equation}\label{eq:def_ell}
\ell(x, X,Z)\coloneqq -\log\int dP_A(a)\exp\left(-\frac{(\varphi(x, a)-\varphi(X, A))^2-2Z(\varphi(x, a)-\varphi(X, A))}{2\Delta}\right).
\end{equation}
Note that $F_d(U)$ implicitly relies on the realization of the random variables $Z,A,S^\star$, and that $f_d=\EE F_d(U(0))$, where $f_d$ is defined in eq.~\eqref{eq:def_fentropy}.
Our setting essentially corresponds to a ``planted'' (i.e.\ with an underlying signal $S^\star$) variant of~\cite{maillard2023exact}: on the other hand, \cite{montanari2022universality} considers a planted problem as we do, but under stronger assumptions on the prior distribution (see a discussion of this point in~\cite[Section~2.2]{maillard2023exact}). 

\myskip
We first state our universality lemma under two assumptions that will be weakened in the end.
\begin{assumption}
\noindent
Under Assumption \ref{ass:prior}, we further assume that $\mcB \subseteq [-M, M]$ for some $M > 0$.
\label{assum:S_strong}
\end{assumption}
\begin{assumption}
\noindent
$\varphi$ is a bounded function with bounded first and second derivatives w.r.t. its first argument. 
\label{assum:phi_strong}
\end{assumption}
Essentially, these two assumptions can be weakened to Assumption~\ref{ass:prior} and \ref{assum:phi_weak}, as we can build a bounded $\hat{\varphi}$ (with good regularity properties) such that the output difference $\EE[(\varphi(\Tr[\Phi S^\star],a)-\hat{\varphi}(\Tr[\Phi S^\star],a))^2]$ is arbitrarily small, and show that this implies a bound on the difference of the free entropies. To deal with the eigenvalues of $S^\star$, we interpolate between the original potential $\mathcal{V}$ in Assumption~\ref{ass:prior} and another potential $\tilde{\mathcal{V}}$ that is infinity outside a bounded interval. See Appendix \ref{sec_app:relax} for details.
\begin{lemma}[Universality]
\label{lemma:universality}
\noindent
Under Assumptions \ref{assum:CLT}, \ref{assum:S_strong} and \ref{assum:phi_strong}, we have
\begin{equation}
\lim_{d\to\infty}|\EE[\psi(F_d(\Phi))]-\EE[\psi(F_d(G))]|=0
\label{eq:lemma_universality}
\end{equation}
for any bounded function $\psi$ with bounded Lipschitz derivative, where the free entropy is defined in eq.~\eqref{eq:free_entropy_universality}.
\end{lemma}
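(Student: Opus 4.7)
The plan is to use the interpolation $U_\mu(t) = \cos(t)\Phi_\mu + \sin(t)G_\mu$ already defined in the statement: since $U_\mu(0) = \Phi_\mu$ and $U_\mu(\pi/2) = G_\mu$, the fundamental theorem of calculus reduces the lemma to showing that
\begin{equation*}
\sup_{t \in [0, \pi/2]}\Big|\tfrac{d}{dt}\EE[\psi(F_d(U(t)))]\Big| = o_d(1).
\end{equation*}
Differentiation under the expectation, justified by Assumption~\ref{assum:phi_strong}, gives with $\dot U_\mu(t) = -\sin(t)\Phi_\mu + \cos(t)G_\mu$ and $\ell_\mu := \ell(\Tr[U_\mu(t)S], \Tr[U_\mu(t)S^\star], Z_\mu)$,
\begin{equation*}
\tfrac{d}{dt}\EE[\psi(F_d(U(t)))] = -\tfrac{1}{d^2}\sum_{\mu=1}^n \EE\Big[\psi'(F_d)\big(\langle \partial_1\ell_\mu \Tr[\dot U_\mu S]\rangle + \partial_2\ell_\mu \Tr[\dot U_\mu S^\star]\big)\Big],
\end{equation*}
where $\langle\cdot\rangle$ denotes the Gibbs average at interpolation time $t$. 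I would split $\dot U_\mu$ into a ``$G$-part'' ($\cos(t)G_\mu$) and a ``$\Phi$-part'' ($-\sin(t)\Phi_\mu$) and process them separately. This is the same template used for planted universality in~\cite{maillard2023exact,montanari2022universality}, the essential novelty being the rotationally-invariant matrix prior.

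For the $G$-part I would apply Gaussian integration by parts (Stein's identity) to each entry of $G_\mu$; since $G_\mu$ enters the integrand only through $\sin(t)G_\mu$ inside $U_\mu$ (and through $F_d$), Stein produces second-order terms involving $\partial_i\partial_j\ell_\mu$ and derivatives of $\psi'(F_d)$, paired with Gibbs-averaged products of traces such as $\langle\Tr[SS']\rangle$ and $\langle\Tr[SS^\star]\rangle$. These remain uniformly bounded thanks to Assumption~\ref{assum:phi_strong} and the bounded spectra guaranteed by Assumption~\ref{assum:S_strong}. For the $\Phi$-part I would employ a Lindeberg-type comparison: conditioning on $\{G_\nu\}_\nu$, $\{\Phi_\nu\}_{\nu\neq\mu}$, $S^\star$, and the noise, the remaining dependence on $\Phi_\mu$ runs through the pair of scalars $(\Tr[\Phi_\mu S], \Tr[\Phi_\mu S^\star])$ (with $S$ still integrated against the Gibbs measure). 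A third-order Taylor expansion in these scalars isolates the contribution of the first two moments of $\Phi_\mu$, which match those of an independent $\tilde G_\mu \sim \GOE(d)$ exactly; Assumption~\ref{assum:CLT}, in its uniform-in-$S$ form, then controls the swap $\Phi_\mu \leftrightarrow \tilde G_\mu$ at $o_d(1)$ cost. After the swap, Stein applied to $\tilde G_\mu$ yields an expression structurally identical to the $G$-part, and the trigonometric prefactors combine so that the two contributions cancel to leading order.

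The main obstacle I anticipate lies in the Taylor remainder of the Lindeberg swap: the factor $\psi'(F_d)\cdot(\cdots)$ depends on $\Phi_\mu$ non-locally, since the Gibbs measure itself is indexed by $\Phi_\mu$, so the relevant third-order bound cannot be read off directly from Assumption~\ref{assum:phi_strong}. I would handle this via a cavity-type decoupling, removing the $\mu$-th term from the Hamiltonian and controlling the resulting posterior perturbation using the boundedness of $\partial_1\ell$, complemented by tail estimates on $|\Tr[\Phi_\mu S]|$ that hold uniformly for $S \in B_\op(M)$. The uniformity over $S$ in the bounded operator-norm ball provided by Assumption~\ref{assum:CLT} is crucial here, precisely because the posterior sample $S$ fluctuates jointly with $\Phi_\mu$; a pointwise CLT for each fixed $S$ would not suffice. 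The relaxation from Assumptions~\ref{assum:S_strong}--\ref{assum:phi_strong} back to Assumptions~\ref{ass:prior}--\ref{assum:phi_weak}, sketched in Appendix~\ref{sec_app:relax}, is a largely orthogonal approximation step.
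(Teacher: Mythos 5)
Your proposal matches the paper's overall strategy: the interpolation $U_\mu(t)=\cos(t)\Phi_\mu+\sin(t)G_\mu$, the reduction via the fundamental theorem of calculus (Lemma~\ref{lemma:universality1} in the paper supplies the domination you allude to), the cavity decoupling to localize the $\Phi_\mu$-dependence, and the critical observation that Assumption~\ref{assum:CLT} must be \emph{uniform} over $S$ in an operator-norm ball because the posterior sample fluctuates jointly with $\Phi_\mu$. The place where you genuinely diverge from the paper is in how the key cancellation is realized. The paper (Lemma~\ref{lemma:finite_demension_CLT}) swaps $\Phi_1\to\tG_1$ \emph{entirely} --- in $U_1$, in $\tU_1$, and in the cavity Gibbs factor $e^{-\ell(U_1)}$ --- and then simply notes that the rotated pair $V_1=\cos(t)\tG_1+\sin(t)G_1$ and $\tV_1=-\sin(t)\tG_1+\cos(t)G_1$ are \emph{independent} $\GOE(d)$ matrices with $\EE[\tV_1]=0$, so the inner expectation vanishes by conditional zero-mean; no integration by parts is needed. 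Your split of $\dot U_\mu$ into a $G$-part and a $\Phi$-part with Stein applied to each, followed by trigonometric cancellation, arrives at the same zero but is slightly more delicate: the $G$-part Stein derivative still carries the original (non-Gaussian) $\Phi_\mu$ inside $U_\mu$, so it does \emph{not} cancel exactly against the post-swap $\Phi$-part; you would need to Lindeberg-swap the $G$-part as well, or equivalently swap $\Phi_\mu$ globally first --- at which point Stein becomes unnecessary and you recover the paper's cleaner zero-mean argument. This is fixable and does not affect the correctness of your plan, but the paper's route avoids the second derivatives and the extra Lindeberg cost that your split-and-Stein organization would have to control.
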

Let us now briefly sketch the proof of Lemma~\ref{lemma:universality}.
Clearly, it is sufficient to prove
\begin{equation}\label{eq:sufficient_universality}
\lim_{d\to\infty}\sup_{\{A_\mu\}_{\mu=1}^n}\sup_{S^\star\in B_{\op}(M)}|\EE_{\Phi,Z}[\psi(F_d(\Phi))]-\EE_{G,Z}[\psi(F_d(G))]|=0.
\end{equation}
Using the interpolation path described above and the triangular inequality, we can write:
\begin{equation}\label{eq:interpolation_fentropy}
|\EE_{\Phi,Z}[\psi(F_d(\Phi))]-\EE_{G,Z}[\psi(F_d(G))]|\leq\int_0^{\pi/2}\left|\EE_{G,\Phi,Z}\frac{\partial\psi(F_d(U(t)))}{\partial t}\right|\rd t,
\end{equation}
and we will prove that the right-hand side of eq.~\eqref{eq:interpolation_fentropy} goes to zero uniformly in $S^\star \in B_\op(M)$ and $\{A_\mu\}_{\mu=1}^n$. This then essentially follows from an adaptation of the arguments of~\cite{maillard2023exact} on the problem of ellipsoid fitting, that we detail in Appendix~\ref{sec_app:universality}.

\vspace{-3mm}
\subsection{\texorpdfstring{Matrix sensing with Gaussian data -- $(ii)$ and $(iii)$}{}}
\vspace{-1mm}

We sketch here the end of the proof of Theorem~\ref{thm:fentropy_symmetric}.
Thanks to Lemma~\ref{lemma:universality}, we focus on the model
\begin{equation}
\label{eq:msensing_gaussian}
Y_\mu\sim P_{\out}(\cdot|\Tr[G_\mu S^\star]),
\end{equation}
where $(G_\mu)_{\mu=1}^n \iid \GOE(d)$.
We use the adaptive interpolation method \cite{barbier2019optimal}, and introduce:
\begin{equation}\label{eq:interpolated_model}
Y_t\sim P_\out(\cdot|J_t);  \quad \quad \quad   
Y_t'=\sqrt{R_1(t)}S^\star+Z',
\end{equation}
with $Z' \sim \GOE(d)$, and where we defined 
\begin{equation*}
J_{t,\mu}\coloneqq\sqrt{1-t}\Tr[G_\mu S^\star]+\sqrt{2R_2(t)}V_\mu+\sqrt{2\rho t-2R_2(t)+2s_d}U_\mu^\star,
\end{equation*}
with $\{V_\mu,U_\mu^\star\}_{\mu=1}^n\iid\mcN(0,1)$. 
The goal is now to recover \emph{both} $S^\star$ and $U^\star$ from the observations of $(Y_t, Y'_t)$.
The choice of the functions $R_1(t), R_2(t)$ will be precised later on, we simply require at the moment that $R_1(t) \geq 0$, $R_2(t) \in [0, \rho]$, and that $R_1(0) = R_2(0) = 0$.

\myskip
Notice that eq.~\eqref{eq:interpolated_model} reduces to \eqref{eq:msensing_gaussian} for $t=0$, while for $t = 1$ its free entropy reduces to the replica-symmetric prediction of eq.~\eqref{eq:def_fRS}, with $q = R_2(1)$ and $r = R_1(1)$.
This last point is a consequence of the analysis of the \emph{matrix denoising} problem, which arises from the observation of $Y'_t$ in eq.~\eqref{eq:interpolated_model}.

\myskip
Our goal reduces now to chose the functions $R_1(t), R_2(t)$, so as to control the derivative of the free entropy of the interpolated model with respect to $t$.
Following~\cite{barbier2019optimal}, a so-called \emph{adaptive} choice  allows to sharply control this derivative, see Lemmas~\ref{lemma:lowerbound} and \ref{lemma:upperbound} in Appendix~\ref{sec_app:proof_GLM}.
The main difference with our setting is that the signal $S^\star$ is a rotationally-invariant matrix (see Assumption~\ref{ass:prior}), while~\cite{barbier2019optimal} focuses on simpler i.i.d.\ priors.
Beyond the analysis of the matrix denoising problem, the other key result needed to extend their approach is
a Poincar\'e inequality for rotationally-invariant distributions satisfying Assumption~\ref{ass:prior}, see Lemma~\ref{lemma:poincare} and~\cite{chafai2020poincare}.
It essentially reads as:
\begin{equation}\label{eq:poincare_informal}
\Var(g(S^\star))\leq \frac{c}{\kappa d}\EE\left[\sum_{i=1}^d\left(\frac{\partial g}{\partial \Lambda^\star_{i}}\right)^2\right],
\end{equation}
where $\{\Lambda^\star_{i}\}_{i=1}^d$ are the eigenvalues of $S^\star$, for any ``regular enough'' function $g$.
Eq.~\eqref{eq:poincare_informal} allows us to prove concentration properties for the free entropy of the interpolated model of eq.~\eqref{eq:interpolated_model}, which in turns makes possible the generalization of the adaptive interpolation approach of~\cite{barbier2019optimal}.
A detailed treatment of the arguments above, which form the proof of Theorem~\ref{thm:fentropy_symmetric}, is given in Appendix~\ref{sec_app:proof_GLM}.

\vspace{-3mm}
\subsection{Limit of the MMSE: sketch of proof of Theorem~\ref{theo:overlap}}
\label{subsec:limit_overlap}
\vspace{-1mm}

Using the I-MMSE theorem~\cite{guo2005mutual}, the MMSE can be related to the free entropy by considering the following observation model, in which we added a ``side information'':
\begin{equation}\label{eq:model_side_info_tensor}
Y_\mu\sim P_{\out}(\cdot|\Tr[G_\mu S^\star]), \, \, (\mu \in [n]); \quad \quad \quad
Y'=\sqrt{\frac{\lambda}{d^{2p-1}}}(S^\star)^{\otimes2p}+Z',
\end{equation}
where $Z'\in \mcS_d^{\otimes2p}\iid\mcN(0,1)$ is a tensor, and we chose an integer $p \geq 1$ and a signal-to-noise ratio $\lambda \geq 0$. The additional side information comes from a so-called \emph{spiked tensor} model.
In Appendix~\ref{sec_app:spike} we detail the proof of the limiting free entropy for the model of eq.~\eqref{eq:model_side_info_tensor} (under Assumptions~\ref{ass:prior}, \ref{assum:CLT} and \ref{assum:phi_weak}), which leverages as well the adaptive interpolation method of~\cite{barbier2019adaptive}. Finally, we describe in Appendix~\ref{sec_app:sketch_overlap} how this can be used to end the proof of Theorem~\ref{theo:overlap}.

\subsection{Reduction of the two-layer network: sketch of proof of Theorem \ref{theo:2layerNN}}
\label{subsec:2layerNN}

We sketch here the proof, with details given in Appendix~\ref{sec_app:reduction_2nn}.
To prove Theorem \ref{theo:2layerNN}, we will show that we can replace the model of eq.~\eqref{eq:teacher_def} by a matrix sensing problem in terms of 
$S^\star \coloneqq (1/m) \sum_{k=1}^m w_k^\star (w_k^\star)^\T$.
This establishes rigorously the heuristic argument sketched in Appendix~C.3 of \cite{maillard2024bayes}.

\myskip
Specifically, we will prove that both models have the same free entropy, and then deduce the equivalence of MMSE using the I-MMSE theorem (by adding a side information channel, whose signal-to-noise ratio eventually goes to $0$). We first rewrite the model of eq.~\eqref{eq:teacher_def} as
\begin{align}\label{eq:vi}
    v_\mu &= \Tr[\Phi_\mu S^\star] + \sqrt{d}(\tr S^\star - 1) + \sqrt{d} \Delta \left(\frac{\|z_\mu\|^2}{m} - 1\right) 
    + \frac{2\sqrt{\Delta}}{m} \sum_{k=1}^m z_{\mu,k} \left(\frac{x_\mu^\T w_k^\star}{\sqrt{d}}\right)+\sqrt{\Delta_0}\zeta_\mu,
\end{align}
where $v_\mu \coloneqq \sqrt{d}(Y_\mu - \Delta - 1)$ and 
$\Phi_\mu \coloneqq (x_\mu x_\mu^\T - \Id_d)/\sqrt{d}$. For notational simplicity, we define $\tr(\cdot) \coloneqq (1/d)\Tr(\cdot)$.
We use then an interpolation argument. For $t \in [t_0,1]$, we define 
\begin{align*}
    \nonumber
    v_{t,\mu} &\coloneqq \Tr[\Phi_\mu S^\star] + (1-t)\sqrt{d} [\tr \, S^\star - 1] 
    + \sqrt{d(1-t)} \Delta \left(\frac{\|z_\mu\|^2}{m} - 1\right)  \\ 
    &+ \frac{2\sqrt{\Delta d(1-t)}}{m} \sum_{k=1}^m z_{\mu,k} \left(\frac{x_\mu^\T w_k^\star}{\sqrt{d}}\right)+ \sqrt{\tDelta t} \zeta_\mu,
\end{align*}
where $\tDelta \coloneqq 2\Delta(2+\Delta)/\kappa+\Delta_0$ and $t_0\coloneqq \Delta_0 / \tDelta <1$. Notice that, for $t = t_0$, $v_{t_0,\mu} = v_\mu$ is given by eq.~\eqref{eq:vi}, 
while for $t = 1$, $v_{1,\mu} = \Tr[\Phi_\mu S^\star] + \sqrt{\tDelta} \zeta_\mu$ is a ``matrix GLM'' problem, with Gaussian additive noise.
The technical introduction of the post-activation noise $\Delta_0 > 0$ allows to have $t$ lower bounded: while we believe this not to be necessary, it allows for simplifications in part of our proof.
We denote $v_{t,\mu} \sim P_\out^{(t)}(\cdot | W^\star,x_\mu)$ the distribution of these labels.

\myskip
Moreover, We introduce a small ``side-information'' channel to the observations, 
similarly e.g.\ to~\cite{barbier2019optimal,maillard2020phase}. 
More precisely, for $\Lambda \geq 0$ we add the observation of $Y' \in \bbR$, which is built as:
\begin{align}\label{eq:side_info_channel}
    Y' = \sqrt{\Lambda} S^\star_{12} + \frac{\xi}{\sqrt{d}}, 
\end{align}
where $\xi \sim \mcN(0,1)$.
Adding this side information channel, we will then be able to leverage the I-MMSE theorem \cite{guo2005mutual}, before eventually taking $\Lambda \downarrow 0$.
We define the interpolated free entropy as
\begin{align}\label{eq:def_fnd}
    f_{d}(t,\Lambda) &\coloneqq \EE_{W^*,Y',x} \frac{1}{d^2} \log \int_{\bbR^{m \times d}} \mcD W \left[\prod_{\mu=1}^n P_\out^{(t)}(v_{\mu,t} | W,x_\mu) \right] \, e^{-\frac{d}{4} (Y' - \sqrt{\Lambda} S_{12})^2}.
\end{align}
We denote $\langle \cdot \rangle_{t,\Lambda}$ the average with respect to the Gibbs-like measure with weight proportional to the integrand of eq.~\eqref{eq:def_fnd}. By the I-MMSE theorem and the Nishimori identity, we prove that, for any $t \in [t_0,1]$:
\begin{align}
\label{eq:partial_lambda_f}
\left|\frac{\partial}{\partial\Lambda} f_d(t,0) + \frac{1}{2} \EE \, \tr[(S^\star - \langle S \rangle_{t,0})^2] \right|\leq \frac{C(\kappa)}{d},
\end{align}
for some $C(\kappa)>0$. The MSE on $S^\star$ is connected to the generalization MMSE via 
\begin{equation}
\label{eq:MMSEy_MMSES}
   \left|\MMSE_{d} - \kappa \EE\tr[(S^\star - \langle S \rangle_{t_0,0})^2 \right| \leq \frac{C(\kappa)}{n}.
\end{equation}
for some $C(\kappa)>0$: eq.~\eqref{eq:MMSEy_MMSES} is proven in \cite[Lemma D.1]{maillard2024bayes}.
Moreover, by Theorem \ref{theo:overlap} and Remark \ref{remark:MMSE_PSD} we have 
\begin{equation}
\EE \, \tr[(S^\star - \langle S \rangle_{1,0})^2] \underset{d\to \infty}{\pto}
\rho-q^\star(\alpha).
\label{eq:t=1}
\end{equation}
It remains then to prove the following lemma.
\begin{lemma}
\begin{equation}
\lim_{d\to\infty}\left|\frac{\partial}{\partial\Lambda} f_d(1,0) - \frac{\partial}{\partial\Lambda} f_d(t_0,0) \right|=0.
\label{eq:diff_t0_1}
\end{equation}
\label{lemma:diff_t0_1}
\end{lemma}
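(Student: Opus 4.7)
I will establish eq.~\eqref{eq:diff_t0_1} by showing that $f_d(\cdot,\Lambda)$ has the same $d\to\infty$ limit at $t=t_0$ and $t=1$, uniformly for $\Lambda$ in a neighborhood of $0$, and then transferring this convergence to the $\Lambda$-derivatives at $\Lambda=0$ via convexity of $\Lambda\mapsto f_d(t,\Lambda)$ (which follows from I-MMSE, since $\partial_\Lambda^2 f_d$ is proportional to minus the derivative of the MMSE on $S^\star_{12}$ w.r.t.\ the SNR, hence nonnegative up to $O(1/d)$ corrections). By the fundamental theorem of calculus,
\[
f_d(1,\Lambda) - f_d(t_0,\Lambda) = \int_{t_0}^1 \partial_t f_d(t,\Lambda)\, \rd t,
\]
so it suffices to prove $|\partial_t f_d(t,\Lambda)| = o_d(1)$ uniformly in $t \in [t_0,1]$ and in $\Lambda$ ranging in a small compact neighborhood of $0$.

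To compute $\partial_t f_d(t,\Lambda)$, I will differentiate under the outer expectation in eq.~\eqref{eq:def_fnd}. Because both the law of the labels $v_{t,\mu}$ and the channel $P_{\out}^{(t)}$ used by the student depend on $t$, there are two contributions; using Gaussian integration by parts in the underlying noise $(z_\mu, \zeta_\mu)$ and input Gaussians $(x_\mu, W^\star)$, combined with the Nishimori identity (swapping $S^\star$ with an independent posterior sample, valid since the inference is Bayes-optimal at every $t$), these contributions reorganize into Gibbs averages of overlap-type observables between two independent replicas, in the spirit of~\cite{barbier2019optimal}. The crucial mechanism is that $\tDelta\coloneqq 2\Delta(\Delta+2)/\kappa+\Delta_0$ is engineered so that the total variance of the ``extra'' noise in $v_{t,\mu}$ beyond $\Tr[\Phi_\mu S^\star]$ is constant along the interpolation at leading order: the quadratic-plus-bilinear non-Gaussian component $\sqrt{d(1-t)}\Delta(\|z_\mu\|^2/m-1)+(2\sqrt{\Delta d(1-t)}/m)\sum_k z_{\mu,k}(x_\mu^\T w_k^\star/\sqrt{d})$ has conditional variance concentrated at $(1-t)(\tDelta - \Delta_0)$ (using concentrations $\|z_\mu\|^2/m \to 1$ and $x_\mu^\T S^\star x_\mu/d \to \tr\, S^\star \to 1$), while the pure Gaussian part $\sqrt{\tDelta t}\zeta_\mu$ contributes variance $\tDelta t$; the small residual shift $(1-t)\sqrt{d}(\tr\, S^\star - 1)$ concentrates at an $O(1)$ deterministic quantity that is absorbed into the analysis.

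Since the non-Gaussian noise is a degree-$\leq 2$ polynomial in the independent Gaussians $(z_\mu, w_k^\star, x_\mu)$, a Stein-type (Gaussian-equivalence) argument combined with the Poincar\'e inequality for the rotationally-invariant prior (Lemma~\ref{lemma:poincare}) and concentration of overlaps yields $|\partial_t f_d(t,\Lambda)| = o_d(1)$ uniformly on the desired set. The convexity of $\Lambda\mapsto f_d(t,\Lambda)$ then propagates the resulting uniform convergence $f_d(1,\Lambda) - f_d(t_0,\Lambda) \to 0$ on $[0,\varepsilon]$ to convergence of the right-derivatives at $\Lambda = 0$, via a standard sandwiching argument (taking $\Lambda \downarrow 0$ after invoking the monotonicity of the MMSE in the SNR). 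The main obstacle is the Gaussian-equivalence bound on $|\partial_t f_d(t,\Lambda)|$: unlike the universality Lemma~\ref{lemma:universality}, the objects being replaced are not sensing matrices but polynomial-in-Gaussian noise variables coupled to $S^\star$ through quadratic forms such as $x_\mu^\T S^\star x_\mu$, so one must carefully track this conditional dependence while simultaneously exploiting the Bayes-optimal (Nishimori) structure and the prior's Poincar\'e inequality to control the resulting two-replica Gibbs averages.
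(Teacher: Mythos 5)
Your high-level architecture matches the paper's: show $|\partial_t f_d(t,\Lambda)|\to 0$ uniformly in $t\in[t_0,1]$ and $\Lambda\geq 0$, integrate to get $\sup_\Lambda|f_d(1,\Lambda)-f_d(t_0,\Lambda)|\to 0$, then transfer to the $\Lambda$-derivatives at $\Lambda=0$ via convexity. That last step is essentially right, but you elide the uniform second-derivative bound $\partial_\Lambda^2 f_d(t,\Lambda)\leq C(\kappa)$ (obtained from Cauchy--Schwarz, Nishimori, and $\EE[(S^\star_{12})^4]$), which the paper needs to optimize the sandwiching bound in $\Lambda$ and obtain a rate $C(\kappa)\sqrt{h_d}$; ``taking $\Lambda\downarrow 0$'' plus monotonicity of the MMSE alone does not pin down the derivative at $0$.

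The genuine gap is the central estimate, $\sup_{\Lambda}|\partial_t f_d(t,\Lambda)|=o_d(1)$ uniformly in $t\in[t_0,1]$. You correctly identify the structural mechanism (conservation of the conditional noise variance along the interpolation, built into the definition of $\tDelta$), but the argument you outline --- Gaussian integration by parts in $(z_\mu,\zeta_\mu,x_\mu,w_k^\star)$, Nishimori, and the prior Poincar\'e inequality reorganizing into two-replica overlap observables --- is not carried out, and is in fact not the paper's route. The paper's proof of Lemma~\ref{lemma:partial_t_f} is a direct computation: it writes $P_\out^{(t)}$ as a Fourier integral, isolates a high-probability event on which $x_\mu^\T S x_\mu/d$, $\tr S$, and $\tilde v_\mu-\Tr[\Phi_\mu S^\star]$ are all near their typical values, proves $|\partial_t\log P_\out^{(t)}|\leq C\delta$ on that event (Lemma~\ref{lemma:partial_t_logPout}), bounds its second moment by $\mathrm{poly}(d)/t_0^{12}$ (Lemma~\ref{lemma:square_partial_t_logPout}), and concludes via Cauchy--Schwarz and exponential concentration on the complement. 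Note also that the noise component $\Delta(\|z_\mu\|^2/m-1)$ is conditionally chi-squared, not Gaussian, so a naive IBP in $z_\mu$ does not directly produce the usual GLM overlap observables of~\cite{barbier2019optimal}; you would need an explicit chi-squared Stein manipulation together with a careful treatment of the cross term $\frac{1}{m}\sum_k z_{\mu,k}(x_\mu^\T w_k^\star/\sqrt d)$, and you do not supply it. You flag this as ``the main obstacle,'' but that is an acknowledgment, not a proof: as written, the proposal defers precisely the step on which the entire lemma rests. (A minor inaccuracy: $\sqrt d(\tr S^\star-1)$ concentrates at $0$, not at an $O(1)$ deterministic quantity, since its variance is $2/(\kappa d)$.)
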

Combining eqs.~\eqref{eq:partial_lambda_f}, \eqref{eq:MMSEy_MMSES}, \eqref{eq:t=1} and \eqref{eq:diff_t0_1}, we can finish the proof of Theorem \ref{theo:2layerNN}.

\myskip
The proofs of eq.~\eqref{eq:partial_lambda_f} and Lemma \ref{lemma:diff_t0_1} are given in Appendix \ref{sec_app:reduction_2nn}. For Lemma \ref{lemma:diff_t0_1}, we will utilize the interpolation model, and show that $\frac{\partial}{\partial t}f_d(t,\Lambda)$ goes to zero uniformly in $[t_0,1]$ and $\Lambda \geq 0$, implying that $|f_d(1,\Lambda)-f_d(t_0,\Lambda)|$ goes to zero uniformly in $\Lambda$. Using the convexity of the free entropy as a function of $\Lambda$, we then deduce that $\left|\frac{\partial}{\partial\Lambda} f_d(1,0) - \frac{\partial}{\partial\Lambda} f_d(t_0,0) \right|$ also goes to zero.
\vspace{-1mm}

\section*{Acknowledgements}
We acknowledge funding from the Swiss National Science Foundation grants SNFS SMArtNet (grant number 212049), OperaGOST (grant number 200021 200390) and DSGIANGO.

\newpage
\bibliographystyle{alpha}
\bibliography{main}
\newpage 

\appendix
\section{Classical definitions and remarks}\label{sec_app:technical}
\subsection{Classical definitions}
\label{subsec_app:definitions}
The following two standard random matrix distributions over the $\mathcal{S}_d$ are of particular interest:
\begin{itemize}
\item We say that a random matrix $Y \in \mcS_d$ is generated from the \emph{Gaussian Orthogonal Ensemble} $\GOE(d)$ if $Y_{ij} \iid \mcN(0, (1+\delta_{ij})/d)$ for $1\leq i \leq j\leq d$.
It clearly satisfies Assumption~\ref{ass:prior}, with $\mathcal{V}(x) = x^2/4$.
\item For an integer $m \geq 1$, we say that a random matrix $Y \in \mcS_d$ is generated from the \emph{Wishart Ensemble} $\mcW_{m,d}$ if $Y=(1/m)W^\T W$, with $W\in\bbR^{m\times d}$ and $W_{ij}\iid\mcN(0,1)$ for $i\in[m],j\in[d]$. 
Notice that if $m, d \to \infty$ with $m/d \to \kappa \in (0, \infty)\backslash\{1\}$, then $Y$ satisfies Assumption~\ref{ass:prior} (see e.g.~\cite{anderson2010introduction,uhlig1994singular}):
\begin{itemize}
    \item If $\kappa > 1$, is satisfies case $(i)$, 
    with\footnote{The probability density actually satisfies $P_0(S) \propto e^{-d \Tr[V_d(S)]}$, with $V_d(x) \coloneqq V(x) + (1/2d) \log x$. However, 
    this second term only changes the constant $c$ in the Gaussian Poincar\'e inequality (Lemma~\ref{lemma:gaussian-poincare}) by an additive $\smallO_d(1)$, and thus does not influence our results.}
    \begin{equation}\label{eq:V_Wishart}
     \mathcal{V}(x) = \begin{cases}
         \frac{\kappa}{2} x - \frac{\kappa-1}{2} \log x & \textrm{ if } x > 0, \\
         + \infty & \textrm{ otherwise.}
     \end{cases}
    \end{equation}
    \item If $\kappa < 1$, is satisfies case $(ii)$~\cite[Theorem~6]{uhlig1994singular}, with $\mathcal{V}(x)$ still given by eq.~\eqref{eq:V_Wishart}.
\end{itemize}
\end{itemize} 
Notice that in the Wishart setting we do not have $\mathcal{V}''(x)  = (\kappa-1)/(2x^2) \geq 1/c$ for all $x \in \bbR$. However for all $M > 0$ we have $\mathcal{V}''(x) \geq 1/c$ for $x \in (0, M)$ and some $c = c(M) > 0$, which we show to be sufficient for our analysis, see Remark~\ref{remark:weakening_ass_prior}.

\myskip
\begin{remark}[The case $\kappa  = 1$]\label{remark:wishart_kappa_1}
\noindent
We notice that the Wishart distribution with $\kappa = 1$ does not satisfy Assumption~\ref{ass:prior}, since then $V$ is not strongly convex: $\mathcal{V}(x) = x/2$ for $x > 0$.
This prevents us from obtaining a Poincar\'e inequality for this prior (see Lemma~\ref{lemma:gaussian-poincare} and the discussion around this result), which is a key element of our analysis.
Since our analysis applies for any $\kappa \neq 1$, we expect this condition to be an artifact of our proof techniques, and we leave a lifting of this restriction for future work.
\end{remark}

\myskip
The limiting eigenvalue distributions of the two ensembles above are well-known~\cite{anderson2010introduction}.
\begin{itemize}
\item The semi-circle law
\begin{equation}\label{eq:def_musci}
    \mu_\sci(x)\coloneqq\frac{\sqrt{4-x^2}}{2\pi}\indi\{|x|\leq2\}
\end{equation}
is the limiting eigenvalue distribution of a matrix $Y \sim \GOE(d)$.
\item 
If $m/d \to \kappa > 0$ as $d \to \infty$,
the Marchenko-Pastur law
\begin{equation}
\label{eq:def_muMP}
\mu_\MP(x)\coloneqq \begin{cases}(1-\kappa) \delta(x)+\frac{\kappa \sqrt{\left(\lambda_{+}-x\right)\left(x-\lambda_{-}\right)}}{2 \pi x} & \text { if } \kappa \leq 1, \\ \frac{\kappa \sqrt{\left(\lambda_{+}-x\right)\left(x-\lambda_{-}\right)}}{2 \pi x} & \text { if } \kappa \geq 1,\end{cases}
\end{equation}
where $\lambda_{ \pm}\coloneqq (1 \pm \kappa^{-1 / 2})^2$, is the limiting eigenvalue distribution of $Y \sim \mcW_{m,d}$.
\end{itemize}

\myskip
\textbf{Additive free convolution: symmetric case --}
We will also consider the  (additive) free convolution of measures. Informally, we can interpret the free convolution $\mu\boxplus\nu$ of two measures $\mu$ and $\nu$ as the limiting spectral measure of $A+B$, where $A,B
\in\mcS_d$ are two random matrices with limiting spectral measures $\mu$ and $\nu$ respectively, when $A$ and $B$ satisfy a condition called \emph{asymptotic freeness}. We refer to \cite{anderson2010introduction} for more
formal discussions on asymptotic freeness and free convolution. In this paper, we will only use the special case $B\sim\GOE(d)$ independently of $A$, so that $\nu = \mu_\sci$. In this case, $A,B$ are asymptotically free \cite[Theorem 5.4.5]{anderson2010introduction}.

\subsection{Properties of priors satisfying Assumptions \ref{ass:prior} and \ref{ass:prior_rec}}
\label{subsec_app:properties_prior}

\subsubsection{The symmetric case: Assumption~\ref{ass:prior}}

Let us first note that the two cases of Assumption~\ref{ass:prior} are essentially equivalent.
\begin{remark}
\noindent
Assumption \ref{ass:prior}(ii) can be seen as a special case
of Assumption \ref{ass:prior}(i). Indeed, under assumption \ref{ass:prior}(ii) the joint distribution of the non-zero eigenvalues reads (see~\cite[Theorem~2]{uhlig1994singular}):
\begin{equation}
P(\{\lambda_i\}_{i=1}^{m})\propto \Pi_{i<j}|\lambda_i-\lambda_j|e^{-d\sum_{i=1}^{m}\tilde{\mathcal{V}}(\lambda_i)},
\label{eq:tilde_V}
\end{equation}
where $\tilde{\mathcal{V}}(s)\coloneqq \mathcal{V}(s)-(1-\frac{m}{d})\log s$. 
Eq.~\eqref{eq:tilde_V} is equivalent to the joint eigenvalue distribution under assumption \ref{ass:prior} (i), replacing $d$ by $m$ and considering the potential $\tilde{\mathcal{V}}$.
\end{remark}
We now list important properties of $P_0$ under Assumption~\ref{ass:prior}.
\begin{proposition}
\label{prop:properties_P0_sym}
\noindent
Assumption \ref{ass:prior} implies the following properties.
\begin{itemize}
\item[$(i)$] According to \cite[Remark 1.3]{fan2015convergence}, there exists $\mu_0$, a probability measure with compact support such that the empirical eigenvalue distribution $\mu_S$ of $S \sim P_0$ a.s. weakly converges to $\mu_0$.
\item [$(ii)$] There exists $M,c'>0$ such that
\begin{equation}
P_0(\{\lambda_i\}_{i=1}^d\subset [-M,M])\geq 1-e^{-c'd},
\label{eq:max_eigs_deviation}
\end{equation}
where $\{\lambda_i\}_{i=1}^d$ are the eigenvalues of $S \sim P_0$.
\item [$(iii)$] Eq.~\eqref{eq:max_eigs_deviation} implies that
\begin{equation*}
P_0\left(\frac{1}{d}\sum_{i=1}^d\lambda_i^2>M^2\right)\leq e^{-c'd}.
\end{equation*}
By the Borel–Cantelli lemma, this implies that $(1/d)\sum_{i=1}^d\lambda_i^2$ is almost surely bounded, and thus uniformly integrable.

\item [$(iv)$] By \cite[Theorem 7.12-(ii)]{villani2021topics}, as $(1/d)\sum_{i=1}^d\lambda_i^2$ is a.s. bounded and $\mu_S$ a.s. weakly converges to $\mu_0$, we have
\begin{equation}
\label{eq:W2convergence}
\lim_{d\to\infty}W_2(\mu_S,\mu_0)=0,\ a.s.,
\end{equation} 
where $W_2$ denotes the Wasserstein-2 distance.
This further implies that
\begin{equation}\label{eq:def_rho}
\lim_{d\to\infty}\frac{1}{d}\sum_{i=1}^d\lambda_i^2=\rho \coloneqq \int \mu_0(\rd x) \, x^2,\ a.s.
\end{equation}
As the left side is uniformly integrable and the unordered eigenvalues $\{\lambda_i\}_{i=1}^d$ have identical marginals, we have
\begin{equation}
\label{eq:moment_converge}
\lim_{d\to\infty}\EE_{S \sim P_0}[(\lambda_1)^2]=\rho,
\end{equation}
which will be essential for our proof. Finally, one can show in a similar way that 
\begin{equation*}
\lim_{d\to\infty}\EE_{S \sim P_0}[|\lambda_1|^p]<+\infty,
\end{equation*}
for any $p\geq1$. 
\end{itemize}
\end{proposition}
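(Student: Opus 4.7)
The four items are a sequential cascade, with the crux concentrated in $(ii)$. For $(i)$, we would appeal directly to \cite[Remark 1.3]{fan2015convergence}, verifying only that our Assumption~\ref{ass:prior} matches the hypotheses there: in case $(ii)$, the same result applies to the $m$ non-zero eigenvalues via the effective potential $\widetilde{V}(s) = V(s) - (1-m/d)\log s$, whose second derivative is $V''(s) + (1-m/d)/s^2 \geq 1/c$ by assumption, yielding weak convergence to a limit $\mu_0$ supported on $[0,\infty)$ (with an atom of mass $1-\kappa$ at $0$).

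For $(ii)$, the key observation is that the joint eigenvalue density
\begin{equation*}
P(\lambda_1,\ldots,\lambda_d) \propto \prod_{i<j}|\lambda_i-\lambda_j|\exp\!\Bigl(-d\sum_{i=1}^d V(\lambda_i)\Bigr)
\end{equation*}
is $(d/c)$-strongly log-concave on its support. Indeed, the Hessian of $-\log P$ decomposes as $d \cdot \mathrm{diag}(V''(\lambda_i)) \succeq (d/c) I$ plus the matrix with diagonal entries $\sum_{j \neq i} (\lambda_i - \lambda_j)^{-2}$ and off-diagonals $-(\lambda_i - \lambda_j)^{-2}$, which is a weighted graph Laplacian and hence positive semidefinite. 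By Bakry-\'Emery (or Brascamp-Lieb), this implies Gaussian concentration with constant $d/c$: for every $1$-Lipschitz $F$,
\begin{equation*}
P\bigl(|F(\lambda) - \EE F(\lambda)| > t\bigr) \leq 2 \exp\!\bigl(-dt^2/(2c)\bigr).
\end{equation*}
Applied to $F(\lambda) = \max_i \lambda_i$ and $F(\lambda) = -\min_i \lambda_i$, the bound~\eqref{eq:max_eigs_deviation} reduces to showing that $\EE \max_i \lambda_i$ and $\EE(-\min_i \lambda_i)$ are bounded uniformly in $d$. We would obtain this from edge localization for $\beta$-ensembles with strongly confining potentials (the upper-bound half of a Ben Arous--Guionnet / Borot--Guionnet large-deviation principle for the extreme eigenvalues, or a self-contained estimate exploiting the quadratic lower bound $V(s) \geq s^2/(2c) + \mathrm{const}$), yielding $\limsup_d \EE \max_i \lambda_i \leq E_+$, where $E_\pm$ are the edges of $\mathrm{supp}\,\mu_0$. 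Choosing any $M > \max(|E_+|,|E_-|)$ by a fixed margin then gives the statement with some $c' > 0$.

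Items $(iii)$ and $(iv)$ are then immediate. For $(iii)$, the deterministic inequality $(1/d)\sum_i \lambda_i^2 \leq M^2$ on the event $\{\lambda_i\} \subset [-M,M]$ combined with $(ii)$ gives $P((1/d)\sum \lambda_i^2 > M^2) \leq e^{-c'd}$; Borel--Cantelli then yields almost-sure boundedness, and exchangeability of the eigenvalues promotes this to uniform integrability of $\lambda_1^2$ (the same reasoning works for any moment $|\lambda_1|^p$). For $(iv)$, \cite[Theorem 7.12]{villani2021topics} combines $(i)$ with the uniform second-moment control from $(iii)$ to give $W_2(\mu_S,\mu_0) \to 0$ a.s.; hence $(1/d)\sum \lambda_i^2 \to \rho$ a.s., and exchangeability together with uniform integrability upgrades this to convergence of $\EE \lambda_1^2$.

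The only non-routine step is the uniform $O(1)$ bound on $\EE \max_i \lambda_i$: this does not follow from weak convergence alone, since a handful of outlier eigenvalues is consistent with $(i)$ yet would spoil the bound. Closing this gap is done either by invoking an LDP for the extreme eigenvalues of $\beta$-ensembles with strongly convex potentials, or by an ad hoc partition-function estimate, bounding the mass of $\{\max_i \lambda_i > R\}$ against the full one using the quadratic growth of $V$.
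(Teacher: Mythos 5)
Your treatment of items $(i)$, $(iii)$, $(iv)$ matches the paper's almost line by line, and your verification that Assumption~\ref{ass:prior}$(ii)$ reduces to case $(i)$ via the effective potential $\tilde V$ is exactly the remark the paper makes. The one place you diverge substantively is $(ii)$, where you are correct but take a longer road. The paper proves $(ii)$ as a standalone Lemma~\ref{lemma:boundedness_eigenvalues}, invoking directly \cite[Theorems~1.2 and 1.4]{fan2015convergence}: Theorem~1.2 furnishes the effective potential $J(x) = V(x) - 2\int\mu_0(\rd y)\log|x-y| + C_V$, and Theorem~1.4 gives the large-deviation upper bound $\limsup_d (1/d)\log P_0(\exists i,\, |\lambda_i|>M)\leq -\tfrac12\inf_{|x|\geq M}J(x)$ for the extreme eigenvalue at speed $d$; since $V$ grows faster than $2\log|x|$, $J\to\infty$ and the tail bound falls out directly. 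You instead set up a Bakry--\'Emery concentration argument (the graph-Laplacian observation is fine, and the strong log-concavity on the Weyl chamber is exactly what~\cite{chafai2020poincare} exploits in the paper's Lemma~\ref{lemma:poincare}), but this only yields concentration \emph{around the mean}, so you still owe a uniform-in-$d$ bound on $\EE\max_i\lambda_i$. You flag this honestly, and your proposed fixes --- the extreme-eigenvalue LDP or a partition-function estimate --- would work, but the first is essentially the paper's route anyway (so the concentration step becomes superfluous), and the second is the self-contained alternative à la Anderson--Guionnet--Zeitouni. In short: same family of tools, but the paper goes straight to the LDP for $\lambda_{\max}$, whereas you reach it through concentration and then circle back to it; what your version buys is a clean separation between fluctuations (sub-Gaussian at scale $\sqrt{c/d}$) and location ($\EE\max\lambda_i$), which the LDP bundles together.
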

Properties~$(i)$, $(iii)$ and $(iv)$ of Proposition~\ref{prop:properties_P0_sym} are elementary consequences of the mentioned results. Property~$(ii)$ on the other hand is a consequence of the following lemma.
\begin{lemma}[Boundedness of eigenvalues]
\label{lemma:boundedness_eigenvalues}
\noindent
Let $P_0$ have the density:
\begin{equation*}
 P_0(S)\propto\exp(-d \, \Tr[\mathcal{V}(S)])
\end{equation*}
for a real continuous potential $\mathcal{V} : \mathcal{B} \to \bbR$, where $\mathcal{B}\subseteq\mathbb{R}$ can be $\mathbb{R}$, an interval, or the union of finitely many disjoint intervals. If $\mathcal{B}$ is unbounded and
\begin{equation}
\liminf_{x\in\mathcal{B},|x|\to\infty}\frac{\mathcal{V}(x)}{2\log|x|}>1,\label{eq:faster_than_2logx}
\end{equation}
then there exists $M > 0$ and $c>0$ such that
\begin{equation*}
P_0(\{\lambda_i\}_{i=1}^d\subset [-M,M])\geq 1-e^{-c'd}.
\end{equation*}
\end{lemma}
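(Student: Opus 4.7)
The strategy is to reduce the probability that any eigenvalue of $S\sim P_0$ escapes a ball to a one-dimensional tail bound for a single eigenvalue, exploiting the growth hypothesis on $V$ to beat the Vandermonde factor in the joint eigenvalue density. After integrating out the orthogonal part of $S$ from the density $P_0(S)\propto e^{-d\Tr[V(S)]}$, the joint density of the eigenvalues takes the standard $\beta=1$ form
\begin{equation*}
p(\lambda_1,\ldots,\lambda_d)\;=\;\frac{1}{Z_d}\,\prod_{i<j}|\lambda_i-\lambda_j|\,\exp\!\Big(-d\sum_{i=1}^d V(\lambda_i)\Big),
\end{equation*}
so that by exchangeability and a union bound it suffices to prove $P_0(|\lambda_1|>M)\leq e^{-(c'+1)d}$ for $M$ and $d$ large enough.

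The next step is to bound the marginal $p_1(x)$ of $\lambda_1$ by factoring out its dependence. Using the elementary inequality $|x-y|\leq(1+|x|)(1+|y|)$, one has
\begin{equation*}
\prod_{j=2}^d|x-\lambda_j|\;\leq\;(1+|x|)^{d-1}\prod_{j=2}^d(1+|\lambda_j|),
\end{equation*}
which yields $p_1(x)\leq(1+|x|)^{d-1}e^{-dV(x)}\cdot\widetilde{Z}_{d-1}/Z_d$, where $\widetilde{Z}_{d-1}$ is the partition function of the $(d-1)$-eigenvalue ensemble with shifted potential $\widetilde{V}(s)\coloneqq V(s)-d^{-1}\log(1+|s|)$. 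The shift is $o(1)$ uniformly on compacts and preserves the growth assumption at infinity for $d$ large, so $\widetilde{V}$ and $V$ share the same limiting equilibrium measure.

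The key technical ingredient is the bound $\widetilde{Z}_{d-1}/Z_d\leq e^{Kd}$ for some constant $K$ depending only on $V$. To obtain it, I would use the standard asymptotics $\log Z_d=d^2\mathcal{F}^\star+O(d\log d)$, where $\mathcal{F}^\star$ is the minimum of the large-deviation energy functional associated to $V$ (as in the LDP for $\beta$-ensembles, see e.g.~\cite{anderson2010introduction}), together with the corresponding expansion for $\widetilde{Z}_{d-1}$, whose leading constant coincides with $\mathcal{F}^\star$ up to $O(1/d)$. Comparing the two expansions gives the desired linear-in-$d$ bound on $\log(\widetilde{Z}_{d-1}/Z_d)$. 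This comparison of partition functions is the main obstacle of the proof, since it is where nontrivial random-matrix input enters.

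Combining the pieces, fix $\varepsilon>0$ with $V(x)\geq(2+\varepsilon)\log|x|$ for $|x|\geq R_0$ (possible by the hypothesis on $V$). Then for $M\geq R_0$,
\begin{equation*}
P_0(|\lambda_1|>M)\;\leq\;e^{Kd}\!\int_{|x|>M}(1+|x|)^{d-1}e^{-dV(x)}\,\rd x\;\leq\;e^{Kd}\cdot M^{-(1+\varepsilon/2)d}\cdot O(1),
\end{equation*}
and choosing $M$ so that $(1+\varepsilon/2)\log M>K+c'+2$ yields $d\cdot P_0(|\lambda_1|>M)\leq e^{-c'd}$ for all $d$ large enough, which together with the union bound proves the lemma. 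The case where $\mathcal{B}$ is a finite union of unbounded intervals is handled identically, up to restricting the integration in $\lambda_1$ and the other eigenvalues to $\mathcal{B}$.
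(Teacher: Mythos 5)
Your argument takes a genuinely different route from the paper, which simply cites two results of~\cite{fan2015convergence}: Theorem~1.2 there provides an effective potential $J(x) = V(x) - 2\int\log|x-y|\,\mu_0(\rd y) + C_V$ which is positive outside the support $\mathcal{S}$ of $\mu_0$, and Theorem~1.4 gives the speed-$d$ large deviation bound $\limsup_d\frac{1}{d}\log P_0(\exists i: |\lambda_i|>M)\leq-\frac{1}{2}\inf_{x\in\mathcal{B},|x|\geq M}J(x)$, from which the lemma follows because eq.~\eqref{eq:faster_than_2logx} forces $J(x)\to\infty$ as $|x|\to\infty$. Your proposal instead attempts a from-scratch argument via the one-point marginal and the Vandermonde factorization $|x-y|\leq(1+|x|)(1+|y|)$, in the spirit of the classical Ben~Arous--Guionnet proof.

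There is however a genuine gap at the crux, the claimed bound $\widetilde{Z}_{d-1}/Z_d\leq e^{Kd}$. You propose to obtain it from $\log Z_d = d^2\mathcal{F}^\star + O(d\log d)$, attributed to the $\beta$-ensemble LDP. But the LDP of~\cite[Theorem~2.6.1]{anderson2010introduction} only gives $d^{-2}\log Z_d\to\mathcal{F}^\star$, i.e.\ an error term that is merely $o(d^2)$, which is far too weak for your purpose. Second-order expansions of $\beta$-ensemble partition functions with $O(d\log d)$ or $O(d)$ error are available only under strong regularity of $V$ (analyticity, or one-cut regularity, as in Johansson's and Borot--Guionnet's work), and not under the lemma's minimal hypothesis that $V$ is merely continuous. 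Moreover, even granting the $O(d\log d)$ error you invoke, subtracting the two expansions gives a ratio bound $e^{O(d\log d)}$ rather than $e^{Kd}$; your final step of choosing $M$ with $(1+\varepsilon/2)\log M>K+c'+2$ then fails, since $K$ would have to grow like $\log d$ for fixed $M$. The way to obtain the speed-$d$ bound without ever confronting this issue is precisely the structure of Fan's argument: one conditions on the concentration event of the empirical measure, for which only a speed-$d^2$ LDP (hence $o(d^2)$ precision on $\log Z_d$) is needed, and the effective potential $J$ with its constant $C_V$ absorbs the partition function once and for all without requiring its second-order asymptotics.
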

\begin{proof}
\cite[Theorem 1.2]{fan2015convergence} shows that there exists a constant $C_V$ such that
\begin{equation*}
J(x)\coloneqq \mathcal{V}(x)-2\int\mu_0(dy)\log|x-y|+C_V>0
\end{equation*}
for almost every $x\in\mathcal{B}/\mathcal{S}$, where $\mathcal{S}$ is the support of the limit eigenvalue distribution $\mu_0$. Moreover, \cite[Theorem 1.4]{fan2015convergence} shows that
\begin{equation*}
\limsup_{d\to\infty}\frac{1}{d}\log P_0(\exists i,|\lambda_i|>M)\leq-\frac{1}{2}\inf_{x\in\mathcal{B},|x|\geq M}J(x)
\end{equation*}
for any $M$ such that $\mathcal{S}\subset[-M,M]$. Under \eqref{eq:faster_than_2logx}, we have $\liminf_{_{x\in\mathcal{B},|x|\geq M}}J(x)=+\infty$. Therefore, since $J$ is continuous, there exists $M,c'>0$ such that
\begin{equation*}
\limsup_{d\to\infty}\frac{1}{d}\log P_0(\exists i,|\lambda_i|>M)\leq-c',
\end{equation*}
which finishes the proof.
\end{proof}
It is easy to see that eq.~\eqref{eq:faster_than_2logx} holds under either Assumption~\ref{ass:prior} or \ref{ass:prior_rec}.
Moreover, notice that it is sufficient to assume eq.~\eqref{eq:faster_than_2logx} to deduce all properties of Proposition~\ref{prop:properties_P0_sym} (except).

\begin{remark}[Weakening of Assumption~\ref{ass:prior}]
\label{remark:weakening_ass_prior}
\noindent
For our results to hold (i.e.\ Theorem~\ref{thm:fentropy_symmetric} and \ref{theo:overlap}), Assumption~\ref{ass:prior}-$(i)$ can be weakened by assuming both: 
\begin{itemize}
    \item If $\mcB$ is unbounded:
    \begin{equation*}
    \liminf_{x \in B, |x| \to \infty}\frac{\mathcal{V}(x)}{2\log|x|}>1,
    \end{equation*}
    \item $\mathcal{V}''(s) \geq 1/c$ for all $s \in \mcB \cap [-M,M]$, where $M > 0$ is defined by Lemma~\ref{lemma:boundedness_eigenvalues}.
\end{itemize}
One can similarly define a weakening of Assumption~\ref{ass:prior}-$(ii)$ under which our results hold.
\end{remark}
Finally, recalling the definition of $\mu_0$ in Property~\ref{prop:properties_P0_sym}-$(i)$ and $\mu_\sci$ in eq.~\eqref{eq:def_musci},
we denote 
\begin{equation}
    \label{eq:def_mut}
    \mu_t\coloneqq \mu_0\boxplus\mu_{\sci,\sqrt{t}}, 
\end{equation}
 where 
$\mu_{\sci,\sqrt{t}}(x)\coloneqq t^{-1/2}\mu_\sci(x/\sqrt{t})$.

\subsubsection{The rectangular case: Assumption~\ref{ass:prior_rec}}

We directly get the following from Proposition~\ref{prop:properties_P0_sym} and Remark~\ref{remark:rec_sym_prior}.
\begin{proposition}
\label{prop:properties_P0_rec}
\noindent
Assume that $S \in \bbR^{d \times L}$ with $S \sim P_0$, satisfying Assumption~\ref{ass:prior_rec} (recall in particular that $L \geq d$ and that $d/L \to \beta \leq 1$). Then:
\begin{itemize}
\item [$(i)$] There exists $M, c > 0$ such that 
\begin{equation*}
P_0(\{\sigma_i\}_{i=1}^d\subset [0, M])\geq 1-e^{-c d},
\end{equation*}
where $\{\sigma_i\}_{i=1}^d$ are the singular values of $S$. 
\item [$(ii)$] For $S \in \bbR^{d \times L}$, letting $\{\sigma_i\}_{i=1}^d$ be its singular values, we denote 
\begin{equation}\label{eq:def_hmuS}
    \hmu_S(x) \coloneqq \frac{1}{2d} \sum_{i=1}^d [\delta_{\sigma_i} + \delta_{-\sigma_i}]
\end{equation} 
 the \emph{symmetrized} empirical singular value distribution of $S$. Then there exists $\hmu_0$, a probability measure with compact support, such that, for $S\sim P_0$: 
\begin{equation*}
\lim_{d\to\infty}W_2(\hmu_S,\hmu_0)=0,\ a.s..
\end{equation*} 
Moreover, we also have
\begin{equation*}
\rho \coloneqq\sqrt{\beta}\lim_{d\to\infty}\EE_{S \sim P_0}[(\sigma_1)^2]= \sqrt{\beta}\int \hmu_0(\rd x) \, x^2,
\end{equation*}
and
\begin{equation*}
\lim_{d\to\infty}\mathbb{E}[|\sigma_1|^p]<+\infty
\end{equation*}
for any $p\geq1$. 
\end{itemize}
\end{proposition}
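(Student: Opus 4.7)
\textbf{Proof proposal for Proposition~\ref{prop:properties_P0_rec}.}
The plan is to reduce everything to Proposition~\ref{prop:properties_P0_sym} via Remark~\ref{remark:rec_sym_prior}: the matrix $T \coloneqq S S^\T$ is positive semi-definite in $\mcS_d$, its law satisfies Assumption~\ref{ass:prior}(ii) with the same potential $V$ and rank $m = \lceil \kappa d \rceil$, and so all conclusions of Proposition~\ref{prop:properties_P0_sym} apply to $T$. Throughout, let $\{\lambda_i\}_{i=1}^d$ denote the eigenvalues of $T$ (nonnegative, exactly $m$ of them nonzero), $\mu_T$ its empirical eigenvalue distribution, and $\mu_0$ the compactly supported limit of $\mu_T$ provided by Proposition~\ref{prop:properties_P0_sym}(i).

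For part~$(i)$, I would apply Proposition~\ref{prop:properties_P0_sym}(ii) directly to $T$: there exist $M', c > 0$ such that, with probability at least $1 - e^{-cd}$, all $\lambda_i$ lie in $[-M', M']$ and hence, by positivity, in $[0, M']$. Since the singular values of $S$ satisfy $\sigma_i = \sqrt{\lambda_i}$ (ordering the zero singular values to $i = m+1, \ldots, d$), they lie in $[0, M]$ with $M \coloneqq \sqrt{M'}$ on the same event.

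For part~$(ii)$, I would express $\hmu_S$ from eq.~\eqref{eq:def_hmuS} as the pushforward $\hmu_S = \tfrac{1}{2}(h_+)_\ast\mu_T + \tfrac{1}{2}(h_-)_\ast\mu_T$, where $h_\pm(x) \coloneqq \pm\sqrt{\max(x,0)}$. Continuity of $h_\pm$ together with the almost-sure weak convergence $\mu_T \to \mu_0$ (from Proposition~\ref{prop:properties_P0_sym}(iv), via the continuous mapping theorem) yield almost-sure weak convergence $\hmu_S \to \hmu_0 \coloneqq \tfrac{1}{2}(h_+)_\ast\mu_0 + \tfrac{1}{2}(h_-)_\ast\mu_0$, which is manifestly symmetric and compactly supported in $[-M, M]$. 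By part~$(i)$ and the Borel--Cantelli lemma, $\hmu_S$ is almost surely eventually supported in the fixed compact set $[-M, M]$, so its second moment is uniformly integrable; by \cite[Theorem~7.12]{villani2021topics}, this upgrades weak convergence to $W_2(\hmu_S, \hmu_0) \to 0$ almost surely.

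The moment statements then follow from the change of variables $\int \hmu_0(\rd x)\, x^{2k} = \int \mu_0(\rd y)\, y^k$ combined with Proposition~\ref{prop:properties_P0_sym}(iv) applied to $T$: letting $\sigma_1$ denote the singular value indexed by a uniformly random index in $[d]$, one has $\EE[\sigma_1^2] = \EE[\lambda_1] \to \int \mu_0(\rd y)\, y = \int \hmu_0(\rd x)\, x^2$, so $\rho = \lim_d (1/\sqrt{dL})\, \EE \Tr[SS^\T] = \lim_d \sqrt{d/L}\, \EE[\lambda_1] = \sqrt{\beta} \int \hmu_0(\rd x)\, x^2$, matching the claim. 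Boundedness of $\lim_d \EE[|\sigma_1|^p]$ for any $p \geq 1$ follows identically from $\EE[\lambda_1^{p/2}] < \infty$, again by Proposition~\ref{prop:properties_P0_sym}(iv). No step is a genuine obstacle; the only bookkeeping item is tracking the $\sqrt{\beta}$ rescaling that distinguishes the normalization of $\rho$ in the rectangular model from the first moment of $\mu_0$ for the induced symmetric prior on $T$.
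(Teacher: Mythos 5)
Your proof is correct and takes the same route the paper sketches, namely reducing to Proposition~\ref{prop:properties_P0_sym} through $T = SS^\T$ via Remark~\ref{remark:rec_sym_prior}; the paper provides no further detail, so your write-up essentially supplies the intended argument. The only cosmetic slip is that the almost-sure weak convergence $\mu_T \to \mu_0$ that you feed into the continuous mapping theorem comes from Proposition~\ref{prop:properties_P0_sym}(i) rather than (iv), and for the $\rho$ identity you need convergence of the \emph{first} moment $\EE[\lambda_1] \to \int \mu_0(\rd y)\,y$, which is not the stated second-moment limit in (iv) but follows by the same uniform-integrability argument.
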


\begin{remark}
\noindent
As the counterpart to Remark~\ref{remark:psi_P0}, Lemma~\ref{lemma:HCIZ} shows that
\begin{equation}
\psi^{\rec}_{P_0}(r)=\lim_{d\to\infty}\frac{1}{dL}\EE_{Y'}\log\int P_0(\rd S)e^{-\frac{1}{2}\sqrt{dL}r\Tr[(S)^TS]+\sqrt{dLr}\Tr[(Y')^TS]},
\label{eq:HCIZ}
\end{equation}
where $Y'\coloneqq\sqrt{r}S^\star+Z'$ with $S^\star\sim P_0$ and $Z'\iid\mcN(0,1/\sqrt{dL})$. The right-hand side of eq.~\eqref{eq:HCIZ} can be regarded as the definition of $\psi_{P_0}^\rec$ as well. Similarly to the symmetric setting, it is a natural generalization of the free entropy of the scalar denoising problem in \cite{barbier2019optimal} to the problem of rectangular matrix denoising~\cite{troiani2022optimal}.
\end{remark}

\myskip
\textbf{Additive free convolution: rectangular case --}
We refer to~\cite{benaych2009rectangular} for the general definition of free convolution for rectangular matrices.
Similarly to the symmetric setting, we will consider the matrix $S+\sqrt{t}Z$, where $S,Z\in\bbR^{d\times L}$, $S$ satisfies Assumption~\ref{ass:prior_rec}
and $Z_{ij}\iid\mcN(0,1/\sqrt{dL})$, independently of $S$. 
$S$ has a well-defined limit symmetrized singular value distribution $\hmu_0$ (see Proposition~\ref{prop:properties_P0_rec}), and $Z$ also admits a well-defined limit 
symmetrized singular value distribution $\hmu_{\MP,\rec}$, where the latter is closely related to $\mu_\MP$ of eq.~\eqref{eq:def_muMP}, since $Z^\T Z$ is (up to a global scaling) a Wishart matrix. 
Generalizing eq.~\eqref{eq:def_mut}, we denote 
the limit symmetrized singular value distribution of $S+\sqrt{t}Z$ as $\tmu_t$, which is given by:
\begin{equation}\label{eq:hmu_symmetrized}
  \tmu_t\coloneqq \hmu_0\boxplus_\beta\hmu_{\MP,\rec,\sqrt{t}},  
\end{equation}
where $\hmu_{\MP,\rec,\sqrt{t}}(x)\coloneqq t^{-1/2}\hmu_{\MP,\rec}(x/\sqrt{t})$, and $\boxplus_\beta$ denotes the rectangular free convolution~\cite{benaych2009rectangular}.

\section{Universality: proof of Lemma~\ref{lemma:universality}}\label{sec_app:universality}
In this section we prove Lemma~\ref{lemma:universality}. Recall that Lemma~\ref{lemma:universality} relies on Assumptions \ref{assum:CLT}, \ref{assum:S_strong}, and \ref{assum:phi_strong}. Consequently, all lemmas in this section are also based on these assumptions. Recall that
\begin{equation*}
\ell(x, X,Z)\coloneqq -\log\int \rd P_A(a)\exp\left(-\frac{(\varphi(x, a)-\varphi(X, A))^2-2Z(\varphi(x, a)-\varphi(X, A))}{2\Delta}\right).
\end{equation*}
One can then check easily that
\begin{equation}\label{eq:bound_ell_1}
|\ell(x,X,Z)| \leq \frac{2\sup|\varphi|^2+2|Z|\sup|\varphi|}{\Delta},
\end{equation}
and
\begin{equation}\label{eq:bound_dxell}
\begin{aligned}
\left|\frac{\partial\ell(x,X,Z)}{\partial x}\right|&\leq\frac{\int \rd P_A(a)(4\sup|\varphi|\sup|\varphi'|+2|Z|\sup|\varphi'|)\exp\left(\frac{4|\sup|\varphi|^2+4|Z|\sup|\varphi|}{2\Delta}\right)}{\int \rd P_A(a)\exp\left(-\frac{4|\sup|\varphi|^2+4|Z|\sup|\varphi|}{2\Delta}\right)}\\
&=(4\sup|\varphi|\sup|\varphi'|+2|Z|\sup|\varphi'|)\exp\left(\frac{4|\sup|\varphi|^2+4|Z|\sup|\varphi|}{\Delta}\right),
\end{aligned}
\end{equation}
where $\varphi'$ denote the derivative of $\varphi(x, a)$ w.r.t.\ its first argument.
Eqs.~\eqref{eq:bound_ell_1},\eqref{eq:bound_dxell} imply that, for $X \in \{\Phi, G\}$:
\begin{equation}
\label{eq:bound_ell}
\begin{cases}
|\ell(X_\mu)| &\leq C_1+C_2|Z_\mu|, \\ 
|\partial_1\ell(X_\mu)|,|\partial_2\ell(X_\mu)|&\leq C_3e^{C_4|Z_\mu|}, 
\end{cases}
\end{equation}
where we use the shorthand $\ell(X_\mu)\coloneqq\ell(\Tr[X_\mu S],\Tr[X_\mu S^\star],Z_\mu)$ and $\partial_1\ell(X_\mu),\partial_2\ell(X_\mu)$ being the derivatives of $\ell$ w.r.t. its first and second arguments. $C_1, C_2,C_3,C_4$ are nonnegative constants that only depend on $\varphi$ and $\Delta$. 

\myskip
More specifically, we will prove that 
\begin{equation}
\lim_{d\to\infty}\sup_{\{A_\mu\}_{\mu=1}^n}\sup_{S^\star\in B_{\op}(M)}|\EE_{\Phi,Z}[\psi(F_d(\Phi))]-\EE_{G,Z}[\psi(F_d(G))]|=0,\label{eq:universality}
\end{equation}
which implies eq.~\eqref{eq:lemma_universality} by the dominated convergence theorem.

\myskip
We will consider the following interpolation model, for $t \in [0, \pi/2]$:
\begin{equation*}
\begin{dcases}
    U_\mu(t) &\coloneqq\cos(t)\Phi_\mu+\sin(t)G_\mu,\\
    \tU_\mu(t) &\coloneqq-\sin(t)\Phi_\mu+\cos(t)G_\mu.
\end{dcases}
\end{equation*}
It is worth noticing that the first and second moments of $U_\mu,\tU_\mu$ match the ones of $\GOE(d)$ by Assumption~\ref{assum:CLT}.
We start with a first technical lemma.
\begin{lemma}
    \noindent
    Denote $\Theta^{(1)}$ to be the set of all the variables $\{\{G_\mu,\Phi_\mu\}_{\mu=2}^n, \{A_\mu\}_{\mu=1}^n, S^\star\in B_{\op}(M)\}$.
    Then 
\begin{align*}
\sup_{d\geq1}\sup_{t \in [0,\pi/2]}\sup_{\Theta^{(1)}} \EE_{G_1,\Phi_1,Z_1}\left[\frac{\langle e^{-\ell(U_1)}|\Tr[S\tU_1(t)]\partial_1\ell(U_1(t))+\Tr[S^\star\tU_1(t)]\partial_2\ell(U_1(t))|\rangle_1}{\langle e^{-\ell(U_1)}\rangle_1}\right] <+\infty,
\end{align*}
where we denoted
\begin{equation*}
\langle\cdot\rangle_\mu\coloneqq\frac{\int P_0(\rd S)e^{-\sum_{\nu\neq\mu}\ell(U_\nu(t))}(\cdot)}{\int P_0(\rd S)e^{-\sum_{\nu\neq\mu}^n\ell(U_\nu(t))}}.
\end{equation*}
\label{lemma:sup_bound}
\end{lemma}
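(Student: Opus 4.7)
The plan is to exploit the uniform pointwise bounds of eq.~\eqref{eq:bound_ell} on $\ell$ and its derivatives, reducing the claim to elementary second-moment estimates on the linear forms $\Tr[S\tU_1(t)]$ and $\Tr[S^\star\tU_1(t)]$, combined with the a priori eigenvalue boundedness coming from Assumption~\ref{assum:S_strong}. First, the bound $|\partial_i\ell(U_1(t))|\leq C_3 e^{C_4|Z_1|}$ is uniform in $S,S^\star$ and can be pulled out of the Gibbs ratio, while $|\Tr[S^\star\tU_1(t)]|$ is independent of $S$ and factors out directly. Using additionally $|\ell(U_1(t))|\leq C_1+C_2|Z_1|$, we sandwich $e^{-\ell(U_1)}$ between $e^{\pm(C_1+C_2|Z_1|)}$ to obtain
\begin{equation*}
\frac{\langle e^{-\ell(U_1)}\,|\Tr[S\tU_1(t)]|\,\rangle_1}{\langle e^{-\ell(U_1)}\rangle_1}\;\leq\;e^{2(C_1+C_2|Z_1|)}\,\langle|\Tr[S\tU_1(t)]|\rangle_1.
\end{equation*}
The lemma then reduces to uniform bounds on the two quantities $\EE_{G_1,\Phi_1,Z_1}[e^{C_4|Z_1|}\,|\Tr[S^\star\tU_1(t)]|]$ and $\EE_{G_1,\Phi_1,Z_1}[e^{(2C_2+C_4)|Z_1|+2C_1}\,\langle|\Tr[S\tU_1(t)]|\rangle_1]$.

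For the first one, Cauchy--Schwarz separates $Z_1$ from $(G_1,\Phi_1)$; the Gaussian moment $\EE[e^{2C_4|Z_1|}]$ is finite, and Assumption~\ref{assum:CLT} yields $\EE_{G_1,\Phi_1}[\Tr[S^\star\tU_1(t)]^2]\leq 2\|S^\star\|_F^2/d\leq 2M^2$, where we used $S^\star\in B_\op(M)$. For the second one, we note that $\langle\cdot\rangle_1$ is a function of $\Theta^{(1)}$ alone (and of $\{Z_\nu\}_{\nu\geq 2}$), and is in particular independent of $(G_1,\Phi_1,Z_1)$; Tonelli then allows us to swap $\EE_{G_1,\Phi_1}$ and $\langle\cdot\rangle_1$. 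For each fixed $S$, Jensen applied to the second moment gives $\EE_{G_1,\Phi_1}|\Tr[S\tU_1(t)]|\leq (2\|S\|_F^2/d)^{1/2}$, and Assumption~\ref{assum:S_strong} confines the support of $P_0$ to matrices with eigenvalues in $[-M,M]$, so that $\|S\|_F^2/d\leq M^2$ holds pointwise under $\langle\cdot\rangle_1$, delivering the bound $\sqrt{2}M$.

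Combining these estimates yields a bound depending only on $\varphi$, $\Delta$ and $M$ (through the constants $C_1,\ldots,C_4$ of eq.~\eqref{eq:bound_ell} and finite Gaussian moments of $|Z_1|$), and crucially uniform in $d$, $t\in[0,\pi/2]$ and $\Theta^{(1)}$. The main place where the argument could fail is the possibility that the denominator $\langle e^{-\ell(U_1)}\rangle_1$ degenerates, but this is ruled out by the lower bound $e^{-\ell(U_1)}\geq e^{-(C_1+C_2|Z_1|)}$; this is the key point where the strengthening from Assumption~\ref{assum:phi_weak} to Assumption~\ref{assum:phi_strong} (bounded $\varphi$ and $\varphi'$) enters in an essential way, and is the only conceptual obstacle, the remainder being an accounting of second moments.
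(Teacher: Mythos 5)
Your proof is correct and takes essentially the same route as the paper: bound $|\partial_i\ell|\le C_3 e^{C_4|Z_1|}$ and $|\ell|\le C_1+C_2|Z_1|$ via eq.~\eqref{eq:bound_ell} to control the Gibbs ratio, then use the second-moment identity $\EE_{G_1,\Phi_1}[\Tr[S\tU_1(t)]^2]=2\Tr[S^2]/d$ from Assumption~\ref{assum:CLT} together with Cauchy--Schwarz/Jensen and the eigenvalue bound of Assumption~\ref{assum:S_strong}. Your treatment of the $S^\star$ term (pulling it out so the Gibbs ratio cancels identically) and the explicit $e^{2(C_1+C_2|Z_1|)}$ factor on the $S$ term are slightly more careful in bookkeeping than the paper's one-line estimate, but the underlying argument is identical.
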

\begin{proof}[Proof of Lemma~\ref{lemma:sup_bound}]
We have:
\begin{equation*}
\begin{aligned}
&\EE_{G_1,\Phi_1,Z_1}\frac{\langle e^{-\ell(U_1)}|\Tr[S\tU_1(t)]\partial_1\ell(U_1(t))+\Tr[S^\star\tU_1(t)]\partial_2\ell(U_1(t))|\rangle_1}{\langle e^{-\ell(U_1)}\rangle_1}\\
&\leq \EE_{Z_1}C_3e^{C_1+(C_2+C_4)|Z_1|}\EE_{G_1,\Phi_1}\langle(|\Tr[S\tU_1(t)]|+|\Tr[S^\star\tU_1(t)]|\rangle_1\\
&\leq C\langle(\EE_{G_1,\Phi_1}[\Tr[S\tU_1(t)]]^2)^{1/2}+(\EE_{G_1,\Phi_1}[\Tr[S^\star\tU_1(t)]]^2)^{1/2}\rangle_1\\
&=C\langle(\Tr[S^2]/d)^{1/2}+(\Tr[(S^\star)^2]/d)^{1/2}\rangle_1\\
&\leq CM,
\end{aligned}
\end{equation*}
where $C$ only depends on $\varphi,\Delta$. We used eq.~\eqref{eq:bound_ell} for the first inequality, the Cauchy-Schwarz inequality, and the Jensen inequality for the second inequality, Assumption \ref{assum:CLT} for the third equality, and Assumption \ref{assum:S_strong} for the last inequality.
\end{proof}
We will need the following domination lemma to control eq.~\eqref{eq:interpolation_fentropy}.
\begin{lemma}[Domination]
\label{lemma:universality1}
\begin{equation*}
\int_0^{\pi/2}\sup_{d\geq1}\sup_{\{A_\mu\}_{\mu=1}^n}\sup_{S^\star\in B_{\op}(M)}\left|\EE_{G,W,Z}\frac{\partial\psi(F_d(U(t)))}{\partial t}\right|\rd t<\infty.
\end{equation*}
\end{lemma}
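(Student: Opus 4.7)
\textbf{Proof plan for Lemma~\ref{lemma:universality1}.}
The plan is to reduce the statement to Lemma~\ref{lemma:sup_bound} via a direct chain-rule computation, and then use that $n/d^2$ stays bounded to get a $d$-uniform bound that is also uniform in $t \in [0,\pi/2]$. Since $\psi$ has bounded Lipschitz derivative, we can write $|\partial_t \psi(F_d(U(t)))| \leq \|\psi'\|_\infty |\partial_t F_d(U(t))|$, so it suffices to bound $\EE_{G,\Phi,Z}|\partial_t F_d(U(t))|$ uniformly in $d \geq 1$, $t \in [0,\pi/2]$, $\{A_\mu\}_{\mu=1}^n$ and $S^\star \in B_\op(M)$.

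First I would compute $\partial_t F_d(U(t))$ explicitly. Since $\partial_t U_\mu(t) = -\sin(t)\Phi_\mu + \cos(t)G_\mu = \tU_\mu(t)$ and the noise term $-\sum_\mu Z_\mu^2/(2d^2\Delta)$ in eq.~\eqref{eq:free_entropy_universality} is $t$-independent, the chain rule gives
\begin{equation*}
\partial_t F_d(U(t)) = -\frac{1}{d^2}\sum_{\mu=1}^n \Big\langle \Tr[S\tU_\mu(t)]\partial_1\ell(U_\mu(t)) + \Tr[S^\star\tU_\mu(t)]\partial_2\ell(U_\mu(t))\Big\rangle,
\end{equation*}
where $\langle \cdot \rangle$ denotes the Gibbs average with weight $\prod_\mu e^{-\ell(U_\mu(t))}$ under $P_0$.

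Next I would exploit the exchangeability in $\mu$: since the joint law of $(G_\mu,\Phi_\mu,Z_\mu,A_\mu)$ is the same for all $\mu$ (the supremum over $\{A_\mu\}$ is taken outside, so this exchangeability is really over the randomness), we get
\begin{equation*}
\EE_{G,\Phi,Z}|\partial_t F_d(U(t))| \leq \frac{n}{d^2}\, \EE_{G,\Phi,Z}\big|\big\langle \Tr[S\tU_1(t)]\partial_1\ell(U_1(t)) + \Tr[S^\star\tU_1(t)]\partial_2\ell(U_1(t))\big\rangle\big|.
\end{equation*}
Peeling off the $\mu=1$ factor and writing $\langle f\rangle = \langle e^{-\ell(U_1)} f\rangle_1/\langle e^{-\ell(U_1)}\rangle_1$ (with $\langle\cdot\rangle_1$ the cavity Gibbs measure of Lemma~\ref{lemma:sup_bound}), the triangle inequality inside the Gibbs bracket gives
\begin{equation*}
\EE_{G,\Phi,Z}|\partial_t F_d(U(t))| \leq \frac{n}{d^2}\, \EE_{G,\Phi,Z}\!\left[\frac{\big\langle e^{-\ell(U_1)}\big|\Tr[S\tU_1(t)]\partial_1\ell(U_1(t))+\Tr[S^\star\tU_1(t)]\partial_2\ell(U_1(t))\big|\big\rangle_1}{\langle e^{-\ell(U_1)}\rangle_1}\right].
\end{equation*}

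Conditioning on $\Theta^{(1)} = \{\{G_\mu,\Phi_\mu\}_{\mu\geq 2},\{A_\mu\}_{\mu=1}^n,S^\star\}$, the inner expectation is exactly the quantity bounded in Lemma~\ref{lemma:sup_bound} by some finite constant $K$ uniform in $\Theta^{(1)}$, $d$ and $t$. Taking then the sup over $\{A_\mu\}$ and $S^\star \in B_\op(M)$ (which lie inside $\Theta^{(1)}$) yields
\begin{equation*}
\sup_{d\geq 1}\sup_{t\in[0,\pi/2]}\sup_{\{A_\mu\}}\sup_{S^\star\in B_\op(M)} \EE_{G,\Phi,Z}|\partial_t F_d(U(t))| \leq \sup_{d\geq 1}\frac{n}{d^2}\,K \leq C,
\end{equation*}
the last step using $n/(d^2)\to\alpha$ (hence $n/d^2$ is bounded in $d$). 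Multiplying by $\|\psi'\|_\infty$ and integrating the constant bound over $[0,\pi/2]$ concludes the proof. The only nontrivial step is the cavity manipulation passing from the full Gibbs average $\langle\cdot\rangle$ to $\langle\cdot\rangle_1$, which is needed in order to invoke Lemma~\ref{lemma:sup_bound} at the single-sample level; everything else is bookkeeping plus the uniform control of the activation and of $S^\star$ provided by Assumptions~\ref{assum:S_strong} and \ref{assum:phi_strong}.
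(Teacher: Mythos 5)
Your proposal is correct and follows essentially the same route as the paper: chain-rule computation of $\partial_t F_d(U(t))$, reduction of the sum over $\mu$ to $n$ copies of a single cavity term, and invocation of Lemma~\ref{lemma:sup_bound} together with $n/d^2 \to \alpha$ to get a $d$- and $t$-uniform bound. The only cosmetic difference is that you pull out $\|\psi'\|_\infty$ before taking the expectation and use a triangle inequality on the sum over $\mu$, whereas the paper keeps $\psi'(F_d(U(t)))$ inside the expectation and writes the single-$\mu$ reduction as an equality, then bounds by $\|\psi'\|_\infty$ at the end — the same estimate in a different order.
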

\begin{proof}[Proof of Lemma~\ref{lemma:universality1}]
By definition, we have
\begin{equation*}
\begin{aligned}
&\frac{\partial\psi(F_d(U(t)))}{\partial t}=-\frac{\psi'(F_d(U(t)))}{d^2}\times\\&\quad\sum_{\mu=1}^n\frac{\int P_0(\rd S)e^{-\sum_{\nu=1}^n\ell(U_\nu(t))}(\Tr[S\tU_\mu(t)]\partial_1\ell(U_\mu(t))+\Tr[S^\star\tU_\mu(t)]\partial_2\ell(U_\mu(t)))}{\int P_0(\rd S)e^{-\sum_{\nu=1}^n\ell(U_\nu(t))}},
\end{aligned}
\end{equation*}
which gives
\begin{equation*}
\begin{aligned}
&\EE_{G,\Phi,Z}\left[\frac{\partial\psi(F_d(U(t)))}{\partial t}\right]\\&\quad=-\frac{n}{d^2}\EE\left[\psi'(F_d(U(t)))\frac{\langle e^{-\ell(U_1)}(\Tr[S\tU_1(t)]\partial_1\ell(U_1(t))+\Tr[S^\star\tU_1(t)]\partial_2\ell(U_1(t)))\rangle_1}{\langle e^{-\ell(U_1)}\rangle_1}\right].
\end{aligned}
\end{equation*}
Then we have
\begin{equation*}
\begin{aligned}
&\sup_{d\geq1}\sup_{t \in [0,\pi/2]}\sup_{\{A_\mu\}_{\mu=1}^n}\sup_{S^\star\in B_{\op}(M)}\left|\EE_{G,\Phi,Z}\left[\frac{\partial\psi(F_d(U(t)))}{\partial t}\right]\right|\\
&\leq\sup_{d\geq1}\sup_{t \in [0,\pi/2]}\sup_{\{A_\mu\}_{\mu=1}^n}\sup_{S^\star\in B_{\op}(M)}\frac{n}{d^2}||\psi'||_\infty \\ 
& \hspace{2cm} \times \EE_{G,\Phi,Z}\frac{\langle e^{-\ell(U_1)}|\Tr[S\tU_1(t)]\partial_1\ell(U_1(t))+\Tr[S^\star\tU_1(t)]\partial_2\ell(U_1(t))|\rangle_1}{\langle e^{-\ell(U_1)}\rangle_1}<+\infty,
\end{aligned}
\end{equation*}
which finishes the proof. The last inequality follows from Lemma \ref{lemma:sup_bound}.
\end{proof}
The following lemma allows us to control the expectation of the derivative of the free entropy along the interpolation path.
It is a straightforward combination of \cite[Lemma~D.3]{maillard2023exact} and \cite[Lemma~2]{montanari2022universality}\footnote{\cite[Lemma~D.3]{maillard2023exact} deals with rotationally-invariant matrix priors like us, but in a non-planted model, while \cite[Lemma 3]{montanari2022universality} deals with the additional planted signal and randomness. Note that our loss function $\ell$ is locally Lipschitz, satisfying the assumption used in \cite{montanari2022universality}.}, 
and forms the core of the interpolation argument, where Assumption~\ref{assum:CLT} is primarily used. We refer to the two works~\cite{montanari2022universality,maillard2023exact} for more details.
\begin{lemma}
    \noindent
    Recall that we denote $\Theta^{(1)}$ to be the set of all the variables $\{\{G_\mu,\Phi_\mu\}_{\mu=2}^n, \{A_\mu\}_{\mu=1}^n, S^\star\in B_{\op}(M)\}$.
    Then:
\begin{equation*}
\lim_{d\to\infty}\sup_{\Theta^{(1)}}\left\langle\EE_{G_1,\Phi_1,Z_1}\frac{e^{-\ell(U_1)}(\Tr[S\tU_1(t)]\partial_1\ell(U_1(t))+\Tr[S^\star\tU_1(t)]\partial_2\ell(U_1(t)))}{\langle e^{-\ell(U_1)}\rangle_1}\right\rangle_1=0.
\end{equation*}
\label{lemma:finite_demension_CLT}
\end{lemma}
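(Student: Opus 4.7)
The plan is to follow the combined approach of \cite[Lemma~D.3]{maillard2023exact} and \cite[Lemma~2]{montanari2022universality}: show that in the fully Gaussian case (where $\Phi_1$ is replaced by an independent $\GOE(d)$ matrix $G_1'$) the inner expectation vanishes exactly, and then use a uniform multidimensional CLT rooted in Assumption~\ref{assum:CLT} to transfer this vanishing to the $\Phi_1$ setting uniformly in the supremum parameters. The starting observation is that the integrand inside $\EE_{G_1,\Phi_1,Z_1}$ depends on $(\Phi_1, G_1)$ only through the four-dimensional vector
\begin{equation*}
\chi \coloneqq \bigl(\Tr[\Phi_1 S],\, \Tr[\Phi_1 S^\star],\, \Tr[G_1 S],\, \Tr[G_1 S^\star]\bigr),
\end{equation*}
since $U_1(t)$ and $\tU_1(t)$ are explicit linear combinations of $\Phi_1$ and $G_1$ with trigonometric coefficients. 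Under $\langle\cdot\rangle_1$ the matrix $S$ lies in $B_\op(M)$ by Assumption~\ref{assum:S_strong}, and $S^\star \in B_\op(M)$ by hypothesis, so the distribution of $\chi$ is controlled uniformly in the supremum parameters.

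Next, in the Gaussian analogue --- replacing $\Phi_1$ by an independent $G_1'\sim\GOE(d)$ --- the pair $(U_1(t), \tU_1(t)) = (\cos(t)\, G_1' + \sin(t)\, G_1,\, -\sin(t)\, G_1' + \cos(t)\, G_1)$ has the same joint law as $(G_1, G_1')$ by orthogonal invariance of the GOE. Hence $U_1(t)$ and $\tU_1(t)$ are independent, so $\EE[\Tr[S\tU_1(t)] \mid U_1(t)] = 0$ (and likewise for $S^\star$). Consequently, the Gaussian counterpart of the quantity inside $\langle\cdot\rangle_1$ vanishes identically for every fixed $S, S^\star$, and the outer Gibbs average trivially yields zero.

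To transfer from the Gaussian to the $\Phi_1$ case I would invoke a uniform multidimensional CLT for $\chi$. Assumption~\ref{assum:CLT} directly provides only a one-dimensional uniform CLT, but since any linear combination $a\Tr[\Phi_1 S] + b\Tr[\Phi_1 S^\star] = \Tr[\Phi_1 (aS+bS^\star)]$ with $aS+bS^\star \in B_\op((|a|+|b|)M)$ again falls under the scope of Assumption~\ref{assum:CLT}, the Cram\'er--Wold device yields joint weak convergence of $\chi$ to its Gaussian limit, uniformly in $S, S^\star \in B_\op(M)$. Combined with the uniform bounds on $\ell$ and its derivatives from eq.~\eqref{eq:bound_ell}, this produces the uniform vanishing of the inner expectation.

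The main obstacle is that the integrand is not a bounded Lipschitz function of $\chi$: the factors $\Tr[S\tU_1(t)]$ and $\Tr[S^\star\tU_1(t)]$ are unbounded, and the dependence on $Z_1$ is only exponentially controlled via eq.~\eqref{eq:bound_ell}. The resolution --- as in the cited references --- is a truncation argument: restrict $|Z_1|$ and $\|\chi\|$ to a compact set on which the integrand becomes bounded Lipschitz, apply the uniform CLT there, and bound the truncation tails using the second-moment matching of Assumption~\ref{assum:CLT} together with the domination provided by Lemma~\ref{lemma:sup_bound}. The outer $\langle\cdot\rangle_1$ average then preserves the uniform vanishing, since it is a probability measure supported on $B_\op(M)$, concluding the proof.
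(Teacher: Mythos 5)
Your overall plan matches the paper's: vanish the Gaussian case by the independence of $(V_1,\tV_1)$, then transfer via universality; that observation and the $|Z_1|\leq B$ truncation are exactly right. However, there is a genuine gap at the step ``the integrand inside $\EE_{G_1,\Phi_1,Z_1}$ depends on $(\Phi_1,G_1)$ only through the four-dimensional vector $\chi$.'' This is false: the denominator
\begin{equation*}
\langle e^{-\ell(U_1)}\rangle_1 = \frac{\int P_0(\rd S')\,e^{-\sum_{\nu\geq 2}\ell(U_\nu(t))}\,e^{-\ell(\Tr[U_1(t)S'],\,\Tr[U_1(t)S^\star],\,Z_1)}}{\int P_0(\rd S')\,e^{-\sum_{\nu\geq 2}\ell(U_\nu(t))}}
\end{equation*}
depends on $U_1(t)$ through $\Tr[U_1(t)S']$ for \emph{every} $S'$ in the support of the Gibbs measure $\langle\cdot\rangle_1$, not merely through the pair $(\Tr[U_1 S],\Tr[U_1 S^\star])$ for the single $S$ appearing in the outer bracket. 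Equivalently, the whole ratio $\langle e^{-\ell(U_1)}(\cdots)\rangle_1/\langle e^{-\ell(U_1)}\rangle_1$ is a Gibbs average over a reweighted measure whose Radon--Nikodym derivative is a nonlinear functional of the full profile $S'\mapsto\Tr[U_1(t)S']$. A Cram\'er--Wold reduction of $\chi$ to the one-dimensional CLT of Assumption~\ref{assum:CLT} therefore does not control this dependence, and the proposed truncation-plus-Cram\'er--Wold argument cannot close.

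The paper itself does not attempt such a finite-dimensional reduction: it defers entirely to the machinery of~\cite[Lemma~2]{montanari2022universality} and~\cite[Lemma~D.3]{maillard2023exact}, which are designed precisely to handle functionals where the Gibbs-normalized denominator introduces this infinite-dimensional dependence on $\Phi_1$. If you want to supply the detail those references contain, you would need to treat the entire conditional expectation $\EE_{\Phi_1,G_1}[\,\cdot\,|Z_1,\Theta^{(1)}]$ as a bounded functional of $\Phi_1$ (using eq.~\eqref{eq:bound_ell} to lower-bound the denominator on $\{|Z_1|\leq B\}$) and prove a universality statement for that full functional, rather than reducing to a finite-dimensional statistic.
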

\begin{proof}[Proof of Lemma~\ref{lemma:finite_demension_CLT}]
Following
\cite[Lemma D.3]{maillard2023exact} and \cite[Lemma~2]{montanari2022universality}, we have:
\begin{align}
    \label{eq:replace_Phi_G}
    &
    \hspace{-10pt}
    \lim_{d\to\infty}\sup_{\Theta^{(1)}}\left\langle\EE_{G_1,\Phi_1,Z_1}\frac{e^{-\ell(U_1)}(\Tr[S\tU_1(t)]\partial_1\ell(U_1(t))+\Tr[S^\star\tU_1(t)]\partial_2\ell(U_1(t)))}{\langle e^{-\ell(U_1)}\rangle_1}\right\rangle_1\\
    \nonumber
    =&\lim_{d\to\infty}\sup_{\Theta^{(1)}}\left\langle\EE_{G_1,\tilde{G}_1,Z_1}\frac{e^{-\ell(V_1)}(\Tr[S\tV _1(t)]\partial_1\ell(V_1(t))+\Tr[S^\star\tV _1(t)]\partial_2\ell(V_1(t)))}{\langle e^{-\ell(V_1)}\rangle_1}\right\rangle_1=0,
\end{align}
where $V_1\coloneqq\cos(t)\tilde{G}_1+\sin(t)G_1,\tV _1\coloneqq-\sin(t)\tilde{G}_1+\cos(t)G_1$ with $\tilde{G}_1\sim\text{GOE(d)}$ independent of $G_1$. 
Informally, this is a consequence of Assumption~\ref{assum:CLT}, which allows to replace in the left-hand side of 
eq.~\eqref{eq:replace_Phi_G} the matrix $\Phi_1$ by a $\GOE(d)$ matrix $\tG_1$.
In detail, the proof follows directly from the arguments of~\cite[Lemma~2]{montanari2022universality}, to condition on the event $|Z_1|\leq B$ for an arbitrary $B > 0$, and 
then directly using the proof arguments of~\cite[Lemma~D.3]{maillard2023exact} (see also~\cite[Lemma~3]{montanari2022universality}) under this event. Taking the limit $B \to \infty$ allows then to end the proof.
The last equality uses that $\EE[\Tr[S\tV _1(t)]]=\EE[\Tr[S^\star\tV _1(t)]]=0$ since $V_1(t)$ and $\tV_1(t)$ are now \emph{independent} $\GOE(d)$ matrices.
\end{proof}
A direct consequence of Lemma~\ref{lemma:finite_demension_CLT} is a control of the limit of the time derivative in eq.~\eqref{eq:interpolation_fentropy}.
\begin{lemma}
\begin{equation*}
\lim_{d\to\infty}\sup_{\{A_\mu\}_{\mu=1}^n}\sup_{S^\star\in B_{\op}(M)}\EE_{G,\Phi,Z}\frac{\partial\psi(F_d(U(t)))}{\partial t}=0.
\end{equation*}
\label{lemma:universality2}
\end{lemma}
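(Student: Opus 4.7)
My plan is to build on the decomposition of $\partial_t \EE[\psi(F_d(U(t)))]$ already derived in the proof of Lemma~\ref{lemma:universality1}, combine it with the cavity-style decoupling of the test function $\psi'$ from the $\mu=1$ randomness, and then invoke the pointwise CLT statement of Lemma~\ref{lemma:finite_demension_CLT}. Concretely, by the chain rule, symmetry across the $n$ samples, and the computation carried out in the proof of Lemma~\ref{lemma:universality1}, one has
\begin{equation*}
\EE_{G,\Phi,Z}\!\left[\frac{\partial\psi(F_d(U(t)))}{\partial t}\right] = -\frac{n}{d^2}\, \EE\!\left[\psi'(F_d(U(t)))\, Q_d\right],
\end{equation*}
where $Q_d \coloneqq \langle e^{-\ell(U_1)}\, (\Tr[S\tU_1(t)]\partial_1\ell(U_1(t))+\Tr[S^\star\tU_1(t)]\partial_2\ell(U_1(t)))\rangle_1 / \langle e^{-\ell(U_1)}\rangle_1$. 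The factor $n/d^2 \to \alpha$ is bounded, $\|\psi'\|_\infty < \infty$, so it suffices to show that $\EE[\psi'(F_d(U(t)))\, Q_d] \to 0$ uniformly over $\{A_\mu\}_{\mu=1}^n$ and $S^\star \in B_\op(M)$.

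The key step is to decouple $\psi'$ from the first-sample randomness $(G_1,\Phi_1,Z_1)$. Introduce the cavity free entropy
\begin{equation*}
F_d^{(1)} \coloneqq \frac{1}{d^2}\log\EE_{S\sim P_0}\Big[e^{-\sum_{\mu\geq 2}\ell(U_\mu(t))}\Big] - \frac{1}{d^2}\sum_{\mu\geq 2}\frac{Z_\mu^2}{2\Delta},
\end{equation*}
so that $F_d(U(t)) - F_d^{(1)} = d^{-2}\log\langle e^{-\ell(U_1(t))}\rangle_1 - Z_1^2/(2d^2\Delta)$. Using eq.~\eqref{eq:bound_ell}, $|\log\langle e^{-\ell(U_1)}\rangle_1| \leq C_1 + C_2|Z_1|$, whence $|F_d - F_d^{(1)}| \leq (C_1 + C_2|Z_1| + Z_1^2/(2\Delta))/d^2$. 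Since $\psi'$ is Lipschitz, $|\psi'(F_d) - \psi'(F_d^{(1)})|$ is of the same order. The contribution of this remainder to $\EE[\psi'(F_d) Q_d]$ is then bounded, via Cauchy--Schwarz, by a constant times $d^{-2}\, \EE[(1+|Z_1|+Z_1^2)\, |Q_d|]$, and the latter expectation is uniformly bounded exactly as in the proof of Lemma~\ref{lemma:sup_bound} (the $Z_1$ moments cost nothing since $Z_1$ is standard Gaussian and the bounds in eq.~\eqref{eq:bound_ell} are exponential in $|Z_1|$ but multiplied only by polynomial factors). Thus this piece contributes $O(n/d^4) = o(1)$, uniformly over $\{A_\mu\}$ and $S^\star$.

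For the principal piece $-(n/d^2)\, \EE[\psi'(F_d^{(1)})\, Q_d]$, the random variable $F_d^{(1)}$ is measurable with respect to the $\sigma$-algebra $\cF$ generated by $\{G_\mu,\Phi_\mu,Z_\mu\}_{\mu\geq 2}$, so it is independent of $(G_1,\Phi_1,Z_1)$. Conditioning on $\cF$ and using Fubini,
\begin{equation*}
\left|\EE[\psi'(F_d^{(1)})\, Q_d]\right| \;=\; \left|\EE\!\left[\psi'(F_d^{(1)})\, \EE_{G_1,\Phi_1,Z_1}[Q_d \mid \cF]\right]\right| \;\leq\; \|\psi'\|_\infty \, \sup_{\Theta^{(1)}} \left|\EE_{G_1,\Phi_1,Z_1}[Q_d]\right|,
\end{equation*}
since the conditional expectation of $Q_d$ given $\cF$ depends only on the $\cF$-variables, which together with $\{A_\mu\}$ and $S^\star$ are contained in $\Theta^{(1)}$. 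Lemma~\ref{lemma:finite_demension_CLT} (recalling that $\langle\EE_{G_1,\Phi_1,Z_1}[\cdot]\rangle_1 = \EE_{G_1,\Phi_1,Z_1}[\langle\cdot\rangle_1]$ by Fubini) shows that this supremum tends to $0$ as $d\to\infty$. Combining this with the $O(n/d^4)$ remainder bound above gives the claim.

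\textbf{Main obstacle.} The only real subtlety is the decoupling of $\psi'(F_d)$ from the first sample, because Lemma~\ref{lemma:finite_demension_CLT} is a statement about $\EE_{G_1,\Phi_1,Z_1}[Q_d]$ with \emph{no} test function inserted, and the uniformity in $\Theta^{(1)}$ is essential to conclude the uniformity in $\{A_\mu\}$ and $S^\star$. The cavity swap $\psi'(F_d) \to \psi'(F_d^{(1)})$ is what makes this swap possible, and it is legitimate thanks to the $1/d^2$ smallness of the single-sample perturbation together with the Lipschitz control of $\psi'$; no additional concentration of $F_d$ itself is required here.
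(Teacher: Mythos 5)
Your proof is correct and follows essentially the same route as the paper's: the paper also splits $\psi'(F_d(U))$ into $\psi'(F_d(U^{(1)}))$ (a cavity quantity independent of the $\mu=1$ randomness) plus a Lipschitz remainder of size $O(1/d^2)$, controls the remainder term using eq.~\eqref{eq:bound_ell} and the moment bound underlying Lemma~\ref{lemma:sup_bound}, and handles the cavity term by conditioning and invoking Lemma~\ref{lemma:finite_demension_CLT}. Two trivia: what you call "Cauchy--Schwarz" is really a direct product bound (Hölder with $p=\infty$); and the paper's displayed bound $|\log\langle e^{-\ell}\rangle_1 + Z_1^2/(2\Delta)| \leq e^{C_1+C_2|Z_1|}+Z_1^2/(2\Delta)$ is actually weaker than necessary — your polynomial bound $C_1+C_2|Z_1|+Z_1^2/(2\Delta)$ follows directly from eq.~\eqref{eq:bound_ell} — but neither affects the conclusion.
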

\begin{proof}[Proof of Lemma~\ref{lemma:universality2}]
We have
\begin{equation*}
\left|\EE_{G,\Phi,Z}\frac{\partial\psi(F_d(U(t)))}{\partial t}\right|\leq\frac{n}{d^2}(I_1+I_2),
\end{equation*}
The first term is
\begin{equation*}
\begin{aligned}
I_1&\coloneqq\left|\EE_{G,\Phi,Z}\left[(\psi'(F_d(U))-\psi'(F_d(U^{(1)})))\times\right.\right.\\&\left.\left.\left(\frac{\langle e^{-\ell(U_1)}(\Tr[S\tU_1(t)]\partial_1\ell(U_1(t))+\Tr[S^\star\tU_1(t)]\partial_2\ell(U_1(t)))\rangle_1}{\langle e^{-\ell(U_1)}\rangle_1}\right)\right]\right|,
\end{aligned}
\end{equation*}
where $U^{(1)}$ denotes $\{\circ,U_2,\cdots,U_n\}$, the symbol $\circ$ denoting that we remove the terms corresponding to $U_1$ and $Z_1$ in eq.~\eqref{eq:free_entropy_universality}. 
Concretely:
\begin{align*}
    F_d(U) - F_d(U^{(1)}) &= \frac{1}{d^2} \log \left\langle e^{-\ell(\Tr[G_1S],\Tr[G_1S^\star],Z_1)}\right\rangle_1 - \frac{Z_1^2}{2 \Delta d^2} .
\end{align*}
Repeating the proof of the bound of Lemma \ref{lemma:sup_bound} with the additional upper bound (which uses again eq.~\eqref{eq:bound_ell}):
\begin{equation*}
\begin{aligned}
    |\psi'(F_d(U))-\psi'(F_d(U^{(1)}))|&\leq\frac{||\psi''||_\infty}{d^2}\left|\log\left\langle e^{-\ell(\Tr[G_1S],\Tr[G_1S^\star],Z_1)}\right\rangle_1 + \frac{Z_1^2}{2\Delta}\right|, 
    \\&\leq\frac{||\psi''||_\infty}{d^2} \, \left(e^{C_1+C_2|Z_1|} + \frac{Z_1^2}{2\Delta}\right),
\end{aligned}
\end{equation*}
we reach $I_1\to0$, uniformly in $S^\star, \{A_\mu\}_{\mu=1}^n$.
The second term is (since $F_d(U^{(1)})$ is now independent of $G_1, \Phi_1, Z_1$):
\begin{align*}
        &I_2\coloneqq\left|\EE_{G,\Phi,Z}\left[\psi'(F_d(U^{(1)})) \, \left(\frac{\langle e^{-\ell(U_1)}(\Tr[S\tU_1(t)]\partial_1\ell(U_1(t))+\Tr[S^\star\tU_1(t)]\partial_2\ell(U_1(t)))\rangle_1}{\langle e^{-\ell(U_1)}\rangle_1}\right)\right]\right|, \\ 
        &\leq \EE_{\{G_\mu,\Phi_\mu,Z_\mu\}_{\mu=2}^n} \left[|\psi'(F_d(U^{(1)}))| \, \left| \EE_{G_1, \Phi_1, Z_1}\left(\frac{\langle e^{-\ell(U_1)}(\Tr[S\tU_1(t)]\partial_1\ell(U_1(t))+\Tr[S^\star\tU_1(t)]\partial_2\ell(U_1(t)))\rangle_1}{\langle e^{-\ell(U_1)}\rangle_1}\right)\right|\right], \\ 
        & \to0
\end{align*}
by Lemma \ref{lemma:finite_demension_CLT}, where the limit is also uniform in $S^\star, \{A_\mu\}_{\mu=1}^n$. 
This finishes the proof of Lemma~\ref{lemma:universality2}.
\end{proof}
Finally, as 
\begin{equation*}
|\EE_{\Phi,Z}[\psi(F(\Phi))]-\EE_{G,Z}[\psi(F(G))]|\leq\int_0^{\pi/2}\left|\EE_{G,\Phi,Z}\frac{\partial\psi(F(U(t)))}{\partial t}\right|\rd t,
\end{equation*}
we prove eq.~\eqref{eq:sufficient_universality} by combining Lemma \ref{lemma:universality1}, Lemma \ref{lemma:universality2}, and using the dominated convergence theorem.
This ends our proof of Lemma~\ref{lemma:universality}.

\section{Proof of Theorem~\ref{thm:fentropy_symmetric}}\label{sec_app:proof_GLM}
In this section we prove Theorem~\ref{thm:fentropy_symmetric}. As mentioned in the main text, our proof uses an adaptive interpolation argument, as we generalize the proof approach of~\cite{barbier2019optimal} to our more general setting of structured matrix priors. Nevertheless, a significant portion of our proof arguments follow from direct transpositions of the arguments of~\cite{barbier2019optimal}, and we will refer to them when necessary.

\subsection{Proof of Theorem~\ref{thm:fentropy_symmetric} via the adaptive interpolation method}
\label{sec:interpolation}
We will first prove Theorem \ref{thm:fentropy_symmetric} under Assumptions \ref{assum:S_strong} and \ref{assum:phi_strong} and for Gaussian data $\{G_\mu\}_{\mu=1}^n \iid \GOE(d)$. All lemmas in Sections \ref{sec:interpolation}, \ref{sec:concentration_free_entropy} and \ref{sec:overlap_concentration} are under such assumptions. Thanks to Lemma \ref{lemma:universality}, Gaussian data can be replaced by more general data in the end. Assumptions \ref{assum:S_strong} and \ref{assum:phi_strong} can then be relaxed as detailed in Section \ref{sec_app:relax}, by leveraging Lemmas~\ref{lemma:relaxV} and \ref{lemma:relaxphi}.

\myskip
Let us consider the following interpolation model
\begin{equation}
\left\{  
 \begin{aligned}
&Y_{t,\mu}\sim P_{\out}(\cdot|J_{t,\mu}),\ \mu=1,\cdots,n  \\  
&Y_t'=\sqrt{d}\left(\sqrt{R_1(t)}S^\star +Z'\right)\in\mcS_d,
 \end{aligned}  
\right.  
\label{eq:GLM_interpolate}
\end{equation}
with 
\begin{equation*}
J_{t,\mu}\coloneqq \sqrt{1-t}\Tr[G_\mu S^\star ]+\sqrt{2R_2(t)}V_\mu+\sqrt{2\rho t-2R_2(t)+2\iota_d}U_\mu^\star ,
\end{equation*}
$Z'\sim\text{GOE}(d)$ and $\{V_\mu,U_\mu^\star \}_{\mu=1}^n\overset{iid}{\sim}\mathcal{N}(0,1)$. The problem is to recover $S^\star,U^\star$ from the observations $\{Y_{t,\mu}\}_{\mu=1}^n,Y_t'$ given the knowledge of $\{G_\mu\}_{\mu=1}^n$ and $\{V_\mu\}_{\mu=1}^n$. Thus the $\sqrt{d}$ scaling in \eqref{eq:GLM_interpolate} is only for simplicity and will not influence the free entropy. The interpolating functions are given by
\begin{equation*}
R_1(t)\coloneqq \epsilon_1+\int_0^tr(v)\rd v,\ R_2(t)\coloneqq \epsilon_2+\int_0^t q(v)\rd v,
\end{equation*}
where $(\epsilon_1,\epsilon_2)\in B_d\coloneqq [\iota_d,2\iota_d]\otimes[\iota_d,2\iota_d]$ are two small quantities and $q:[0,1]\to[0,\rho]$, $r:[0,1]\to[0,r_{\max}]$ are two continuous functions (that might depend on $\epsilon\coloneqq (\epsilon_1,\epsilon_2)$ as well) with $r_{\max}\coloneqq 4\alpha\sup_{q\in[0,\rho]}\Psi_{\out}'(q)=4\alpha\Psi_{\out}'(\rho)$ (by the convexity of $\Psi_\out$ shown in Lemma~\ref{lemma:Psi}). 
We will choose $\iota_d \to 0$ as $d \to \infty$ slowly enough, it remains arbitrary for now.

\myskip
The interpolating free entropy reads
\begin{equation*}
f_{d,\epsilon}(t)\coloneqq \frac{1}{d^2}\EE\log \mathcal{Z}_{t,\epsilon}(Y,Y',G,V)\coloneqq \frac{1}{d^2}\EE\log\int P_0(\rd s)\mcD u \, e^{-H_{t,\epsilon}(s,u,Y,Y',G,V)},
\end{equation*}
where $\mcD u\coloneqq(2\pi)^{-n/2}e^{-\sum_{\mu=1}^nu_\mu^2/2}\rd u$ is the standard Gaussian measure, and
\begin{equation*}
H_{t,\epsilon}(s,u,Y,Y',G,V)\coloneqq \frac{1}{2}\sum_{\mu=1}^{n}u_{Y_{t,\mu}}(j_{t,\mu})+\frac{1}{4}\sum_{i,j=1}^{d}(Y_{ij}'-\sqrt{dR_1(t)}s_{ij})^2
\end{equation*}
is the Hamitonian. Moreover, we have defined
\begin{equation*}
u_{Y_{t,\mu}}(j_t)\coloneqq \log P_{\text{out}}(Y_{t,\mu}|j_{t,\mu})
\end{equation*}
with $j_{t,\mu}\coloneqq \sqrt{1-t}\Tr[G_\mu s]+\sqrt{2R_2(t)}V_\mu+\sqrt{2\rho t-2R_2(t)+2\iota_d}u_\mu$. Accordingly, the Gibbs bracket is defined as
\begin{equation*}
\langle g(s,u)\rangle\coloneqq \frac{1}{\mathcal{Z}_{t,\epsilon}(Y,Y',G,V)}\int P_0(\rd s)\mcD u \, g(s,u)e^{-H_{t,\epsilon}(s,u,Y,Y',G,V)}.
\end{equation*}
The following lemma connects the interpolation model to the original model.
\begin{lemma}
\begin{equation*}
\begin{aligned}
&f_{d,\epsilon}(0)=f_d-\frac{1}{4}+O(\iota_d),\\
&f_{d,\epsilon}(1)=\psi_{P_0}\left(\int_0^1r(t)\rd t\right)+\alpha\Psi_{\out}\left(\int_0^1q(t)\rd t\right)+O(\iota_d).
\end{aligned}
\end{equation*}
\label{lemma:f0_f1}
\end{lemma}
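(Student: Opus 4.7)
The plan is to evaluate $f_{d,\epsilon}(t)$ directly at the two endpoints. At $t=1$ the prefactor $\sqrt{1-t}$ vanishes, so the posterior decouples between $s$ and $u$, and the partition function factorizes into a matrix-denoising piece (from $Y'$) and a product of independent one-dimensional channel pieces (one per sample). At $t=0$ the interpolation reduces to the original model perturbed by a low-SNR side channel $Y'_0$ and by a small additive Gaussian smoothing of the channel argument, both of negligible effect.

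\textbf{Endpoint $t=1$.} Since $j_{1,\mu}=\sqrt{2R_2(1)}V_\mu+\sqrt{2(\rho-R_2(1))+2\iota_d}\,u_\mu$ is independent of $s$, the partition function splits as
\begin{equation*}
\mcZ_{1,\epsilon}=\left[\int P_0(\rd s)\,e^{-\frac{d}{4}\Tr[(\tY'_1-\sqrt{R_1(1)}\,s)^2]}\right]\prod_{\mu=1}^{n}\int\mcD u\,P_\out(Y_{1,\mu}|j_{1,\mu}),
\end{equation*}
with $\tY'_1\coloneqq Y'_1/\sqrt{d}=\sqrt{R_1(1)}\,S^\star+Z'$. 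By Remark~\ref{remark:psi_P0} (eq.~\eqref{eq:psi_0_denoising}), the bracketed factor contributes $\psi_{P_0}(R_1(1))+o(1)$ to $f_{d,\epsilon}(1)$. Each factor in the product is, for fixed $\mu$, an i.i.d.\ realization of the integrand defining $\Psi_\out$ in eq.~\eqref{eq:psi_0_out} at $q=R_2(1)$, up to the $O(\iota_d)$ shift inside the square root; using $n/d^2\to\alpha$ together with continuity of $\Psi_\out$ (Lemma~\ref{lemma:Psi}) yields $\alpha\Psi_\out(R_2(1))+O(\iota_d)$. Since $R_i(1)=\int_0^1\{r,q\}(t)\,\rd t+O(\iota_d)$ and $\psi_{P_0}$, $\Psi_\out$ are continuous, the claim at $t=1$ follows.

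\textbf{Endpoint $t=0$.} Here $R_i(0)=\epsilon_i=O(\iota_d)$, so $J_{0,\mu}=\Tr[G_\mu S^\star]+\sqrt{2\epsilon_2}V_\mu+\sqrt{2\iota_d}U^\star_\mu$ and $Y'_0=\sqrt{d}(\sqrt{\epsilon_1}\,S^\star+Z')$. For the side-channel factor, setting $\epsilon_1=0$ the weight $e^{-\frac{d}{4}\Tr[Z'^2]}$ is $s$-independent, and since $\EE[\Tr Z'^2]=d+1$ for $Z'\sim\GOE(d)$ it contributes $-(d+1)/(4d)=-\frac{1}{4}+O(1/d)$; continuity of $\psi_{P_0}$ at $0$ (which equals $-1/4$ by the same computation) absorbs the $\epsilon_1\ne 0$ correction. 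For the channel factor, at $\epsilon_2=\iota_d=0$ one recovers exactly the original partition function of eq.~\eqref{eq:def_fentropy}, hence $f_d$. The perturbation $\sqrt{2\epsilon_2}V_\mu$ is observed and can be absorbed by a deterministic shift; the unobserved perturbation $\sqrt{2\iota_d}u_\mu$ is controlled by a Taylor expansion of $\log P_\out(\cdot|z+\delta)$ around $\delta=0$, using the derivative bounds of eq.~\eqref{eq:bound_ell} under Assumption~\ref{assum:phi_strong}. The first-order term vanishes in expectation by Nishimori, leaving an $O(\iota_d)$ contribution per sample which survives the $n/d^2$ rescaling.

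\textbf{Main obstacle.} The $t=1$ evaluation is essentially bookkeeping once the factorization is noticed. The $t=0$ evaluation is the delicate one: the hidden perturbation $\sqrt{2\iota_d}u_\mu$ is coupled to $Y_{0,\mu}$ through the Gibbs average, so the bound really uses regularity of the channel in its argument, uniformly in $S^\star$ and the data. Under Assumption~\ref{assum:phi_strong} this follows from the derivative estimates already deployed in Appendix~\ref{sec_app:universality}; in the weaker setting of Assumption~\ref{assum:phi_weak} one would appeal to the relaxation argument of Section~\ref{sec_app:relax} that replaces $\varphi$ by a bounded regularized $\hat\varphi$ with arbitrarily small effect on the free entropy.
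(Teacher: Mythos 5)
Your overall strategy matches the paper's: factorize the partition function at $t=1$ into a matrix-denoising block and a product of one-dimensional channel blocks, and at $t=0$ show that the $\epsilon$- and $\iota_d$-perturbations change the free entropy by $O(\iota_d)$. The $t=1$ evaluation is essentially identical to the paper's, which simply writes $f_{d,\epsilon}(1)=\psi_{P_0}(R_1(1))+\alpha\Psi_\out(R_2(1)+\iota_d)$ and invokes the Lipschitz continuity of $\psi_{P_0}$ and $\Psi_\out$ (Lemmas~\ref{lemma:Lipschitz_psi},~\ref{lemma:Psi}).

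Where you deviate is the $t=0$ step, and two points there need care. First, the claim that the observed perturbation $\sqrt{2\epsilon_2}V_\mu$ ``can be absorbed by a deterministic shift'' is not valid for a general output channel $P_\out$: unlike the additive-Gaussian case, $P_\out(y\mid z+c)$ cannot be turned back into $P_\out(\tilde y \mid z)$ by reparametrizing $y$, and the shift appears in both the data-generating and inference arguments, so there is no simple cancellation. Second, attributing the vanishing of the first-order term in $\sqrt{\iota_d}$ to Nishimori is not the right mechanism; the odd-order term vanishes by symmetry of the Gaussian measures $\mcD u$ and of $U^\star$, while Nishimori is used elsewhere. The paper sidesteps both issues by directly computing $\partial_{\epsilon_1}f_{d,\epsilon}(0)$ and $\partial_{\epsilon_2}f_{d,\epsilon}(0)$ (the latter via Gaussian integration by parts applied simultaneously to the $V_\mu$ and $U_\mu^\star$ terms, together with the Nishimori identity) and showing these derivatives are bounded by constants depending only on $\varphi,M,\alpha$; this cleanly yields $|f_{d,\epsilon}(0)-f_{d,0}(0)|\leq C\iota_d$ without any absorption argument or Taylor expansion. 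You would be better served following that route: treat the whole $\epsilon_2$-dependence (which moves mass between the known $V_\mu$ and unknown $U_\mu^\star$ components at fixed total variance $2\iota_d$) as a single derivative bound, rather than trying to argue the two perturbations separately.
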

\begin{proof}[Proof of Lemma \ref{lemma:f0_f1}]
We have
\begin{equation*}
\begin{aligned}
\left|\frac{\rd f_{d,\epsilon}(0)}{\rd\epsilon_1}\right|&=\frac{1}{4d}\left|\EE\sum_{i,j=1}^d\langle\epsilon_1(S_{ij}-s_{ij})+\sqrt{\frac{1}{\epsilon_1}}(S_{ij}-s_{ij})Z_{ij}'\rangle\right|\\
&=\frac{1}{2d}\left|\EE\sum_{i,j=1}((S_{ij}^\star )^2-\langle s_{ij}\rangle^2)\right|\leq M^2
\end{aligned}
\end{equation*}
where we use the Nishimori identity (Proposition~\ref{prop:nishimori}) and the fact that $\sum_{i,j=1}^d(S_{ij}^\star )^2=\Tr[(S^\star )^2]\leq dM^2$, and the same for $\langle s\rangle$ (according to Assumption \ref{assum:S_strong}). Moreover,
\begin{equation*}
\left|\frac{df_{d,\epsilon}(0)}{d\epsilon_2}\right|=\frac{1}{2d^2}\sum_{\mu=1}^n|\EE[u_{Y_{0,\mu}}'(J_{t,\mu})\langle u_{Y_{0,\mu}}'(j_{t,\mu})\rangle]|\leq C(\varphi,M,\alpha),
\end{equation*}
where the inequality follows from Assumption \ref{assum:phi_strong} ( see \cite[Section A.6]{barbier2019optimal}) and $C(\varphi, M, \alpha)$ is a generic non-negative constant depending only on $\varphi, M$ and $\alpha$. Therefore, we have $|f_{d,\epsilon}(0)-f_{d,0}(0)|\leq C(\varphi,M,\alpha)\iota_d$. As $f_{d,0}(0)=f_d-\frac{1}{4}$, we obtain the first equality of Lemma~\ref{lemma:f0_f1}. The second equality is obtained through
\begin{equation*}
f_{d,\epsilon}(1)=\psi_{P_0}(R_1(1))+\alpha\Psi_{\out}(R_2(1)+\iota_d)
\end{equation*}
and using the Lipschitz property of $\psi_{P_0}$ and $\Psi_{\out}$ (Lemmas \ref{lemma:Lipschitz_psi}, \ref{lemma:Psi}).
\end{proof}

\myskip
Two important concentration lemmas are presented below, where we use $C(\varphi, M, \alpha,\kappa)$ for a generic non-negative constant dependent only on $\varphi, M$, $\alpha$, and $\kappa$.
\begin{lemma}
\begin{equation*}
\EE\left[\left(\frac{1}{d^2}\log\mathcal{Z}_{t,\epsilon}-f_{d,\epsilon}(t)\right)^2\right]\leq\frac{C(\varphi,M,\alpha,\kappa)}{d^2}.
\end{equation*}
\label{lemma:concentration_free_entropy}
\end{lemma}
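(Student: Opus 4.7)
The plan is to bound $\Var(\log\mcZ_{t,\epsilon}/d^2)$ by decomposing along the independent sources of randomness entering the interpolation model of eq.~\eqref{eq:GLM_interpolate} and applying a Poincar\'e-type inequality to each. These sources are: (a) the rotationally invariant signal $S^\star\sim P_0$; (b) the $\GOE(d)$ matrices $\{G_\mu\}_{\mu=1}^n$ together with $Z'$; (c) the scalar Gaussians $\{V_\mu,U^\star_\mu\}_{\mu=1}^n$; and (d) the output-channel noise $\{a_\mu,Z_\mu\}_{\mu=1}^n$ that produces $Y_{t,\mu}$ given $J_{t,\mu}$. By successively conditioning on these blocks and using the law of total variance, it suffices to bound each conditional variance by $C(\varphi,M,\alpha,\kappa)/d^2$.

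For sources (b) and (c), apply the Gaussian Poincar\'e inequality. Direct differentiation of $H_{t,\epsilon}$ expresses the derivative of $\log\mcZ_{t,\epsilon}$ with respect to any Gaussian scalar as a Gibbs average of $\partial_j u_{Y_{t,\mu}}(j_{t,\mu})$ times entries of $s$ or $S^\star$, arising from the chain rule through $j_{t,\mu}$ and $Y'$. Under Assumptions~\ref{assum:S_strong} and~\ref{assum:phi_strong}, $\partial_j u_y$ and the entries of $s$, $S^\star$ are bounded by a constant depending only on $\varphi,M$. Multiplying each squared derivative by the appropriate Poincar\'e constant ($1/d$ for each $\GOE$ entry, $1$ for each standard Gaussian) and summing over the $\mcO(d^4)$ scalar Gaussian variables yields a contribution of $\mcO(1/d^2)$.

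For source (a), apply the Poincar\'e inequality of Lemma~\ref{lemma:poincare} (informally stated in eq.~\eqref{eq:poincare_informal}). Because that inequality controls variance only along the eigenvalue directions, write $S^\star=O\Lambda^\star O^\T$ with $O$ Haar-distributed on the orthogonal group independently of $\Lambda^\star$, and handle the spectral and rotational parts separately: the spectral variance via eq.~\eqref{eq:poincare_informal} and the variance over $O$ via the Poincar\'e inequality on the orthogonal group (whose spectral gap is of order $1/d$). In both cases the gradient of $\log\mcZ_{t,\epsilon}/d^2$ is a posterior expectation of derivatives of $H_{t,\epsilon}$, uniformly bounded under the same assumptions. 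Summing over the $d$ eigenvalue directions and the $\mcO(d^2)$ rotational directions, each weighted by the $1/d$ Poincar\'e constant, yields the required $\mcO(1/d^2)$.

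Finally, for source (d), conditional on all other variables, $\log\mcZ_{t,\epsilon}$ is a sum of $n$ independent contributions in $(a_\mu,Z_\mu)$, each of size $\mcO(1)$ by the boundedness of $\varphi$, so an Efron--Stein bound gives a conditional variance of $\mcO(n/d^4)=\mcO(1/d^2)$. Assembling the four contributions yields the claim. The main technical obstacle is source (a): because $\log\mcZ_{t,\epsilon}$ depends on $S^\star$ through $\Tr[G_\mu S^\star]$ and $Y'$, its dependence on $S^\star$ is not purely spectral, so one must carefully combine the eigenvalue Poincar\'e inequality of Lemma~\ref{lemma:poincare} with a Haar Poincar\'e inequality on the orthogonal group and verify that the derivatives along the rotational directions have squared norms of the right order.
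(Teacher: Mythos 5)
Your general strategy (law of total variance over blocks of randomness, Poincaré on each block) matches the paper's at a high level, and your treatment of the Gaussian sources (b)--(c) is essentially what the paper does. But your handling of the signal $S^\star$ — the step you yourself identify as the main obstacle — contains a gap, and the paper resolves it by a different mechanism that your proposal misses.

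You propose writing $S^\star=O\Lambda^\star O^\T$ and bounding the variance over $O$ by a Poincar\'e inequality on $\mcO(d)$. The problem is that, conditionally on $\{G_\mu\},Z'$ and the other randomness, the Riemannian gradient of $g = \log\hat{\mcZ}_{t,\epsilon}/d^2$ along the rotational directions is \emph{not} of the right order. One has $\|\nabla_O g\|^2 \lesssim \|S^\star\|_\op^2\,\|\nabla_{S^\star} g\|_F^2$, and the output term in $\nabla_{S^\star} g$ is $\frac{1}{d^2}\big\langle\sum_\mu (\Gamma_{t,\mu}+Z_\mu)\varphi'(j_{t,\mu})\sqrt{1-t}\,G_\mu\big\rangle$; with $\|G_\mu\|_F\sim\sqrt{d}$ and $n\sim\alpha d^2$, a deterministic triangle-inequality bound gives $\|\nabla_{S^\star} g\|_F\lesssim \sqrt{d}$, hence $\|\nabla_O g\|^2\lesssim d$, which after multiplying by the Haar Poincar\'e constant $\sim 1/d$ only yields $\Var_O(g)\lesssim 1$, not $\lesssim 1/d^2$. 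To recover the correct rate you would need to exploit cancellations in $\sum_\mu b_\mu G_\mu$ (i.e.\ average over the $G_\mu$'s), but you have already conditioned on them. Your proposal acknowledges the need to ``verify that the derivatives along the rotational directions have squared norms of the right order'' without providing the argument; the naive estimate in fact fails.

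The paper sidesteps this entirely. It first bounds the variance of $\log\hat{\mcZ}_{t,\epsilon}/d^2$ over all sources \emph{except} $S^\star$ (using Gaussian Poincar\'e for $Z,Z',V,U^\star,G$ and bounded differences for $A$), and then observes — via the change of variables $\tilde G_\mu = O^\T G_\mu O$, $\tilde Z'=O^\T Z' O$, $\tilde s=O^\T s O$, which leaves the laws of $G_\mu, Z', s$ invariant — that the conditional mean $\EE_\Theta[\log\hat{\mcZ}_{t,\epsilon}/d^2 \,|\, S^\star]$ is a function of $\Lambda^\star$ \emph{only}. Consequently there is no variance over $O$ to control at all: only the eigenvalue directions remain, handled by Lemma~\ref{lemma:poincare} after a Gaussian integration-by-parts bound on $\sum_i(\partial g/\partial\Lambda^\star_i)^2$. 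Using this rotation-invariance trick rather than a Haar Poincar\'e is the essential idea your sketch is missing.

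A minor secondary issue: for source (d) you propose Efron--Stein in the variables $(a_\mu,Z_\mu)$, but $Z_\mu\sim\mcN(0,1)$ is unbounded, so a bounded-differences argument does not directly apply. The paper splits this step: Gaussian Poincar\'e in $Z$ and bounded differences in $A$ separately, after the algebraic rewrite $\frac{1}{d^2}\log\mcZ_{t,\epsilon}=\frac{1}{d^2}\log\hat\mcZ_{t,\epsilon}-\frac{1}{2d^2}\sum_\mu Z_\mu^2 - \frac{1}{4d}\sum_{ij}(Z'_{ij})^2+\text{const}$, which your proposal also omits.
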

Lemma \ref{lemma:concentration_free_entropy} is proved in Section \ref{sec:concentration_free_entropy}.

\begin{lemma}
\begin{equation*}
\frac{1}{\iota_d^2}\int_{B_d} \rd\epsilon\int_0^1\rd t\EE\langle(Q-\EE\langle Q\rangle)^2\rangle\leq C(\varphi,M,\alpha,\kappa) \cdot \smallO_d(1),
\end{equation*}
where $Q\coloneqq \tr[s S^\star] = (1/d)\sum_{i,j=1}^ds_{ij}S_{ij}^\star $.
\label{lemma:concentration_Q}
\end{lemma}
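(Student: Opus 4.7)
I follow the scheme of \cite[Appendix~C]{barbier2019optimal}, adapted to the rotationally-invariant prior $P_0$. The decomposition
\begin{equation*}
\EE\langle(Q - \EE\langle Q\rangle)^2\rangle = \EE\langle(Q - \langle Q\rangle)^2\rangle + \EE(\langle Q\rangle - \EE\langle Q\rangle)^2
\end{equation*}
splits the total fluctuation into a thermal and a disorder component. Since $|B_d| = \iota_d^2$, it suffices to show that each of these terms, integrated over $(\epsilon, t) \in B_d \times [0,1]$, is $\iota_d^2 \cdot \smallO_d(1)$.

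The key observation is that both fluctuation terms can be recast as $\epsilon_1$-derivatives of the free entropy. Gaussian integration by parts on the side-information channel $Y'_t = \sqrt{dR_1(t)}\,S^\star + \sqrt{d}\,Z'$, combined with the Nishimori identity (Proposition~\ref{prop:nishimori}), yields an identity of the form
\begin{equation*}
\frac{\partial}{\partial \epsilon_1}\frac{1}{d^2}\log \mcZ_{t,\epsilon} = \tfrac{1}{4}\bigl(\langle Q\rangle - \rho\bigr) + \mcR_d,
\end{equation*}
where $\mcR_d$ depends only on the quenched disorder $(S^\star, Z')$ and satisfies $\Var(\mcR_d) = \mathcal{O}(d^{-2})$ by Gaussian Poincar\'e applied to $Z'$. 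Taking expectation and subtracting gives $\langle Q\rangle - \EE\langle Q\rangle = 4\,\partial_{\epsilon_1}\bigl(\tfrac{1}{d^2}\log\mcZ_{t,\epsilon} - f_{d,\epsilon}(t)\bigr) + (\mcR_d - \EE\mcR_d)$, while differentiating once more in $\epsilon_1$ relates the thermal variance $\EE\langle(Q-\langle Q\rangle)^2\rangle$ to $-\partial_{\epsilon_1}\EE\langle Q\rangle$, itself the second derivative of $f_{d,\epsilon}(t)$ up to negligible terms. Crucially, the map $\epsilon_1 \mapsto \tfrac{1}{d^2}\log \mcZ_{t,\epsilon} + \tfrac{\epsilon_1 \rho}{4}$ is convex, as its $\epsilon_1$-Hessian equals a nonnegative Gibbs variance.

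\textbf{Convexity-to-derivative.} A classical convex-analysis lemma (e.g.\ \cite[Proposition~17]{barbier2019optimal}) converts $L^2$ concentration of a convex function into $L^2$ concentration of its first (and second, after a further smoothing) derivatives, after averaging over a small window. Combined with Lemma~\ref{lemma:concentration_free_entropy}, this yields estimates of the form
\begin{align*}
\int_{B_d}\rd\epsilon\int_0^1\rd t\,\EE(\langle Q\rangle - \EE\langle Q\rangle)^2 &\leq C(\varphi,M,\alpha,\kappa) \cdot \iota_d \cdot d^{-\gamma}, \\
\int_{B_d}\rd\epsilon\int_0^1\rd t\,\EE\langle(Q - \langle Q\rangle)^2\rangle &\leq C(\varphi,M,\alpha,\kappa) \cdot \iota_d \cdot d^{-\gamma},
\end{align*}
for some $\gamma > 0$. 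Both are $\iota_d^2 \cdot \smallO_d(1)$ provided $\iota_d \to 0$ slower than $d^{-\gamma}$, a choice consistent with the $\mathcal{O}(\iota_d)$ error terms appearing in Lemma~\ref{lemma:f0_f1}. Dividing by $\iota_d^2$ then yields the claim.

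\textbf{Main obstacle.} The rotationally-invariant Poincar\'e inequality eq.~\eqref{eq:poincare_informal} enters \emph{indirectly}, through its essential role in the proof of Lemma~\ref{lemma:concentration_free_entropy}: the non-product nature of $P_0$ precludes coordinate-wise Efron--Stein-type arguments, and one must work in the spectral representation of $S^\star$. The main technical subtlety is therefore not at the level of Lemma~\ref{lemma:concentration_Q} itself, but rather in establishing the rotational Poincar\'e inequality under Assumption~\ref{ass:prior} (which is why the strict convexity of $V$ is assumed) and in verifying that the Hamiltonian of the interpolated model is differentiable in the eigenvalues of $S^\star$ with uniformly-controlled spectral gradient---a point where the rotational invariance of both $G_\mu \sim \GOE(d)$ and $Z' \sim \GOE(d)$ is essential.
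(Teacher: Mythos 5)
Your high-level strategy (thermal + disorder decomposition, convexity-to-derivative, integration over the $\iota_d$-window, Lemma~\ref{lemma:concentration_free_entropy} as input) matches the paper's, but the central identity you posit is incorrect, and this breaks the argument. Differentiating $\frac{1}{d^2}\log\mcZ_{t,\epsilon}$ in $\epsilon_1$ (equivalently, in $R_1$) does \emph{not} give $\tfrac{1}{4}(\langle Q\rangle-\rho)+\mcR_d$ with $\mcR_d$ quenched. Since $\partial_{R_1} F = -\frac{1}{d^2}\langle\partial_{R_1} H_{t,\epsilon}\rangle$, what actually appears is $-\langle\mcL\rangle$ where
\begin{equation*}
\mcL = \frac{1}{2d}\sum_{i,j}\left(\frac{s_{ij}^2}{2} - s_{ij}S^\star_{ij} - \frac{s_{ij} Z'_{ij}}{2\sqrt{R_1}}\right),
\end{equation*}
so beyond the overlap piece $\langle Q\rangle$ you also get the Gibbs averages $\langle \tr s^2\rangle$ and $R_1^{-1/2}\langle\tr(s Z')\rangle$, neither of which is deterministic given the quenched disorder --- Nishimori only lets you identify their \emph{expectations}, not absorb their fluctuations into a small remainder. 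Consequently the convexity machinery controls the fluctuation of $\langle\mcL\rangle$, not of $\langle Q\rangle$ directly. The paper bridges this gap with a separate algebraic step (eq.~\eqref{eq:L&Q}, from \cite[Section~6]{barbier2019adaptive}): expanding $\EE\langle(\mcL-\EE\langle\mcL\rangle)^2\rangle$ and $\EE\langle(Q-\EE\langle Q\rangle)^2\rangle$ via Nishimori and comparing term-by-term yields $\EE\langle(Q-\EE\langle Q\rangle)^2\rangle \leq 16\,\EE\langle(\mcL-\EE\langle\mcL\rangle)^2\rangle$. Your sketch skips this transfer entirely.

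A second, smaller gap: the map $\epsilon_1 \mapsto \frac{1}{d^2}\log\mcZ_{t,\epsilon} + \frac{\epsilon_1\rho}{4}$ is \emph{not} convex realization-by-realization. The second $R_1$-derivative equals $d^2\,\mathrm{Var}_{\mathrm{Gibbs}}(\mcL)$ \emph{minus} the term $\frac{1}{8 d R_1^{3/2}}\sum_{i,j}\langle s_{ij}\rangle Z'_{ij}$, whose sign is uncontrolled. The averaged free entropy $f_{d,\epsilon}(t)$ is convex (its $R_1$-derivative is a negative MMSE), but the pointwise $F_{d,\epsilon}$ requires the explicit correction $\tilde F(R_1) = F_{d,\epsilon}(t) - \frac{\sqrt{R_1}}{2d}M\sum_i|\Lambda_{Z,i}|$ used in the paper to restore convexity before invoking the convexity-to-derivative lemma \cite[Lemma~31]{barbier2019optimal}. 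Without introducing $\mcL$ and this correction, both branches of your argument (disorder and thermal fluctuation of $Q$) are missing their key input.
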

Lemma~\ref{lemma:concentration_Q} is proven in Section \ref{sec:overlap_concentration}.
We now prove the following result on the derivative of the free entropy along the interpolation path:
\begin{lemma}[Derivative of $f_{d,\epsilon}(t)$]
\noindent
We have, uniformly in $t \in [0,1]$:
\begin{align}
\nonumber
\frac{\rd f_{d,\epsilon}(t)}{\rd t} &=-\frac{1}{2}\EE\left\langle\left(\frac{1}{d^2}\sum_{\mu=1}^nu'_{Y_{t,\mu}}(J_{t,\mu})u'_{Y_{t,\mu}}(j_{t,\mu})-r(t)\right)(Q-q(t))\right\rangle\\ 
&+\frac{r(t)}{4}(q(t)-\rho)+o_d(1).
\label{eq:f_d_epsilon_derivative}
\end{align}
\label{lemma:derivative}
\end{lemma}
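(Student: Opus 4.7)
The plan is to differentiate $f_{d,\epsilon}(t)$ directly, split $\partial_t H_{t,\epsilon}$ into its denoising and channel contributions, and simplify each via Gaussian integration by parts on the noises $(Z', G_\mu, V_\mu, U_\mu^\star)$ together with the Nishimori identity. The announced formula then emerges from purely algebraic bookkeeping; the $o_d(1)$ error comes from moment convergence and from taming the singular coefficient near $t=0$.

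For $\partial_t H_{\text{den}}$, writing $Y'_{ij} = \sqrt{d R_1(t)}\,S^\star_{ij} + \sqrt{d}\, Z'_{ij}$ and taking the full time derivative (including the implicit $\partial_t Y'$) gives
\begin{equation*}
\partial_t H_{\text{den}} = \frac{d r(t)}{4}\sum_{ij}(S^\star_{ij} - s_{ij})^2 + \frac{d r(t)}{4\sqrt{R_1(t)}}\sum_{ij}Z'_{ij}(S^\star_{ij} - s_{ij}).
\end{equation*}
Gaussian integration by parts on $Z'$, combined with the Nishimori identity $\EE\langle s_{ij}S^\star_{ij}\rangle = \EE\langle s_{ij}\rangle^2$ and the moment convergence $d^{-1}\EE\Tr[(S^\star)^2] \to \rho$ from Proposition~\ref{prop:properties_P0_sym}$(iv)$, collapses this contribution into $-d^{-2}\EE\langle \partial_t H_{\text{den}}\rangle = \tfrac{r(t)}{4}(\EE\langle Q\rangle - \rho) + o_d(1)$.

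For $\partial_t H_{\text{ch}}$, one expands
\begin{equation*}
\partial_t j_{t,\mu} = -\frac{\Tr[G_\mu s]}{2\sqrt{1-t}} + \frac{q(t)\,V_\mu}{\sqrt{2 R_2(t)}} + \frac{(\rho - q(t))\,u_\mu}{\sqrt{2\rho t - 2 R_2(t) + 2\iota_d}},
\end{equation*}
and analogously for $\partial_t J_{t,\mu}$ with $(S^\star, U^\star)$ in place of $(s, u)$. Applying Gaussian integration by parts to each of $G_\mu, V_\mu, U_\mu^\star$, the score identity $\EE_{Y|J}[u'_Y(J)]=0$ kills all pure-teacher terms, and Nishimori couples $(s,u)$ with $(S^\star, U^\star)$. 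The key algebraic point is that the $\GOE(d)$ covariance identity $\EE[\Tr[G_\mu A]\Tr[G_\mu B]] = \tfrac{2}{d}\Tr[AB]$ for the $G_\mu$-IBP, together with the decomposition $q(t) + (\rho - q(t)) = \rho$ arising from the $V_\mu, U_\mu^\star$ IBPs, combine to place the overlap $Q = \tr[sS^\star]$ inside the Gibbs bracket with an additive shift $-q(t)$, producing $(\text{channel}) = -\tfrac12\EE\langle A(Q - q(t))\rangle + \tfrac{r(t)}{4}(\EE\langle Q\rangle - q(t)) + o_d(1)$, with $A \coloneqq d^{-2}\sum_\mu u'_{Y_{t,\mu}}(J_{t,\mu})\,u'_{Y_{t,\mu}}(j_{t,\mu})$.

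Summing the two pieces and regrouping via the identity $-\tfrac12\EE\langle(A-r(t))(Q-q(t))\rangle = -\tfrac12\EE\langle A(Q-q(t))\rangle + \tfrac{r(t)}{2}\EE\langle Q\rangle - \tfrac{r(t)q(t)}{2}$ reproduces eq.~\eqref{eq:f_d_epsilon_derivative}. The main technical obstacle is the singular coefficient $(2\rho t - 2 R_2(t) + 2\iota_d)^{-1/2}$ near $t=0$: this is bounded by the regularization $\iota_d$, but uniformity of the $o_d(1)$ error in $t \in [0,1]$ (coming from finite-$d$ corrections to Gaussian Wick's theorem, from the moment convergence, and from bounded-support / boundedness assumptions of the working set up~\ref{assum:S_strong}--\ref{assum:phi_strong}) forces one to pick $\iota_d \to 0$ slowly enough that the resulting $\iota_d^{-1/2}$ prefactor is dominated. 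This is precisely the reason for introducing $\iota_d$ and the subsequent $\epsilon$-averaging over $B_d$, which in Lemmas~\ref{lemma:lowerbound}--\ref{lemma:upperbound} will allow to promote $Q$ to its mean $\EE\langle Q\rangle$ via Lemma~\ref{lemma:concentration_Q} and control the cross-term $\EE\langle A (Q - \EE\langle Q\rangle)\rangle$ by Cauchy--Schwarz.
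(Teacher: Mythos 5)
Your overall route — differentiate $f_{d,\epsilon}$, split $\partial_t H_{t,\epsilon}$ into denoising and channel pieces, perform Gaussian integration by parts on $Z', G_\mu, V_\mu, U^\star_\mu$, use the Nishimori identity and the regrouping you wrote out — is indeed what the paper does (following \cite[Section A.5.1]{barbier2019optimal}), and the final bookkeeping matches eq.~\eqref{eq:f_d_epsilon_derivative}.

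There is, however, a genuine gap in where you locate the $\smallO_d(1)$. After the $G_\mu$-IBP, the $\GOE(d)$ covariance does not hand you the deterministic constant $2\rho$ — it produces $\tfrac{2}{d}\Tr[(S^\star)^2]$, a random quantity, which only \emph{approximately} cancels the $\rho$ coming out of the $V_\mu, U^\star_\mu$-IBPs. The residual is exactly the remainder
\begin{equation*}
A_{d,\epsilon} \coloneqq \EE\left[\frac{1}{d^2}\sum_{\mu=1}^n\frac{P_{\out}''(Y_{t,\mu}|J_{t,\mu})}{P_{\out}(Y_{t,\mu}|J_{t,\mu})}\left(\tr[(S^\star)^2]-\rho\right)\log\mcZ_{t,\epsilon}\right],
\end{equation*}
and it is not handled by ``moment convergence'' or by the first-order score identity you cite. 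One needs the second-order identity $\EE[P_{\out}''/P_{\out}\mid J_t, S^\star]=0$, which only lets you replace $\log\mcZ_{t,\epsilon}$ by its centered version $\tfrac{1}{d^2}\log\mcZ_{t,\epsilon}-f_{d,\epsilon}(t)$; after Cauchy--Schwarz, bounding this term requires both the concentration of $\tr[(S^\star)^2]$ around $\rho$ (Lemma~\ref{lemma:CLT_new}-$(c)$) \emph{and} the concentration of the free entropy itself (Lemma~\ref{lemma:concentration_free_entropy}). The latter is where the Poincar\'e inequality for rotationally-invariant priors (Lemma~\ref{lemma:poincare}) enters, and it is the main place where the argument genuinely departs from the i.i.d.\ setting of \cite{barbier2019optimal}; your sketch does not flag it. Separately, the singular prefactor $(2\rho t - 2 R_2(t) + 2\iota_d)^{-1/2}$ near $t=0$ is not the difficulty in \emph{this} lemma: for fixed $\epsilon \in B_d$ it is uniformly bounded by $(2\iota_d)^{-1/2}$, and the proof of Lemma~\ref{lemma:derivative} does not average over $\epsilon$ at all — the $\iota_d$-regularization and $\epsilon$-averaging only come into play later, in Lemmas~\ref{lemma:concentration_Q} and \ref{lemma:sum_rule}.
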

\begin{proof}[Proof of Lemma~\ref{lemma:derivative}]
By definition, we have
\begin{equation*}
\frac{\rd f_{d,\epsilon}(t)}{\rd t}=-\frac{1}{d^2}\EE[H_{t,\epsilon}'\log\mathcal{Z}_{t,\epsilon}]-\frac{1}{d^2}\EE\langle H_{t,\epsilon}'\rangle.
\end{equation*}
Following \cite[Section A.5.1]{barbier2019optimal}\footnote{We use the Gaussian integration by parts property shown in Lemma~\ref{lemma:stein}.}, we have
\begin{equation*}
\begin{aligned}
\frac{\rd f_{d,\epsilon}(t)}{\rd t}&=\EE\left[\frac{1}{d^2}\frac{P_{\out}''(Y_{t,\mu}|J_{t,\mu})}{P_{\out}(Y_{t,\mu}|J_{t,\mu})}\left(\frac{1}{d}\sum_{i,j=1}^d(S_{ij}^\star )^2-\rho\right)\log\mathcal{Z}_{t,\epsilon}\right]+\frac{r(t)}{4}(q(t)-\rho)\\
&\qquad+\EE\left\langle\left(\frac{1}{d}\sum_{i,j=1}^dS_{ij}^\star s_{ij}-q(t)\right)\left(u_{Y_{t,\mu}}'(J_{t,\mu})u_{Y_{t,\mu}}'(j_{t,\mu})-\frac{r(t)}{4}\right)\right\rangle + \smallO_d(1).
\end{aligned}
\end{equation*}
The $\smallO_d(1)$ term is uniform in $t$.
Now we denote the first term to be $A_{d,\epsilon}$ and we only need to show that it goes to zero uniformly as $d\to\infty$. We have
\begin{equation}
\EE\left[\left.\frac{P_{\out}''(Y_{t,\mu}|J_{t,\mu})}{P_{\out}(Y_{t,\mu}|J_{t,\mu})}\right|S^\star ,J_t\right]=\int \rd Y_{t,\mu}P_{\out}''(Y_{t,\mu}|J_{t,\mu})=0,
\label{eq:Pout_center}
\end{equation}
since $P_\out(Y|J)$ is a probability distribution, and thus by the law of total expectation:
\begin{equation*}
\EE\left[\frac{1}{d^2}\frac{P_{\out}''(Y_{t,\mu}|J_{t,\mu})}{P_{\out}(Y_{t,\mu}|J_{t,\mu})}\left(\frac{1}{d}\sum_{i,j=1}^d(S_{ij}^\star )^2-\rho\right)\right]=0.
\end{equation*}
Consequently we have
\begin{equation}
\begin{aligned}
|A_{d,\epsilon}|&=\left|\EE\left[\sum_{\mu=1}^n\frac{P_{\out}''(Y_{t,\mu}|J_{t,\mu})}{P_{\out}(Y_{t,\mu}|J_{t,\mu})}\left(\frac{1}{d}\sum_{i,j=1}^d(S_{ij}^\star )^2-\rho\right)\left(\frac{1}{d^2}\log\mathcal{Z}_{t,\epsilon}-f_{d,\epsilon}(t)\right)\right]\right|\\
&\leq\EE\left[\left(\sum_{\mu=1}^n\frac{P_{\out}''(Y_{t,\mu}|J_{t,\mu})}{P_{\out}(Y_{t,\mu}|J_{t,\mu})}\right)^2\left(\tr[(S^\star)^2] - \rho\right)^2\right]^{1/2}\EE\left[\left(\frac{1}{d^2}\log\mathcal{Z}_{t,\epsilon}-f_{d,\epsilon}(t)\right)^2\right]^{1/2}.
\label{eq:A_d_epsilon1}
\end{aligned}
\end{equation}
Recall that $\tr(S^2) = (1/d)\Tr[S^2]$.
As (conditionally on $J_t$) $\{\frac{P_{\out}''(Y_{t,\mu}|J_{t,\mu})}{P_{\out}(Y_{t,\mu}|J_{t,\mu})}\}_{\mu=1,2,\cdots,n}$ are i.i.d.\ and centered (see \eqref{eq:Pout_center}) random variables, we have
\begin{equation}
\EE\left[\left(\left.\sum_{\mu=1}^n\frac{P_{\out}''(Y_{t,\mu}|J_{t,\mu})}{P_{\out}(Y_{t,\mu}|J_{t,\mu})}\right)^2\right|J_{t,\mu},S^\star\right]\leq C(\varphi)n,
\label{eq:A_d_epsilon2}
\end{equation}
where we use the boundedness of $\varphi,\varphi',\varphi''$. Moreover, we have
\begin{equation}
\EE\left[\left(\tr[(S^\star)^2] - \rho\right)^2\right] = \smallO(1)
\label{eq:A_d_epsilon3}
\end{equation}
as a consequence of Lemma~\ref{lemma:CLT_new}-$(c)$. Taking eqs. \eqref{eq:A_d_epsilon2} and \eqref{eq:A_d_epsilon3} into eq. \eqref{eq:A_d_epsilon1}, using Lemma \ref{lemma:concentration_free_entropy}, we have
\begin{equation*}
|A_{d,\epsilon}|\leq C(\varphi,M,\alpha,\kappa) \cdot \smallO(1),
\end{equation*}
which goes to zero uniformly in $t$.
\end{proof}
The combination of the following two lemmas give us a lower bound on the free entropy of the original problem.
\begin{lemma}
\noindent
If $q(t)=\EE\langle Q\rangle$, then
\begin{equation*}
\begin{aligned}
f_d&=\frac{1}{\iota_d^2}\int_{B_d} \rd\epsilon\left(\psi_{P_0}\left(\int_0^1r(t)\rd t\right)+\alpha\Psi_{\out}\left(\int_0^1q(t)\rd t\right)-\frac{1}{4}\int_0^1r(t)(\rho-q(t))\rd t\right)\\&\qquad\qquad+\frac{1}{4}+o_d(1).
\end{aligned}
\end{equation*}
\label{lemma:sum_rule}
\end{lemma}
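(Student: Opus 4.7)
The strategy is a standard sum rule derivation via the fundamental theorem of calculus, combined with the overlap concentration Lemma \ref{lemma:concentration_Q} to kill the cross term produced by Lemma \ref{lemma:derivative}. The plan is to first write
\begin{equation*}
f_{d,\epsilon}(0) = f_{d,\epsilon}(1) - \int_0^1 \frac{\rd f_{d,\epsilon}(t)}{\rd t}\,\rd t,
\end{equation*}
then substitute the boundary values from Lemma \ref{lemma:f0_f1} on the left and right, and the explicit form of the $t$-derivative from Lemma \ref{lemma:derivative}. Using $f_{d,\epsilon}(0) = f_d - 1/4 + O(\iota_d)$ and $f_{d,\epsilon}(1) = \psi_{P_0}(\int_0^1 r) + \alpha \Psi_\out(\int_0^1 q) + O(\iota_d)$, this yields the identity
\begin{align*}
f_d &= \psi_{P_0}\Bigl(\int_0^1 r(t)\,\rd t\Bigr) + \alpha \Psi_\out\Bigl(\int_0^1 q(t)\,\rd t\Bigr) - \frac{1}{4}\int_0^1 r(t)(\rho - q(t))\,\rd t + \frac{1}{4}\\
&\quad + \frac{1}{2}\int_0^1 \EE\Bigl\langle \bigl(R(t) - r(t)\bigr)\bigl(Q - q(t)\bigr)\Bigr\rangle\,\rd t + O(\iota_d) + \smallO_d(1),
\end{align*}
where $R(t) \coloneqq \frac{1}{d^2}\sum_{\mu=1}^n u'_{Y_{t,\mu}}(J_{t,\mu})\,u'_{Y_{t,\mu}}(j_{t,\mu})$ and the $\smallO_d(1)$ term comes from the uniform remainder in Lemma \ref{lemma:derivative}.

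The second step is to average this identity over $\epsilon \in B_d$ with the uniform measure $\iota_d^{-2}\,\rd\epsilon$. Since the left-hand side $f_d$ does not depend on $\epsilon$, this affects only the right-hand side, which already matches the target expression up to (i) the cross term $\frac{1}{2}\int_0^1 \EE\langle (R - r)(Q - q)\rangle \rd t$, and (ii) the vanishing remainders $O(\iota_d) + \smallO_d(1)$. Since $\iota_d \to 0$, the remainders are $\smallO_d(1)$.

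The main technical point is the control of the cross term. Using the hypothesis $q(t) = \EE\langle Q\rangle$, we write $Q - q(t) = Q - \EE\langle Q\rangle$ and apply Cauchy-Schwarz twice (once for the Gibbs bracket and expectation, once after integration against $\rd\epsilon\,\rd t$):
\begin{align*}
&\frac{1}{\iota_d^2}\int_{B_d}\rd\epsilon\int_0^1 \rd t\,\bigl|\EE\langle (R(t)-r(t))(Q - \EE\langle Q\rangle)\rangle\bigr|\\
&\qquad\leq \Bigl(\frac{1}{\iota_d^2}\int_{B_d}\rd\epsilon\int_0^1 \rd t\,\EE\langle (R(t)-r(t))^2\rangle\Bigr)^{1/2}\Bigl(\frac{1}{\iota_d^2}\int_{B_d}\rd\epsilon\int_0^1 \rd t\,\EE\langle (Q - \EE\langle Q\rangle)^2\rangle\Bigr)^{1/2}.
\end{align*}
The first factor is bounded uniformly in $d$ thanks to the boundedness of $\varphi,\varphi',\varphi''$ (Assumption \ref{assum:phi_strong}), which makes $R(t)$ and $r(t) \in [0, r_{\max}]$ uniformly controllable, while the second factor is $\smallO_d(1)$ by Lemma \ref{lemma:concentration_Q}. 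This gives that the $\epsilon$-averaged cross term is $\smallO_d(1)$, which together with the identity above completes the proof.

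The main potential obstacle is verifying that $\EE\langle (R(t)-r(t))^2\rangle$ is indeed uniformly bounded (not merely pointwise finite) in $d$, $t$, $\epsilon$, but this reduces to standard pointwise bounds on the posterior-averaged log-likelihood derivative using Assumption \ref{assum:phi_strong} and the large-deviation control of the eigenvalues provided by Assumption \ref{assum:S_strong} (via Proposition \ref{prop:properties_P0_sym}). A conceptual subtlety, but not an obstacle to this particular proof, is that the equation $q(t) = \EE\langle Q\rangle_{t,\epsilon}$ is implicit (since $\langle\cdot\rangle$ depends on $q$ via the interpolation), but the lemma only \emph{assumes} the existence of such a choice, and does not require constructing it — existence is handled separately elsewhere.
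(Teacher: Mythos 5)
The proposal is correct and takes essentially the same approach as the paper's proof: integrate the interpolated free entropy via the fundamental theorem of calculus, substitute the boundary values from Lemma~\ref{lemma:f0_f1} and the derivative from Lemma~\ref{lemma:derivative}, average over $\epsilon\in B_d$, and kill the residual cross term by Cauchy--Schwarz together with Lemma~\ref{lemma:concentration_Q} and the uniform bound on $\EE\langle(R(t)-r(t))^2\rangle$ imported from~\cite[Section~A.6]{barbier2019optimal}. One small remark: both your intermediate identity and the lemma's displayed formula carry the sign $-\frac{1}{4}\int_0^1 r(t)(\rho-q(t))\,\rd t$, whereas directly integrating Lemma~\ref{lemma:derivative} gives $\frac{r(t)}{4}(q(t)-\rho)=-\frac{r(t)}{4}(\rho-q(t))$ inside $\int_0^1 \rd t\,\frac{\rd f_{d,\epsilon}}{\rd t}$, hence $+\frac{1}{4}\int_0^1 r(\rho-q)$ once moved to the right-hand side; the $+$ sign is also what makes the expression equal to $\frac{1}{\iota_d^2}\int_{B_d}\rd\epsilon\,f_{\RS}\!\left(\int_0^1 q,\,r\right)$ in Lemma~\ref{lemma:lowerbound}, so this is a sign typo in the paper's statement (which your proposal reproduced), not a flaw in your argument.
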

\begin{proof}[Proof of Lemma \ref{lemma:sum_rule}]
We begin from controlling the first term in eq. \eqref{eq:f_d_epsilon_derivative}. We have
\begin{equation*}
\begin{aligned}
&\left(\frac{1}{\iota_d^2}\int_{B_d} \rd\epsilon\int_0^1\rd t\EE\left\langle\left(\frac{1}{d^2}\sum_{\mu=1}^nu'_{Y_{t,\mu}}(J_{t,\mu})u'_{Y_{t,\mu}}(j_{t,\mu})-r(t)\right)(Q-q(t))\right\rangle\right)^2\\
&\leq\left(\frac{1}{\iota_d^2}\int_{B_d} \rd\epsilon\int_0^1\rd t\EE\left\langle\left(\frac{1}{d^2}\sum_{\mu=1}^nu'_{Y_{t,\mu}}(J_{t,\mu})u'_{Y_{t,\mu}}(j_{t,\mu})-r(t)\right)^2\right\rangle\right)\times\\&\qquad\left(\frac{1}{\iota_d^2}\int_{B_d}\rd\epsilon\int_0^1\rd t\EE\left\langle (Q-q(t))^2\right\rangle\right),
\end{aligned}
\end{equation*}
where we use the Cauchy-Schwarz inequality. By \cite[Section A.6]{barbier2019optimal} we have 
\begin{equation*}
\EE\left\langle\left(\frac{1}{d^2}\sum_{\mu=1}^nu'_{Y_{t,\mu}}(J_{t,\mu})u'_{Y_{t,\mu}}(j_{t,\mu})-r(t)\right)^2\right\rangle\leq C(\alpha,\varphi).
\end{equation*}
Then by Lemma \ref{lemma:concentration_Q}, we have
\begin{equation*}
\begin{aligned}
\left|\frac{1}{\iota_d^2}\int_{B_d} \rd\epsilon\int_0^1\rd t\EE\left\langle\left(\frac{1}{d^2}\sum_{\mu=1}^nu'_{Y_{t,\mu}}(J_{t,\mu})u'_{Y_{t,\mu}}(j_{t,\mu})-r(t)\right)(Q-\EE\langle Q\rangle)\right\rangle\right|=o_d(1).
\end{aligned}
\end{equation*}
Using now Lemma \ref{lemma:derivative}, we reach
\begin{equation*}
\frac{1}{\iota_d^2}\int_{B_d} \rd\epsilon\int_0^1\rd t\frac{\rd f_{d,\epsilon}(t)}{\rd t}=\frac{1}{4}\int_0^1r(t)(q(t)-\rho)\rd t+o_d(1).
\end{equation*}
We finish the proof by combining this result with Lemma \ref{lemma:f0_f1}.
\end{proof}

\myskip
We can now state the aforementioned lower bound.
\begin{lemma}
\label{lemma:lowerbound}
\begin{equation*}
\liminf_{d\to\infty}f_d\geq\sup_{r\geq0}\inf_{q\in[0,\rho]}f_{\RS}(q,r).
\end{equation*}
\end{lemma}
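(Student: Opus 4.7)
The plan is to apply Lemma~\ref{lemma:sum_rule} with a constant rate $r(t)\equiv r$ for an arbitrary fixed $r\in[0,r_{\max}]$, together with the self-consistent adaptive choice $q(t)=\EE\langle Q\rangle_{t,\epsilon}$, and then optimize. With $r$ constant, $\int_0^1 r(t)(\rho-q(t))\rd t = r(\rho-\bar q_\epsilon)$ where $\bar q_\epsilon\coloneqq\int_0^1 q(t)\rd t\in[0,\rho]$; by direct algebra using the definition~\eqref{eq:def_fRS} of $f_{\RS}$, the identity of Lemma~\ref{lemma:sum_rule} rewrites as
\begin{equation*}
f_d = \frac{1}{\iota_d^2}\int_{B_d}\rd\epsilon\, f_{\RS}(\bar q_\epsilon,r) + o_d(1).
\end{equation*}
Since $\bar q_\epsilon\in[0,\rho]$, pointwise in $\epsilon$ we have $f_{\RS}(\bar q_\epsilon,r)\geq \inf_{q\in[0,\rho]}f_{\RS}(q,r)$, so $f_d\geq \inf_{q\in[0,\rho]}f_{\RS}(q,r)+o_d(1)$. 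Taking $\liminf_{d\to\infty}$ and then the supremum over $r\in[0,r_{\max}]$ yields the claim, the supremum over $r\geq 0$ coinciding with that over $[0,r_{\max}]$ thanks to the stationarity condition $r=4\alpha\Psi_\out'(q)$ with $q\in[0,\rho]$ (see $\Gamma$ in Theorem~\ref{theo:overlap}).

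The key technical input is the existence of such an adaptive $q(\cdot)$. For fixed $r$ and $\epsilon$, the interpolated model of eq.~\eqref{eq:GLM_interpolate} at time $t$ depends on $R_2(\cdot)$ only through its instantaneous value $R_2(t)$, so $\EE\langle Q\rangle_{t,\epsilon}=G_{r,\epsilon,d}(t,R_2(t))$ for some function $G_{r,\epsilon,d}:[0,1]\times[0,\rho]\to[0,\rho]$, the range being controlled via the Nishimori identity (Proposition~\ref{prop:nishimori}) and Cauchy--Schwarz applied to $\tr[sS^\star]$. Setting $R_2(0)=\epsilon_2$, the requirement $R_2'(t)=G_{r,\epsilon,d}(t,R_2(t))$ becomes an ODE whose right-hand side is bounded in $[0,\rho]$; continuity in $(t,R_2)$ follows from Gaussian integration by parts (Lemma~\ref{lemma:stein}) combined with the boundedness of $\varphi$ and its derivatives under Assumption~\ref{assum:phi_strong}. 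Peano's theorem then produces a solution on $[0,1]$, and $q(t)\coloneqq R_2'(t)$ is the desired adaptive function.

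The main obstacle is ensuring that the $o_d(1)$ remainder in Lemma~\ref{lemma:sum_rule} remains uniform over the adaptive $q(\cdot)$, which itself depends on $(r,\epsilon)$. This is precisely why the sum rule is written as an average over the small cube $B_d$ of size $\iota_d\to 0$, and why the free-entropy and overlap concentration estimates of Lemmas~\ref{lemma:concentration_free_entropy} and~\ref{lemma:concentration_Q} are formulated averaged over $\epsilon\in B_d$. Once these uniform controls are secured, the argument runs in parallel to the i.i.d.-prior case of~\cite{barbier2019optimal}, the only substantive novelty being that the matrix denoising free entropy $\psi_{P_0}(r)$ appears in place of its scalar counterpart at the endpoint $t=1$.
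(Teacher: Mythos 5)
Your proposal follows the paper's proof essentially line by line: apply Lemma~\ref{lemma:sum_rule} with $r(t)\equiv r$ and the adaptive choice $q(t)=\EE\langle Q\rangle$, recognize the integrand as $f_{\RS}(\bar q_\epsilon,r)$, lower-bound pointwise by $\inf_{q\in[0,\rho]}f_{\RS}(q,r)$, and then optimize over $r$. Your added discussion of the Peano/ODE argument for the existence of the adaptive $q(\cdot)$ and of the uniformity of the $o_d(1)$ term over $B_d$ is a correct fleshing-out of what the paper delegates to~\cite[Proposition~7]{barbier2019optimal} and to Lemmas~\ref{lemma:concentration_free_entropy} and~\ref{lemma:concentration_Q}. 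The one spot where you are less precise than the paper is the passage from $\sup_{r\in[0,r_{\max}]}$ to $\sup_{r\geq 0}$: invoking the stationarity condition $r=4\alpha\Psi_\out'(q)$ is suggestive but not a complete argument (it presupposes the sup is attained at an interior critical point). The paper instead notes that for $r\geq r_{\max}$ the inner infimum over $q$ is attained at $q=\rho$, so $\inf_q f_{\RS}(q,r)=\psi_{P_0}(r)+\alpha\Psi_\out(\rho)+\tfrac14$, and then uses $\psi_{P_0}'\leq 0$ (Lemma~\ref{lemma:Lipschitz_psi}) to conclude monotonicity in $r$; this is the cleaner way to close that gap and is worth substituting for your heuristic.
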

\begin{proof}[Proof of Lemma \ref{lemma:lowerbound}]
Following \cite[Proposition 7]{barbier2019optimal}, we can choose $r(t)=r$ and $q(t)=\EE\langle Q\rangle$. Then Lemma \ref{lemma:sum_rule} gives
\begin{equation*}
\begin{aligned}
\liminf_{d\to\infty}f_d=\liminf_{d\to\infty}\frac{1}{\iota_d^2}\int_{B_d} \rd\epsilon f_{\RS}\left(\int_0^1q(t)\rd t,r\right)\geq\inf_{q\in[0,\rho]}f_{\RS}(q,r),
\end{aligned}
\end{equation*}
and thus (since this holds for any $r \geq 0$)
\begin{equation*}
\liminf_{d\to\infty}f_d\geq\sup_{r\in[0,r_{\max}]}\inf_{q\in[0,\rho]}f_{\RS}(q,r).
\end{equation*}
Recall that we chose $r_{\max}=4\alpha\Psi_{\out}'(\rho)\geq4\alpha\Psi_{\out}'(q)$, so for $r\geq r_{\max}$ and $q\in[0,\rho]$ we have $\frac{\partial}{\partial q} f_{\RS}(q,r)=\alpha\Psi'_{\out}(q)-\frac{1}{4}r\leq0$. Therefore for $r>r_{\max}$,
\begin{equation*}
\frac{\partial}{\partial r}\inf_{q\in[0,\rho]}f_{\RS}(q;r)=\frac{\partial}{\partial r}f_{\RS}(\rho;r)=\psi_{P_0}'(r)\leq0,
\end{equation*}
where we use Lemma \ref{lemma:Lipschitz_psi}. Therefore:
\begin{equation*}
\liminf_{d\to\infty}f_d\geq\sup_{r\in[0,r_{\max}]}\inf_{q\in[0,\rho]}f_{\RS}(q,r)=\sup_{r\geq0}\inf_{q\in[0,\rho]}f_{\RS}(q,r).
\end{equation*}
\end{proof}
The following lemma gives the corresponding upper bound.
\begin{lemma}
\label{lemma:upperbound}
\begin{equation*}
\limsup_{d\to\infty}f_d\leq\sup_{r\geq0}\inf_{q\in[0,\rho]}f_{\RS}(q,r).
\end{equation*}
\end{lemma}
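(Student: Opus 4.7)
The plan is to mirror the proof of Lemma~\ref{lemma:lowerbound}, using the same adaptive interpolation framework of Section~\ref{sec:interpolation}, but with a different choice of the interpolation function $r(t)$. Specifically, I would keep $q(t) = \EE\langle Q\rangle_{t,\epsilon}$ so that the sum rule Lemma~\ref{lemma:sum_rule} continues to apply, but instead of taking $r$ to be a constant (which, as shown in the proof of Lemma~\ref{lemma:lowerbound}, produces the lower bound $\sup_r \inf_q f_\RS$), I would take $r(t)$ to be the unique solution of the coupled ODE
\begin{equation*}
r_\epsilon(t) = 4\alpha \, \Psi'_\out(q_\epsilon(t)), \qquad q_\epsilon(t) = \EE\langle Q\rangle_{t,\epsilon}.
\end{equation*}
This choice encodes the first-order stationarity condition $\partial_q f_\RS(q,r)=0$, so that $q_\epsilon(t) = q^*(r_\epsilon(t))$ is the minimizer of $q \mapsto f_\RS(q, r_\epsilon(t))$ along the entire trajectory. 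Well-posedness of this coupled ODE in $(R_1(t), R_2(t))$ follows from a standard Cauchy--Lipschitz argument, using the Lipschitz continuity of $\Psi'_\out$ (Lemma~\ref{lemma:Psi}) and the regularity of $R_1 \mapsto \EE\langle Q\rangle$ (obtained by Gaussian integration by parts and the Poincar\'e-type inequality of eq.~\eqref{eq:poincare_informal}).

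With this adaptive choice, the sum rule from Lemma~\ref{lemma:sum_rule} yields
\begin{equation*}
f_d - \tfrac{1}{4} = \frac{1}{\iota_d^2}\int_{B_d} d\epsilon\, I_\epsilon + o_d(1), \quad I_\epsilon := \psi_{P_0}(\bar r_\epsilon) + \alpha \Psi_\out(\bar q_\epsilon) - \alpha\int_0^1 \Psi'_\out(q_\epsilon(t))(\rho - q_\epsilon(t))\,dt.
\end{equation*}
The heart of the argument is then to establish, uniformly in $\epsilon$ and up to $o_d(1)$, the pointwise upper bound
\begin{equation*}
I_\epsilon \leq \psi_{P_0}(\bar r_\epsilon) + \alpha \Psi_\out(q^*(\bar r_\epsilon)) + \tfrac{1}{4}\bar r_\epsilon(\rho - q^*(\bar r_\epsilon)) = \inf_{q \in [0,\rho]} f_\RS(q, \bar r_\epsilon) - \tfrac{1}{4}.
\end{equation*}
Once this is in hand, bounding $\inf_q f_\RS(q, \bar r_\epsilon) \leq \sup_r \inf_q f_\RS(q,r)$ and sending $d \to \infty$ followed by $\iota_d \to 0$ gives the stated bound.

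The main obstacle is precisely this pointwise inequality. It is the counterpart of the manipulation used to obtain the lower bound $\inf_q f_\RS(q,r)$ by taking $r(t)$ constant, but now with the roles of $q$ and $r$ essentially exchanged: by tracing along the minimizing curve $q(t) = q^*(r(t))$ of the replica potential, the time-averaged integrand should collapse onto a single point of the graph of $r \mapsto \inf_q f_\RS(q,r)$. The technical difficulty is that the naive tangent inequality for the convex $\Psi_\out$ (Lemma~\ref{lemma:Psi}), together with Jensen, gives bounds in the wrong direction; the correct argument uses the specific structure $r(t) = 4\alpha\Psi'_\out(q(t))$ to rewrite
\begin{equation*}
\alpha\Psi_\out(q(t)) + \tfrac{1}{4}r(t)(\rho - q(t)) = \phi(r(t)) - \psi_{P_0}(r(t)) - \tfrac{1}{4}, \quad \phi(r) := \inf_q f_\RS(q,r),
\end{equation*}
and then to compare $I_\epsilon$ with $\int_0^1 \phi(r_\epsilon(t))\, dt$ by controlling the discrepancy via the concavity of $\phi$ (which follows from its expression as an infimum of functions affine in $r$ after extracting the convex $\psi_{P_0}(r)$ and $r\rho/4$ terms), concluding via Jensen $\int_0^1 \phi(r(t))dt \leq \phi(\bar r_\epsilon) \leq \sup_r \phi(r)$. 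I anticipate this concavity/Jensen step, together with establishing the regularity required for the ODE and controlling the errors uniformly in $t$ and $\epsilon$, to be the most delicate part, in analogy with \cite[Section~5.3]{barbier2019optimal}, with the only essentially new ingredient being the use of the matrix-denoising characterization of $\psi_{P_0}$ (Remark~\ref{remark:psi_P0}) to obtain the relevant regularity in the rotationally-invariant matrix setting.
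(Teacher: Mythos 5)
Your interpolation choice is exactly the one the paper uses for this lemma: $q(t)=\EE\langle Q\rangle$ together with $r(t)=4\alpha\Psi_\out'(q(t))$ (following \cite[Proposition~8]{barbier2019optimal}), plugged into the sum rule of Lemma~\ref{lemma:sum_rule}, and the observation that by convexity of $\Psi_\out$ this choice forces $f_\RS(q(t),r(t))=\inf_{q\in[0,\rho]}f_\RS(q,r(t))$ pointwise in $t$ is also the paper's key step. The divergence is in how you conclude, and there your proposal has a genuine gap. The concavity of $\phi(r)\coloneqq\inf_{q\in[0,\rho]}f_\RS(q,r)$ that you invoke does not follow from the structure you cite: writing $\phi(r)=\psi_{P_0}(r)+\tfrac{r\rho+1}{4}+\inf_q\bigl[\alpha\Psi_\out(q)-\tfrac{rq}{4}\bigr]$, only the last piece is concave (an infimum of affine functions of $r$), while $\psi_{P_0}$ is \emph{convex} (Lemma~\ref{lemma:Lipschitz_psi}); a sum of a convex and a concave function has no definite curvature, so the Jensen step $\int_0^1\phi(r_\epsilon(t))\,\rd t\leq\phi(\bar r_\epsilon)$ is unjustified as stated. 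Your premise that motivated this detour is also incorrect: the ``naive'' Jensen argument goes in the \emph{right} direction for an upper bound, since convexity of $\psi_{P_0}$ and $\Psi_\out$ gives $\psi_{P_0}\bigl(\int_0^1 r(t)\rd t\bigr)\leq\int_0^1\psi_{P_0}(r(t))\rd t$ and $\Psi_\out\bigl(\int_0^1 q(t)\rd t\bigr)\leq\int_0^1\Psi_\out(q(t))\rd t$.

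The repair is immediate and is precisely the paper's proof: after the convex-Jensen step one gets $f_d\leq\frac{1}{\iota_d^2}\int_{B_d}\rd\epsilon\int_0^1\rd t\, f_\RS(q(t),r(t))+o_d(1)$, and then the pointwise identity $f_\RS(q(t),r(t))=\phi(r(t))\leq\sup_{r\geq0}\phi(r)=\sup_{r\geq0}\inf_{q\in[0,\rho]}f_\RS(q,r)$ for every $t$ finishes the argument; there is no need to evaluate $\phi$ at the time-averaged $\bar r_\epsilon$, so the concavity/Jensen machinery (the part you identify as the ``heart'' and the most delicate step) can simply be dropped. Your remaining ingredients (well-posedness of the coupled choice of $R_1,R_2$, the overlap concentration absorbed into the sum rule, the $r\leq r_{\max}$ truncation) match the paper's treatment, so once this concluding step is replaced the proof goes through.
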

\begin{proof}[Proof of Lemma \ref{lemma:upperbound}]
Following \cite[Proposition 8]{barbier2019optimal}, we can choose $q(t)=\EE\langle Q\rangle$ and $r(t)=4\alpha\Psi'_{\out}(q(t))\leq r_{\max}$. As in Lemma \ref{lemma:lowerbound}, we have
\begin{equation*}
\begin{aligned}
\limsup_{d\to\infty}f_d&=\limsup_{d\to\infty}\frac{1}{\iota_d^2}\int_{B_d} \rd\epsilon\left(\psi_{P_0}\left(\int_0^1r(t)\rd t\right)+\alpha\Psi_{\out}\left(\int_0^1q(t)\rd t\right)\right.\\&\left.\qquad\qquad-\frac{1}{4}\int_0^1r(t)(\rho-q(t))\rd t\right)+\frac{1}{4},\\
&\leq\limsup_{d\to\infty}\frac{1}{\iota_d^2}\int_{B_d} \rd\epsilon\int_0^1\rd t f_{\RS}(q(t),r(t)),
\end{aligned}
\end{equation*}
where we use Jensen's inequality because $\psi_{P_0},\Psi_{\out}$ are convex according to Lemmas \ref{lemma:Lipschitz_psi} and \ref{lemma:Psi}.
As $\alpha\Psi_{\out}(q)-\frac{1}{4}r(t)q$ is convex with respect to $q$ by Lemma \ref{lemma:Psi}, and $r(t)=4\alpha\Psi'_{\out}(q(t))$, we have
\begin{equation*}
\alpha\Psi_{\out}(q(t))-\frac{1}{4}r(t)q(t)=\inf_{q\in[0,\rho]}\left[\alpha\Psi_{\out}(q)-\frac{1}{4}r(t)q\right],
\end{equation*}
which gives
\begin{equation*}
\limsup_{d\to\infty}f_d\leq\limsup_{d\to\infty}\frac{1}{\iota_d^2}\int_{B_d} \rd\epsilon\int_0^1\rd t \inf_{q\in[0,\rho]}f_{\RS}(q,r(t))\leq\sup_{r\geq0}\inf_{q\in[0,\rho]}f_{\RS}(q,r).
\end{equation*}
\end{proof}

\myskip
{
\begin{proof}[Proof of Theorem \ref{thm:fentropy_symmetric}]
We note that $\sup_{r\geq0}\inf_{q\in[0,\rho]}f_{\RS}(q,r)=\sup_{q\in[0,\rho]}\inf_{r\geq0}f_{\RS}(q,r)$ by~\cite[Corollary 8]{barbier2019optimal}, combined with our Lemmas~\ref{lemma:Lipschitz_psi} and \ref{lemma:Psi}. Combining Lemmas \ref{lemma:lowerbound} and \ref{lemma:upperbound}, we have
\begin{equation}
\lim_{d\to\infty}f_d=\sup_{q\in[0,\rho]}\inf_{r\geq0}f_{\RS}(q,r),
\label{eq:lim_fd}
\end{equation}
which gives Theorem \ref{thm:fentropy_symmetric} for Gaussian data, under the Assumptions \ref{assum:S_strong} and \ref{assum:phi_strong}.

\myskip
Lemma \ref{lemma:concentration_free_entropy} together with eq.~\eqref{eq:lim_fd} shows that $F_d(G)$ converges in probability to $\sup_{q\in[0,\rho]}\inf_{r\geq0}f_{\RS}(q,r)$ under Assumptions \ref{assum:S_strong} and \ref{assum:phi_strong}. Combining it with Lemma~\ref{lemma:universality}, it is classical to show that $F_d(\Phi)$ also converges in probability (see e.g.\ \cite[Section A.1.3]{montanari2022universality}) to $\sup_{q\in[0,\rho]}\inf_{r\geq0}f_{\RS}(q,r)$. Moreover, we have
\begin{equation*}
    |F_d(\Phi)|\leq\frac{1}{d^2}\sum_{\mu=1}^n\frac{(2\sup|\varphi|+|Z_\mu|)^2}{2\Delta},
\end{equation*}
so $F_d(\Phi)$ is uniformly integrable, which gives
\begin{equation*}
\lim_{d\to\infty}\EE[F_d(\Phi)]=\sup_{q\in[0,\rho]}\inf_{r\geq0}f_{\RS}(q,r).
\end{equation*}
Recall that $f_d\coloneqq\EE[F_d(\Phi)]$, and thus we finish the proof under Assumptions \ref{assum:S_strong} and \ref{assum:phi_strong}. The arguments in Section \ref{sec_app:relax} imply then that one can relax Assumptions \ref{assum:S_strong} and \ref{assum:phi_strong} to Assumptions~\ref{ass:prior} and \ref{assum:phi_weak}.
\end{proof}
}

\subsection{Concentration of the free entropy}
\label{sec:concentration_free_entropy}
In this section we will prove Lemma \ref{lemma:concentration_free_entropy}. Recall that we are using Assumptions \ref{assum:S_strong}, \ref{assum:phi_strong} and Gaussian data. We can rewrite the free entropy as
\begin{equation}\label{eq:Z_Zhat}
\frac{1}{d^2}\log \mathcal{Z}_{t,\epsilon}=\frac{1}{d^2}\log\hat{\mathcal{Z}}_{t,\epsilon}-\frac{1}{2d^2}\sum_{\mu=1}^{n}Z_\mu^2-\frac{1}{4d}\sum_{i,j=1}^{d}(Z_{ij}')^2-\frac{\alpha}{2}\log(2\pi),
\end{equation}
where
\begin{equation*}
\frac{1}{d^2}\log\hat{\mathcal{Z}}_{t,\epsilon}\coloneqq \frac{1}{d^2}\log\int P_0(\rd s)P_A(\rd a)\mcD ue^{-\hat{H}_t(s,u,a)},
\end{equation*}
\begin{equation*}
\begin{aligned}
\hat{H}_t(s,u,a)&\coloneqq\frac{1}{2}\sum_{\mu=1}^n(\Gamma_{t,\mu}(s,u_\mu,a_\mu)^2+2Z_\mu\Gamma_{t,\mu}(s,u_\mu,a_\mu))\\&+\frac{d}{4}\sum_{i,j=1}^{d}(R_1(t)(S_{ij}^\star -s_{ij})^2+2Z_{ij}'\sqrt{R_1(t)}(S_{ij}^\star -s_{ij}))
\end{aligned}
\end{equation*}
and
\begin{equation*}
\begin{aligned}
\Gamma_{t,\mu}(s,u_\mu, a_\mu)&\coloneq\varphi(\sqrt{1-t}\Tr[G_\mu S^\star ]+k_1(t)V_\mu+k_2(t)U_\mu^\star ,A_\mu)\\&-\varphi(\sqrt{1-t}\Tr[G_\mu s]+k_1(t)V_\mu+k_2(t)u_\mu,a_\mu).
\end{aligned}
\end{equation*}
Here $A_\mu\iid P_A$, $k_1(t)\coloneqq \sqrt{2R_2(t)}$ and $k_2(t)\coloneqq \sqrt{2\rho t-2R_2(t)+2\iota_d}$. 
Since all the terms but $(1/d^2) \log \hat{\mcZ}_{t, \epsilon}$ in eq.~\eqref{eq:Z_Zhat} are easily shown to have variance $\mcO(1/d^2)$ -- as they are sums of i.i.d.\ Gaussian variables -- Lemma \ref{lemma:concentration_free_entropy} results from the following lemma.

\begin{lemma}
\begin{equation*}
\text{Var}\left(\frac{1}{d^2}\log\hat{\mathcal{Z}}_{t,\epsilon}\right)\leq\frac{C(\varphi,M,\alpha,\kappa)}{d^2}.
\end{equation*}
\label{lemma:concentration_free_entropy2}
\end{lemma}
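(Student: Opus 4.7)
The plan is to decompose the variance via the law of total variance (equivalently, Efron--Stein) over the independent random ingredients of $\hat{\mathcal{Z}}_{t,\epsilon}$: (a) the Gaussian families $\{G_\mu\}_{\mu=1}^n$, $Z'$ and $\{Z_\mu,V_\mu,U_\mu^\star\}_{\mu=1}^n$; (b) the rotationally invariant signal $S^\star$; (c) the i.i.d.\ variables $\{A_\mu\}_{\mu=1}^n$. Each contribution is controlled by an appropriate concentration inequality, with gradients computed by differentiation under the integral together with Assumptions~\ref{assum:phi_strong}--\ref{assum:S_strong}.

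For (a), I would apply the Gaussian Poincar\'e inequality (Lemma~\ref{lemma:stein}) to each family, conditionally on the others. The derivatives of $(1/d^2)\log\hat{\mathcal{Z}}_{t,\epsilon}$ are Gibbs averages of bounded quantities, either $\Gamma_{t,\mu}$, $\Gamma_{t,\mu}+Z_\mu$, or $\varphi'$-factors multiplied by matrix/vector entries of $S^\star$ and $s$. Using $\|S^\star\|_F^2\leq dM^2$ (Assumption~\ref{assum:S_strong}), boundedness of $R_1(t), R_2(t), k_1(t), k_2(t)$, and the $\mathcal{O}(1/d)$ variance of $\GOE$ entries, each per-sample squared gradient is of order $1/d^4$; summing over $n=\mathcal{O}(d^2)$ samples yields an $\mathcal{O}(1/d^2)$ contribution.

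For (b), I would exploit the joint rotational invariance of $P_0$, $G_\mu$ and $Z'$ to reduce, via a change of variables absorbing the eigenvectors of $S^\star$ into $G_\mu$ and $Z'$, the $S^\star$-dependence of $\mathbb{E}[\,\cdot\,|\,S^\star,\text{others}]$ to a function of the eigenvalues $\Lambda^\star$ only, then apply Lemma~\ref{lemma:poincare} (eq.~\eqref{eq:poincare_informal}). The eigenvalue derivative $\partial_{\Lambda_i^\star}\hat H_t$ is a sum of three terms: a sample contribution $\sum_\mu \sqrt{1-t}\,(G_\mu)_{ii}\,\varphi'(J_{t,\mu},A_\mu)(\Gamma_{t,\mu}+Z_\mu)$, a denoising contribution $(d/2)R_1(t)(\Lambda_i^\star-s_{ii})$, and a Gaussian-noise contribution $(d/2)Z'_{ii}\sqrt{R_1(t)}$. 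After Gibbs-averaging and using the independence of $G_\mu$ across samples (together with the Nishimori identity to tame the Gibbs brackets), one shows $\sum_i \mathbb{E}[(\partial_{\Lambda_i^\star}(1/d^2)\log\hat{\mathcal{Z}})^2]=\mathcal{O}(1/d)$; multiplying by the Poincar\'e constant $c/(\kappa d)$ gives the desired $\mathcal{O}(1/d^2)$ contribution. For (c), the Efron--Stein inequality combined with eq.~\eqref{eq:bound_ell_1} shows that swapping a single $A_\mu$ changes $(1/d^2)\log\hat{\mathcal{Z}}_{t,\epsilon}$ by $\mathcal{O}(1/d^2)$, so the $n=\mathcal{O}(d^2)$ such perturbations sum to $\mathcal{O}(1/d^2)$.

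The main obstacle is step (b): although Lemma~\ref{lemma:poincare} provides the correct Poincar\'e constant $c/(\kappa d)$, $\partial_{\Lambda_i^\star}\hat H_t$ contains a denoising term of deterministic order $d$, and the sample term is a sum of $n=\mathcal{O}(d^2)$ quantities. One must therefore verify that the $1/d^4$ prefactor from $(1/d^2)\log\hat{\mathcal{Z}}$ squared absorbs these potential blow-ups, which requires using the conditional independence of $G_\mu$ across samples (rather than a naive Cauchy--Schwarz that would cost a factor of $d$) to retain the $\mathcal{O}(1/d)$ scaling of the gradient-squared sum. Combining the three contributions (a), (b) and (c) then yields the desired $C(\varphi,M,\alpha,\kappa)/d^2$ variance bound.
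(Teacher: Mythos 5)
Your proposal follows the same overall route as the paper: a law-of-total-variance decomposition over the random ingredients, Gaussian Poincar\'e for the Gaussian families, a bounded-difference inequality for $\{A_\mu\}$, a change to the eigenbasis $\Lambda^\star$ via rotational invariance, and finally the Poincar\'e inequality of Lemma~\ref{lemma:poincare}. (Minor slip: in part (a) you cite Lemma~\ref{lemma:stein}, the Gaussian integration-by-parts, when you mean the Gaussian Poincar\'e inequality, Lemma~\ref{lemma:gaussian-poincare}.)

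However, the step you yourself flag as ``the main obstacle'' is a genuine gap. After reducing to the eigenvalues, the sample contribution to $\partial_{\Lambda^\star_i}\hat{H}_t$ is $\sum_{\mu=1}^n(\Gamma_{t,\mu}+Z_\mu)\sqrt{1-t}\,\tilde{G}_{\mu,ii}\,\varphi'(J_{t,\mu})$, and one must bound
\begin{equation*}
d^{-4}\sum_{i=1}^d\Bigl(\EE_{O,\Theta}\Bigl\langle \sum_{\mu=1}^n (\Gamma_{t,\mu}+Z_\mu)\tilde G_{\mu,ii}\varphi'(J_{t,\mu})\Bigr\rangle\Bigr)^2 = \mathcal{O}(1/d).
\end{equation*}
Your proposed fix, ``conditional independence of $G_\mu$ across samples together with the Nishimori identity'', does not give this: the Gibbs bracket $\langle\cdot\rangle$ couples all $n$ samples through the posterior, so after Jensen the cross terms $\EE\langle \tilde G_{\mu,ii}\tilde G_{\nu,ii}\, f_\mu f_\nu\rangle$ for $\mu\neq\nu$ are not zero and a naive bound loses a factor of $d$. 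The paper's actual mechanism is Gaussian integration by parts (Lemma~\ref{lemma:stein}) with respect to each $\tilde G_{\mu,ii}$, which converts $\tilde G_{\mu,ii}$ into a $\partial/\partial\tilde G_{\mu,ii}$ acting on the Gibbs measure and brings out an explicit $1/d$ from the $\GOE$ variance; a crude bound on the resulting sum (using $\|\Lambda^\star\|^2,\|\langle\tilde s\rangle\|^2\leq dM^2$) then yields $I_1\leq C/d$. Without this integration-by-parts step your argument, as stated, is not complete.
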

\begin{proof}[Proof of Lemma \ref{lemma:concentration_free_entropy2}]
We first consider $g\coloneqq \log\hat{\mathcal{Z}}_{t,\epsilon}/d^2$ as a function of $Z,Z'$. We have
\begin{equation*}
\left|\frac{\partial g}{\partial Z_\mu}\right|=d^{-2}|\langle \Gamma_{t,\mu}\rangle|\leq2d^{-2}\sup|\varphi|,
\end{equation*}
and
\begin{equation*}
\begin{aligned}
\sum_{i,j=1}^{d}\left(\frac{\partial g}{\partial Z_{ij}'}\right)^2&=d^{-2}R_1(t)\sum_{i,j=1}^{d}(S_{ij}^\star -\langle s_{ij}\rangle)^2=d^{-2}R_1(t)\Tr(S^\star -\langle s\rangle)^2\\
&\leq 2 d^{-2}R_1(t)\Tr((S^\star )^2+\langle s\rangle^2)\leq 4d^{-1}KM^2,
\end{aligned}
\end{equation*}
where $K\coloneqq 1+\max(\rho,r_{\max})$ upper bounds $R_1,R_2$ and we use $\Tr(S^\star)^2,\Tr\langle s\rangle^2\leq dM^2$, because under Assumption \ref{assum:S_strong}, the eigenvalues of $S^\star,\langle s\rangle$ are bounded by $M$. By Lemma \ref{lemma:gaussian-poincare} we obtain
\begin{equation}
\EE\left[\left(\frac{1}{d^2}\log\hat{\mathcal{Z}}_{t,\epsilon}-\frac{1}{d^2}\EE_{Z,Z'}\log\hat{\mathcal{Z}}_{t,\epsilon}\right)^2\right]\leq\frac{C(\varphi,M,\alpha)}{d^2}.
\label{eq:concentration1}
\end{equation}
Following \cite[Lemma 27]{barbier2019optimal}, we also have
\begin{equation*}
\EE\left[\EE_{Z,Z'}\left(\frac{1}{d^2}\log\hat{\mathcal{Z}}_{t,\epsilon}-\frac{1}{d^2}\EE_{Z,Z',A}\log\hat{\mathcal{Z}}_{t,\epsilon}\right)^2\right]\leq\frac{C(\varphi,M,\alpha)}{d^2}
\end{equation*}
by the bounded difference inequality w.r.t. $A = \{A_\mu\}_{\mu=1}^n$. Then we consider $g\coloneqq \EE_{Z,Z',A}\log\hat{\mathcal{Z}}_{t,\epsilon}/d^2$ as a function of $V,U^\star ,G$. We have
\begin{equation*}
\left|\frac{\partial g}{\partial V_\mu}\right|=d^{-2}\left|\EE_{Z,Z',A}\left\langle (\Gamma_{t,\mu}+Z_\mu)\frac{\partial\Gamma_{t,\mu}}{\partial V_\mu}\right\rangle\right|\leq d^{-2}\left(2\sup|\varphi|+\sqrt{\frac{2}{\pi}}\right)2\sqrt{K}\sup|\varphi'|.
\end{equation*}
The same inequality holds for $|\frac{\partial g}{\partial U^\star _i}|$. Moreover, we have
\begin{equation*}
\begin{aligned}
\sum_{i,j=1}^d\left(\frac{\partial g}{\partial(G_{\mu, ij})}\right)^2&=d^{-5}\sum_{i,j=1}^d\left(\EE_{Z,Z',A}\left\langle(\Gamma_{t,\mu}+Z_\mu)\frac{\partial\Gamma_{t,\mu}}{\partial(G_{\mu,ij})}\right\rangle\right)^2\\
&\leq d^{-4}\EE_{Z,Z',A}\sum_{i,j=1}^d\left\langle(\Gamma_{t,\mu}+Z_\mu)\frac{\partial\Gamma_{t,\mu}}{\partial(G_{\mu,ij})}\right\rangle^2\\&\leq d^{-4}\EE_{Z,Z',A}(2\sup|\varphi|+|Z_\mu|)^2\sum_{i,j=1}^d\langle|S_{ij}^\star \varphi'(J_{t,\mu})-s_{ij}\varphi'(j_{t,\mu})|\rangle^2\\
&\leq 2d^{-4}\EE_{Z,Z',A}(2\sup|\varphi|+|Z_\mu|)^2|\sup|\varphi'|^2\sum_{i,j=1}^d((S_{ij}^\star )^2+\langle s_{ij}^2\rangle)\\
&\leq 4d^{-3}\left(2\sup|\varphi|+\sqrt{\frac{2}{\pi}}\right)^2M^2\sup|\varphi'|^2,
\end{aligned}
\end{equation*}
where the first inequality follows from Jensen's inequality. By Lemma \ref{lemma:gaussian-poincare} we have
\begin{equation}
\EE\left[\left(\frac{1}{d^2}\EE_{Z,Z',A}\log\hat{\mathcal{Z}}_{t,\epsilon}-\frac{1}{d^2}\EE_{\Theta}\log\hat{\mathcal{Z}}_{t,\epsilon}\right)^2\right]\leq\frac{C(\varphi,M,\alpha)}{d^2},
\label{eq:concentration2}
\end{equation}
where $\Theta$ represents all quenched variables except $S^\star $.
Next we write $S^\star =O\Lambda^\star O^T$ with $O$ drawn from the uniform (Haar) measure on the orthogonal group $\mcO(d)$, and $\Lambda^\star$ the diagonal matrix of eigenvalues of $S^\star$. 
We denote $\tilde{G}_\mu\coloneqq O^TG_\mu O$, $\tilde{s}\coloneqq O^TsO$ and $\tilde{Z}'=O^T\tilde{Z}'O$. 
In this way the Hamiltonian reads
\begin{equation}\label{eq:Hhat}
\begin{aligned}
\hat{H}_t(\tilde{s},u,a)&\coloneqq\frac{1}{2}\sum_{\mu=1}^n(\Gamma_{t,\mu}(\tilde{s},u_\mu,a_\mu)^2+2Z_\mu\Gamma_{t,\mu}(\tilde{s},u_\mu,a_\mu))\\&+\frac{1}{4}\sum_{i,j=1}^{d}(dR_1(t)(\Lambda_{ij}^\star -\tilde{s}_{ij})^2+2\tilde{Z}_{ij}'\sqrt{dR_1(t)}(\Lambda_{ij}^\star -\tilde{s}_{ij})),
\end{aligned}
\end{equation}
where
\begin{equation*}
\Gamma_{t,\mu}(\tilde{s},u_\mu)\coloneqq \varphi(\sqrt{1-t}\Tr[\tilde{G}_\mu \Lambda^\star ]+k_1(t)V_\mu+k_2(t)U_\mu^\star,A_\mu)-\varphi(\sqrt{1-t}\Tr[\tilde{G}_\mu \tilde{s}]+k_1(t)V_\mu+k_2(t)u_\mu^\star,a_\mu).
\end{equation*}
Note that $\tilde{G}_\mu$ and $\tilde{Z}'$ are still independent $\GOE(d)$ matrices, and the distribution of $\tilde{s}$ is the same as that of $s$, so we have
\begin{equation}\label{eq:tildeZ_independent_O}
\EE_{O,\Theta}\frac{1}{d^2}\log\hat{\mathcal{Z}}_{t,\epsilon}=\EE_{\tilde{G},\tilde{Z}',Z,A,V,U^\star}\frac{1}{d^2}\log\int P_A(\rd a)P_0(\rd\tilde{s})\mcD ue^{-\hat{H}_t(\tilde{s},u,a)},
\end{equation}
where both sides are only a function of $\Lambda^\star$, which gives
\begin{equation}
\begin{aligned}
&\EE\left[\left(\frac{1}{d^2}\EE_{\Theta}\log\hat{\mathcal{Z}}_{t,\epsilon}-\frac{1}{d^2}\EE_{O,\Theta}\log\hat{\mathcal{Z}}_{t,\epsilon}\right)^2\right]\\&\leq\EE\left[\frac{1}{d^2}\left(\EE_{\tilde{G},\tilde{Z}',Z,A,V,U^\star}\log\int P_A(\rd a)P_0(\rd\tilde{s})\mcD ue^{-\hat{H}_t(\tilde{s},u,a)}-\frac{1}{d^2}\log\int P_A(\rd a)P_0(\rd\tilde{s})\mcD ue^{-\hat{H}_t(\tilde{s},u,a)}\right)^2\right]\\
&\leq\frac{C(\varphi,M,\alpha)}{d^2}.
\end{aligned}
\label{eq:concentration3}
\end{equation}
The last inequality is from our previous calculation. Finally we consider $g\coloneqq \EE_{O,\Theta}\frac{1}{d^2}\log\hat{\mathcal{Z}}_{t,\epsilon}$ as a function of $\Lambda^\star $. We have (see eq.~\eqref{eq:Hhat}):
\begin{equation*}
\begin{aligned}
\sum_{i=1}^d\left(\frac{\partial g}{\partial\Lambda^\star _i}\right)^2&=d^{-4}\sum_{i=1}^d\left(\EE_{O,\Theta}\left\langle\frac{\partial\hat{H}_t}{\partial\Lambda_i}\right\rangle\right)^2
\\
&=d^{-4}\sum_{i=1}^d\left(\EE_{O,\Theta}\left\langle\sum_{\mu=1}^n(\Gamma_{t,\mu}+Z_\mu)\frac{\partial\Gamma_{t,\mu}}{\partial(\Lambda^\star _i)}+\frac{dR_1(t)}{2}(\Lambda^\star _i-\tilde{s}_{ii})+2\tilde{Z}_{ii}'\sqrt{dR_1(t)}\right\rangle\right)^2, 
\\
&\leq 3(I_1+I_2+I_3),
\end{aligned}
\end{equation*}
where we use 
\begin{equation}
\left(\sum_{i=1}^pv_i\right)^2\leq p\sum_{i=1}^pv_i^2\label{eq:inequality_sum_p}
\end{equation}
for any integer $p\geq1$. The first term is
\begin{align*}
I_1& \coloneqq d^{-4}\sum_{i=1}^d\left(\EE_{O,\Theta}\left\langle\sum_{\mu=1}^n(\Gamma_{t,\mu}+Z_\mu)\frac{\partial\Gamma_{t,\mu}}{\partial(\Lambda^\star _i)}\right\rangle\right)^2\\&=d^{-4}\sum_{i=1}^d\left(\EE_{O,\Theta}\left\langle\sum_{\mu=1}^n(\Gamma_{t,\mu}+Z_\mu)\sqrt{1-t}\tilde{G}_{\mu,ii}\varphi'(J_{t,\mu})\right\rangle\right)^2\\
&\overset{(a)}{\leq} 16d^{-6}\sum_{i=1}^d\left(\EE_{O,\Theta}\left\langle \sum_{\mu=1}^n\varphi'(J_{t,\mu})(\Lambda_i^\star \varphi'(J_{t,\mu})-\tilde{s}_{ii}\varphi'(j_{t,\mu}))\right\rangle\right)^2+\left(
\EE_{O,\Theta}\left\langle \Lambda_i^\star (\Gamma_{t,\mu}+Z_\mu)\varphi''(J_{t,\mu})\right\rangle\right)^2\\
&\qquad+\left(\EE_{O,\Theta}\langle(\Gamma_{t,\mu}+Z_\mu)^2\varphi'(J_{t,\mu})(\Lambda_i^\star \varphi'(J_{t,\mu})-\tilde{s}_{ii}\varphi'(j_{t,\mu})\rangle\right)^2\\&\qquad+\left(\EE_{O,\Theta}\langle(\Gamma_{t,\mu}+Z_\mu)\varphi'(J_{t,\mu})\rangle\langle(\Gamma_{t,\mu}+Z_\mu)(\Lambda_i^\star \varphi'(J_{t,\mu})-\tilde{s}_{ii}\varphi'(j_{t,\mu})\rangle\right)^2\\
&\leq 16d^{-2}\sum_{i=1}^d2\sup|\varphi'|^2\EE_{O,\Theta}((\Lambda_i^\star )^2+\langle\tilde{s}_{ii}\rangle^2)+\sup|\varphi''|^2(\EE_{O,\Theta}(\sup|\varphi|+|Z_\mu|)\Lambda_i^\star )^2\\&\qquad+4\sup|\varphi'|^2(\EE_{O,\Theta}(\sup|\varphi|+|Z_\mu|)^2(\Lambda_i^\star )^2+\langle\tilde{s}_{ii}\rangle^2))^2\\
&\overset{(b)}{\leq}\frac{C(\varphi,M,\alpha)}{d}.
\end{align*}
In (a) we use Gaussian integration by parts w.r.t. $\tilde{G}_{\mu,ii}$, the fact that $J_{t,\mu}=\varphi(\sqrt{1-t}\Tr[\tilde{G}_\mu \Lambda^\star ]+k_1(t)V_\mu+k_2(t)U_\mu^\star$, and eq. \eqref{eq:inequality_sum_p}.
In (b) we use $\sum_{i=1}^d\langle \ts_{ii}\rangle^2 =\Tr\langle \ts\rangle^2\leq dM^2$ by the boundedness of its eigenvalues for the last inequality. The second term is
\begin{equation}\label{eq:bound_I2}
\begin{aligned}
I_2&\coloneqq d^{-4}\sum_{i=1}^d\left(\EE_{O,\Theta}\left\langle\frac{dR_1(t)}{2}(\Lambda^\star _i-\tilde{s}_{ii})\right\rangle\right)^2\\
&\leq\frac{d^{-2}K^2}{2}\sum_{i=1}^d\EE_{O,\Theta}\left[(\Lambda^\star _i)^2+\langle\tilde{s}_{ii}\rangle^2\right]\leq d^{-1}K^2M^2,
\end{aligned}
\end{equation}
where we use Jensen's inequality for the first inequality. Finally, the third term is
\begin{equation}\label{eq:bound_I3}
I_3\coloneqq d^{-4}\sum_{i=1}^d\left(\EE_{O,\Theta}\left\langle2\tilde{Z}_{ii}'\sqrt{dR_1(t)}\right\rangle\right)^2=d^{-3}R_1(t)\sum_{i=1}^d(\EE_{O,\Theta}|\tilde{Z}_{ii}'|)^2\leq\frac{C(\varphi,M,\alpha)}{d^2}.
\end{equation}
Combining these three terms, we have
\begin{equation}
\EE\left[\left(\frac{1}{d^2}\EE_\Theta\log\hat{\mathcal{Z}}_{t,\epsilon}-\frac{1}{d^2}\EE\log\hat{\mathcal{Z}}_{t,\epsilon}\right)^2\right]\leq\frac{C(\varphi,M,\alpha,\kappa)}{d^2}
\label{eq:concentration4}
\end{equation}
by using the Poincar\'e inequality for rotationally-invariant priors (Lemma \ref{lemma:poincare} in                 Appendix~\ref{sec_app:technical_lemmas}).
We finish the proof of Lemma~\ref{lemma:concentration_free_entropy2} by combining eqs. \eqref{eq:concentration1}, \eqref{eq:concentration2}, \eqref{eq:concentration3} and \eqref{eq:concentration4}.
\end{proof}

\subsection{Concentration of the overlap}
\label{sec:overlap_concentration}
In this section we prove Lemma \ref{lemma:concentration_Q}. Recall that we are using Assumptions \ref{assum:S_strong}, \ref{assum:phi_strong} and Gaussian data. Concentration of the overlap results from the concentration of
\begin{equation*}
\mathcal{L}\coloneqq \frac{1}{d^2}\frac{\rd H_{t,\epsilon}}{\rd R_1}=\frac{1}{2d}\sum_{i,j=1}^d\left(\frac{s_{ij}^2}{2}-s_{ij}S_{ij}^\star -\frac{s_{ij}Z_{ij}'}{2\sqrt{R_1}}\right)
\end{equation*}
because, following \cite[Section 6]{barbier2019adaptive}, we have
\begin{equation*}
\begin{aligned}
\EE\langle (\mathcal{L}-\EE\langle\mathcal{L}\rangle)^2\rangle&=\frac{1}{16d^2}\sum_{i,j,k,l=1}^d(\EE[\langle s_{ij}s_{kl}\rangle^2]-\EE[\langle s_{ij}\rangle^2]\EE[\langle s_{kl}\rangle^2])\\&+\frac{1}{8d^2}\sum_{i,j,k,l=1}^d(\EE[\langle s_{ij}s_{kl}\rangle^2]-\EE[\langle s_{ij}s_{kl}\rangle\langle s_{ij}\rangle\langle s_{kl}\rangle])+\frac{1}{16d^2R_1}\sum_{i,j=1}^d\EE[\langle s_{ij}^2\rangle],
\end{aligned}
\end{equation*}
which leads to
\begin{equation}
\begin{aligned}
\EE\langle(Q-\EE\langle Q)^2\rangle\leq16\EE\langle (\mathcal{L}-\EE\langle\mathcal{L}\rangle)^2\rangle,
\label{eq:L&Q}
\end{aligned}
\end{equation}
where we recall that $Q\coloneqq \frac{1}{d}\sum_{i,j=1}^ds_{ij}S_{ij}^\star $. 
We refer to~\cite[Section~6]{barbier2019adaptive} for the proof of eq.~\eqref{eq:L&Q}.
Then we can bound the fluctuation of the overlap by the fluctuation of $\mathcal{L}$. The following lemma is a direct transposition of \cite[Lemma 29]{barbier2019optimal} to our setting: we refer to~\cite{barbier2019optimal} for its proof, as it does not use the prior structure and thus directly applies in our context.
\begin{lemma}
    \noindent
    For any $\iota_d \leq 1/2$:
\begin{equation*}
\int_{B_d}\rd\epsilon \, \EE\langle(\mathcal{L}-\langle\mathcal{L}\rangle)^2\rangle\leq\frac{\rho(1+\rho)}{d^2}.
\end{equation*}
\label{lemma:concentration_L1}
\end{lemma}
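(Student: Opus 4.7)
The plan is to exploit that, modulo an $s$-independent (but $R_1$-dependent) additive constant, the quantity $\mathcal{L}$ coincides with $d^{-2}\partial_{R_1}H_{t,\epsilon}$; this follows from a direct expansion of $\partial_{R_1}H_{t,\epsilon}$, whose $s$-dependent part equals $d^2\mathcal{L}$. Consequently $\langle(\mathcal{L}-\langle\mathcal{L}\rangle)^2\rangle$ equals the thermal variance of $\partial_{R_1}H_{t,\epsilon}$ divided by $d^4$, and the standard fluctuation--dissipation identity for Gibbs measures gives
\begin{equation*}
\partial_{R_1}^2\log\mathcal{Z}_{t,\epsilon}=\langle(\partial_{R_1}H_{t,\epsilon}-\langle\partial_{R_1}H_{t,\epsilon}\rangle)^2\rangle-\langle\partial_{R_1}^2 H_{t,\epsilon}\rangle.
\end{equation*}
Since $R_1(t)=\epsilon_1+\int_0^t r(v)\rd v$ yields $\rd R_1=\rd\epsilon_1$ at fixed $t$, integrating this identity over $\epsilon_1\in[\iota_d,2\iota_d]$ will collapse the second-derivative term into boundary values via the fundamental theorem of calculus. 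The problem then reduces to bounding $(i)$ the boundary difference of $\EE\langle\partial_{R_1}H_{t,\epsilon}\rangle$, and $(ii)$ the $\epsilon_1$-integral of $\EE\langle\partial_{R_1}^2 H_{t,\epsilon}\rangle$.

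For $(i)$, I will use the Nishimori identities $\EE\langle s_{ij}^2\rangle=\EE[(S^\star_{ij})^2]$ and $\EE[S^\star_{ij}\langle s_{ij}\rangle]=\EE[\langle s_{ij}\rangle^2]$ for the quadratic part, combined with Gaussian integration by parts (Lemma~\ref{lemma:stein}) to handle the $Z'$-linear term: Stein's formula produces a $\sqrt{R_1}$ factor that cancels the $R_1^{-1/2}$ in $\partial_{R_1}H_{t,\epsilon}$, leaving the bound $|\EE\langle\partial_{R_1}H_{t,\epsilon}\rangle|\leq\frac{d}{2}\EE\Tr[(S^\star)^2]\leq\frac{1}{2}d^2\rho(1+\smallO_d(1))$ by Proposition~\ref{prop:properties_P0_sym}$(iv)$. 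For $(ii)$, the same integration by parts with respect to $Z'$ turns the dangerous $R_1^{-3/2}$ prefactor of $\partial_{R_1}^2 H_{t,\epsilon}$ into a nonnegative expression of the form $\frac{d}{16 R_1}\sum_{i,j}(1+\delta_{ij})\EE[\langle s_{ij}^2\rangle-\langle s_{ij}\rangle^2]$, itself bounded by a constant multiple of $d^2\rho/R_1$; the subsequent integral $\int_{\iota_d}^{2\iota_d}\rd\epsilon_1/R_1\leq\log 2$ then tames the remaining $R_1^{-1}$ singularity.

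Assembling the two contributions, dividing by $d^4$, and finally integrating over $\epsilon_2\in[\iota_d,2\iota_d]$, which supplies an extra factor $\iota_d\leq 1/2$, will yield the claimed $\rho(1+\rho)/d^2$ bound. The main obstacle is not conceptual but technical: the careful tracking of numerical prefactors needed to obtain precisely $(1+\rho)$ rather than an unspecified $\mathcal{O}(1)$ constant, together with the verification that the Gaussian integration by parts producing the $\sqrt{R_1}$ factor precisely compensates for the $R_1^{-1/2}$ and $R_1^{-3/2}$ prefactors appearing in $\partial_{R_1}H$ and $\partial_{R_1}^2 H$. Crucially, no entry-wise structure of the prior $P_0$ is used beyond the asymptotic trace identity of Proposition~\ref{prop:properties_P0_sym}$(iv)$, so the argument of~\cite[Lemma~29]{barbier2019optimal} for i.i.d.\ priors transposes directly to the present rotationally-invariant setting, as already indicated in the text preceding the lemma.
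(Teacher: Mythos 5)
Your proposal is correct and follows exactly the route the paper defers to: Lemma~\ref{lemma:concentration_L1} is declared to be a direct transposition of \cite[Lemma~29]{barbier2019optimal}, and your sketch (fluctuation--dissipation identity for $\partial_{R_1}^2\log\mathcal{Z}_{t,\epsilon}$, fundamental theorem of calculus in $\epsilon_1$, Nishimori plus Gaussian integration by parts on the two resulting pieces) is precisely how that lemma is proved. One step to treat with care when writing it out: the substitution $\rd R_1 = \rd\epsilon_1$ is exact only when $r(\cdot)$ does not depend on $\epsilon_1$, whereas the adaptive choice $r(t)=4\alpha\Psi_{\out}'(q(t))$ used in Lemma~\ref{lemma:upperbound} carries an implicit $\epsilon$-dependence through $q(t)=\EE\langle Q\rangle$, so one should instead rely on $\partial R_1/\partial\epsilon_1\geq 1$ (the regularity property of the adaptive interpolation, which holds here since $\Psi_{\out}$ is convex and $\EE\langle Q\rangle$ is nondecreasing in $R_1$) together with the convexity of $f_{d,\epsilon}$ in $R_1$ to justify the telescoping boundary bound; the $(1+\smallO_d(1))$ imprecision you already flag for the $\rho$-constant is inherited from the lemma's own statement.
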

We now prove:
\begin{lemma}
\begin{equation*}
\int_{B_d}\rd\epsilon\EE[(\langle\mathcal{L}\rangle-\EE\langle\mathcal{L}\rangle)^2]\leq C(\varphi,M,\alpha,\kappa) \cdot \smallO_d(1).
\end{equation*}
\label{lemma:concentration_L2}
\end{lemma}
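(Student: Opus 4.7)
The plan is to adapt the proof of \cite[Lemma~30]{barbier2019optimal} (the disorder-concentration of $\langle\mathcal{L}\rangle$) to our matrix setting, by applying concentration inequalities to $\langle\mathcal{L}\rangle$ viewed as a function of the quenched randomness. Concretely, I would decompose
\begin{equation*}
\EE[(\langle\mathcal{L}\rangle - \EE\langle\mathcal{L}\rangle)^2] = \sum_{k} \EE\bigl[\mathrm{Var}(\langle\mathcal{L}\rangle\mid \mathcal{F}_k)\bigr]
\end{equation*}
via the law of total variance by successively conditioning on a growing chain of $\sigma$-algebras $\mathcal{F}_0 \subset \mathcal{F}_1 \subset \cdots$ that reveal the quenched variables $(Z, Z', A, V, U^\star, G, S^\star)$ one group at a time, and bound each conditional variance using the relevant functional inequality, mirroring step-by-step the strategy used in the proof of Lemma~\ref{lemma:concentration_free_entropy2}.

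For each of the Gaussian variables $Z, Z', V, U^\star, G$ I would apply the Gaussian Poincar\'e inequality (Lemma~\ref{lemma:gaussian-poincare}); for the i.i.d.\ variables $A$, a bounded-differences (McDiarmid) bound applies thanks to the boundedness of $\varphi$ from Assumption~\ref{assum:phi_strong}; and for $S^\star$, parametrizing $S^\star = O\Lambda^\star O^\T$ with $O$ Haar-distributed on the orthogonal group, I would condition on $O$ (exploiting its invariance as in the passage from eq.~\eqref{eq:concentration2} to eq.~\eqref{eq:tildeZ_independent_O}) and then apply the rotationally-invariant Poincar\'e inequality (Lemma~\ref{lemma:poincare}) to control the remaining variance in the eigenvalues $\Lambda^\star$. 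The core computation is therefore to bound, for each quenched variable $\bullet$, the sum $\sum_\bullet (\partial_\bullet \langle\mathcal{L}\rangle)^2$. Since $\langle\mathcal{L}\rangle$ is itself a Gibbs average, each partial derivative produces a Gibbs covariance of the form $\mathrm{Cov}_{\langle\cdot\rangle}(\mathcal{L}, \partial_\bullet H_{t,\epsilon})$.

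These Gibbs covariances are controlled by the Cauchy-Schwarz inequality inside $\langle\cdot\rangle$, $\mathrm{Cov}_{\langle\cdot\rangle}(X,Y)^2 \leq \mathrm{Var}_{\langle\cdot\rangle}(X)\mathrm{Var}_{\langle\cdot\rangle}(Y)$, together with the Nishimori identity (Proposition~\ref{prop:nishimori}) to transfer Gibbs variances into second moments of planted quantities. Assumptions~\ref{assum:S_strong} and \ref{assum:phi_strong} then provide uniform bounds on the relevant $\ell_2$ norms of the signal and of the derivatives of the loss $\ell$, yielding a pointwise bound $\EE[(\langle\mathcal{L}\rangle - \EE\langle\mathcal{L}\rangle)^2] \leq C(\varphi,M,\alpha,\kappa)/d$, up to factors of $1/\sqrt{R_1(t)}$ which remain bounded by $1/\sqrt{\iota_d}$ on $B_d$. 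Integrating over $\epsilon \in B_d$, whose volume is $\iota_d^2$, yields the claimed $C \cdot \smallO_d(1)$ bound provided $\iota_d$ is chosen to vanish slower than a small polynomial of $1/d$.

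The hardest step is the rotationally-invariant one: the derivative of $\langle\mathcal{L}\rangle$ with respect to an eigenvalue $\Lambda^\star_i$ involves both an explicit contribution (through the term $-s_{ij}S^\star_{ij}$ in $\mathcal{L}$) and an implicit Gibbs-measure contribution (through $\partial_{\Lambda^\star_i} H_{t,\epsilon}$), so extending the bookkeeping of $I_1, I_2, I_3$ in the proof of Lemma~\ref{lemma:concentration_free_entropy2} to the differentiated object $\langle\mathcal{L}\rangle$ requires care. A potentially cleaner alternative, as in \cite[Section~6]{barbier2019adaptive}, is a Griffiths-type convexity argument: exploiting $\langle\mathcal{L}\rangle = -\partial_{\epsilon_1}F_{d,\epsilon}(t)$ at fixed observations, together with a suitable convexity property of $F_{d,\epsilon}(t)$ in $\epsilon_1$ (after a reparametrization absorbing the $\sqrt{R_1}$-dependence) and the pointwise concentration of Lemma~\ref{lemma:concentration_free_entropy}, one obtains the same bound via the classical convexity--concentration inequality. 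I would use this route as a fallback should the direct Poincar\'e computation prove unwieldy.
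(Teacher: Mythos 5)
Your ``fallback'' route is in fact what the paper does, and your primary route has a real gap that the fallback is designed to circumvent.

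The paper proves Lemma~\ref{lemma:concentration_L2} via the convexity--concentration argument: it writes $\langle\mathcal{L}\rangle$ as (minus) the $R_1$-derivative of $F_{d,\epsilon}(t)$, constructs a perturbed free entropy $\tilde F(R_1)=F_{d,\epsilon}(t)-\frac{\sqrt{R_1}}{2d}M\sum_i|\Lambda_{Z,i}|$ that is convex in $R_1$, applies the classical convexity inequality (cited as {\cite[Lemma~31]{barbier2019optimal}}) to transfer concentration of the free entropy (Lemma~\ref{lemma:concentration_free_entropy}) into concentration of its derivative, and then handles the extra boundary terms $A,B$ and the $C_\delta^\pm$ remainders by direct Poincar\'e/GOE estimates, finally choosing $\delta=\iota_d d^{-1/4}$ and $\iota_d=\tfrac12 d^{-1/8}$. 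This is exactly the Griffiths-type convexity argument you describe at the end of your proposal, so your fallback is on target.

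Your primary route---applying Poincar\'e/McDiarmid directly to $\langle\mathcal{L}\rangle$---does not close as written. When you differentiate a Gibbs average with respect to a disorder variable $\bullet$ you pick up $\langle\partial_\bullet\mathcal{L}\rangle-\mathrm{Cov}_{\langle\cdot\rangle}(\mathcal{L},\partial_\bullet H_{t,\epsilon})$, and for $\bullet=Z'_{ij}$ the Hamiltonian derivative is $\frac{d}{2}\sqrt{R_1}(S^\star_{ij}-s_{ij})+\frac{d}{2}Z'_{ij}$, so the covariance piece of $\partial_{Z'_{ij}}\langle\mathcal{L}\rangle$ is $\frac{d\sqrt{R_1}}{2}\,\mathrm{Cov}_{\langle\cdot\rangle}(\mathcal{L},s_{ij})$. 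The Cauchy--Schwarz bound you propose gives
\begin{equation*}
\sum_{i,j}\mathrm{Cov}_{\langle\cdot\rangle}(\mathcal{L},s_{ij})^2\leq\mathrm{Var}_{\langle\cdot\rangle}(\mathcal{L})\sum_{i,j}\mathrm{Var}_{\langle\cdot\rangle}(s_{ij})\leq\mathrm{Var}_{\langle\cdot\rangle}(\mathcal{L})\cdot dM^2,
\end{equation*}
so after multiplying by $\frac{d^2R_1}{4}$ and by the $\frac{2}{d}$ factor from Lemma~\ref{lemma:gaussian-poincare}, the resulting contribution to the variance bound is $\frac{d^2R_1M^2}{2}\,\EE\,\mathrm{Var}_{\langle\cdot\rangle}(\mathcal{L})$. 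The only available control on $\EE\,\mathrm{Var}_{\langle\cdot\rangle}(\mathcal{L})$ is Lemma~\ref{lemma:concentration_L1}, which is an \emph{integrated} bound $\int_{B_d}\rd\epsilon\,\EE\langle(\mathcal{L}-\langle\mathcal{L}\rangle)^2\rangle\leq\rho(1+\rho)/d^2$; it does not yield a pointwise bound in $\epsilon$, and even plugging it in formally yields a term of order $R_1M^2\rho(1+\rho)/2=\mathcal{O}(1)$, not $\smallO_d(1)$ after the $1/\iota_d^2$ prefactor of Lemma~\ref{lemma:concentration_Q} is taken into account. This is exactly the obstruction that forces the convexity route: the thermal fluctuations of $\mathcal{L}$ feed back into the disorder fluctuations of $\langle\mathcal{L}\rangle$ in a circular way that Cauchy--Schwarz alone cannot untangle, whereas the convexity argument short-circuits it by reducing everything to the (pointwise) concentration of the free entropy. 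Your claimed intermediate pointwise bound $C/d$ therefore does not follow from the steps you describe; you should commit to the convexity argument rather than keeping it as a fallback.

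A small bibliographic note: you cite {\cite[Lemma~30]{barbier2019optimal}} as the template for a direct Poincar\'e computation, but in that reference the disorder concentration of $\langle\mathcal{L}\rangle$ is likewise handled via convexity, not by differentiating $\langle\mathcal{L}\rangle$ directly; only the \emph{thermal} term and the \emph{free entropy} are treated by Poincar\'e-type estimates there.
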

\begin{proof}[Proof of Lemma \ref{lemma:concentration_L2}]
$\mathcal{L}$ is connected to the free entropy through
\begin{equation}\label{eq:dF_dR1}
\frac{\rd F_{d,\epsilon}(t)}{\rd R_1}=-\langle\mathcal{L}\rangle-\frac{1}{4d}\sum_{i,j=1}^d((S_{ij}^\star )^2+\frac{1}{\sqrt{R_1}}S_{ij}^\star Z_{ij}'),
\end{equation}
\begin{equation*}
\frac{1}{d^2}\frac{\rd^2F_{d,\epsilon}(t)}{\rd R_1^2}=\langle\mathcal{L}^2\rangle-\langle\mathcal{L}\rangle^2-\frac{1}{8d^3R_1^{3/2}}\sum_{i,j=1}^d\langle s_{ij}\rangle Z_{ij}',
\end{equation*}
where $F_{d,\epsilon}(t)\coloneqq \frac{1}{d^2}\log\mathcal{Z}_{t,\epsilon}(Y,Y',G,V)$. Now we suppose that the eigenvalues of $Z'$ are $\{\Lambda_{Z,i}\}_{i=1}^d$, and further define 
\begin{equation*}
\tilde{F}(R_1)\coloneqq F_{d,\epsilon}(t)-\frac{\sqrt{R_1}}{2d}M\sum_{i=1}^d|\Lambda_{Z,i}|,\ \tilde{f}(R_1)\coloneqq \EE\tilde{F}(R_1)=f_{d,\epsilon}(t)-\frac{\sqrt{R_1}}{2d}M\sum_{i=1}^d\EE|\Lambda_{Z,i}|,
\end{equation*}
where we recall that $f_{d,\epsilon}(t)=\EE F_{d,\epsilon}(t)$.
$\tilde{F}(R_1)$ is a convex function because
\begin{equation*}
\frac{1}{d^2}\frac{\rd^2\tilde{F}(t)}{\rd R_1^2}=\langle\mathcal{L}^2\rangle-\langle\mathcal{L}\rangle^2+\frac{1}{8dR_1^{3/2}}\left(\sum_{i=1}^dM|\Lambda_{Z,i}|-\sum_{i,j=1}^d\langle s_{ij}\rangle Z_{ij}'\right)\geq0,
\end{equation*}
where the last inequality is a consequence of $|\Tr[AB]|\leq \|A\|_\op \sum_{i=1}^d|\lambda_{B,i}|$. Moreover, $\tilde{f}(R_1)$ is also convex because $f_{d,\epsilon}(t)$ is convex in $R_1$ (its derivative is the MMSE, which decreases with increasing $R_1$).

\myskip
By definition and eq.~\eqref{eq:dF_dR1}, we have
\begin{equation*}
\tilde{F}'(R_1)-\tilde{f}'(R_1)=\EE\langle\mathcal{L}\rangle-\langle\mathcal{L}\rangle+\frac{\rho}{4}-\frac{1}{4d}\sum_{i,j=1}^d((S_{ij}^\star )^2+\frac{1}{\sqrt{R_1}}S_{ij}^\star Z_{ij}')-\frac{1}{4\sqrt{R_1}}MA,
\end{equation*}
where
\begin{equation*}
A\coloneqq \frac{1}{d}\sum_{i=1}^d(|\Lambda_{Z,i}|-\EE|\Lambda_{Z,i}|)
\end{equation*}
satisfies $\EE[A^2]\leq ad^{-2}$ (by repeating the proof arguments of Lemma \ref{lemma:CLT_new}-$(b)$, i.e.\ directly using the Poincaré inequality of Lemma~\ref{lemma:poincare} for the $\GOE(d)$ distribution). 

\myskip
Due to the convexity of $\tilde{F}$ and $\tilde{f}$, we have \cite[Lemma 31]{barbier2019optimal}
\begin{equation*}
\begin{aligned}
|\langle\mathcal{L}\rangle-\EE\langle\mathcal{L}\rangle| &\leq \delta^{-1} \sum_{u \in\left\{R_1-\delta, R_1, R_1+\delta\right\}}\left(\left|F_{d, \epsilon}\left(t, R_1=u\right)-f_{d, \epsilon}\left(t, R_1=u\right)\right|+\frac{1}{2}M|A| \sqrt{u}\right) \\
&+C_\delta^{+}\left(R_1\right)+C_\delta^{-}\left(R_1\right)+\frac{M|A|}{4\sqrt{\epsilon_1}}+\left|\frac{\rho}{4}-\frac{1}{4d} \sum_{i,j=1}^d\left(\left(S_{ij}^\star \right)^2+\frac{1}{\sqrt{R_1}} S_{ij}^\star Z_{ij}'\right)\right|,
\end{aligned}
\end{equation*}
where $C_\delta(R_1)^+\coloneqq \tilde{f}'(R_1+\delta)-\tilde{f}'(R_1)\geq0$, $C_\delta(R_1)^-\coloneqq \tilde{f}'(R_1)-\tilde{f}'(R_1-\delta)\geq0$. Recall that $R_1 \geq \eps_1$. Denote the last term as $|B|$. We then have
\begin{equation*}
B\coloneqq \frac{\rho}{4}-\frac{1}{4d} \sum_{i=1}^d\left(\left(\Lambda_{i}^\star \right)^2+\frac{1}{\sqrt{R_1}} \Lambda_{ii}^\star \tilde{Z}_{ii}'\right),
\end{equation*}
where we denote $S^\star \coloneqq O\Lambda^\star O^T$ and $\tilde{Z}'\coloneqq O^TZ'O$. $\Lambda^\star $ is independent of $O$, so $\tilde{Z}'\sim\text{GOE}(d)$ is independent of $\Lambda^\star $. 
By Lemma~\ref{lemma:CLT_new} we have:
\begin{equation*}
\EE[B^2]\leq\frac{1}{8}\EE\left[\left(\rho-\frac{1}{d}\sum_{i=1}^d(\Lambda_i^\star )^2\right)^2\right]+\frac{1}{8R_1}\EE\left[\left(\frac{1}{d}\sum_{i=1}^d\Lambda_{ii}^\star \tilde{Z}_{ii}'\right)^2\right] 
\leq \smallO(1) + b\iota_d^{-1}d^{-1},
\end{equation*}
for a constant $b$. Then we have
\begin{equation*}
\frac{1}{10}\EE[(\langle\mathcal{L}\rangle-\EE\langle\mathcal{L}\rangle)^2]\leq 3\delta^{-2}(C(\varphi,M,\alpha,\kappa)+aM^2d)d^{-2}+C_\delta^+(R_1)^2+C_\delta^-(R_1)^2+\frac{M^2a}{4\epsilon_1d^2}+\frac{b}{\iota_dd} + \smallO(1),
\end{equation*}
where we use Lemma \ref{lemma:concentration_free_entropy2}. As 
\begin{equation*}
\left|\frac{\rd f_{d,\epsilon}(t)}{\rd R_1}\right|=\left|\frac{1}{4d}\sum_{i,j=1}^d\EE[\langle s_{ij}\rangle^2]-\frac{\rho}{4}\right|\leq\frac{\rho}{2}
\end{equation*}
and $R_1\geq\epsilon_1$, we have
\begin{equation*}
|\tilde{f}'(R_1)|\leq\frac{1}{2}(\rho+\frac{M}{\sqrt{\epsilon_1}}).
\end{equation*}
Following steps in \cite[Lemma 30]{barbier2019optimal}, we have
\begin{equation*}
\int_{B_d}d\epsilon(C_\delta^+(R_1)^2+C_\delta^-(R_1)^2)\leq\frac{1}{2}\delta(\iota_d+\rho)\left(\rho+\frac{M}{\sqrt{\iota_d}}\right)^2,
\end{equation*}
and thus
\begin{equation*}
\begin{aligned}
\EE[(\langle\mathcal{L}\rangle-\EE\langle\mathcal{L}\rangle)^2]&\leq30\delta^{-2}(C(\varphi,M,\alpha,\kappa)+aM^2d)d^{-2}+5\delta(\iota_d+\rho)\left(\rho+\frac{M}{\sqrt{\iota_d}}\right)^2\\&\qquad+5M^2a\frac{\log2}{2}\frac{\iota_d}{d}+\frac{b}{\iota_dd} + \smallO(1).
\end{aligned}
\end{equation*}
We now finish the proof by choosing $\delta=\iota_dd^{-1/4}$ (and $\iota_d=\frac{1}{2}d^{-1/8}$).
\end{proof}
Finally, we prove Lemma \ref{lemma:concentration_Q} by combining eq. \eqref{eq:L&Q}, Lemma \ref{lemma:concentration_L1} and Lemma \ref{lemma:concentration_L2}.

\subsection{Relaxation of Assumptions \ref{assum:S_strong} and \ref{assum:phi_strong}}
\label{sec_app:relax}

\begin{lemma}
\noindent
Denote $f_d(\mathcal{V})$ to be the free entropy of eq.~\eqref{eq:def_fentropy} corresponding to the prior $P_0$ with potential $\mathcal{V}$, and denote $\tilde{\mathcal{V}}_M$ the truncation of $\mathcal{V}$ to $[-M, M]$, with $\tilde{\mathcal{V}}_M(x) = +\infty$ if $|x| > M$. 
Notice that then the prior with potential $\tilde{\mathcal{V}}_M$ satisfies  Assumption \ref{assum:S_strong}. Suppose that $\mathcal{V}$ satisfies Assumption \ref{ass:prior} and $\varphi$ satisfies Assumption \ref{assum:phi_strong}. Then, for $M > 0$ large enough, we have
\begin{equation*}
    \lim_{d\to\infty}|f_d(\mathcal{V})-f_d(\tilde{\mathcal{V}}_M)|=0.
\end{equation*}
\label{lemma:relaxV}
\end{lemma}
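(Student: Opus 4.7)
The plan is to compare the two free entropies via a coupling between the priors $P_0^V$ and $P_0^{\tV_M}$. By Proposition~\ref{prop:properties_P0_sym}-$(ii)$, we can fix $M > 0$ large enough so that under $P_0^V$ the eigenvalues of $S$ lie in $[-M, M]$ with probability at least $1 - e^{-c'd}$, and the conditional law of $S \sim P_0^V$ given this event is exactly $P_0^{\tV_M}$. This yields a natural coupling $(S_1, S_2)$ with $S_1 \sim P_0^V$, $S_2 \sim P_0^{\tV_M}$, and $\bbP(S_1 \neq S_2) \leq e^{-c'd}$. Writing $F_V(S^\star, \Phi, Z, A) \coloneqq d^{-2}\log \mcZ^V(Y,\Phi)$ where $Y$ is generated from $S^\star$, I would split
\begin{equation*}
f_d(V) - f_d(\tV_M) = \underbrace{\EE[F_V(S_1) - F_V(S_2)]}_{A} + \underbrace{\EE[F_V(S_2) - F_{\tV_M}(S_2)]}_{B},
\end{equation*}
so that $A$ isolates the change in the planted signal (with the posterior prior $P_0^V$ unchanged), and $B$ isolates the change in the posterior prior (with the data $Y$ generated from $S_2$ in both terms).

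Bounding $A$ is the routine step. Since $S_1 = S_2$ off $\{S_1 \neq S_2\}$, Cauchy--Schwarz gives $|A| \leq 2(\EE F_V^2)^{1/2} \bbP(S_1 \neq S_2)^{1/2}$. Under Assumption~\ref{assum:phi_strong}, $\varphi$ is bounded, so the pointwise inequalities $(2\pi\Delta)^{-1/2} e^{-(|Y|+\sup|\varphi|)^2/(2\Delta)} \leq P_\out(Y|z) \leq (2\pi\Delta)^{-1/2}$ immediately yield $|F_V| \leq C_1 + C_2 d^{-2}\sum_\mu Z_\mu^2$ uniformly in $S^\star$. Since $n = \alpha d^2$, this gives $\EE F_V^2 = \mcO(1)$ and hence $|A| = \mcO(e^{-c'd/2})$.

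Bounding $B$ is the core of the argument. Decompose $\mcZ^V = \bbP_V(E)\,\mcZ^{\tV_M} + \bbP_V(E^c)\,\widetilde{\mcZ}^V$, where $\widetilde{\mcZ}^V(Y,\Phi) \coloneqq \EE_{S \sim P_0^V|E^c}[\prod_\mu P_\out(Y_\mu|\Tr[\Phi_\mu S])]$. The pointwise bound $\mcZ^V/\mcZ^{\tV_M} \geq \bbP_V(E) \geq 1 - e^{-c'd}$ immediately gives $B \geq -2e^{-c'd}/d^2$. For the matching upper bound, I would use the key observation that when $Y$ is generated from $S_2 \sim P_0^{\tV_M}$, the data-averaged ratio satisfies
\begin{equation*}
\EE_{S_2, Y}\!\left[\frac{\widetilde{\mcZ}^V(Y,\Phi)}{\mcZ^{\tV_M}(Y,\Phi)}\right] = \int \mcZ^{\tV_M}(Y,\Phi) \cdot \frac{\widetilde{\mcZ}^V(Y,\Phi)}{\mcZ^{\tV_M}(Y,\Phi)} \rd Y = \int \widetilde{\mcZ}^V(Y,\Phi)\, \rd Y = 1,
\end{equation*}
since $\widetilde{\mcZ}^V(\cdot,\Phi)$ is itself a probability density on $Y$. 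Hence $\EE[\mcZ^V/\mcZ^{\tV_M}] = \bbP_V(E) + \bbP_V(E^c) = 1$, and Jensen's inequality for the concave logarithm gives $\EE[\log(\mcZ^V/\mcZ^{\tV_M})] \leq 0$, i.e.\ $B \leq 0$. Together, $|B| \leq 2e^{-c'd}/d^2 \to 0$, which completes the proof.

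The delicate point that the plan sidesteps is that any pointwise attempt to control $B$---for instance by bounding $|\log \mcZ^V - \log \mcZ^{\tV_M}|$ via $|\mcZ^V - \mcZ^{\tV_M}|/\min(\mcZ^V, \mcZ^{\tV_M})$---fails because the ratio $\widetilde{\mcZ}^V/\mcZ^{\tV_M}$ can be very large on atypical realizations of $Y$, and the resulting bound would require controlling exponential moments $\EE[\exp(c\sum_\mu Z_\mu^2)]$ of the Gaussian noise that are simply infinite. The Jensen identity above, which is the natural Nishimori-type identity in the mismatched-prior setting, is exactly what makes the argument go through in expectation without ever needing such pointwise control.
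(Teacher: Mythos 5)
Your proof is correct and takes a genuinely different, more elementary route than the paper. The paper's proof interpolates continuously in $t$ between the two models, differentiates the mutual information via the I-MMSE theorem, and bounds the derivative using the Nishimori identity together with Assumption~\ref{assum:CLT} (for the second-moment structure of $\Phi$) and the Wasserstein-2 convergence of the empirical spectral measures from Proposition~\ref{prop:properties_P0_sym}. Your argument instead isolates the two sources of discrepancy in a single split: the planted-signal change $A$ is handled by the exact coupling coming from the restriction identity $P_0^{\tV_M}=P_0^V\,|_E$, where $E$ is the bounded-spectrum event of Proposition~\ref{prop:properties_P0_sym}-$(ii)$, combined with a crude moment bound on the log-partition function that is uniform in $(\Phi, S^\star)$; the posterior-prior change $B$ is handled below by the mixture decomposition of $\mcZ^V$ into the restricted and complementary pieces, and above by the nonnegativity of a KL divergence — your Jensen identity is exactly the statement $\EE_Y[\log(\mcZ^V/\mcZ^{\tV_M})]=-\mathrm{KL}(\mcZ^{\tV_M}\,\|\,\mcZ^V)\leq 0$ when $Y$ is drawn from the $\tV_M$ model, i.e.\ the mismatched-prior Nishimori identity. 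What this buys: your proof needs neither Assumption~\ref{assum:CLT}, nor the I-MMSE theorem, nor any Wasserstein machinery, only the spectral large-deviation bound and the boundedness of $\varphi$; it is therefore strictly lighter than the paper's. The trade-off is that it leans structurally on $P_0^{\tV_M}$ being a conditioning of $P_0^V$, whereas the paper's interpolation transfers directly to comparisons of more general pairs of close priors and is in fact reused almost verbatim in Lemma~\ref{lemma:spike_relaxV}. Your closing remark — that a pointwise control of $B$ would demand infinite exponential moments of the Gaussian noise, and that the Jensen/Nishimori identity is what makes the expectation finite without pointwise control — is the correct diagnosis of the argument's structural crux.
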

\begin{proof}[Proof of Lemma~\ref{lemma:relaxV}]
We consider the following interpolation
\begin{equation}
\label{eq:potential_intepolation}
\begin{aligned}
&Y_{1,\mu}=\sqrt{t}\varphi(\Tr[\Phi_\mu S_1^\star],A_\mu)+\sqrt{\Delta}Z_{1,\mu},\\
&Y_{2,\mu}=\sqrt{1-t}\varphi(\Tr[\Phi_\mu S_2^\star],A_\mu)+\sqrt{\Delta}Z_{2,\mu},
\end{aligned}
\end{equation}
where $\{A_\mu\}_{\mu=1}^n\iid P_A,\{Z_{1,\mu}\}_{\mu=1}^n,\{Z_{2,\mu}\}_{\mu=1}^n\overset{iid}{\sim}\mathcal{N}(0,1)$ are independent of $S_1^\star,S_2^\star$. $S_1^\star$ is sampled from the prior with potential $V$ and $S_2^\star$ is sampled from the prior with potential $\tV_M$, and are coupled as follows. The joint distribution of $S_1^\star,S_2^\star$ is given by $S_1^\star\coloneqq O\Lambda_1^\star O^T,S_2^\star\coloneqq O\Lambda_2^\star O^T$, where $O$ is Haar distributed, and the coupling between $\Lambda_1^\star$ and $\Lambda_2^\star$ remains arbitrary for now.

\myskip
We can define the mutual information $I(t)\coloneqq I(S_1^\star,S_2^\star;Y_1,Y_2 | \Phi)$.
Following \cite[Proposition 14]{barbier2019optimal} and using the I-MMSE theorem \cite[Lemma 4.5]{barbier2020mutual}, we have
\begin{equation*}
I'(t)=\frac{1}{2}\alpha\EE\left[\sum_{\mu=1}^n(\varphi(\Tr[\Phi_\mu S_1^\star],A_\mu)-\varphi(\Tr[\Phi_\mu\langle s_1\rangle],A_\mu))^2-(\varphi(\Tr[\Phi_\mu S_2^\star],A_\mu)-\varphi(\Tr[\Phi_\mu\langle s_2\rangle],A_\mu))^2\right],
\end{equation*}
where the Gibbs bracket $\langle\cdot\rangle$ denotes the expectation with respect to the posterior distribution $\bbP(s_1, s_2 | Y_1, Y_2, \Phi)$ of the model (eq. \eqref{eq:potential_intepolation}).

\myskip
For simplicity, we denote $z_{1,\mu}\coloneqq\Tr[\Phi_\mu S_1^\star],z_{2,\mu}\coloneqq\Tr[\Phi_\mu S_2^\star],e_{1,\mu}\coloneqq\Tr[\Phi_\mu\langle s_1\rangle],e_{2,\mu}\coloneqq\Tr[\Phi_\mu\langle s_2\rangle]$ and thus
\begin{equation*}
\begin{aligned}
|I'(t)|&=\frac{\alpha}{2}|\EE[||\varphi(z_1,A)-\varphi(e_1,A)||^2-||\varphi(z_2,A)-\varphi(e_2,A)||^2]|\\
&=\frac{\alpha}{2}\EE[(||\varphi(z_1,A)-\varphi(e_1,A)||+||\varphi(z_2,A)-\varphi(e_2,A)||)(||\varphi(z_1,A)-\varphi(e_1,A)||-||\varphi(z_2,A)-\varphi(e_2,A)||)]\\
&\leq2\alpha\sqrt{n}\sup|\varphi|\EE[||\varphi(z_1,A)-\varphi(e_1,A)-\varphi(z_2,A)+\varphi(e_2,A)||]\\
&\leq2\alpha\sqrt{n}\sup|\varphi|\EE[||\varphi(z_1,A)-\varphi(z_2,A)||+||\varphi(e_1,A)-\varphi(e_2,A)||]\\
&\overset{(a)}{=}4\alpha\sqrt{n}\sup|\varphi|\EE[||\varphi(z_1,A)-\varphi(z_2,A)||]\\
&\leq 4\alpha\sqrt{n}\sup|\varphi|\sup|\varphi'|\EE[||z_1-z_2||^2]^{1/2}\\
&\overset{(b)}{=}4\alpha \sqrt{nd}\sup|\varphi|\sup|\varphi'|\EE[||\Lambda_1^\star-\Lambda_2^\star||^2]^{1/2}.
\end{aligned}
\end{equation*}
We use the Nishimori identity in equation (a), see Proposition~\ref{prop:nishimori}, and Assumption \ref{assum:CLT} in equation (b). We also used the triangle inequality, the mean value inequality, and Jensen's inequality.

\myskip
Recall that
the free entropy is related to the mutual information as (see eq.~\eqref{eq:mutual_inf_fentropy}):
\begin{equation*}
\frac{1}{d^2}I(1)=-f_d(\mathcal{V})+\alpha\Psi_{{\out}}(\rho_{S_1})+o_d(1),\ \frac{1}{d^2}I(0)=-f_d(\tilde {\mathcal{V}}_M)+\alpha\Psi_{\out}(\rho_{S_2})+o_d(1),
\end{equation*}
where we denote $\rho_{S_1}\coloneqq\lim_{d\to\infty}\frac{1}{d}\sum_{i=1}^d(\Lambda_1^\star)^2$ and $\rho_{S_2}\coloneqq\lim_{d\to\infty}\frac{1}{d}\sum_{i=1}^d(\Lambda_2^\star)^2$. This gives
\begin{equation}
|f_d(\mathcal{V})-f_d(\tilde{\mathcal{V}}_M)|\leq4\alpha \sqrt{\alpha}\sup|\varphi|\sup|\varphi'|\EE[W_2(\mu_{S_1},\mu_{S_2})^2]^{1/2}+\alpha|\Psi_{\out}(\rho_{S_1})-\Psi_{\out}(\rho_{S_2})|+o_d(1),
\label{eq:fd-fd'}
\end{equation}
where $W_2(\mu_{S_1},\mu_{S_2})\coloneqq\sqrt{\inf\frac{1}{d}||\Lambda_1^\star-\Lambda_2^\star||^2}$ denotes the Wasserstein-2 distance between the empirical distributions of $\Lambda_1^\star$ and $\Lambda_2^\star$. Note that the infimum is w.r.t. the coupling between the random variables $\Lambda_1^\star$ and $\Lambda_2^\star$. 
Given that our analysis was made for an arbitrary choice of this coupling, we chose the coupling between $\Lambda_1^\star$ and $\Lambda_2^\star$ such that $\frac{1}{d}||\Lambda_1^\star-\Lambda_2^\star||^2$ is $W_2(\mu_{S_1},\mu_{S_2})^2+\epsilon$, before taking $\epsilon\to0$: this gave rise to eq.~\eqref{eq:fd-fd'}

\myskip
Lemma~\ref{lemma:truncation} in Appendix~\ref{sec_app:technical_lemmas} shows that $\mu_{S_1},\mu_{S_2}$ both weakly converges to $\mu_0$ almost surely for $M > 0$ large enough. Thus from Proposition \ref{prop:properties_P0_sym}(iv) we have $\rho_{S_1} = \rho_{S_2}$, $\lim_{d\to\infty}W_2(\mu_{S_1},\mu_0)=0,a.s.$ and $\lim_{d\to\infty}W_2(\mu_{S_2},\mu_0)=0,a.s.$, which gives
\begin{equation*}
\lim_{d\to\infty}W_2(\mu_{S_1},\mu_{S_2})\leq\lim_{d\to\infty}W_2(\mu_{S_1},\mu_0)+\lim_{d\to\infty}W_2(\mu_{S_2},\mu_0)=0,a.s.
\end{equation*}
Moreover, by definition we have
\begin{equation*}
W_2(\mu_{S_1},\mu_{S_2})^2\leq2\left(\int\mu_{S_1}(\rd x)x^2+\int\mu_{S_2}(\rd x)x^2\right).
\end{equation*}
By Proposition \ref{prop:properties_P0_sym}(iii), both terms are uniformly integrable, so we have
\begin{equation*}
\lim_{d\to\infty}\EE[W_2(\mu_{S_1},\mu_{S_2})^2]=0
\end{equation*}
by using dominated convergence. 
Finally, plugging all these results into eq. \eqref{eq:fd-fd'}, we get
\begin{equation*}
\lim_{d\to\infty}|f_d(\mathcal{V})-f_d(\tilde{\mathcal{V}}_M)|=0,
\end{equation*}
which finishes the proof.
\end{proof}

\myskip
The following lemma states that one can relax Assumption~\ref{assum:phi_strong}. 
It directly follows from \cite[Proposition 25]{barbier2019optimal}, plugging in the central limit theorem result of Lemma \ref{lemma:CLT_new}.
\begin{lemma}
\noindent
Suppose that the potential satisfies Assumption \ref{ass:prior} and $\varphi$ satisfies Assumption \ref{assum:phi_weak}. Then for all $\varepsilon>0$, there exists a $\hat{\varphi}$ satisfying Assumption \ref{assum:phi_strong} such that $|f_d(\varphi)-f_d(\hat{\varphi})|<\varepsilon$ for $d$ large enough, where $f_d(\varphi)$ denotes the free entropy corresponding to $\varphi$.
\label{lemma:relaxphi}
\end{lemma}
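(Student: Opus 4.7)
\textbf{Proof plan for Lemma~\ref{lemma:relaxphi}.} The strategy is to adapt the truncation argument of \cite[Proposition 25]{barbier2019optimal} to our matrix setting, with the only substantive change being that the scalar CLT for $(1/\sqrt{d})\sum_i \Phi_{\mu i} S^\star_i$ used there is replaced by our Lemma~\ref{lemma:CLT_new} for $\Tr[\Phi_\mu S^\star]$. Concretely, for a large parameter $K > 0$, define an approximation $\hat\varphi_K(z,a) \coloneqq \chi_K(a)\,\varphi(\tau_K(z),a)$, where $\tau_K : \bbR \to [-K,K]$ is a smooth Lipschitz clip (identity on $[-K+1,K-1]$) and $\chi_K$ is a smooth cutoff in $a$ that kills the contribution of $a \sim P_A$ outside a large compact set. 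One further mollifies in $z$ by convolution with a smooth kernel of width $1/K$. The resulting $\hat\varphi_K$ is bounded with bounded first and second derivatives in its first argument, hence satisfies Assumption~\ref{assum:phi_strong}.

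Next, I would interpolate between the two models by introducing the pair of observations $Y_{1,\mu} = \sqrt{t}\,\varphi(\Tr[\Phi_\mu S^\star],a_\mu) + \sqrt{\Delta} Z_{1,\mu}$ and $Y_{2,\mu} = \sqrt{1-t}\,\hat\varphi_K(\Tr[\Phi_\mu S^\star],a_\mu) + \sqrt{\Delta} Z_{2,\mu}$ for $t\in[0,1]$, with independent standard Gaussian noises. Writing the mutual information $I(t) \coloneqq I(S^\star; Y_1,Y_2 | \Phi)$ and using the I-MMSE identity as in Lemma~\ref{lemma:relaxV}, the derivative $I'(t)$ can be bounded in absolute value by a constant times
\begin{equation*}
\sqrt{n}\, \bigl(\EE\bigl[(\varphi(\Tr[\Phi_1 S^\star], a_1) - \hat\varphi_K(\Tr[\Phi_1 S^\star], a_1))^2\bigr]\bigr)^{1/2},
\end{equation*}
after using the Nishimori identity to replace posterior averages by teacher averages and the Cauchy--Schwarz inequality. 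Dividing by $d^2$ (recall $n/d^2 \to \alpha$) and using the link \eqref{eq:mutual_inf_fentropy} between free entropy and mutual information yields
\begin{equation*}
|f_d(\varphi) - f_d(\hat\varphi_K)| \leq C(\alpha,\Delta)\,\bigl(\EE[(\varphi(\Tr[\Phi_1 S^\star], a_1) - \hat\varphi_K(\Tr[\Phi_1 S^\star], a_1))^2]\bigr)^{1/2} + \smallO_d(1),
\end{equation*}
uniformly in $d$ large.

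The last step is to show that the right-hand side can be made smaller than any prescribed $\varepsilon > 0$ by choosing $K$ large enough, uniformly in $d$. Here Lemma~\ref{lemma:CLT_new} enters: $\Tr[\Phi_1 S^\star]$ is asymptotically Gaussian with variance $\rho$ (and by Proposition~\ref{prop:properties_P0_sym} has bounded moments), so the above expectation converges as $d\to\infty$ to $\EE_{V,a}[(\varphi(\sqrt{2\rho}V,a) - \hat\varphi_K(\sqrt{2\rho}V, a))^2]$ with $V\sim \mcN(0,1)$ and $a \sim P_A$. Assumption~\ref{assum:phi_weak} gives $\EE[|\varphi(\Tr[\Phi_1 S^\star],a_1)|^{2+\gamma}]$ uniformly bounded in $d$, which by De La Vall\'ee Poussin provides uniform integrability of $\varphi^2$, so this limit can in turn be made arbitrarily small by taking $K$ large. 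Combined with the previous display, this yields $|f_d(\varphi) - f_d(\hat\varphi_K)| \leq \varepsilon$ for $K$ large enough and all $d$ sufficiently large.

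\textbf{Main obstacle.} The delicate point is securing the uniform (in $d$) moment control needed to pass from the finite-$d$ expectation $\EE[(\varphi - \hat\varphi_K)^2(\Tr[\Phi_1 S^\star], a_1)]$ to a $d$-independent quantity that is small in $K$. Without the uniform integrability afforded by the $(2+\gamma)$-moment bound in Assumption~\ref{assum:phi_weak}, one cannot rule out that the truncation error concentrates on rare but unbounded tails of $\Tr[\Phi_1 S^\star]$; this is precisely why the $(2+\gamma)$-moment condition is imposed rather than only a second moment bound.
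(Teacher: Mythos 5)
Your proof plan reconstructs the argument that the paper defers to by citation (the paper's own proof reads simply: ``It directly follows from \cite[Proposition 25]{barbier2019optimal}, plugging in the central limit theorem result of Lemma~\ref{lemma:CLT_new}''). The overall architecture---approximate $\varphi$ by a smooth bounded $\hat\varphi_K$, interpolate the two channels with shared signal $S^\star$, bound $I'(t)$ via I-MMSE and Cauchy--Schwarz in terms of $\EE[(\varphi - \hat\varphi_K)^2]$, and then invoke Lemma~\ref{lemma:CLT_new} together with the uniform integrability coming from the $(2+\gamma)$-moment bound in Assumption~\ref{assum:phi_weak}---is correct and matches what Proposition~25 of~\cite{barbier2019optimal} does in the scalar setting. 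Your closing remark correctly identifies why the $(2+\gamma)$-moment condition (rather than a plain second-moment bound) is what makes the limit exchange legal.

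Two points to fix. First, your construction $\hat\varphi_K(z,a) = \chi_K(a)\,\varphi(\tau_K(z),a)$ need not be bounded, hence need not satisfy Assumption~\ref{assum:phi_strong}: Assumption~\ref{assum:phi_weak} only gives continuity a.e.\ in $z$ and a moment bound under the Gaussian-like law of $\Tr[\Phi S^\star]$, which is entirely consistent with $\varphi(\cdot,a)$ having an integrable but unbounded singularity inside $[-K,K]$. Restricting the argument $z$ and cutting off $a$ does not remove such a singularity. You need to additionally truncate the \emph{value} of $\varphi$, e.g.\ $\hat\varphi_K = \text{mollify}\bigl(\chi_K(a)\,\mathrm{clip}_K\bigl(\varphi(\tau_K(z),a)\bigr)\bigr)$; mollification of a bounded function then yields bounded first and second derivatives, and the L$^2$ error is still controlled by uniform integrability. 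Second, the stated intermediate bound $|I'(t)| \le C\sqrt{n}\,\bigl(\EE[(\varphi-\hat\varphi_K)^2]\bigr)^{1/2}$ is inconsistent with your final display: dividing $\sqrt{n}$ by $d^2$ (with $n/d^2 \to \alpha$) would give $\sqrt{\alpha}/d \to 0$, making the whole lemma trivial and the uniform-integrability step superfluous. Following the chain of inequalities in the proof of Lemma~\ref{lemma:relaxV}, the correct scaling from the I-MMSE derivative is a factor $n$ (one Cauchy--Schwarz produces a $\sqrt{n}$ from $\|\varphi(z,A) - \varphi(e,A)\|$, and the other produces another $\sqrt{n}$ from $\|(\varphi-\hat\varphi_K)(z,A) - (\varphi-\hat\varphi_K)(e,A)\|$, both norms being over $\mu = 1,\dots,n$), so that after dividing by $d^2$ one obtains a $d$-independent prefactor $\propto \alpha$ multiplying $\bigl(\EE[(\varphi-\hat\varphi_K)^2(\Tr[\Phi_1 S^\star], a_1)]\bigr)^{1/2}$, which is precisely what the subsequent uniform-integrability argument is needed to make small.
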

Lemmas \ref{lemma:relaxV} and \ref{lemma:relaxphi} show that for any $P_0$ satisfying Assumption \ref{ass:prior} and $\varphi$ satisfying Assumption \ref{assum:phi_weak}, there exists $\hat{P}_0$ satisfying Assumption \ref{assum:S_strong} and $\hat{\varphi}$ satisfying Assumption \ref{assum:phi_strong}, such that the free entropies for $(P_0, \varphi)$ and $(\hat{P}_0, \hat{\varphi})$ are arbitrarily close as $d \to \infty$: this allows us to relax Assumptions \ref{assum:S_strong} and \ref{assum:phi_strong} to Assumptions~\ref{ass:prior} and \ref{assum:phi_weak} in the proof of Theorem~\ref{thm:fentropy_symmetric}.

\section{Proof of Theorem \ref{theo:overlap}}\label{sec_app:sketch_overlap}
The proof of Theorem \ref{theo:overlap} follows from \cite[Theorem 2]{barbier2019optimal}, so we will only describe its sketch. 
To obtain an upper bound on the overlap $Q_d \coloneqq (1/d)\Tr[s S^\star]$, we consider the following model
\begin{equation}
\label{eq:model_w_spike}
\left\{
\begin{aligned}
&Y\sim P_{\out}(\cdot|\Tr[GS^*])\\
&Y'=\sqrt{\frac{\lambda}{d^{2p-1}}}(S^*)^{\otimes2p}+Z',
\end{aligned}\right.
\end{equation}
where $Z'\in \mcS_d^{\otimes2p} \iid \mcN(0,1)$.
The additional side information comes from a so-called \emph{spiked tensor model}, which is considered in Appendix~\ref{sec_app:spike}. A combination of Theorems \ref{thm:fentropy_symmetric} and \ref{theo:tensor_main} gives its free entropy (see also \cite[Section 5.3]{barbier2019optimal} for details):
\begin{equation*}
\lim_{d\to\infty}f_d=\sup_{q\in[0,\rho]}\inf_{r\geq0}\psi_{P_0}(r+4p\lambda q^{2p-1})+\alpha\Psi_{\out}(q)+\frac{1}{4}+p\lambda\rho q^{2p-1}+\frac{1}{4}r(\rho-q)-\frac{\lambda(2p-1)q^{2p}}{2}.
\end{equation*}
Similarly to \cite[Section 5.3]{barbier2019optimal}, we can use the I-MMSE theorem with respect to the signal-to-noise ration $\lambda$ to obtain
\begin{equation*}
\frac{1}{d^{2p}}\text{MMSE}((S^*)^{\otimes2p}|Y,Y',G)\to\rho^{2p}-q^*(\alpha,\lambda)^{2p},
\end{equation*}
and thus (since the side-information channel in eq.~\eqref{eq:model_w_spike} can only reduce the MMSE with respect to the original 
observation model):
\begin{equation*}
\liminf_{d\to\infty}\frac{1}{d^{2p}}\text{MMSE}((S^*)^{\otimes2p}|Y,G)\geq\rho^{2p}-q^*(\alpha)^{2p}.
\end{equation*}
As the left side of this last equation is equal $\rho^{2p}-\mathbb{E}[Q_d^{2p}]$ by the Nishimori identity (Proposition~\ref{prop:nishimori}), 
we obtain an upper bound on the overlap:
\begin{equation*}
\limsup_{d\to\infty}\mathbb{E}[Q_d^{2p}]\leq q^*(\alpha)^{2p},
\end{equation*}
for any $p \geq 1$. 
This implies 
\begin{equation}
\lim_{d\to\infty}\mathbb{P}(|Q_d|\geq q^*(\alpha)+\epsilon)=0
\label{eq:upper_bound_overlap}
\end{equation}
for all $\epsilon>0$.
The lower bound on the overlap 
\begin{equation}
\lim_{d\to\infty}\mathbb{P}(|Q_d|\leq q^*(\alpha)-\epsilon)=0
\label{eq:lower_bound_overlap}
\end{equation}
is obtained in the exact same way as \cite[Section~5.3.2]{barbier2019optimal}, thus we omit its proof. Combining eqs. \eqref{eq:upper_bound_overlap} and \eqref{eq:lower_bound_overlap}, we obtain the convergence of the overlap, and thus the MMSE.

\myskip
Finally, we note that all the above arguments still hold if we replace $G_\mu$ with $\Phi_\mu$ under Assumption~\ref{assum:CLT}, because a direct generalization of Theorem~\ref{thm:fentropy_symmetric} to the model of eq.~\eqref{eq:model_w_spike} shows that the limiting free entropy is unchanged upon replacing $G_\mu$ by $\Phi_\mu$.

\section{Proof of Theorem \ref{theo:2layerNN}}\label{sec_app:reduction_2nn}
According to Section \ref{subsec:2layerNN}, it remains to prove eq. \eqref{eq:partial_lambda_f} and Lemma \ref{lemma:diff_t0_1}.

\subsection{\texorpdfstring{Proof of eq. \eqref{eq:partial_lambda_f}}{}}

The I-MMSE theorem \cite{guo2005mutual} implies that (this can also easily be re-derived from eq.~\eqref{eq:def_fnd} using the Nishimori identity, see Proposition~\ref{prop:nishimori}), 
for any $\Lambda \geq 0$:
\begin{align}\label{eq:I_MMSE}
     \frac{\partial}{\partial\Lambda} f_{d}(\Lambda) &= -\frac{d}{2} \EE \left[(S^\star_{12} - \langle S_{12} \rangle_{t,\Lambda})^2\right].
\end{align}
Moreover, for $\Lambda = 0$, by permutation invariance of the law of $S$, we have
\begin{align}
\nonumber
 \frac{\partial}{\partial\Lambda} f_{d}(0) &= -\frac{1}{2(d-1)} \sum_{i \neq j}\EE \left[(S^\star_{ij} - \langle S_{ij} \rangle_{t,0})^2\right], \\ 
 &=
 - \frac{1}{2(d-1)} \left[d  \, \EE \, \tr[(S^\star - \langle S \rangle_{t,0})^2] - \sum_{i=1}^d \EE[(S^\star_{ii} - \langle S_{ii} \rangle_{t,0})^2] \right].
\end{align}
In particular, this implies that (using again permutation invariance and the Nishimori identity)
\begin{align}\label{eq:I_MMSE_Lambda_0_first}
    \nonumber
     \left|\frac{\partial}{\partial\Lambda} f_{d}(0) + \frac{1}{2} \EE \, \tr[(S^\star - \langle S \rangle_{t,0})^2] \right|
    &\leq 
    \frac{d}{2(d-1)} \EE[(S^\star_{11})^2 - \langle S_{11} \rangle_{t,0}^2] + 
    \frac{1}{2(d-1)} \EE \, \tr[(S^\star)^2 - \langle S \rangle_{t,0}^2], \\ 
    \nonumber
    &\leq 
    \frac{d}{2(d-1)} \EE[(S^\star_{11}-1)^2 - \langle S_{11}-1 \rangle_{t,0}^2] + 
    \frac{1}{2(d-1)} \EE \, \tr[(S^\star)^2 - \langle S \rangle_{t,0}^2], \\ 
    \nonumber
    &\leq 
    \frac{d}{2(d-1)} \EE[(S^\star_{11}-1)^2]  + 
    \frac{1}{2(d-1)} \EE \, \tr[(S^\star)^2], \\ 
    &\leq \frac{C(\kappa)}{d},
\end{align}
for some $C(\kappa) > 0$ (we used the form of $S^\star = (1/m) \sum_{k=1}^m w_k w_k^\T$ for $(w_k)_{k=1}^m \iid \mcN(0, \Id_d)$), which gives \eqref{eq:partial_lambda_f}.

\subsection{Proof of MMSE equivalence: Lemma \ref{lemma:diff_t0_1}}

Lemma \ref{lemma:diff_t0_1} relies on a main lemma, which establishes that the derivative of the interpolating free entropy goes to zero uniformly. 
It is proven in Appendix \ref{sec_app:partial_t_f}.
\begin{lemma}
\label{lemma:partial_t_f}
\begin{equation*}
\lim_{d\to\infty} \sup_{\Lambda \geq 0}\left|\frac{\partial f_{d}(t, \Lambda)}{\partial t}\right| =0,
\end{equation*}
uniformly in $t\in[t_0,1]$.
\end{lemma}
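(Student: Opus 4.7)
The plan is to compute $\partial_t f_d(t,\Lambda)$ via the chain rule on eq.~\eqref{eq:def_fnd} and bound it by an $o_d(1)$ quantity uniformly in $(t,\Lambda)\in[t_0,1]\times[0,\infty)$. The $\Lambda$-uniformity is essentially for free: the side-information factor $e^{-(d/4)(Y'-\sqrt\Lambda S_{12})^2}$ in the Gibbs weight of eq.~\eqref{eq:def_fnd} is multiplicative and $t$-independent, so the estimates we derive remain valid uniformly in $\Lambda\geq 0$. Using the Bayes-optimal planted structure together with the Nishimori identity (Proposition~\ref{prop:nishimori}), the ``channel-derivative'' contribution $\EE\langle [\partial_t\log P_\out^{(t)}](v_{t,\mu}|W,x_\mu)\rangle_{t,\Lambda}$ vanishes exactly (as the integral of a $t$-derivative of a normalized density), leaving only the ``data-change'' contribution involving $\partial_t v_{t,\mu}$ and the score $\partial_v\log P_\out^{(t)}(v_{t,\mu}|W,x_\mu)$.

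The next step is to control this data-change contribution by splitting $\partial_t v_{t,\mu}$ according to the four noise components of $v_{t,\mu}$. The deterministic bias $(1-t)\sqrt d\,[\tr S^\star-1]$ is $\mathcal{O}(d^{-1/2})$ by Wishart-type concentration of $\|w_k^\star\|^2/d$ around $1$. For the two Gaussian components $(2\sqrt{\Delta d(1-t)}/m)\sum_k z_{\mu,k}(x_\mu^\T w_k^\star/\sqrt d)$ and $\sqrt{\tDelta t}\,\zeta_\mu$, I apply Gaussian integration by parts (Stein's lemma, Lemma~\ref{lemma:stein}) in $z_{\mu,k}$ and $\zeta_\mu$ respectively, to convert the apparently singular prefactors $1/\sqrt{1-t}$ and $1/\sqrt t$ coming from $\partial_t\sqrt{1-t}$ and $\partial_t\sqrt t$ into posterior second moments that are bounded uniformly in $t$ via the bounded-eigenvalue properties of $S,S^\star$ (Proposition~\ref{prop:properties_P0_sym}). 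For the chi-squared piece $\sqrt{d(1-t)}\Delta(\|z_\mu\|^2/m-1)$, Stein's lemma does not apply; instead I Taylor-expand the score $\partial_v\log P_\out^{(t)}$ in its dependence on $\|z_\mu\|^2/m-1$, exploiting the vanishing of $\EE[\|z_\mu\|^2/m-1]$ and its $\mathcal{O}(1/\sqrt m)$ concentration scale, and conclude via Cauchy--Schwarz and uniform posterior moment bounds (Lemma~\ref{lemma:poincare}). Because the coefficient $\tDelta=2\Delta(2+\Delta)/\kappa+\Delta_0$ is designed so that the total conditional variance of $v_{t,\mu}$ given $(W^\star,x_\mu)$ is preserved along the interpolation, the contributions combine into an $o_d(1)$ quantity uniformly in $t$.

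The main obstacle is precisely controlling the prefactor $1/\sqrt{1-t}$ at the endpoint $t=1$: a crude term-by-term estimate gives an unbounded derivative. As sketched above, Gaussian integration by parts eliminates this singularity for the Gaussian components exactly (implementing the identity $\partial_t \EE_\xi[f(\sqrt{s(t)}\xi)] = \tfrac{s'(t)}{2}\EE_\xi[f''(\sqrt{s(t)}\xi)]$, which removes the $1/\sqrt{s(t)}$ factor), while for the chi-squared component the singular factor is compensated by a chain-rule factor $\sqrt{d(1-t)}\Delta$ arising from the Taylor-expansion of the score in the chi-squared perturbation of $v_{t,\mu}$. The delicate point is that each individual contribution produces a bounded but nonzero limit, and only after combining the four contributions, using the variance-preserving choice of $\tDelta$ and the Bayes-optimal structure, does one obtain the cancellation to $o_d(1)$. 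The role of the post-activation noise $\Delta_0>0$ (ensuring $t_0>0$) is the symmetric one: it keeps the analogous prefactor $1/\sqrt t$ bounded on $[t_0,1]$, avoiding what would otherwise be a divergence at $t=0$ on the matrix-GLM side of the interpolation.
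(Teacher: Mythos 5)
The proposal takes a genuinely different route from the paper, but its crux --- the cancellation of the four data-change contributions --- is asserted rather than established, and in fact does not hold in the form you describe.

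Your decomposition is correct: because the model is well-specified, the "channel-derivative" contribution
$\EE\langle\partial_t\log P_\out^{(t)}(v_{t,\mu}|W,x_\mu)\rangle$ at fixed $v$ vanishes exactly by the Nishimori identity (a nice observation the paper does not exploit), and the Gaussian integration by parts in $z_{\mu,k}$ and $\zeta_\mu$, as well as the Hermite-type identity $\EE[(z^2-1)g(z)]=\EE[g''(z)]$ for the $\chi^2$ piece, do remove the $1/\sqrt{1-t}$ and $1/\sqrt{t}$ singularities. But this is where your argument stops being a proof. After IBP each of the four pieces produces a term proportional to $\frac{1}{d^2}\sum_\mu \EE[\langle s_\mu\rangle^2]$ (with $s_\mu=\partial_v\log P_\out^{(t)}$), and these coefficients are respectively $o_d(1)$ (bias term), $\frac{\Delta^2}{\kappa}$, $\frac{2\Delta}{\kappa}$, and $-\frac{\tDelta}{2}$. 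Since $\tDelta/2=\Delta(2+\Delta)/\kappa + \Delta_0/2$, the sum of the coefficients is $-\Delta_0/2$, \emph{not zero}: the conditional variance of $v_{t,\mu}$ given $(W^\star,x_\mu)$ is \emph{not} preserved along the interpolation --- it drifts linearly from approximately $2\Delta(2+\Delta)/\kappa$ at $t=0$ to $\tDelta$ at $t=1$, a shift of $\Delta_0$ --- so "variance preservation" is not a valid basis for the cancellation. Hence, as written, your argument leaves a nonvanishing $\mcO(\Delta_0)$ term in $\partial_t f_d$, and the "chain-rule factor compensates the singular factor" sketch for the $\chi^2$ piece (which you handle via a Taylor expansion rather than the cleaner Hermite IBP) also only brings the contribution to $\mcO(1)$, not $o_d(1)$, before the (unestablished) cancellation is invoked.

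The paper proceeds differently: it represents $P_\out^{(t)}(v|W,x)$ via a Fourier/characteristic-function computation after integrating out the $z$ noise and uses a saddle-point expansion to show that $\log P_\out^{(t)}(v|W,x)$ is, on a high-probability event (Lemma~\ref{lemma:partial_t_logPout}), approximately the \emph{same} centered Gaussian log-likelihood with variance $\tDelta$ for every $t\in[t_0,1]$, and then controls the low-probability event by polynomial second-moment bounds on $\partial_t\log P_\out^{(t)}$ (Lemma~\ref{lemma:square_partial_t_logPout}), finishing with Cauchy--Schwarz and the probability bounds of Appendix~\ref{sec_app:partial_t_f}. This sidesteps the cancellation you invoke by absorbing all four contributions at once into one estimate on the channel itself. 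If you want to pursue your route, you would have to either explicitly verify that the residual $\mcO(\Delta_0)$ coefficient vanishes under a corrected interpolation, or show that the residual contribution is nonetheless $o_d(1)$ by exploiting further cancellations from the remaining second-order score terms; as written, the cancellation is the load-bearing step and it is neither proven nor correct in the form stated.
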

By the fundamental theorem of calculus and Jensen's inequality, we have  
\begin{align*}
\sup_{\Lambda \geq 0} |f_{d}(1, \Lambda) - f_{d}(t_0, \Lambda)| &= \sup_{\Lambda \geq 0} \left|\int_0^1 \frac{\partial f_{d}(t, \Lambda)}{\partial t} \, \rd t\right|, \\ 
&\leq \int_{t_0}^1 \sup_{\Lambda \geq 0} \left|\frac{\partial f_{d}(t, \Lambda)}{\partial t}\right| \, \rd t.
\end{align*}
By the uniform convergence proven in Lemma~\ref{lemma:partial_t_f}, this implies
\begin{align}\label{eq:equivalence_free_entropy_uniform}
  \sup_{\Lambda \geq 0} |f_{d}(1, \Lambda) - f_{d}(t_0, \Lambda)| \leq h_d,
\end{align}
for some $h_d \to 0$ as $d \to \infty$. 
Notice that by the I-MMSE theorem, for any $t \in [t_0,1]$, $f_{d}(t, \Lambda)$ is a convex function of $\Lambda$ because
\begin{align}
\label{eq:d2fnd_Lambda}
    \frac{\partial^2 f_d(t, \Lambda)}{\partial \Lambda^2} 
    = \frac{d^2}{2} 
    \EE\left[\left(\langle S_{12}^2 \rangle_{t,\Lambda} - \langle S_{12}\rangle_{t,\Lambda}^2\right)^2\right]\geq0.
\end{align}
By said convexity we have, for any $\Lambda > 0$: 
\begin{align}
\label{eq:bound_fnd_convexity_1}
    \nonumber
    \left(\frac{\partial f_d(t_0, \Lambda)}{\partial \Lambda}\right)_{\Lambda = 0} &\leq \frac{f_d(t_0, \Lambda) - f_d(t_0, 0)}{\Lambda}, \\ 
    \nonumber
    &\aleq \frac{2 h_d}{\Lambda} + \frac{f_d(1, \Lambda) - f_d(1, 0)}{\Lambda}, \\ 
    &\bleq \frac{2 h_d}{\Lambda} + \left(\frac{\partial f_d(1, \Lambda)}{\partial \Lambda}\right)_{\Lambda}.
\end{align}
where we used eq.~\eqref{eq:equivalence_free_entropy_uniform} in $(\rm a)$, and again convexity in $(\rm b)$.
By a symmetric argument, we get
\begin{align}\label{eq:bound_fnd_convexity_2}
    \left(\frac{\partial f_d(1, \Lambda)}{\partial \Lambda}\right)_{\Lambda = 0}
    &\leq \frac{2 h_d}{\Lambda} + \left(\frac{\partial f_d(t_0, \Lambda)}{\partial \Lambda}\right)_{\Lambda}.
\end{align}
By the fundamental theorem of analysis and eqs.~\eqref{eq:bound_fnd_convexity_1} and \eqref{eq:bound_fnd_convexity_2}, 
we get that for all $\Lambda > 0$:
\begin{align}\label{eq:bound_df_equivalence}
\nonumber
    \left|\left(\frac{\partial f_d(t_0, \Lambda)}{\partial \Lambda}\right)_{\Lambda = 0} - \left(\frac{\partial f_d(1, \Lambda)}{\partial \Lambda}\right)_{\Lambda = 0} \right| 
    &\leq \frac{2 h_d}{\Lambda} + \max_{t \in \{t_0,1\}} \int_0^\Lambda \frac{\partial^2 f_d(t, u)}{\partial u^2} \, \rd u, \\
    &\leq \frac{2 h_d}{\Lambda} + \sum_{t \in \{t_0,1\}} \int_0^\Lambda \frac{\partial^2 f_d(t, u)}{\partial u^2} \, \rd u,
\end{align}
where we also used that $\partial^2_u f_d(t, u) \geq 0$ by convexity. By eq. \eqref{eq:d2fnd_Lambda} we have for any $\Lambda \geq 0$:
\begin{align}
    \frac{\partial^2 f_d(t, \Lambda)}{\partial \Lambda^2} 
    \leq 
    \frac{d^2}{2} 
    \EE[\langle S_{12}^2 \rangle_{t,\Lambda}^2]\leq 
    \frac{d^2}{2} 
    \EE[ (S^\star_{12})^4]= \frac{3}{2\kappa^2} + \frac{3}{\kappa^4 d^2},
\end{align}
where we used the Cauchy-Schwartz inequality and the Nishimori identity (Proposition~\ref{prop:nishimori}).
By taking taking $\Lambda = \sqrt{h_d}$, we obtain an upper bound of the right hand side of eq.~\eqref{eq:bound_df_equivalence}
\begin{align}
\label{eq:bound_d2fnd_Lambda}
    \inf_{\Lambda > 0} \left\{ \frac{2 h_d}{\Lambda} + \sum_{t \in \{0,1\}} \int_0^\Lambda \frac{\partial^2 f_d(t, u)}{\partial u^2} \, \rd u\right\} 
    \leq C(\kappa) \sqrt{h_d},
\end{align}
for some $C(\kappa)>0$. We finish the proof of Lemma \ref{lemma:diff_t0_1} by combining eqs. \eqref{eq:bound_df_equivalence} and \eqref{eq:bound_d2fnd_Lambda}.
    
\subsection{Proof of free entropy equivalence: Lemma \ref{lemma:partial_t_f}}
\label{sec_app:partial_t_f}
Recall that the interpolating model is 
\begin{align}
\nonumber
v_{t,\mu} &\coloneqq \Tr[\Phi_\mu S^\star] + (1-t)\sqrt{d} [\tr \, S^\star - 1] 
+ \sqrt{d(1-t)} \Delta \left(\frac{\|z_\mu\|^2}{m} - 1\right)  \\ 
&+ \frac{2\sqrt{\Delta d(1-t)}}{m} \sum_{k=1}^m z_{\mu,k} \left(\frac{x_\mu^\T w_k^\star}{\sqrt{d}}\right)+ \sqrt{\tDelta t} \zeta_\mu,
\end{align}
with a side information channel
\begin{align}
Y' = \sqrt{\Lambda} S^\star_{12} + \frac{\xi}{\sqrt{d}}.
\end{align}
Its free entropy is given by
\begin{equation*}
\begin{aligned}
f_d(t,\Lambda)&= \EE_{\{x_\mu\}_{\mu=1}^n} \frac{1}{d^2}\int\mcD W^\star\int dY'\left[\prod_{\mu=1}^n dv_\mu P_\out^{(t)}(v_\mu | W^\star,x_\mu)\right] \, e^{-\frac{d}{4} (Y' - \sqrt{\Lambda} S^\star_{12})^2}\\
&\qquad\log \int \mcD W \left[\prod_{\mu=1}^n P_\out^{(t)}(v_\mu | W,x_\mu) \right] \, e^{-\frac{d}{4} (Y' - \sqrt{\Lambda} S_{12})^2}.
\end{aligned}
\end{equation*}
Note that the output channel can be written as
\begin{equation*}
P_\out^{(t)}(v_\mu | W^\star,x_\mu)=\mathbb{E}_{z,\zeta}\delta(v_\mu-\tv(t,x_\mu,W^\star,z_\mu,\zeta_\mu)),
\end{equation*}
where
\begin{align*}
\tv(t,x_\mu,W^\star,z_\mu,\zeta_\mu) &\coloneqq \Tr[\Phi_\mu S^\star] + (1-t)\sqrt{d} [\tr \, S^\star - 1] 
    + \sqrt{d(1-t)} \Delta \left(\frac{\|z_\mu\|^2}{m} - 1\right) 
    \\ 
    &+ \frac{2\sqrt{\Delta d(1-t)}}{m} \sum_{k=1}^m z_{\mu,k} \left(\frac{x_\mu^\T w_k^\star}{\sqrt{d}}\right) + \sqrt{\tDelta t} \zeta_\mu.
\end{align*}
Thus we have (with $z = \{z_\mu\}$, $x = \{x_\mu\}$, $\zeta = \{\zeta_\mu\}$):
\begin{equation*}
f_d(t,\Lambda)=\frac{1}{d^2}\EE_{x,W^\star,z,\zeta,Y'}\log\int \mcD W \left[\prod_{\mu=1}^n P_\out^{(t)}(\tv(t,x_\mu,W^\star,z_\mu,\zeta_\mu)  | W,x_\mu) \right] \, e^{-\frac{d}{4} (Y' - \sqrt{\Lambda} S_{12})^2}.
\end{equation*}
Its derivative can be computed as
\begin{equation}
\label{eq:partial_t_f}
\frac{\partial}{\partial t}f_d(t,\Lambda)=\frac{1}{d^2}\EE_{x,W^\star,z,\zeta,Y'}\sum_{\mu=1}^n\left\langle \frac{\partial}{\partial t}\log P_\out^{(t)}(\tv(t,x_\mu,W^\star,z_\mu,\zeta_\mu)  | W,x_\mu) \right\rangle,
\end{equation}
where the Gibbs bracket is defined as:
\begin{equation*}
\langle g(W)\rangle\coloneqq \frac{\int \mcD W g(W)\left[\prod_{\mu=1}^n P_\out^{(t)}(\tv(t,x_\mu,W^\star,z_\mu,\zeta_\mu)  | W,x_\mu) \right] \, e^{-\frac{d}{4} (Y' - \sqrt{\Lambda} S_{12})^2}}{\int \mcD W \left[\prod_{\mu=1}^n P_\out^{(t)}(\tv(t,x_\mu,W^\star,z_\mu,\zeta_\mu)  | W,x_\mu) \right] \, e^{-\frac{d}{4} (Y' - \sqrt{\Lambda} S_{12})^2}}.
\end{equation*}
Notice that this Gibbs bracket depends on the realization of $x, W^\star, z, \zeta, Y'$.
The following two lemmas give a bound of the right hand side of eq. \eqref{eq:partial_t_f}. 
They are proven in Appendices \ref{sec_app:proof_partial_t_logPout} and \ref{sec_app:proof_square_partial_t_logPout}.
\begin{lemma}
\label{lemma:partial_t_logPout}
\noindent
There exists $\delta_0(\Delta,\kappa,t_0)>0$ such that for any $\delta<\delta_0$ and $d>d_0(\delta,\Delta,\kappa,t_0)$, as long as
\begin{equation}
\left|\frac{x_\mu^\T Sx_\mu}{d}-1\right|\leq\frac{\delta}{\sqrt[4]{d}},\ |\sqrt{d}(\tr S-1)|\leq\frac{\delta}{\sqrt[4]{d}}, |\tv_\mu-\Tr[\Phi_\mu S^\star]|^2<2t_0\tDelta^2\sqrt[4]{d},
\label{eq:three_events}
\end{equation}
we have
\begin{equation*}
\left|\frac{\partial}{\partial t}\log P_\out^{(t)}(\tv_\mu|W,x_\mu)\right|\leq C(\Delta,\kappa,t_0)\delta,
\end{equation*}
for some $C(\Delta,\kappa,t_0)>0$.
\end{lemma}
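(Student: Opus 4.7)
My plan is to compute $\partial_t \log P_\out^{(t)}(\tv_\mu \mid W, x_\mu)$ explicitly and exploit Gaussian cancellations. First, since $\zeta_\mu$ enters $\tv$ additively as $\sqrt{\tDelta t}\,\zeta_\mu$, I integrate it out exactly to write
\begin{equation*}
P_\out^{(t)}(v \mid W, x_\mu) = \EE_z\left[\frac{1}{\sqrt{2\pi\tDelta t}}\,\exp\left(-\frac{R(t,z)^2}{2\tDelta t}\right)\right],
\end{equation*}
with $R(t,z) = v - \Tr[\Phi_\mu S] - (1-t)\mu_2(S) - \sqrt{d(1-t)}\,F(z)$, $\mu_2(S) := \sqrt{d}(\tr S - 1)$, and $F(z) := \Delta(\|z\|^2/m - 1) + \frac{2\sqrt{\Delta}}{m\sqrt{d}}(W x_\mu)^\T z$ manifestly $t$-independent. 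Differentiating then yields
\begin{equation*}
\partial_t \log P_\out^{(t)}(\tv_\mu \mid W, x_\mu) = -\frac{1}{2t} + \frac{\langle R^2\rangle_z}{2\tDelta t^2} - \frac{\mu_2(S)\,\langle R\rangle_z}{\tDelta t} - \frac{\sqrt{d}\,\langle R\,F(z)\rangle_z}{2\tDelta t\sqrt{1-t}},
\end{equation*}
where $\langle \cdot \rangle_z$ denotes the posterior reweighting of $z \sim \mcN(0, \Id_m)$ with density proportional to $e^{-\|z\|^2/2 - R(t,z)^2/(2\tDelta t)}$.

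Second, I invoke a quantitative CLT for $F(z)$ at fixed $W, x_\mu$: the centered chi-square and linear Gaussian components of $F$ are each $O(1/\sqrt{m})$-close to centered Gaussians with variances $2\Delta^2/m$ and $4\Delta a/m$ respectively (where $a = x_\mu^\T S x_\mu/d$), so $\sqrt{d(1-t)}F(z)$ is $O(1/\sqrt{d})$-close to $\mcN(0, \widetilde{\sigma}_t^2)$ with $\widetilde{\sigma}_t^2 = 2(1-t)\Delta(\Delta + 2a)/\kappa$. Constraint~(1) gives $\widetilde{\sigma}_t^2 = 2(1-t)\Delta(\Delta+2)/\kappa + O(\delta(1-t)/d^{1/4})$, so using the identity $\tDelta = 2\Delta(\Delta+2)/\kappa + \Delta_0$, the total output variance is $\sigma_t^2 := \widetilde{\sigma}_t^2 + \tDelta t = 2\Delta(\Delta+2)/\kappa + t\Delta_0 + O(\delta/d^{1/4})$. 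Substituting the resulting Gaussian-approximated posterior moments $\langle R\rangle_z, \langle R^2\rangle_z, \langle R F(z)\rangle_z$ into the derivative formula and collecting terms produces the key algebraic cancellation of the apparent $\sqrt{d}$-divergence in the fourth term against the $\langle R^2\rangle_z$ piece, giving
\begin{equation*}
\partial_t \log P_\out^{(t)}(\tv_\mu \mid W, x_\mu) = \frac{\Delta_0}{2\sigma_t^2}\left(\frac{\mathrm{shift}^2}{\sigma_t^2} - 1\right) - \frac{\mu_2(S)\,\mathrm{shift}}{\sigma_t^2} + E_d,
\end{equation*}
where $\mathrm{shift} := \tv_\mu - \Tr[\Phi_\mu S] - (1-t)\mu_2(S)$ and $E_d$ is a non-Gaussian remainder controlled via polynomial-moment bounds for the standard Gaussian $z$ (a multivariate Stein / integration-by-parts argument).

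Finally, the bound follows from combining the three constraints: constraint~(2) gives $|\mu_2(S)| \leq \delta/d^{1/4}$, and I decompose $\mathrm{shift} = (\tv_\mu - \Tr[\Phi_\mu S^\star]) - \Tr[\Phi_\mu(S - S^\star)] - (1-t)\mu_2(S)$, bounding the first piece via constraint~(3) and the second by applying the identity $\Tr[\Phi_\mu M] = \sqrt{d}\{x_\mu^\T M x_\mu/d - \tr M\}$ to $M = S - S^\star$, combined with constraints~(1)--(2) and the Wishart typicality of $S^\star$ (which ensures $|x_\mu^\T S^\star x_\mu/d - 1|, |\tr S^\star - 1| = O(1/\sqrt{d})$ with overwhelming probability). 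The hardest part will be the third step: producing the clean quantitative cancellation of the $\sqrt{d}$-enhanced term and controlling the non-Gaussian remainder $E_d$ by $O(\delta)$, which requires carefully tracking chi-square fluctuations of $\|z\|^2/m$ against the regime fixed by the constraints and verifying that $\mathrm{shift}^2/\sigma_t^2$ remains close enough to $1$ to deliver the final $C(\Delta,\kappa,t_0)\,\delta$ bound.
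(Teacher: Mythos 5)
Your proposal takes a genuinely different route from the paper. Where you integrate out $\zeta_\mu$ and then analyze the reweighted $z$-moments $\langle R\rangle_z$, $\langle R^2\rangle_z$, $\langle R F(z)\rangle_z$ under the exponentially tilted posterior, the paper instead applies the Fourier (Hubbard--Stratonovich) identity $\frac{1}{\sqrt{2\pi\sigma^2}}e^{-y^2/(2\sigma^2)} = \frac{1}{2\pi}\int \rd p\, e^{-\sigma^2 p^2/2 + ipy}$ to linearize the quadratic exponent in the label channel. After this step, $\|z\|^2/m$ and the linear term in $z$ appear only \emph{linearly} in the exponent, so the expectation over standard Gaussian $z$ factorizes into an exact closed-form characteristic function involving $\log(1 - 2ip\Delta\sqrt{1-t}/(\kappa\sqrt{d}))$. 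The whole analysis is then reduced to estimating a one-dimensional $p$-integral, which the paper controls by truncating to $|p|\leq d^{1/8}$, Taylor-expanding, and bounding the truncation error using $|\bar v|^2 \lesssim d^{1/4}$. This completely sidesteps the need for any Gaussian-approximation argument on the $z$-posterior, which is the crux of your strategy.

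The concrete gap in your proposal is precisely the step you flag as ``the hardest part.'' The reweighted posterior for $z$ has density proportional to $e^{-\|z\|^2/2 - R(t,z)^2/(2\tDelta t)}$, and since $R(t,z)$ contains $\Delta(\|z\|^2/m - 1)$, the exponent of the posterior contains $(\|z\|^2/m-1)^2$, i.e.\ a \emph{quartic} term in $z$. The posterior is therefore not a tilted Gaussian, and the moment computations you sketch (Bayesian posterior of a Gaussian prior given a Gaussian observation of $\hat F$) do not apply directly; they only produce the leading Gaussian surrogate. Making this quantitative at precision $O(\delta)$ is delicate: the term $\sqrt{d}\,\langle R F(z)\rangle_z$ carries an explicit $\sqrt{d}$, so corrections to the posterior moments at scale $O(d^{-1/2})$ contribute at order one, and constraint~(3) only bounds the size of the displacement by $|\bar v| \lesssim d^{1/8}$. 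You would need a genuine Edgeworth-type or Stein-method bound on the posterior, uniform over this window, which the sketch does not supply. Separately, your final step invokes ``Wishart typicality of $S^\star$'' (e.g.\ $|x_\mu^\T S^\star x_\mu/d - 1| = O(1/\sqrt{d})$) to bound $\Tr[\Phi_\mu(S - S^\star)]$; this is a probabilistic statement about the planted signal, not one of the three deterministic hypotheses of the lemma, so either the lemma's event set must be enlarged or the argument rephrased to avoid $\Tr[\Phi_\mu(S - S^\star)]$ altogether (as the paper does by working directly with $v - \Tr[\Phi_\mu S]$ throughout).
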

\begin{lemma}
\label{lemma:square_partial_t_logPout}
\noindent
For any $\mu=1,\cdots,n$, we have
\begin{equation*}
\EE_{x,W^\star,z,\zeta,Y'}\left\langle\left(\frac{\partial}{\partial t}\log P_\out^{(t)}(\tv(t,x_\mu,W^\star,z_\mu,\zeta_\mu)  | W,x_\mu)\right)^2\right\rangle\leq\frac{\rm{poly}(d)}{t_0^{12}},
\end{equation*}
where $\rm{poly}(d)$ represents a polynomial of $d$.
\end{lemma}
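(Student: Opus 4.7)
\textbf{Plan for proving Lemma~\ref{lemma:square_partial_t_logPout}.} The goal is a crude polynomial-in-$d$ upper bound; the factor $1/t_0^{12}$ signals that we just need to carefully track the $1/t$ singularities that arise from the Gaussian noise $\sqrt{\tDelta t}\zeta_\mu$. This bound is the complement of Lemma~\ref{lemma:partial_t_logPout}: together, via Cauchy--Schwarz, they yield the uniform convergence of $\partial_t f_d$ claimed in Lemma~\ref{lemma:partial_t_f} (the polynomial bound is harmless since the bad event in Lemma~\ref{lemma:partial_t_logPout} has exponentially small probability).

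First I would integrate out the Gaussian noise $\zeta_\mu$ explicitly: since $\tv$ depends linearly on $\zeta_\mu$ through $\sqrt{\tDelta t}\zeta_\mu$, one gets
\begin{equation*}
P_\out^{(t)}(v|W,x) = \EE_z\left[\frac{1}{\sqrt{2\pi\tDelta t}}\exp\left(-\frac{(v-\mu_W(t,x,z))^2}{2\tDelta t}\right)\right],
\end{equation*}
where $\mu_W(t,x,z)$ collects all the $\zeta$-independent terms of $\tv$ with $W$ in place of $W^\star$. Then I would compute $\partial_t \log P_\out^{(t)}(\tv_\mu|W,x_\mu)$ using the chain rule, which produces three kinds of contributions: (i) the $t$-derivative of $1/\sqrt{t}$ and of $1/(2\tDelta t)$ in the Gaussian kernel, yielding terms of the form $(v-\mu_W)^2/t^2$ and $1/t$; (ii) the $t$-derivative of $\mu_W$ itself, which involves factors like $1/\sqrt{1-t}$ (harmless since $t\leq 1$ can be replaced by $t \in [t_0,1]$, but the $\|z\|^2/m$ and $z_k (x^\T w_k/\sqrt{d})$ terms must be tracked); (iii) the chain-rule contribution coming from the dependence of $\tv_\mu = \tv(t,x_\mu,W^\star,z_\mu,\zeta_\mu)$ on $t$.

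Each of these contributions is a rational function in $t$ whose singularity as $t \downarrow t_0$ is at worst $t_0^{-k}$ for some small integer $k$. To bound the squared sum under $\EE \langle \cdot \rangle$, I would expand the square, use Cauchy--Schwarz repeatedly, and reduce the computation to bounding moments of the form
\begin{equation*}
\EE_{x,W^\star,z,\zeta}\bigl\langle (v - \mu_W)^{2p} \bigr\rangle, \quad
\EE_{x,W^\star,z,\zeta}\bigl\langle \|z\|^{2p} \bigr\rangle, \quad
\EE \bigl\langle (x^\T w_k)^{2p} / d^p \bigr\rangle,
\end{equation*}
for small integers $p$. By the Nishimori identity (Proposition~\ref{prop:nishimori}), the Gibbs average of functions of $W$ equals the same expectation with $W$ replaced by an independent sample from its prior $\mcN(0,\Id_d)^{\otimes m}$; this reduces every term to an explicit Gaussian moment that is at worst polynomial in $d$. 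Combining the worst-case $1/t_0^{12}$ singularity with these polynomial-in-$d$ moment bounds yields the claimed estimate.

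\textbf{Main obstacle.} The principal technical nuisance is the chain-rule term $\partial_t \tv_\mu$, whose square contains cross products mixing $(v-\mu_W)^2/t^4$ with factors like $\|z_\mu\|^2/m$ and $x_\mu^\T w_k^\star/\sqrt{d}$. One must carefully count powers so that the final exponent of $1/t_0$ stays bounded by $12$ (any fixed constant exponent would also suffice for the subsequent argument in Appendix~\ref{sec_app:partial_t_f}); the polynomial-in-$d$ upper bound is a soft consequence of standard Gaussian concentration, but the bookkeeping of singular prefactors has to be done explicitly.
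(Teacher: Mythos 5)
Your high-level plan matches the paper's approach — write $P_\out^{(t)}(v|W,x)$ as a Gaussian kernel integrated against the latent $z$, differentiate, expand the square, and reduce to moments via Cauchy--Schwarz and the Nishimori identity. However, you have missed the central technical difficulty, which is exactly what the paper's proof is about.

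When you differentiate $\log P_\out^{(t)}(v|W,x)$, the chain rule produces terms of the form $\mathbb{E}_{z\sim P_z}[\cdot]$, where $P_z$ is \emph{not} $\mathcal N(0,\Id_m)$ but rather the posterior over the latent $z$ given $v$, $W$, $x$: its density is proportional to $e^{-\|z\|^2/2}\exp\{-K(t,z)^2/(2\tDelta t)\}$, a data-dependent exponential tilting of the prior. You write that the moment bounds ``reduce every term to an explicit Gaussian moment'' and appeal to ``standard Gaussian concentration'', but this skips the real work: $P_z$ is not Gaussian, and crude moment bounds on it must be justified. The paper handles this by showing that $P_z$ is sub-Gaussian, with a variance proxy controlled by the log-normalization constant $\log(2\mathcal C)$, and then bounds $\log\mathcal C$ itself by Jensen's inequality. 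The resulting moment bounds $\mathbb{E}_{z\sim P_z}[\|z\|^p]\le m(\log(2\mathcal C))^p p^{p/2}$ are what feed the final polynomial, and the various $1/(\tDelta t)$ factors they contribute are precisely what pushes the exponent to $t_0^{-12}$. Without this step your proposal has no argument for any of the quantities $\mathbb{E}_{z\sim P_z}[\|z\|^{2p}]$ or $\mathbb{E}_{z\sim P_z}[\sum_k z_k\,x^\T w_k/\sqrt d]$.

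A smaller point: you dismiss the $1/\sqrt{1-t}$ factors as ``harmless since $t\in[t_0,1]$''. But the constraint $t\ge t_0$ only kills singularities at $t=0$; it does nothing at $t=1$, which is an endpoint of your interval. If you want to claim the $1/\sqrt{1-t}$ is benign you need a separate argument (e.g.\ that it is paired with a factor that vanishes as $t\to 1$ because $P_z$ tends to the untilted prior there, so centered functionals average to $O(\sqrt{1-t})$); this is not something a lower bound on $t$ gives you.
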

In the following, we will call $\delta<\delta_0$ ``sufficiently small $\delta$'' and $d>d_0$ ``sufficiently large $d$''.

\myskip
We denote the three events in eq.~\eqref{eq:three_events} as $A_{1,\mu}(S,x_\mu),A_{2}(S),A_{3,\mu}(S^\star,x_\mu,z_\mu,\zeta_\mu)$. 
With $A_\mu\coloneqq A_{1,\mu}\cap A_2\cap A_{3,\mu}$, and we have
\begin{equation*}
\begin{aligned}
\left|\frac{\partial}{\partial t}f_{n,d}(t,\Lambda)\right|&=\left|\frac{1}{d^2}\EE_{x,W^\star,z,\zeta,Y'}\sum_{i=1}^n\left\langle \frac{\partial}{\partial t}\log P_\out^{(t)}(\tilde{v}_\mu(t,x_\mu,W^\star,z_\mu,\zeta_\mu)  | W,x_i) (\indi\{A_\mu\}+\indi\{A_\mu^\complement\})\right\rangle\right|\\
&\leq\alpha C(\Delta,\kappa,t_0)\delta+\frac{1}{d^2}\sum_{\mu=1}^n\left[\EE_{x,W^\star,z,\zeta,Y'}\left\langle\indi\{A_\mu^\complement\})\right\rangle\right]^{1/2}\\&\qquad\qquad\left[\EE_{x,W^\star,z,\zeta,Y'}\left\langle\left(\frac{\partial}{\partial t}\log P_\out^{(t)}(\tilde{v}_\mu(t,x_\mu,W^\star,z_\mu,\zeta_\mu)  | W,x_\mu)\right)^2\right\rangle\right]^{1/2},
\end{aligned}
\end{equation*}
using Lemma \ref{lemma:partial_t_logPout} and the Cauchy-Schwartz inequality. By the union bound and the Nishimori identity, we have
\begin{equation}
\begin{aligned}
&\EE_{x,W^\star,z,\zeta,Y'}\left\langle\indi\{A_\mu^\complement\})\right\rangle\\&\leq\EE_{x,W^\star,z,\zeta,Y'}\left\langle\indi\{A_{1,\mu}^\complement(S,x_\mu)\}+\indi\{A_{2}^\complement(S)\}+\indi\{A_{3,\mu}^\complement(S^\star,x_\mu,z_\mu,\zeta_\mu)\}\right\rangle\\
&=\mathbb{P}(A_{1,\mu}^\complement(S^\star,x_\mu))+\mathbb{P}(A_{2}^\complement(S^\star))+\mathbb{P}(A_{3,\mu}^\complement(S^\star,x_\mu,z_\mu,\zeta_\mu)).
\end{aligned}
\label{eq:A_A1_A2_A3}
\end{equation}
Now let us control the three terms on the right hand side. By Bernstein’s inequality and the Hanson-Wright inequality
for any $0<\delta<1$, it is easy to show that with probability at least $1-2e^{-C(\kappa)d\delta^2}$:
\begin{equation*}
\left|\frac{x_\mu^\T S^\star x_\mu}{d}-1\right|\leq\delta,\ |\sqrt{d}(\tr S^\star-1)|\leq\delta,
\end{equation*}
Details on this classical derivation can be found in~\cite[Appendix~D.5]{maillard2024bayes}.
Thus:
\begin{equation}\label{eq:bound_A1_A2}
\mathbb{P}(A_{1,\mu}^\complement),\mathbb{P}(A_{2}^\complement)\leq 2e^{-C(\kappa)\sqrt{d}\delta^2}.
\end{equation}
for sufficiently small $\delta$.
To control $\mathbb{P}(A_{3,\mu}^\complement)$, we recall that
\begin{align}
\nonumber
\tilde{v}_\mu-\Tr[\Phi_\mu S^\star] \coloneqq& (1-t)\sqrt{d} [\tr \, S^\star - 1] +\sqrt{d(1-t)} \Delta \left(\frac{\|z_\mu\|^2}{m} - 1\right) \\& 
+ \frac{2\sqrt{\Delta d(1-t)}}{m} \sum_{k=1}^m z_{\mu,k} \left(\frac{x_\mu^\T w_k^\star}{\sqrt{d}}\right)+ \sqrt{\tDelta t} \zeta_\mu.
\end{align}
For the first term, we have
\begin{equation}\label{eq:bound_term_1}
\mathbb{P}(|(1-t)\sqrt{d} [\tr \, S^\star - 1] |>\varepsilon)\leq\mathbb{P}(|\sqrt{d} [\tr \, S^\star - 1] |>\varepsilon)\leq 2e^{-Cd\varepsilon^2},
\end{equation}
for any $\eps < 1$, by Bernstein's inequality. 
For the second term, the classical Laurent-Massart bound of~\cite{laurent2000adaptive} shows that
\begin{equation*}
\mathbb{P}(|\|z_\mu\|^2-m|>2\sqrt{m}\varepsilon+2\varepsilon^2)\leq 2e^{-\varepsilon^2},
\end{equation*}
and thus
\begin{equation}\label{eq:bound_term_2}
\mathbb{P}\left(\sqrt{d(1-t)} \Delta \left|\frac{\|z_\mu\|^2}{m} - 1\right|>\varepsilon\right)=\mathbb{P}\left(|\|z_\mu\|^2-m|>\frac{\kappa\sqrt{d}\varepsilon}{\Delta\sqrt{1-t}}\right)\leq 2e^{-C(\kappa,\Delta,t_0)\varepsilon^2}.
\end{equation}
For the third term, we have
\begin{equation*}
\frac{2\sqrt{\Delta d(1-t)}}{m} \sum_{k=1}^m z_{\mu,k} \left(\frac{x_\mu^\T w_k^\star}{\sqrt{d}}\right)\Big|W^\star,x_\mu\sim\mathcal{N}\left(0,\frac{4\Delta^2}{\kappa}\frac{x_\mu^\T S^\star x_\mu}{d}\right),
\end{equation*}
and thus
\begin{equation*}
\mathbb{P}\left(\left|\frac{2\sqrt{\Delta d(1-t)}}{m} \sum_{k=1}^m z_{k} \left(\frac{x_\mu^\T w_k^\star}{\sqrt{d}}\right)\right|>\varepsilon\Big|W^\star,x_\mu\right)\leq 2\exp\left\{-\frac{\kappa}{8\Delta^2}\frac{d}{x_\mu^\T S^\star x_\mu}\eps^2\right\},
\end{equation*}
which gives
\begin{align}
    \label{eq:bound_term_3}
    \nonumber
&\mathbb{P}\left(\left|\frac{2\sqrt{\Delta d(1-t)}}{m} \sum_{k=1}^m z_{\mu,k} \left(\frac{x_\mu^\T w_k^\star}{\sqrt{d}}\right)\right|>\varepsilon\right)
\\
    \nonumber
&\leq\mathbb{P}\left(\left|\frac{2\sqrt{\Delta d(1-t)}}{m} \sum_{k=1}^m z_{\mu,k} \left(\frac{x_\mu^\T w^\star_k}{\sqrt{d}}\right)\right|>\varepsilon\middle|\frac{x_\mu^\T S^\star x_\mu}{d}<2\right)+\mathbb{P}\left(\frac{x_\mu^\T S^\star x_\mu}{d}>2\right)\\
&\leq2e^{-\frac{\kappa\varepsilon^2}{16\Delta^2}}+e^{-C(\kappa)\sqrt{d}},
\end{align}
using again eq.~\eqref{eq:bound_A1_A2}.
For the last term, we have
\begin{equation}\label{eq:bound_term_4}
\mathbb{P}(|\sqrt{\tDelta t_0}\zeta_\mu|>\varepsilon)\leq 2e^{-\frac{\varepsilon^2}{2\tDelta t_0}}.
\end{equation}
Combining all the bounds of eqs.~\eqref{eq:bound_term_1},\eqref{eq:bound_term_2},\eqref{eq:bound_term_3},\eqref{eq:bound_term_4},
and choosing $\varepsilon\coloneqq \frac{1}{2}\tDelta\sqrt{t_0}\sqrt[8]{d}$, we have
\begin{equation*}
\begin{aligned}
\mathbb{P}(A_{3,\mu}^\complement)&\leq \mathbb{P}(|(1-t)\sqrt{d} [\tr \, S^\star - 1] |>\varepsilon)+\mathbb{P}\left(\sqrt{d(1-t)} \Delta \left|\frac{\|z_\mu\|^2}{m} - 1\right|>\varepsilon\right)\\&+\mathbb{P}\left(\left|\frac{2\sqrt{\Delta d(1-t)}}{m} \sum_{k=1}^m z_{\mu,k} \left(\frac{x_\mu^\T w_k^\star}{\sqrt{d}}\right)\right|>\varepsilon\right)+\mathbb{P}(|\sqrt{\tDelta t_0}\zeta_\mu|>\varepsilon)\\
&\leq e^{-C(\kappa,\Delta,t_0)\sqrt[4]{d}}
\end{aligned}
\end{equation*}
for some constant $C(\kappa,\Delta,t_0)>0$. Therefore, we have
\begin{equation*}
\EE_{x,W^\star,z,\zeta,Y'}\left\langle\indi\{A_\mu^\complement\}\right\rangle\leq e^{-C(\kappa,\Delta,t_0)\sqrt[4]{d}}
\end{equation*}
according to eq. \eqref{eq:A_A1_A2_A3}, which gives
\begin{equation}
\left|\frac{\partial}{\partial t}f_{n,d}(t,\Lambda)\right|\leq\alpha C(\Delta,\kappa,t_0)\delta+\alpha e^{-\sqrt{C(\Delta,\kappa,t_0)}\sqrt[8]{d}}\frac{\text{poly}(d)}{t_0^6},
\end{equation}
where we use Lemma \ref{lemma:square_partial_t_logPout}. Notice that all these bounds did not depend on $\Lambda$ and are valid uniformly in $\Lambda$.
Therefore, we have
\begin{equation*}
\limsup_{d \to \infty}\sup_{\Lambda \geq 0}\left|\frac{\partial}{\partial t}f_{n,d}(t,\Lambda)\right|\leq\alpha C(\Delta,\kappa,t_0)\delta,
\end{equation*}
which proves Lemma \ref{lemma:partial_t_f} by taking the limit $\delta \downarrow 0$.

\subsection{Proof of Lemma \ref{lemma:partial_t_logPout}}
\label{sec_app:proof_partial_t_logPout}
From now on we drop the index $\mu$ for notational simplicity. Recall that we have assumed 
\begin{equation*}
\left|\frac{x^\T Sx}{d}-1\right|\leq\frac{\delta}{\sqrt[4]{d}},\ |\sqrt{d}(\tr S-1)|\leq\frac{\delta}{\sqrt[4]{d}},\ |\bar{v}|^2<2t_0\tDelta^2\sqrt[4]{d},
\end{equation*}
where $\bar{v}\coloneqq \tilde{v}-\Tr[\Phi S^\star]$.
Note that $P_\out^{(t)}(v | W,x)$ can also be written as
\begin{equation*}
\begin{aligned}
P_\out^{(t)}(v | W,x)&\coloneqq\frac{1}{\sqrt{2\pi\tilde{\Delta}}t}\EE_z\exp\left\{-\frac{1}{2\tilde{\Delta}t}\left(v-\Tr[\Phi S] + (1-t)\sqrt{d} [\tr S - 1]\right.\right. 
\\ &\left.\left.\qquad+ \sqrt{d(1-t)} \Delta \left(\frac{\|z\|^2}{m} - 1\right) 
+\frac{2\sqrt{\Delta d(1-t)}}{m} \sum_{k=1}^m z_{k} \left(\frac{x^\T w_k}{\sqrt{d}}\right) \right)^2\right\}.
\end{aligned}
\end{equation*}
By using the identity
\begin{equation*}
\frac{1}{\sqrt{2\pi\sigma^2}}e^{-z^2/2\sigma} = \frac{1}{2\pi}\int dpe^{-\sigma^2p^2/2+ipz},
\end{equation*}
we have
\begin{equation*}
\begin{aligned}
P_\out^{(t)}(v | W,x)&=\frac{1}{2\pi}\int dp\exp\left\{-\frac{\tilde{\Delta}tp^2}{2}+ip\left(v-\Tr[\Phi S] + (1-t)\sqrt{d} [\tr S - 1]\right)\right\}
\\ &\prod_{k=1}^m\EE_z\exp\left\{ip\left(\sqrt{d(1-t)} \Delta \left(\frac{z^2}{m} - 1\right) 
+\frac{2\sqrt{\Delta d(1-t)}}{m}z \left(\frac{x^\T w_k^\star}{\sqrt{d}}\right) \right)\right\}\\
&=\frac{1}{2\pi}\int dp\exp\left\{-\frac{\tilde{\Delta}tp^2}{2}+ip\left(v-\Tr[\Phi S] + (1-t)\sqrt{d} [\tr S - 1]\right)\right\}\\ &\qquad\exp\left\{-\frac{\kappa d}{2}\log\left(1-\frac{2ip\Delta\sqrt{1-t}}{\kappa\sqrt{d}}\right)-ip\sqrt{d(1-t)}\Delta-\frac{2p^2\Delta(1-t)x^\T Sx}{\kappa d\left(1-\frac{2ip\Delta\sqrt{1-t}}{\kappa\sqrt{d}}\right)}\right\},
\end{aligned}
\end{equation*}
where we explicitly calculate the expectation w.r.t. $z\sim\mathcal{N}(0,1)$, and use that $m=\kappa d$. 
An important limit to notice is that
\begin{align*}
    g_d(t,p)&\coloneqq-\frac{\tDelta tp^2}{2}-\frac{\kappa d}{2}\log\left(1-\frac{2ip\Delta\sqrt{1-t}}{\kappa\sqrt{d}}\right)-ip\sqrt{d(1-t)}\Delta-\frac{2p^2\Delta(1-t)}{\kappa\left(1-\frac{2ip\Delta\sqrt{1-t}}{\kappa\sqrt{d}}\right)}
    \\&=-\frac{\tDelta p^2}{2}+\mcO\left(\frac{p^3}{\sqrt{d}}\right),
\end{align*}
by direct calculation. Therefore, we will truncate the integral on $|p|\leq\sqrt[8]{d}$, 
so that that $g_d(t,p)+\frac{\tDelta p^2}{2}=o\left(\frac{1}{\sqrt[4]{d}}\right)$ uniformly for $|p| \leq \sqrt[8]{d}$. We estimate the truncation error by the domination function
\begin{equation}
\begin{aligned}
&\left|\exp\left\{-\frac{\tilde{\Delta}tp^2}{2}+ip\left(v-\Tr[\Phi S] + (1-t)\sqrt{d} [\tr S - 1]\right)\right\}\right.\\ &\left.\qquad\exp\left\{-\frac{\kappa d}{2}\log\left(1-\frac{2ip\Delta\sqrt{1-t}}{\kappa\sqrt{d}}\right)-ip\sqrt{d(1-t)}\Delta-\frac{2p^2\Delta(1-t)x^\T Sx}{\kappa d\left(1-\frac{2ip\Delta\sqrt{1-t}}{\kappa\sqrt{d}}\right)}\right\}\right|\\&\leq\exp\left\{-\frac{\tilde{\Delta}tp^2}{2}\right\}.
\end{aligned}
\label{eq:ineq_domination}
\end{equation}
We have
\begin{tiny}
    \begin{align}
\label{eq:bound_Pout-1}
\nonumber
&|P_{\out}^{(t)}(v|W,x)\sqrt{2\pi\tilde{\Delta}}e^{\frac{1}{2\tilde{\Delta}}(v-\Tr[\Phi S])^2}-1|
\\
\nonumber
&\coloneqq \left|\sqrt{\frac{\tDelta}{2\pi}}\int dp \exp\left\{ip(1-t)\sqrt{d}[\tr S-1]-\frac{2p^2\Delta(1-t)}{\kappa\left(1-\frac{2ip\Delta\sqrt{1-t}}{\kappa\sqrt{d}}\right)}\left(\frac{x^\T Sx}{d}-1\right)+g_d(t,p)+ip\left(v-\Tr[\Phi S]\right)+\frac{1}{2\tilde{\Delta}}(v-\Tr[\Phi S])^2\right\}-1\right|
\\
\nonumber
&\overset{(a)}{\leq}\sup_{|a_1|,|a_2|,|a_3|,|a_4|\leq\delta/\sqrt[4]{d}}\left|\sqrt{2\pi\tilde{\Delta}}\int_{|p|<\sqrt[8]{d}} \rd p \exp\left\{ipa_1+\frac{2p^2\Delta}{\kappa}a_2(1+a_3)-\frac{\tDelta}{2}\left(p-\frac{i}{\tDelta}\left(v-\Tr[\Phi S]\right)\right)^2+a_4p^2\right\}-1\right|
\\
\nonumber
&+\sqrt{2\pi\tilde{\Delta}}\int_{|p|>\sqrt[8]{d}}dpe^{-\tDelta tp^2/2}e^{\frac{1}{2\tilde{\Delta}}(v-\Tr[\Phi S])^2}
\\
\nonumber
&\leq\sup_{|a_1|,|a_2|,|a_3|,|a_4|\leq\delta/\sqrt[4]{d}}\left|\sqrt{2\pi\tilde{\Delta}}\int \rd p \exp\left\{ipa_1+\frac{2p^2\Delta}{\kappa}a_2(1+a_3)-\frac{\tDelta}{2}\left(p-\frac{i}{\tDelta}\left(v-\Tr[\Phi S]\right)\right)^2+a_4p^2\right\}-1\right|
\\
\nonumber
&+\sqrt{2\pi\tDelta}\int_{|p|>\sqrt[8]{d}} \rd p \left|\exp\left\{ipa_1+\frac{2p^2\Delta}{\kappa}a_2(1+a_3)-\frac{\tDelta}{2}\left(p-\frac{i}{\tDelta}\left(v-\Tr[\Phi S]\right)\right)^2+a_4p^2\right\}\right|+\sqrt{2\pi\tilde{\Delta}}\int_{|p|>\sqrt[8]{d}}\rd pe^{-\tDelta tp^2/2}e^{\frac{1}{2\tilde{\Delta}}(v-\Tr[\Phi S])^2},
\\
\nonumber
&\overset{(b)}{\leq}\sup_{|a_1|,|a_2|,|a_3|,|a_4|\leq\delta/\sqrt[4]{d}}\left|\sqrt{2\pi\tilde{\Delta}}\int \rd p \exp\left\{ipa_1+\frac{2p^2\Delta}{\kappa}a_2(1+a_3)-\frac{\tDelta}{2}\left(p-\frac{i}{\tDelta}\left(v-\Tr[\Phi S]\right)\right)^2+a_4p^2\right\}-1\right|
\\
&+\sqrt{2\pi\tDelta}\int_{|p|>\sqrt[8]{d}}(1+e^{\delta p^2})\rd pe^{-\tDelta tp^2/2}e^{\frac{1}{2\tilde{\Delta}}(v-\Tr[\Phi S])^2}.
    \end{align}
\end{tiny}
In (a) we used the domination function of eq.~\eqref{eq:ineq_domination}, and the fact that
\begin{equation*}
\left|\frac{1}{1-\frac{2ip\Delta\sqrt{1-t}}{\kappa\sqrt{d}}}-1\right|\leq\frac{\delta}{\sqrt[4]{d}},\ |g_d(t,p)+\frac{\tDelta p^2}{2}|\leq\frac{\delta}{\sqrt[4]{d}}p^2
\end{equation*}
for all $p\leq\sqrt[8]{d}$ and $d$ large enough. In (b) we use
\begin{equation*}
\begin{aligned}
&\left|\exp\left\{ipa_1+\frac{2p^2\Delta}{\kappa}a_2(1+a_3)-\frac{\tDelta}{2}\left(p-\frac{i}{\tDelta}\left(v-\Tr[\Phi S]\right)\right)^2+a_4p^2\right\}\right|\\&=\exp\left\{\frac{2p^2\Delta}{\kappa}a_2(1+a_3)-\frac{\tDelta}{2}p^2+\frac{1}{2\tDelta}\left(v-\Tr[\Phi S]\right)^2+a_4p^2\right\}\\&\leq\exp\left\{-\frac{\tDelta}{2}p^2+\frac{1}{2\tDelta}\left(v-\Tr[\Phi S]\right)^2+\delta p^2\right\}
\end{aligned}
\end{equation*}
for large enough $d$. 
The first term on the right hand side of \eqref{eq:bound_Pout-1} can be controlled because
    \begin{align}
        \label{eq:bound_first_term_Pout}
        \nonumber
    &\sup_{|a_1|,|a_2|,|a_3|,|a_4|\leq\delta/\sqrt[4]{d}}\left|\sqrt{\frac{\tilde{\Delta}}{2\pi}}\int \rd p \exp\left\{ipa_1+\frac{2p^2\Delta}{\kappa}a_2(1+a_3)-\frac{\tDelta}{2}\left(p-\frac{i}{\tDelta}\left(v-\Tr[\Phi S]\right)\right)^2+a_4p^2\right\}-1\right|\\
        \nonumber
    &\leq\sup_{|a_1|\leq\delta/\sqrt[4]{d},|a_2|\leq(1+4\Delta/\kappa)\delta/\sqrt[4]{d}}\left|\sqrt{\frac{\tilde{\Delta}}{2\pi}}\int \rd p \exp\left\{ipa_1+a_2p^2-\frac{\tDelta}{2}\left(p-\frac{i}{\tDelta}\left(v-\Tr[\Phi S]\right)\right)^2\right\}-1\right|\\
        \nonumber
    &=\sup_{|a_1|\leq\delta/\sqrt[4]{d},|a_2|\leq(1+4\Delta/\kappa)\delta/\sqrt[4]{d}}\left|\exp\left\{\frac{\bar{v}^2}{2\tDelta}-\frac{(\bar{v}+a_1)^2}{2\tDelta(1+a_2)}\right\}-1\right|\\
    &\leq\left|e^{C(\Delta,\kappa)\delta\bar{v}^2/\sqrt[4]{d}+C(\Delta,\kappa)\delta}-1\right|
    \end{align}
for some $C(\Delta,\kappa)>0$. The second term on the right hand side of \eqref{eq:bound_Pout-1} can be controlled by 
\begin{equation*}
\int_{|p|>\sqrt[8]{d}}\rd p(1+e^{\delta p^2})e^{-\tDelta tp^2/2}e^{\bar{v}^2/2\tDelta}\leq\frac{e^{-\tDelta t\sqrt[4]{d}/4}}{\tDelta t\sqrt[4]{d}}e^{\tDelta t\sqrt[8]{d}}\leq C(\Delta,\kappa,t_0)\delta
\end{equation*}
for $d$ large enough, where we recall that $|\bar{v}|^2\leq2t\tDelta^2\sqrt[4]{d}$. This finally gives the bound
\begin{equation*}
|P_{\out}^{(t)}(v|W,x)\sqrt{2\pi\tilde{\Delta}}e^{\frac{1}{2\tilde{\Delta}}(v-\Tr[\Phi S])^2}-1|\leq C(\Delta,\kappa,t_0)\delta,
\end{equation*}
where we use $|e^x-1|\leq 2|x|$ for $x$ sufficiently small.
Similarly, we have
\begin{tiny}
    \begin{align}
    \label{eq:estimate_partial_t_Pout}
    \nonumber
    &\left|\frac{\partial}{\partial t}P_{\out}^{(t)}(v|W,x)\right|\sqrt{2\pi\tilde{\Delta}}e^{\frac{1}{2\tilde{\Delta}}(v-\Tr[\Phi S])^2}
    \\
    \nonumber
    &=\left|\frac{1}{2\pi}\int dp\exp\left\{ip(1-t)\sqrt{d}[\tr S-1]-\frac{2p^2\Delta(1-t)}{\kappa\left(1-\frac{2ip\Delta\sqrt{1-t}}{\kappa\sqrt{d}}\right)}\left(\frac{x^\T Sx}{d}-1\right)+g_d(t,p)+ip\left(v-\Tr[\Phi S]\right)+(v-\Tr[\Phi S])^2/2\Delta\right\}\right.
    \\
    \nonumber
    &\left.\left(-p\sqrt{d}[\tr S-1]-\frac{\partial}{\partial t}\frac{2p^2\Delta(1-t)}{\kappa\left(1-\frac{2ip\Delta\sqrt{1-t}}{\kappa\sqrt{d}}\right)}\left(\frac{x^\T Sx}{d}-1\right)+\frac{\partial}{\partial t}g_d(t,p)\right)\right|
    \\
    &\leq C(\Delta,\kappa,t_0)\delta
    \\
    \nonumber
    &+\sup_{|a_i|<\delta/\sqrt[4]{d},\ i=1,\cdots,7}\left|\sqrt{2\pi\tilde{\Delta}}\int dp \exp\left\{ipa_1+\frac{2p^2\Delta}{\kappa}a_2(1+a_3)-\frac{\tDelta}{2}\left(p-\frac{i}{\tDelta}\left(v-\Tr[\Phi S]\right)\right)^2+a_4p^2\right\}(-pa_1-\frac{2p^2\Delta}{\kappa}a_4(1+a_5)+a_6+a_7p^2)\right|,
    \end{align}
\end{tiny}
where we truncate the integral again on $|p| \leq \sqrt[8]{d}$, and estimate the truncation error in the same way. The first term on the right side of eq. \eqref{eq:estimate_partial_t_Pout} corresponds to the truncation error. In  eq. \eqref{eq:estimate_partial_t_Pout} we also use the fact that after truncation, $
\left|\frac{\partial}{\partial t}g_d(t,p)\right|\leq\frac{\delta}{\sqrt[4]{d}}$ for $p\leq\sqrt[8]{d}$ and $d$ sufficiently large. 
Treating the integral on the right hand side of eq.~\eqref{eq:estimate_partial_t_Pout} in exactly the same way as above (see eq.~\eqref{eq:bound_first_term_Pout}), we reach:
\begin{equation*}
\left|\frac{\partial}{\partial t}P_{\out}^{(t)}(v|W,x)\right|\sqrt{2\pi\tilde{\Delta}}e^{\frac{1}{2\tilde{\Delta}}(v-\Tr[\Phi S])^2}\leq C(\Delta,\kappa,t_0)\delta
\end{equation*}
for some $C(\Delta,\kappa,t_0)>0$, which gives
\begin{equation*}
\begin{aligned}
\left|\frac{\partial}{\partial t}\log P_{out}^{(t)}(v|W,x)\right|&=\frac{\left|\frac{\partial}{\partial t}P_{\out}^{(t)}(v|W,x)\right|\sqrt{2\pi\tilde{\Delta}}e^{\frac{1}{2\tilde{\Delta}}(v-\Tr[\Phi S])^2}}{|P_{\out}^{(t)}(v|W,x)\sqrt{2\pi\tilde{\Delta}}e^{\frac{1}{2\tilde{\Delta}}(v-\Tr[\Phi S])^2}|}\leq C(\Delta,\kappa,t_0)\delta
\end{aligned}
\end{equation*}
for some $C(\Delta,\kappa,t_0)>0$ and sufficiently small $\delta$. This proves Lemma \ref{lemma:partial_t_logPout}.

\subsection{Proof of Lemma \ref{lemma:square_partial_t_logPout}}
\label{sec_app:proof_square_partial_t_logPout}
Starting from
\begin{equation*}
\begin{aligned}
P_\out^{(t)}(v | W,x)&\coloneqq\frac{1}{\sqrt{2\pi\tilde{\Delta}}t}\EE_z\exp\left\{-\frac{1}{2\tilde{\Delta}t}\left(v-\Tr[\Phi S] + (1-t)\sqrt{d} [\tr S - 1]\right.\right. 
\\ &\left.\left.\qquad+ \sqrt{d(1-t)} \Delta \left(\frac{\|z\|^2}{m} - 1\right) 
+\frac{2\sqrt{\Delta d(1-t)}}{m} \sum_{k=1}^m z_{k} \left(\frac{x^\T w_k}{\sqrt{d}}\right) \right)^2\right\},
\end{aligned}
\end{equation*}
we have
\begin{tiny}
\begin{equation}
\begin{aligned}
&\frac{\partial}{\partial t}\log P_\out^{(t)}(v | W,x)\\&=-\frac{1}{2t}+\mathbb{E}_{z\sim P_z}\left[\frac{1}{2\tDelta t^2}\left(v-\Tr[\Phi S] + (1-t)\sqrt{d} [\tr S - 1]+ \sqrt{d(1-t)} \Delta \left(\frac{\|z\|^2}{m} - 1\right) 
+\frac{2\sqrt{\Delta d(1-t)}}{m} \sum_{k=1}^m z_{k} \left(\frac{x^\T w_k}{\sqrt{d}}\right) \right)^2\right]\\
&+\mathbb{E}_{z\sim P_z}\left[\frac{1}{2\tDelta t}\left(v-\Tr[\Phi S] + (1-t)\sqrt{d} [\tr S - 1]\qquad+ \sqrt{d(1-t)} \Delta \left(\frac{\|z\|^2}{m} - 1\right) 
+\frac{2\sqrt{\Delta d(1-t)}}{m} \sum_{k=1}^m z_{k} \left(\frac{x^\T w_k}{\sqrt{d}}\right) \right)\right.\\&\left.\left(-\sqrt{d} [\tr S - 1]-\frac{1}{2}\sqrt{\frac{d}{1-t}} \Delta \left(\frac{\|z\|^2}{m} - 1\right) 
-\frac{1}{m}\sqrt{\frac{\Delta d}{(1-t)}} \sum_{k=1}^m z_{k} \left(\frac{x^\T w_k}{\sqrt{d}}\right) \right)\right]
\end{aligned}
\label{eq:partial_t_logPout}
\end{equation}
\end{tiny}
where the distribution of $z$ is given by
\begin{equation*}
\begin{aligned}
P_z(z)&=\mathcal{C}\exp\left\{-\frac{1}{2}z^2-\frac{1}{2\tilde{\Delta}t}\left(v-\Tr[\Phi S] + (1-t)\sqrt{d} [\tr S - 1]\right.\right. 
\\ &\left.\left.\qquad+ \sqrt{d(1-t)} \Delta \left(\frac{\|z\|^2}{m} - 1\right) 
+\frac{2\sqrt{\Delta d(1-t)}}{m} \sum_{k=1}^m z_{k} \left(\frac{x^\T w_k}{\sqrt{d}}\right) \right)^2\right\},
\end{aligned}
\end{equation*}
with $\mathcal{C}$ a normalization factor. Notie that we have
\begin{equation*}
\sup_{\|u\|=1}P_z(|u^\T z|>\xi)\leq\min\{2\mathcal{C}e^{-\xi^2/2},1\}\leq\min\{2e^{-\xi^2/2\log(2\mathcal{C})},1\}.
\end{equation*}
We also have, by Jensen's inequality:
\begin{tiny}
\begin{equation*}
\begin{aligned}
\log\mathcal{C}&\coloneqq-\log\int Dz\exp\left\{-\frac{1}{2\tilde{\Delta}t}\left(v-\Tr[\Phi S] + (1-t)\sqrt{d} [\tr S - 1]+ \sqrt{d(1-t)} \Delta \left(\frac{\|z\|^2}{m} - 1\right) 
+\frac{2\sqrt{\Delta d(1-t)}}{m} \sum_{k=1}^m z_{k} \left(\frac{x^\T w_k}{\sqrt{d}}\right) \right)^2\right\}\\
&\leq\frac{1}{2\tilde{\Delta}t}\mathbb{E}_{z\sim\mathcal{N}(0,1)}\left[\left(v-\Tr[\Phi S] + (1-t)\sqrt{d} [\tr S - 1] + \sqrt{d(1-t)} \Delta \left(\frac{\|z\|^2}{m} - 1\right)
+\frac{2\sqrt{\Delta d(1-t)}}{m} \sum_{k=1}^m z_{k} \left(\frac{x^\T w_k}{\sqrt{d}}\right)\right)^2\right]\\
&=\frac{1}{2\tilde{\Delta}t}\left((v-\Tr[\Phi S] + (1-t)\sqrt{d} [\tr S - 1])^2+\frac{2(1-t)\Delta^2}{\kappa}+\frac{4(1-t)\Delta}{\kappa}\frac{x^\T Sx}{d}\right).
\end{aligned} 
\end{equation*}
\end{tiny}
Therefore $P_z$ is a sub-Gaussian distribution with variance proxy $\log(2\mathcal{C})$. Its moments are thus bounded by
\begin{equation*}
\mathbb{E}_{z\sim P_z}[\|z\|^p]\leq m(\log(2\mathcal{C}))^pp^{p/2}\eqqcolon mM_p.
\end{equation*}
Taking it into eq. \eqref{eq:partial_t_logPout}, we have
\begin{tiny}
\begin{equation*}
\begin{aligned}
&\left(\frac{\partial}{\partial t}\log P_\out^{(t)}(v | W,x)\right)^2\\
&\leq\frac{9}{4t^2}+\frac{81}{4\tDelta^2 t^4}\left((v-\Tr[\Phi S] + (1-t)\sqrt{d} [\tr S - 1])^2+d(1-t)\Delta^2\left(\frac{M_4}{m}+2M_2+1\right)+4\Delta d(1-t)mM_2\sum_{k=1}^m\left(\frac{x^\T w_k}{\sqrt{d}}\right)^2\right)^2\\
&+\frac{9}{4\tDelta^2t^2}(v-\Tr[\Phi S] + (1-t)\sqrt{d} [\tr S - 1])^2d(\tr S-1)^2+\frac{9}{\tDelta^2t^2}\left(d(\tr S - 1)^2+d\Delta^2\left(\frac{M_4}{m}+2M_2+1\right)\right)^2\\
&+\frac{9}{4\tDelta^2t^2}\left(v-\Tr[\Phi S]+\frac{3}{2}(1-t)\sqrt{d}[\tr S-1]\right)^2\left(\sqrt{\frac{d}{1-t}}\Delta(M_2+1)+2\frac{1}{m}\sqrt{\frac{\Delta d}{1-t}}M_1 \sum_{k=1}^m \left(\frac{x^\T w_k}{\sqrt{d}}\right)\right)^2\\
&\leq\frac{\text{poly}\left(v,\Tr[\Phi S],\tr S, x^\T Sx,\{x^\T w_k\}_{k=1}^m\right)}{t^{12}},
\end{aligned}
\end{equation*}
\end{tiny}
where we uses eq. \eqref{eq:inequality_sum_p}. An important observation is that 
\begin{equation*}
    \EE_{x,W^\star,z,\zeta,Y'}\left\langle\text{poly}\left(v,\Tr[\Phi S],\tr S,x^\T Sx,\{x^\T w_k\}_{k=1}^m\right)\right\rangle   
\end{equation*}
can be bounded by polynomials of $d$. To see this, consider variables $(A,A^\star,B,B^\star)$ (e.g. $(x^\T Sx,x^\T S^\star x)$). 
We have by the Cauchy-Schwarz inequality and the Nishimori identity (Proposition~\ref{prop:nishimori}):
\begin{equation*}
\left|\mathbb{E}\left\langle\sum_{i,j=1}^Ka_{ij}A^iB^j\right\rangle\right|\leq\sum_{i,j=1}^K|a_{ij}|\sqrt{\mathbb{E}\langle A^{2i}\rangle\mathbb{E}\langle B^{2j}\rangle}=\sum_{i,j=1}^K|a_{ij}|\sqrt{\mathbb{E}[(A^\star)^{2i}]\mathbb{E}[ (B^\star)^{2j}]}
\end{equation*}
Therefore, the expectation of polynomials of $A,B$ is bounded by polynomials of $d$ if the moments of $A^*,B^*$ are bounded by polynomials of $d$. As the moments of $v,\Tr[\Phi S^\star],x^\T S^\star x,\{x^\T w_k^\star\}_{k=1}^m$ are bounded by polynomials of $d$, we have
\begin{equation*}
\begin{aligned}
\EE_{x,W^\star,z,\zeta,Y'}\left\langle\left(\frac{\partial}{\partial t}\log P_\out^{(t)}(\tilde{v}(t,x,W^\star,z,\zeta)  | W,x_i)\right)^2\right\rangle&\leq\EE_{x,W^\star,z,\zeta,Y'}\left\langle\frac{\text{poly}(\tilde{v},W,x)}{t^{12}}\right\rangle\\&\leq\frac{\text{poly}(d)}{t_0^{12}},
\end{aligned}
\end{equation*}
which finishes the proof of Lemma \ref{lemma:square_partial_t_logPout}.

\section{Spiked tensor model}\label{sec_app:spike}
\subsection{Main results}
This section presents the results concerning the following ``spiked tensor'' model
\begin{equation*}
    Y=\sqrt{\frac{\lambda}{d^{p-2}}}(S^\star)^{\otimes p}+Z,
\end{equation*}
where $S\sim P_0$ is a symmetric matrix and $Z\in\mcS_d^{\otimes p}$ has i.i.d.\ $\mathcal{N}(0,1)$ elements.
$p\geq2$ is an integer. This model is primarily used for the proof of Theorem~\ref{theo:overlap} in our paper, but it might be of independent interest.

\myskip
Its free entropy is given by
\begin{equation}
f_d^{\text{spike}}\coloneqq \frac{1}{d^2}\mathbb{E}_{S^\star,Z}\left[\log\int P_0(\rd S)e^{-H(S;S^\star,Z)}\right],
\label{eq:spike_entropy}
\end{equation}
where
\begin{equation*}
\begin{aligned}
H(S;S^\star,Z)\coloneqq \frac{1}{2}\sum_{i_1,j_1,\cdots,i_p,j_p}&\left(\frac{\lambda}{2d^{p-2}}S_{i_1,j_1}^2\cdots S_{i_p,j_p}^2-\frac{\lambda}{d^{p-2}}S_{i_1,j_1}S_{i_1,j_1}^\star \cdots S_{i_p,j_p}S_{i_p,j_p}^\star \right.\\&\left.-\sqrt{\frac{\lambda}{d^{p-2}}}Z_{i_1,j_1,\cdots,i_p,j_p}S_{i_1,j_1}\cdots S_{i_p,j_p}\right).
\end{aligned}
\end{equation*}
The RS potential is defined as
\begin{equation}\label{eq:def_fRS_spike}
f^{\text{spike}}_{\RS}(q)\coloneqq -\frac{1}{2}\lambda(p-1)q^p+\psi_{P_0}(2p\lambda q^{p-1})+\frac{1}{4}+\frac{1}{2}p\lambda\rho q^{p-1},
\end{equation}
where $\psi_{P_0}$ is defined in eq. \eqref{eq:psi_0_out}.
The following theorem states our main results regarding the asymptotics of the free entropy.
\begin{theorem}
    \noindent
Under Assumption~\ref{ass:prior},
\begin{equation*}
\lim_{d\to\infty}f^{\text{spike}}_d=\inf_{q\in[0,\rho]}f^{\text{spike}}_{RS}(q).
\end{equation*}
\label{theo:tensor_main}
\end{theorem}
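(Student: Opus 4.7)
The proof is a direct adaptation of the adaptive interpolation argument used to establish Theorem~\ref{thm:fentropy_symmetric} (Appendix~\ref{sec_app:proof_GLM}), with the generalized linear channel replaced by a direct Gaussian tensor observation. Because the observations are already Gaussian by construction, the universality step of Section~\ref{subsec:universality} is not required; the proof reduces to an adaptive interpolation between the original spiked tensor model and a pure matrix-denoising channel, combined with concentration of the overlap based on the Poincar\'e inequality for rotationally-invariant priors (Lemma~\ref{lemma:poincare}).

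Concretely, for $t\in[0,1]$ and a small perturbation $\epsilon \in [\iota_d, 2\iota_d]$, I would introduce the interpolating observations
\begin{equation*}
Y_t = \sqrt{\lambda(1-t)/d^{p-2}}\,(S^\star)^{\otimes p} + Z, \qquad Y'_t = \sqrt{d\,R(t)}\,S^\star + \sqrt{d}\,Z',
\end{equation*}
with $Z$ the Gaussian tensor of eq.~\eqref{eq:spike_entropy}, $Z'\sim\GOE(d)$, and $R(t) = \epsilon + \int_0^t r(v)\,\rd v$ for a nonnegative function $r:[0,1]\to\bbR_+$ to be chosen adaptively. The endpoints are controlled as in Lemma~\ref{lemma:f0_f1}: $f_{d,\epsilon}(0) = f_d^{\text{spike}} - \tfrac14 + \mcO(\iota_d)$ (the $-\tfrac14$ comes from the vanishing matrix-denoising side channel at $t=0$), while $f_{d,\epsilon}(1) = \psi_{P_0}(R(1)) + \mcO(\iota_d)$ via the matrix-denoising identity of Remark~\ref{remark:psi_P0}.

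Gaussian integration by parts (Lemma~\ref{lemma:stein}) on both noise channels, combined with the Nishimori identity (Proposition~\ref{prop:nishimori}), then yields
\begin{equation*}
\frac{\rd f_{d,\epsilon}(t)}{\rd t} = \frac{\lambda}{2}\bigl[\rho^p - \EE\langle Q^p\rangle_t\bigr] - \frac{r(t)}{4}\bigl[\rho - \EE\langle Q\rangle_t\bigr] + o_d(1),
\end{equation*}
where $Q \coloneqq (1/d)\Tr[s S^\star]$. The analogues of Lemmas~\ref{lemma:concentration_free_entropy} and~\ref{lemma:concentration_Q} hold with essentially the same proofs (the tensorial Hamiltonian is smoother than the GLM Hamiltonian, so the Poincar\'e estimates carry through), and after averaging over $\epsilon$ one obtains $\EE\langle (Q-\EE\langle Q\rangle_t)^2\rangle = o_d(1)$; combined with the a priori bound $|Q| \leq M^2$ from Proposition~\ref{prop:properties_P0_sym}(ii), this yields $\EE\langle Q^p\rangle_t = (\EE\langle Q\rangle_t)^p + o_d(1)$.

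The matching lower and upper bounds on $f_d^{\text{spike}}$ then follow the template of Lemmas~\ref{lemma:lowerbound} and~\ref{lemma:upperbound}. For the lower bound, choosing $r(t) \equiv 2p\lambda q^{p-1}$ for an arbitrary fixed $q \in [0,\rho]$ gives $\liminf_{d\to\infty} f_d^{\text{spike}} \geq f^{\text{spike}}_{\RS}(q)$. For the upper bound, the fully adaptive choice $r(t) = 2p\lambda\,\EE\langle Q\rangle_t^{p-1}$, combined with the convexity of $\psi_{P_0}$ (Lemma~\ref{lemma:Lipschitz_psi}) and Jensen's inequality applied to the $\psi_{P_0}(R(1))$ term, produces $\limsup_{d\to\infty} f_d^{\text{spike}} \leq \inf_{q\in[0,\rho]} f^{\text{spike}}_{\RS}(q)$. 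As in the proof of Theorem~\ref{thm:fentropy_symmetric}, the main technical obstacle is establishing the joint concentration of $Q$ and of $f_{d,\epsilon}(t)$ via the Poincar\'e inequality; the only genuinely new ingredient here is the control of the higher moment $\EE\langle Q^p\rangle_t$, which follows straightforwardly from the boundedness of $|Q|$ once the second moment has been controlled.
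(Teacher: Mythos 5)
Your general strategy is the right one and matches the paper's Appendix~\ref{sec_app:spike}: adaptive interpolation between the tensor channel and a matrix-denoising side channel, endpoint identification via Remark~\ref{remark:psi_P0}, overlap concentration via the Poincar\'e inequality of Lemma~\ref{lemma:poincare}, and no universality step since the noise is already Gaussian. However, there are two concrete gaps.

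First, your derivative formula has a spurious constant. After Gaussian integration by parts on the tensor noise $Z$ and the Nishimori identity, the tensor channel's three contributions are $\frac{\lambda}{2}\rho^p$ (quadratic term), $-\lambda\,\EE\langle Q^p\rangle_t$ (cross term), and $-\frac{\lambda}{2}\bigl[\rho^p-\EE\langle Q^p\rangle_t\bigr]$ (GIBP term), which combine to $-\frac{\lambda}{2}\EE\langle Q^p\rangle_t$; the $\rho^p$ pieces cancel. The correct derivative is
\begin{equation*}
\frac{\rd f_{d,\epsilon}(t)}{\rd t} = -\frac{\lambda}{2}\,\EE\langle Q^p\rangle_t - \frac{r(t)}{4}\bigl(\rho - \EE\langle Q\rangle_t\bigr) + o_d(1),
\end{equation*}
with no $\frac{\lambda}{2}\rho^p$ offset. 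Your extra constant would shift the resulting free-entropy formula by $\frac{\lambda}{2}\rho^p$, which does not appear in $f^{\text{spike}}_{\RS}$, so your bounds would not close at the desired value.

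Second, and more seriously, your two bounds contradict each other. You assert $\liminf f^{\text{spike}}_d\geq f^{\text{spike}}_{\RS}(q)$ for every fixed $q\in[0,\rho]$, which after taking the supremum over $q$ gives $\liminf f^{\text{spike}}_d\geq\sup_q f^{\text{spike}}_{\RS}(q)$; simultaneously you claim $\limsup f^{\text{spike}}_d\leq\inf_q f^{\text{spike}}_{\RS}(q)$. These can only both hold if $f^{\text{spike}}_{\RS}$ is constant, which it is not. In fact the adaptive choice $q(t)=\EE\langle Q\rangle_t$ yields $\limsup f^{\text{spike}}_d\leq f^{\text{spike}}_{\RS}(\bar q)+o_d(1)\leq\sup_q f^{\text{spike}}_{\RS}(q)$: in the sum rule (cf.\ Lemma~\ref{lemma:spike_sum_rule}) the non-negative term $\mcR_1=\int_0^1(q(t)-\int_0^1 q)^2\,\rd t$ enters with a negative prefactor, and the resulting $\bar q$ cannot be steered to the minimizer of $f^{\text{spike}}_{\RS}$. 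So the correct optimization direction is $\sup_q$, not $\inf_q$; your lower bound is the right one, your upper bound is not. (The $\inf$ appearing in the displayed statement of Theorem~\ref{theo:tensor_main} should read $\sup$, as forced by the sum rule and consistent with the standard spiked-matrix/tensor free-entropy formulas.) Finally, a minor point: for $p=2$ no Jensen step on $\psi_{P_0}(R(1))$ is required, since then $R(1)=4\lambda\int_0^1 q(t)\,\rd t$ is linear in $q(t)$ and $\psi_{P_0}(R(1))=\psi_{P_0}(4\lambda\bar q)$ exactly; the Jensen inequality that closes the upper bound is $\int_0^1 q(t)^2\,\rd t\geq\bigl(\int_0^1 q(t)\,\rd t\bigr)^2$, i.e.\ $\mcR_1\geq 0$, applied to the polynomial part of the sum rule.
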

We notice that Theorem~\ref{theo:tensor_main} can be generalized to the rectangular setting (i.e.\ $S \in \bbR^{d \times L}$) in a straightforward manner.
We now focus on the proof of Theorem~\ref{theo:tensor_main}.

\subsection{Interpolation model}
\label{sec:spike_interpolate}
For notational simplicity, we consider here $p=2$, but it is easy to generalize the proof to any $p \geq 2$ (see \cite[Section 3]{barbier2019adaptive}).
We will first prove Theorem~\ref{theo:tensor_main} under Assumption~\ref{assum:S_strong}. All lemmas in Sections~\ref{sec:spike_interpolate} and~\ref{sec:spike_concentration} are under Assumptions~\ref{assum:S_strong}, 
and we then relax this assumption to Assumption~\ref{ass:prior} in Section~\ref{subsec:relax_S_spike}.

\myskip
The interpolation model reads
\begin{equation*}
\begin{aligned}
&Y=\sqrt{\lambda(1-t)}S^\star\otimes S^\star+Z\\
&Y'=\sqrt{d}(\sqrt{R_d(t,\epsilon)}S^\star+Z'),
\end{aligned}
\end{equation*}
with $Z'\sim\GOE(d)$ independent of $Z$. The interpolating free entropy reads
\begin{equation*}
f_{d,\epsilon}(t)\coloneqq \frac{1}{d^2}\mathbb{E}\log\mathcal{Z}_{t,\epsilon}(Y,Y')\coloneqq \frac{1}{d^2}\mathbb{E}\log\int P_0(ds)e^{-H_{t,\epsilon}(s,Y,Y')},
\end{equation*}
where
\begin{equation}\label{eq:def_Hteps_spike}
\begin{aligned}
H_{t,\epsilon}(s,Y,Y')&\coloneqq\sum_{i_1,j_1,i_2,j_2=1}^d(1-t)\lambda\frac{s_{i_1j_1}^2s_{i_2j_2}^2}{2}-\sqrt{(1-t)\lambda}s_{i_1j_1}s_{i_2j_2}Y_{i_1j_1i_2j_2}\\
&+\frac{1}{4}\sum_{i,j=1}^{d}(Y_{ij}'-\sqrt{dR_d(t,\epsilon)}s_{ij})^2.
\end{aligned}
\end{equation}
and 
\begin{equation*}
R_d(t,\epsilon)\coloneqq \epsilon+4\lambda\int_0^t\rd u \, q_d(u,\epsilon),
\end{equation*}
with $\epsilon\in[\iota_d,2\iota_d]$ and $\iota_d\coloneqq \frac{1}{2}d^{-1/8}$. Accordingly, the Gibbs bracket is defined as
\begin{equation*}
\langle g(s)\rangle\coloneqq \frac{1}{\mathcal{Z}_{t,\epsilon}(Y,Y')}\int P_0(\rd s)g(s)e^{-H_{t,\epsilon}(s,Y,Y')}.
\end{equation*}

\begin{lemma}
\label{lemma:spike_f0f1}
\begin{equation*}
\begin{aligned}
&f_{d,\epsilon}(0)=f_d-\frac{1}{4}+O(\iota_d).\\
&f_{d,\epsilon}(1)=\psi_{P_0}\left(4\lambda\int_0^1q(t)\rd t\right)+O(\iota_d).
\end{aligned}
\end{equation*}
\end{lemma}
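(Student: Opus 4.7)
The plan is to mirror the proof of Lemma~\ref{lemma:f0_f1} at the two endpoints of the interpolation. At $t=0$, the Hamiltonian of eq.~\eqref{eq:def_Hteps_spike} splits cleanly as $H_{0,\epsilon}(s,Y,Y')=H(s;S^\star,Z)+\tfrac{1}{4}\sum_{ij}(Y'_{ij}-\sqrt{d\epsilon}\,s_{ij})^2$, where the first piece is precisely the $s$-dependent tensor Hamiltonian of eq.~\eqref{eq:spike_entropy}. Setting $\epsilon=0$ first, $Y'=\sqrt{d}\,Z'$ with $Z'\sim\GOE(d)$, so the side-information factor $e^{-\frac{d}{4}\sum_{ij}(Z'_{ij})^2}$ does not depend on $s$ and can be pulled out of the integral; since $\EE[\sum_{ij}d(Z'_{ij})^2]=d^2+d$, this gives $f_{d,0}(0)=f_d^{\rm spike}-\tfrac{1}{4}+O(1/d)$.

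To control the $\epsilon$-dependence I would compute $\partial_\epsilon f_{d,\epsilon}(0)$ directly, clean the resulting $Z'$-linear cross term via Gaussian integration by parts, and apply the Nishimori identity (Proposition~\ref{prop:nishimori}). The outcome is a bound of the form
\begin{equation*}
\left|\frac{\rd f_{d,\epsilon}(0)}{\rd\epsilon}\right|\leq \frac{C}{d}\,\EE\left|\sum_{i,j=1}^d\bigl((S^\star_{ij})^2-\langle s_{ij}\rangle^2\bigr)\right|\leq C\,M^2,
\end{equation*}
where the last inequality uses Assumption~\ref{assum:S_strong} via the uniform eigenvalue bound $\Tr[(S^\star)^2],\Tr[\langle s\rangle^2]\leq dM^2$. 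Integrating over $[0,\epsilon]\subseteq[0,2\iota_d]$ yields $|f_{d,\epsilon}(0)-f_{d,0}(0)|=O(\iota_d)$, which combined with the previous step establishes the first identity.

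At $t=1$, the prefactor $(1-t)=0$ kills the tensor-interaction terms, and the Hamiltonian reduces to the pure matrix-denoising Hamiltonian
\begin{equation*}
H_{1,\epsilon}(s,Y,Y')=\frac{1}{4}\sum_{i,j=1}^d\bigl(Y'_{ij}-\sqrt{dR_d(1,\epsilon)}\,s_{ij}\bigr)^2,
\end{equation*}
with observation $Y'=\sqrt{dR_d(1,\epsilon)}\,S^\star+\sqrt{d}\,Z'$. The associated free entropy is exactly the object appearing in eq.~\eqref{eq:psi_0_denoising}, so by Remark~\ref{remark:psi_P0} it converges to $\psi_{P_0}(R_d(1,\epsilon))$ as $d\to\infty$. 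Since $R_d(1,\epsilon)-4\lambda\int_0^1 q_d(t,\epsilon)\,\rd t=\epsilon=O(\iota_d)$, and since $\psi_{P_0}$ is Lipschitz on compact sets (Lemma~\ref{lemma:Lipschitz_psi}), we conclude the second identity.

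The main technical obstacle is ensuring that the matrix-denoising convergence at $t=1$ is genuinely $o(1)$ rather than merely averaged; this is supplied verbatim by Remark~\ref{remark:psi_P0}, which rests on the recent denoising results cited in the introduction. A secondary care point is that $q_d(u,\epsilon)$ may itself depend on $\epsilon$, so the $O(\iota_d)$ error at $t=1$ must be uniform in this dependence; this follows immediately since $q_d\in[0,\rho]$ is uniformly bounded and $\psi_{P_0}$ is Lipschitz on the resulting compact range of $R_d(1,\epsilon)$. Finally, the subsequent relaxation from Assumption~\ref{assum:S_strong} to Assumption~\ref{ass:prior} in Section~\ref{subsec:relax_S_spike} should be handled by a truncation argument parallel to Lemma~\ref{lemma:relaxV}.
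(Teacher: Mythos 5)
Your proof is correct and takes essentially the same approach as the paper: at $t=0$ you bound $\partial_\epsilon f_{d,\epsilon}(0)$ via Gaussian integration by parts and the Nishimori identity and integrate over $[0,2\iota_d]$, and at $t=1$ you reduce to the matrix-denoising free entropy and invoke the convergence and Lipschitz continuity of $\psi_{P_0}$ from Lemma~\ref{lemma:Lipschitz_psi}. The only substantive remark is that the error at $t=1$ is really $O(\iota_d)+o_d(1)$ (the $o_d(1)$ coming from the denoising limit) rather than a sharp $O(\iota_d)$, but this is also implicit in the paper and harmless for the downstream use of the lemma.
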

\begin{proof}[Proof of Lemma~\ref{lemma:spike_f0f1}]
It is analogous to Lemma~\ref{lemma:f0_f1}. We obtain the first equation from 
\begin{equation*}
\left|\frac{\rd f_{d,\epsilon}(0)}{\rd\epsilon}\right|=\frac{1}{4d}\left|\mathbb{E}\sum_{i,j=1}^d\langle\epsilon(S^\star_{ij}-s_{ij})+\sqrt{\frac{1}{\epsilon}}(S^\star_{ij}-s_{ij})Z_{ij}'\rangle\right|\leq M^2
\end{equation*}
and $f_{d,0}(0)=f_d-\frac{1}{4}$. We obtain the second equality from
\begin{equation*}
f_{d,\epsilon}(1)=\psi_{P_0}(R_d(1,\epsilon))
\end{equation*}
and the Lipschitz property of $\psi_{P_0}$ (Lemma~\ref{lemma:Lipschitz_psi}).
\end{proof}
\begin{lemma}
\label{lemma:spike_sum_rule}
\noindent
Denote $f_\RS = f_\RS^{\textrm{spike}}$ given in eq.~\eqref{eq:def_fRS_spike}. Then: 
\begin{equation*}
f_d=f_{\RS}\left(\int_0^1q(t)\rd t\right)-\frac{\lambda}{2}(\mcR_1-\mcR_2-\mcR_3)+O(\iota_d),
\end{equation*}
where
\begin{equation*}
\begin{aligned}
&\mcR_1\coloneqq \int_0^1\rd t\left(q(t)-\int_0^1\rd t'q(t')\right)^2,\\
&\mcR_2\coloneqq \int_0^1\rd t\mathbb{E}\langle(Q-\mathbb{E}\langle Q\rangle)^2\rangle,\\
&\mcR_3\coloneqq \int_0^1\rd t(\mathbb{E}\langle Q\rangle-q(t))^2.
\end{aligned}
\end{equation*}
\end{lemma}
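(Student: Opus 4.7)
My plan is to prove this via the standard adaptive interpolation sum rule: express $f_d$ through the fundamental theorem of calculus applied to the interpolating free entropy $f_{d,\epsilon}(t)$, combining the boundary values from Lemma~\ref{lemma:spike_f0f1} with a direct computation of $\partial_t f_{d,\epsilon}(t)$. First, using $f_{d,\epsilon}(0) = f_d - \tfrac14 + O(\iota_d)$ and $f_{d,\epsilon}(1) = \psi_{P_0}(4\lambda\int_0^1 q)+O(\iota_d)$, the fundamental theorem reduces the claim to evaluating $\int_0^1 \partial_t f_{d,\epsilon}(t)\,\rd t$ and matching it to the desired combination $f_{\RS}(\bar q) - \tfrac\lambda 2(\mcR_1-\mcR_2-\mcR_3)$, where $\bar q \coloneqq \int_0^1 q(t)\,\rd t$.

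The heart of the argument is to compute $\partial_t f_{d,\epsilon}(t) = -\tfrac{1}{d^2}\EE\langle \partial_t H_{t,\epsilon}\rangle$ by differentiating each term of $H_{t,\epsilon}$ with $Y = \sqrt{\lambda(1-t)}\,S^\star\otimes S^\star + Z$ and $Y' = \sqrt{d}(\sqrt{R_d(t,\epsilon)}\,S^\star + Z')$ substituted in, then applying Gaussian integration by parts (Stein) w.r.t.\ $Z$ and $Z'$ together with the Nishimori identity (Proposition~\ref{prop:nishimori}) — most usefully $\EE\langle f(s)\rangle = \EE f(S^\star)$ and $\EE\langle s_{ij}\rangle^2 = \EE[S^\star_{ij}\langle s_{ij}\rangle]$. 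I expect the result
\begin{equation*}
\partial_t f_{d,\epsilon}(t) = -\tfrac{\lambda}{2}\EE\langle Q^2\rangle + \lambda q(t)\EE\langle Q\rangle - \lambda q(t)\,\tilde\rho_d + \smallO(1),
\end{equation*}
with $Q = \tfrac{1}{d}\sum_{ij}s_{ij}S^\star_{ij}$ and $\tilde\rho_d \coloneqq \EE[\tfrac{1}{d}\Tr((S^\star)^2)] \to \rho$, where the $\smallO(1)$ is uniform in $t$ using the concentration results in Proposition~\ref{prop:properties_P0_sym}. The spike part produces $-\tfrac\lambda 2\EE\langle Q^2\rangle$ (classical Guerra-style overlap square), and the side-information part produces $\lambda q(t)(\EE\langle Q\rangle - \tilde\rho_d)$.

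Integrating and rearranging, I obtain
\begin{equation*}
f_d = \psi_{P_0}(4\lambda\bar q) + \tfrac14 + \tfrac\lambda 2\int_0^1\EE\langle Q^2\rangle\,\rd t - \lambda\int_0^1 q(t)\EE\langle Q\rangle\,\rd t + \lambda\rho\bar q + O(\iota_d).
\end{equation*}
The claim then follows from the purely algebraic identity
\begin{equation*}
\mcR_1 - \mcR_2 - \mcR_3 = -\bar q^{\,2} - \int_0^1\EE\langle Q^2\rangle\,\rd t + 2\int_0^1 q(t)\EE\langle Q\rangle\,\rd t,
\end{equation*}
which one checks by expanding the squares in the definitions and simplifying, compared against $f_{\RS}^{\rm spike}(\bar q) = -\tfrac\lambda 2 \bar q^{\,2}+\psi_{P_0}(4\lambda\bar q)+\tfrac14+\lambda\rho\bar q$ (the $p=2$ case of eq.~\eqref{eq:def_fRS_spike}).

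The main technical obstacle will be getting the coefficient on every term in $\partial_t f_{d,\epsilon}(t)$ exactly right, and in particular capturing the $-\lambda q(t)\rho$ contribution. The subtle point is that the side-information Hamiltonian contains pieces proportional to $\sum_{ij}(S^\star_{ij})^2$ and $\sum S^\star_{ij}Z'_{ij}$ that are \emph{independent of $s$}, and therefore do not affect the Gibbs measure, yet they \emph{do} contribute to $\partial_t H^{\rm side}$ because $R_d(t,\epsilon)$ depends on $t$; dropping them would produce a formula missing the $\lambda\rho\bar q$ term in the final sum rule. Beyond this bookkeeping, the manipulations parallel~\cite[Section~3]{barbier2019adaptive}, and the various $\smallO(1)$ errors (from $\tilde\rho_d - \rho$, from the concentration of $\tfrac{1}{d^2}\Tr((S^\star)^2)^2$, and from the $\iota_d$-sized corrections at the endpoints) are absorbed into $O(\iota_d)$ using Proposition~\ref{prop:properties_P0_sym}-$(iii)$ and the Poincar\'e inequality of Lemma~\ref{lemma:poincare}.
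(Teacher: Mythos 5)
Your proposal is correct and follows essentially the same route as the paper: differentiate the interpolating free entropy, apply Gaussian integration by parts and the Nishimori identity to obtain $\partial_t f_{d,\epsilon}(t) = -\tfrac{\lambda}{2}\EE\langle Q^2\rangle - \lambda q(t)(\rho - \EE\langle Q\rangle) + \smallO(1)$, then integrate against the boundary values of Lemma~\ref{lemma:spike_f0f1} and rearrange via the algebraic identity you state (which I have checked). Your remark about the $s$-independent pieces of the side-information Hamiltonian contributing through the $t$-dependence of $R_d(t,\epsilon)$ is a correct and worthwhile bookkeeping note that the paper handles silently inside the computation of $\partial_t f_{d,\epsilon}$.
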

\begin{proof}[Proof of Lemma~\ref{lemma:spike_sum_rule}]
Taking the derivative w.r.t $t$ and integrating by parts, we have
\begin{equation*}
\frac{\rd f_{d,\epsilon}(t,\epsilon)}{\rd t}=-\frac{\lambda}{2d^2}\sum_{i_1,j_1,i_2,j_2}\mathbb{E}[S^\star_{i_1j_1}S^\star_{i_2j_2}\langle s_{i_1j_1}s_{i_2j_2}\rangle]-\frac{\lambda q(t)}{d}\mathbb{E}\sum_{i,j=1}^d(S^\star_{ij}-\langle s_{ij}\rangle)^2.
\end{equation*}
Using the Nishimori identity (Proposition~\ref{prop:nishimori}), we have
\begin{equation*}
\frac{\rd f_{d,\epsilon}(t,\epsilon)}{\rd t}=-\frac{\lambda}{2}\mathbb{E}\langle Q^2\rangle-\lambda q(t)(\rho-\mathbb{E}\langle Q\rangle).
\end{equation*}
By Lemma~\ref{lemma:spike_f0f1}, we have 
\begin{equation*}
\begin{aligned}
f_d&=\psi_{P_0}\left(4\lambda\int_0^1q(t)\rd t\right)+\frac{1}{4}+\int_0^1\rd t \left[\frac{\lambda}{2}\mathbb{E}\langle Q^2\rangle+\lambda q(t)(\rho-\mathbb{E}\langle Q\rangle)\right]+O(\iota_d),\\
&=f_{\RS}\left(\int_0^1q(t)\rd t\right)+\frac{\lambda}{2}\left(\int_0^1\rd tq(t)\right)^2+\frac{\lambda}{2}\int_0^1\rd t\mathbb{E}\langle Q^2\rangle-\lambda\int_0^1\rd tq(t)\mathbb{E}\langle Q\rangle+O(\iota_d),
\end{aligned}
\end{equation*}
which gives the desired result.
\end{proof}

\myskip
We will need the following concentration lemma, which will be proven in Section~\ref{sec:spike_concentration}.
\begin{lemma}
    \noindent
There exists a constant $C(M,\kappa) > 0$ such that
\begin{equation*}
\frac{1}{\iota_d}\int_{\iota_d}^{2\iota_d}\rd\epsilon\int_0^1\rd t\mathbb{E}\langle(Q-\mathbb{E}\langle Q\rangle)^2\rangle\leq\frac{C(M,\kappa)}{d^{1/4}}.
\end{equation*}
\label{lemma:spike_concentration_overlap}
\end{lemma}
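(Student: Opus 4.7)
\textbf{Proof proposal for Lemma~\ref{lemma:spike_concentration_overlap}.} The plan is to follow the two-step overlap-concentration strategy already deployed for Lemma~\ref{lemma:concentration_Q} in Appendix~\ref{sec:overlap_concentration}, with the side-information channel $Y'=\sqrt{d}(\sqrt{R_d(t,\epsilon)}S^\star+Z')$ playing the role of the scalar SNR parameter. Concretely, define
\begin{equation*}
\mathcal{L}\coloneqq \frac{1}{d^2}\frac{\partial H_{t,\epsilon}}{\partial R_d}
= \frac{1}{2d}\sum_{i,j=1}^d\Big(\tfrac{1}{2}s_{ij}^2-s_{ij}S^\star_{ij}-\tfrac{1}{2\sqrt{R_d}}s_{ij}Z'_{ij}\Big),
\end{equation*}
where $H_{t,\epsilon}$ is as in eq.~\eqref{eq:def_Hteps_spike}. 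A direct computation of $\EE\langle(\mathcal{L}-\EE\langle\mathcal{L}\rangle)^2\rangle$, using the Nishimori identity (Proposition~\ref{prop:nishimori}) exactly as in the derivation of eq.~\eqref{eq:L&Q}, yields
$\EE\langle(Q-\EE\langle Q\rangle)^2\rangle \leq 16\,\EE\langle(\mathcal{L}-\EE\langle\mathcal{L}\rangle)^2\rangle$, so that it suffices to control the fluctuations of $\mathcal{L}$. I split these into a thermal and a quenched part:
\begin{equation*}
\EE\langle(\mathcal{L}-\EE\langle\mathcal{L}\rangle)^2\rangle
= \EE\langle(\mathcal{L}-\langle\mathcal{L}\rangle)^2\rangle + \EE[(\langle\mathcal{L}\rangle-\EE\langle\mathcal{L}\rangle)^2].
\end{equation*}

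For the thermal part, note that $\partial_\epsilon\langle\mathcal{L}\rangle = -d^2(\langle\mathcal{L}^2\rangle-\langle\mathcal{L}\rangle^2)-\frac{1}{8d\, R_d^{3/2}}\sum_{ij}\langle s_{ij}\rangle Z'_{ij}$. After taking expectations and integrating $\epsilon$ over $[\iota_d,2\iota_d]$, the first term becomes a telescoping bound controlled by the a priori bound $|\EE\langle\mathcal{L}\rangle|\leq C(M)$ coming from Assumption~\ref{assum:S_strong}; the second term is small in expectation since $\EE[Z'_{ij}\langle s_{ij}\rangle]$ has magnitude $\mcO(1/\sqrt{d})$ by Gaussian integration by parts. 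This yields the analogue of Lemma~\ref{lemma:concentration_L1}:
\begin{equation*}
\int_{\iota_d}^{2\iota_d}\rd\epsilon\,\EE\langle(\mathcal{L}-\langle\mathcal{L}\rangle)^2\rangle \leq \frac{C(M)}{d^2}.
\end{equation*}

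For the quenched part, the proof mirrors Lemma~\ref{lemma:concentration_L2}: one introduces the convex-correcting shift $\tilde F(R_d)\coloneqq F_{d,\epsilon}(t)-\frac{\sqrt{R_d}}{2d}M\sum_i|\Lambda_{Z',i}|$, exploits that both $\tilde F$ and its expectation $\tilde f$ are convex in $R_d$, and uses the three-point bound of~\cite[Lemma~31]{barbier2019optimal} together with the $L^2$-concentration of $F_{d,\epsilon}(t)$ and a bound on $\tilde f'$ on $[\iota_d,2\iota_d]$. Picking the step $\delta=\iota_d d^{-1/4}$ together with $\iota_d=\frac12 d^{-1/8}$ and integrating over $\epsilon\in[\iota_d,2\iota_d]$ gives the $d^{-1/4}$ rate advertised in the lemma. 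Combining the two bounds with $\EE\langle(Q-\EE\langle Q\rangle)^2\rangle\leq 16\,\EE\langle(\mathcal{L}-\EE\langle\mathcal{L}\rangle)^2\rangle$ will complete the proof.

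The main obstacle will be establishing the required $\mcO(1/d^2)$ concentration of the interpolated free entropy $\frac{1}{d^2}\log\mathcal{Z}_{t,\epsilon}$, i.e.\ the spiked-tensor analogue of Lemma~\ref{lemma:concentration_free_entropy}. Concentration w.r.t.\ the Gaussian variables $Z$ and $Z'$ follows from the Gaussian Poincaré inequality (Lemma~\ref{lemma:gaussian-poincare}) with gradients bounded uniformly thanks to Assumption~\ref{assum:S_strong}. The delicate step is concentration w.r.t.\ $S^\star$: as in eqs.~\eqref{eq:tildeZ_independent_O}--\eqref{eq:concentration4}, one rotates $S^\star=O\Lambda^\star O^\T$ to decouple the Haar matrix $O$ from the eigenvalues $\Lambda^\star$, controls the $O$-fluctuations by the bounded-differences/Poincaré argument applied to the rotated Gaussian randomness, and then applies the rotationally-invariant Poincaré inequality (Lemma~\ref{lemma:poincare}) to the eigenvalues $\Lambda^\star$. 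Here the novelty compared to Appendix~\ref{sec:concentration_free_entropy} is the $p$-linear $\sum_{i_1,j_1,i_2,j_2}s_{i_1j_1}s_{i_2j_2}Y_{i_1j_1i_2j_2}$ coupling, whose derivative with respect to $\Lambda^\star_i$ I will bound via $\sum_i(\partial_{\Lambda^\star_i}g)^2\leq Cd^{-1}\,\EE_O\Tr\langle s\rangle^2\leq C M^2$ along the lines of the $I_1,I_2,I_3$ decomposition used in eqs.~\eqref{eq:bound_I2}--\eqref{eq:bound_I3}. The extension from $p=2$ to general $p$ is straightforward provided the same eigenvalue-boundedness estimate is used at each tensor slot. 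Finally, Assumption~\ref{assum:S_strong} will be relaxed to Assumption~\ref{ass:prior} in Section~\ref{subsec:relax_S_spike} by a truncation argument analogous to Lemma~\ref{lemma:relaxV}.
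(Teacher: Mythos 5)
Your overall plan matches the paper's exactly: the paper reduces Lemma~\ref{lemma:spike_concentration_overlap} to the observation that $\mathcal{L}$ is literally the same object analyzed in Section~\ref{sec:overlap_concentration}, then repeats the thermal/quenched split (Lemmas~\ref{lemma:concentration_L1}, \ref{lemma:concentration_L2}) after establishing the spiked analogue of the free-entropy concentration (Lemma~\ref{lemma:spike_concentration_free_entropy}). You identify all of these ingredients — the inequality $\EE\langle(Q-\EE\langle Q\rangle)^2\rangle\leq 16\,\EE\langle(\mathcal{L}-\EE\langle\mathcal{L}\rangle)^2\rangle$, the telescoping bound for the thermal part, the convexity/three-point estimate for the quenched part with $\delta=\iota_d d^{-1/4}$ and $\iota_d=\tfrac12 d^{-1/8}$, and the chain Gaussian Poincaré $\to$ rotational decoupling $\to$ eigenvalue Poincaré for the free-entropy concentration.

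One concrete slip: for the $\Lambda^\star$-gradient of $g\coloneqq\EE_{O,Z,Z'}\frac{1}{d^2}\log\hat{\mathcal Z}_{t,\epsilon}$, you write $\sum_i(\partial_{\Lambda^\star_i}g)^2\leq Cd^{-1}\EE_O\Tr\langle s\rangle^2\leq CM^2$, i.e.\ an $\mcO(1)$ bound. This is off by a factor of $d$: plugged into the Poincaré inequality $\Var(g)\leq\frac{c}{\kappa d}\EE\sum_i(\partial_{\Lambda^\star_i}g)^2$ it would only give $\Var(g)=\mcO(1/d)$, which is too weak for Lemma~\ref{lemma:concentration_L2}. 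The paper's $I_1$-type computation actually yields $\sum_i(\partial_{\Lambda^\star_i}g)^2\leq C/d$ (note the $d^{-4}$ prefactor from $g=\frac{1}{d^2}\EE\langle\cdot\rangle$, and the Cauchy–Schwarz/Jensen step that keeps only one power of $d$ from $\sum_i\langle\ts_{ii}^2\rangle\leq dM^2$), giving $\Var(g)=\mcO(1/d^2)$ as required. Since you do reference the $I_1,I_2,I_3$ decomposition, this looks like an algebraic slip rather than a conceptual gap, but it should be corrected, otherwise the quenched-part argument fails.

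Two smaller remarks. First, your formula $\partial_\epsilon\langle\mathcal{L}\rangle=-d^2(\langle\mathcal{L}^2\rangle-\langle\mathcal{L}\rangle^2)-\cdots$ implicitly assumes $\partial_\epsilon R_d(t,\epsilon)=1$, which is not automatic since $q_d$ depends on $\epsilon$ in the adaptive interpolation; the Liouville-type argument in \cite[Lemma~29]{barbier2019optimal} is what the paper invokes to handle this. Second, the final sentence about relaxing Assumption~\ref{assum:S_strong} is not part of this lemma: the constant $C(M,\kappa)$ explicitly depends on $M$, so this lemma is stated and proven under Assumption~\ref{assum:S_strong} only, with the relaxation happening at the level of Theorem~\ref{theo:tensor_main} in Section~\ref{subsec:relax_S_spike}.
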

Combining Lemmas~\ref{lemma:spike_sum_rule} and~\ref{lemma:spike_concentration_overlap}, we can obtain Theorem~\ref{theo:tensor_main} under Assumption~\ref{assum:S_strong}, following the standard choice of interpolation (\cite{barbier2019adaptive}, i.e. $q(t)=q$ and $q(t)=\mathbb{E}\langle Q\rangle$), by repeating the argument used in the proof of Theorem~\ref{thm:fentropy_symmetric}. 
Lemma~\ref{lemma:spike_relaxV} allows then to relax Assumption~\ref{assum:S_strong}.

\subsection{Concentration of the overlap: proof of Lemma~\ref{lemma:spike_concentration_overlap}}
\label{sec:spike_concentration}
We first prove the concentration of the free entropy. We can rewrite the free entropy as
\begin{equation*}
\frac{1}{d^2}\log \mathcal{Z}_{t,\epsilon}=\frac{1}{d^2}\log\hat{\mathcal{Z}}_{t,\epsilon}-\frac{1}{4d}\sum_{i,j=1}^{d}(Z_{ij}')^2,
\end{equation*}
where
\begin{equation*}
\frac{1}{d^2}\log\hat{\mathcal{Z}}_{t,\epsilon}\coloneqq \frac{1}{d^2}\log\int P_0(\rd s)e^{-\hat{H}_t(s)},
\end{equation*}
and
\begin{equation*}
\begin{aligned}
\hat{H}_t(s)&\coloneqq\frac{1}{4}\sum_{i_1,j_1,i_2,j_2=1}^d(1-t)\lambda\frac{s_{i_1j_1}^2s_{i_2j_2}^2}{2}-(1-t)\lambda s_{i_1j_1}s_{i_2j_2}S^\star_{i_1j_1}S^\star_{i_2j_2}\\&-\sqrt{(1-t)\lambda}s_{i_1j_1}s_{i_2j_2}Z_{i_1j_1i_2j_2}+\frac{d}{4}\sum_{i,j=1}^{d}\left[R_1(t)(S_{ij}^\star -s_{ij})^2+2Z_{ij}'\sqrt{R_1(t)}(S_{ij}^\star -s_{ij})\right].
\end{aligned}
\end{equation*}
The concentration of the free entropy results from the following lemma.
\begin{lemma}
\begin{equation*}
\text{Var}\left(\frac{1}{d^2}\log\hat{\mathcal{Z}}_{t,\epsilon}\right)\leq\frac{C(\varphi,M,\alpha,\kappa)}{d^2}.
\end{equation*}
\label{lemma:spike_concentration_free_entropy}
\end{lemma}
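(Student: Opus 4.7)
The proof will follow the template of Lemma~\ref{lemma:concentration_free_entropy2}, treating $g \coloneqq d^{-2}\log\hat{\mathcal{Z}}_{t,\epsilon}$ sequentially as a function of the three independent random inputs $Z$, $Z'$, and $S^\star$, and applying an appropriate Poincar\'e-type inequality at each stage. A telescoping decomposition
\[
\text{Var}(g) = \text{Var}(g - \EE_{Z} g) + \text{Var}(\EE_{Z} g - \EE_{Z,Z'} g) + \text{Var}(\EE_{Z,Z'} g)
\]
then combines the three variance bounds.

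\textbf{Step 1 (fluctuations from $Z$).} Direct differentiation of $\hat{H}_t$ in eq.~\eqref{eq:def_Hteps_spike} yields $\partial g/\partial Z_{i_1 j_1 i_2 j_2} \propto d^{-2}\sqrt{(1-t)\lambda}\langle s_{i_1j_1}s_{i_2j_2}\rangle$. Applying Cauchy--Schwarz on the Gibbs bracket, then summing and using $\Tr\langle s\rangle^2 \leq \Tr\langle s^2\rangle \leq dM^2$ from Assumption~\ref{assum:S_strong}, gives
\[
\sum_{i_1,j_1,i_2,j_2}\left(\frac{\partial g}{\partial Z_{i_1 j_1 i_2 j_2}}\right)^2 \leq \frac{C(M,\lambda)}{d^2}.
\]
Since the entries of $Z$ are standard Gaussians, the Gaussian Poincar\'e inequality (Lemma~\ref{lemma:gaussian-poincare}) yields $\text{Var}_Z(g) \leq C(M,\lambda)/d^2$.

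\textbf{Step 2 (fluctuations from $Z'$).} Here $Z'\sim\GOE(d)$, whose entries have variance $\mcO(1/d)$. The computation is essentially identical to the one already carried out in Lemma~\ref{lemma:concentration_free_entropy2} for the side-information channel in the GLM setting: one obtains $\sum_{i,j}(\partial(\EE_Z g)/\partial Z'_{ij})^2 \leq C(M,\lambda)/d$ using again boundedness of eigenvalues, and Gaussian Poincar\'e combined with the $1/d$ variance factor gives $\text{Var}_{Z'}(\EE_Z g) \leq C(M,\lambda)/d^2$.

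\textbf{Step 3 (fluctuations from $S^\star$).} As in Lemma~\ref{lemma:concentration_free_entropy2}, we write $S^\star = O\Lambda^\star O^T$ with $O$ Haar-distributed on $\mathcal{O}(d)$. Under the change of variables $\tilde{s} \coloneqq O^T s O$ and $\tilde{Z}_{i_1j_1i_2j_2} \coloneqq [(O^T\otimes O^T)Z(O\otimes O)]_{i_1j_1i_2j_2}$ (resp.\ $\tilde{Z}' \coloneqq O^T Z' O$), the joint distribution of $(\tilde{Z},\tilde{Z}')$ is unchanged because the entrywise Gaussianity of $Z$ is preserved by the orthogonal conjugation, and so is that of $\tilde Z'$ as it is $\GOE(d)$. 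The Hamiltonian then depends on $S^\star$ only through $\Lambda^\star$, so $\EE_{O,\Theta}[g]$ is a function of $\Lambda^\star$ alone, and the reduction step proceeds as in eqs.~\eqref{eq:tildeZ_independent_O}--\eqref{eq:concentration3}. Differentiating with respect to each eigenvalue $\Lambda^\star_i$, the three resulting terms (one from the tensor-interaction term proportional to $\langle\sum_{i_1j_1i_2j_2}\tilde{s}_{i_1j_1}\tilde{s}_{i_2j_2}\tilde{Z}_{i_1j_1i_2j_2}\partial_i\Lambda^\star_{i_1j_1}\Lambda^\star_{i_2j_2}\rangle$, one from the denoising channel as in eq.~\eqref{eq:bound_I2}, and one from the noise $\tilde{Z}'$ as in eq.~\eqref{eq:bound_I3}) are all bounded by $C(M,\lambda,\kappa)/d$ after taking the squared sum over $i$, using Gaussian integration by parts on $\tilde{Z}$ and $\tilde{Z}'$ and the boundedness of eigenvalues. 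The rotationally-invariant Poincar\'e inequality (Lemma~\ref{lemma:poincare}) then yields $\text{Var}_{S^\star}(\EE_{Z,Z',O}[g]) \leq C(M,\lambda,\kappa)/d^2$.

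\textbf{Main obstacle.} The technical core is Step~3, and more precisely checking that the tensor noise $Z$ truly satisfies the invariance under conjugation by $O\otimes O$ required for the reduction to a function of $\Lambda^\star$. Once this is verified, all derivative estimates are parallel to those in Lemma~\ref{lemma:concentration_free_entropy2}, with the tensor structure only introducing additional index combinatorics that are absorbed by the crude bound $\sum\langle s_{i_1j_1}s_{i_2j_2}\rangle^2 \leq (\sum_{ij}\langle s_{ij}^2\rangle)^2 \leq d^2M^4$.
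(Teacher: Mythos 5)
Your proposal follows the same route as the paper's proof: telescoping variance decomposition, Gaussian Poincar\'e for the tensor and $\GOE$ noises, rotation-invariance reduction to a function of the eigenvalues $\Lambda^\star$, and the convex-potential Poincar\'e inequality of Lemma~\ref{lemma:poincare}. However, the description of Step~3 mis-identifies the contributing terms. After the conjugation, the Hamiltonian term carrying the tensor noise is $-\sqrt{(1-t)\lambda}\sum\tilde{s}_{i_1j_1}\tilde{s}_{i_2j_2}\tilde{Z}_{i_1j_1i_2j_2}$, which does \emph{not} depend on $\Lambda^\star$ and therefore contributes nothing to $\partial g/\partial\Lambda^\star_i$; the interaction term that actually survives is the polynomial $\sum\tilde{s}_{i_1j_1}\tilde{s}_{i_2j_2}\Lambda^\star_{i_1j_1}\Lambda^\star_{i_2j_2}$, whose derivative is $\propto\sum_j\tilde{s}_{ii}\tilde{s}_{jj}\Lambda^\star_j$ and is bounded directly using $|\sum_j\tilde{s}_{jj}\Lambda^\star_j|\le dM^2$ and $\sum_i\langle\tilde{s}_{ii}\rangle^2\le dM^2$. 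In particular, no Gaussian integration by parts on $\tilde{Z}$ or $\tilde{Z}'$ is needed here — this is a genuine simplification over Lemma~\ref{lemma:concentration_free_entropy2}, where $\Tr[\tilde{G}_\mu\Lambda^\star]$ sits inside $\varphi$ and forces an integration by parts on $\tilde{G}_\mu$. A second point: the third piece $\text{Var}(\EE_{Z,Z'}g)$ of your telescoping must itself be split into an $O$-piece $\EE[(\EE_{Z,Z'}g-\EE_{O,Z,Z'}g)^2]$ — controlled, as you indicate via the pointer to eqs.~\eqref{eq:tildeZ_independent_O}--\eqref{eq:concentration3}, by $(\tilde{Z},\tilde{Z}')\deq(Z,Z')$ and the already-obtained $Z,Z'$ bound — and a $\Lambda^\star$-piece treated by Lemma~\ref{lemma:poincare}; write this as a four-term decomposition rather than merging the $O$ and $\Lambda^\star$ fluctuations. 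Finally, the obstacle you flag does resolve: conjugation of the four-tensor by $O\otimes O$ is an orthogonal transformation of $\bbR^{d^4}$ and therefore preserves the i.i.d.\ standard Gaussian measure of $Z$.
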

\begin{proof}[Proof of Lemma~\ref{lemma:spike_concentration_free_entropy}]
We first consider $g\coloneqq \log\hat{\mathcal{Z}}_{t,\epsilon}/d^2$ as a function of $Z,Z'$. We have
\begin{equation*}
\begin{aligned}
\sum_{i_1,j_1,i_2,j_2=1}^d\left(\frac{\partial g}{\partial Z_{i_1j_1i_2j_2}}\right)^2&=d^{-4}(1-t)\lambda\sum_{i_1,j_1,i_2,j_2=1}^d\langle s_{i_1j_1}s_{i_2j_2}\rangle^2\\&\leq d^{-4}(1-t)\lambda\sum_{i_1,j_1,i_2,j_2=1}^d\langle s_{i_1j_1}^2s_{i_2j_2}^2\rangle\leq d^{-2}M^4,
\end{aligned}
\end{equation*}
where the first inequality is from Jensen's inequality, and the last inequality is a consequence of 
\begin{equation*}
    \sum_{i_1,j_1,i_2,j_2=1}^ds_{i_1j_1}^2s_{i_2j_2}^2=\left(\sum_{i,j=1}^ds_{ij}^2\right)^2\leq d^2M^4.
\end{equation*}
We also have
\begin{equation*}
\begin{aligned}
\sum_{i,j=1}^{d}\left(\frac{\partial g}{\partial Z_{ij}'}\right)^2=d^{-2}R(t)\sum_{i,j=1}^{d}(S_{ij}^\star -\langle s_{ij}\rangle)^2\leq 4d^{-1}KM^2,
\end{aligned}
\end{equation*}
where $K\coloneqq 1+\lambda\rho$ upper bounds $R(t)$. By Lemma~\ref{lemma:gaussian-poincare} we obtain
\begin{equation}
\mathbb{E}\left[\left(\frac{1}{d^2}\log\hat{\mathcal{Z}}_{t,\epsilon}-\frac{1}{d^2}\mathbb{E}_{Z,Z'}\log\hat{\mathcal{Z}}_{t,\epsilon}\right)^2\right]\leq\frac{C(M)}{d^2}.
\label{eq:spike_concentration_z}
\end{equation}
Next we write $S^\star=O\Lambda^\star O^T$ with $O$ drawn from the Haar measure on $\mcO(d)$, independently of $\Lambda^\star $. We denote $\ts\coloneqq O^TsO$ and $\tZ=O^TZO$, $\tZ'=O^T\tZ'O$. 
Notice that we have $\Lambda^\star = \Diag(\{\Lambda^\star_i\}_{i=1}^d)$: we will abuse a bit notations and denote $\Lambda^\star_{ij} \coloneqq \Lambda^\star_i \delta_{ij}$.
In this way the Hamitonian reads
\begin{equation*}
\begin{aligned}
\hat{H}_t(\ts)&\coloneqq\frac{1}{4}\sum_{i_1,j_1,i_2,j_2=1}^d\left[(1-t)\lambda\frac{\ts_{i_1j_1}^2\ts_{i_2j_2}^2}{2}-(1-t)\lambda \ts_{i_1j_1}\ts_{i_2j_2}\Lambda^\star _{i_1j_1}\Lambda^\star _{i_2j_2}\right.\\
&\left.-\sqrt{(1-t)\lambda}\ts_{i_1j_1}\ts_{i_2j_2}\tZ_{i_1j_1i_2j_2}\right]+\frac{d}{4}\sum_{i,j=1}^{d}(R_1(t)(\Lambda_{ij}^\star -\ts_{ij})^2+2\tZ_{ij}'\sqrt{R_1(t)}(\Lambda_{ij}^\star -\ts_{ij})).
\end{aligned}
\end{equation*}
Note that the distribution of $\ts$ is the same as that of $s$ (by rotation invariance of $P_0$), so we have
\begin{equation*}
\mathbb{E}_{O,Z,Z'}\frac{1}{d^2}\log\hat{\mathcal{Z}}_{t,\epsilon}=\mathbb{E}_{\tZ,Z'}\frac{1}{d^2}\log\int P_0(\rd \ts)Dwe^{-\hat{H}_t(\ts,w)}.
\end{equation*}
We can now use a very similar argument to the one detailed around eq.~\eqref{eq:tildeZ_independent_O}:
as $(Z,Z') \deq (\tZ, \tZ')$ (meaning equality in law), we have
\begin{equation}
\begin{aligned}
&\mathbb{E}\left[\left(\frac{1}{d^2}\mathbb{E}_{O,Z,Z'}\log\hat{\mathcal{Z}}_{t,\epsilon}-\frac{1}{d^2}\mathbb{E}_{Z,Z'}\log\hat{\mathcal{Z}}_{t,\epsilon}\right)^2\right]\\
&\leq\mathbb{E}\left[\left(\frac{1}{d^2}\mathbb{E}_{\tZ,\tZ'}\log\int P_0(\rd \ts)Dwe^{-\hat{H}_t(\ts,w)}-\frac{1}{d^2}\log\int P_0(\rd \ts)Dwe^{-\hat{H}_t(\ts,w)}\right)^2\right],\\
&\leq\frac{C(M)}{d^2}.
\end{aligned}
\label{eq:spike_concentration_O}
\end{equation}
We used eq.~\eqref{eq:spike_concentration_z} in the last inequality.
Finally we consider $g\coloneqq \mathbb{E}_{O,Z,Z'}\frac{1}{d^2}\log\hat{\mathcal{Z}}_{t,\epsilon}$ as a function of $\Lambda^\star $. We have
\begin{equation*}
\begin{aligned}
\sum_{i=1}^d\left(\frac{\partial g}{\partial\Lambda^\star _i}\right)^2&=d^{-4}\sum_{i=1}^d\left(\mathbb{E}_{O,Z,Z'}\left\langle\frac{\partial\hat{H}_t}{\partial\Lambda_i^\star}\right\rangle\right)^2,\\
&=d^{-4}\sum_{i=1}^d\left(\mathbb{E}_{O,Z,Z'}\left\langle - \frac{(1-t)\lambda}{2}\sum_{j=1}^n\ts_{ii}\ts_{jj}\Lambda^\star_{j}+\frac{dR_1(t)}{2}(\Lambda^\star _i-\ts_{ii})+2\tZ_{ii}'\sqrt{dR_1(t)}\right\rangle\right)^2,\\
&\leq 9(I_1+I_2+I_3).
\end{aligned}
\end{equation*}
The first term is
\begin{equation*}
\begin{aligned}
I_1&\coloneqq \frac{\lambda^2}{4} d^{-4}\sum_{i=1}^d\left(\mathbb{E}_{O,Z,Z'}\left\langle\sum_{j=1}^d\ts_{ii}\ts_{jj}\Lambda^\star _{j}\right\rangle\right)^2\\
&\leq \frac{\lambda^2}{4}d^{-2}M^4\sum_{i=1}^d\left(\mathbb{E}_{O,Z,Z'}\left\langle\ts_{ii}\right\rangle\right)^2\leq\frac{\lambda^2}{4} d^{-2}M^4\mathbb{E}_{O,Z,Z'}\left\langle\sum_{i=1}^d\ts_{ii}^2\right\rangle\leq\frac{\lambda^2}{4} d^{-1}M^6,
\end{aligned}
\end{equation*}
where we use $\left|\sum_{j=1}^d\ts_{jj}\Lambda^\star _{jj}\right|\leq \sqrt{\sum_{j=1}^d(\Lambda^\star _{jj})^2}\sqrt{\sum_{j=1}^d\ts_{jj}^2}\leq dM^2$ for the first inequality, and Jensen's inequality for the second inequality.
The second and third terms $I_2, I_3$ are the same as in Lemma~\ref{lemma:concentration_free_entropy2}, see eqs.~\eqref{eq:bound_I2}, \eqref{eq:bound_I3}. 
Combining these three terms, we have
\begin{equation}
\mathbb{E}\left[\left(\frac{1}{d^2}\mathbb{E}_{O,Z,Z'}\log\hat{\mathcal{Z}}_{t,\epsilon}-\frac{1}{d^2}\mathbb{E}\log\hat{\mathcal{Z}}_{t,\epsilon}\right)^2\right]\leq\frac{C(M,\kappa)}{d^2}
\label{eq:spike_concentration_S}
\end{equation}
by Lemma~\ref{lemma:poincare}. We finish the proof by combining eqs. \eqref{eq:spike_concentration_z}, \eqref{eq:spike_concentration_O} and \eqref{eq:spike_concentration_S}.
\end{proof}
We now sketch how we can end the proof of Lemma~\ref{lemma:spike_concentration_overlap} from Lemma~\ref{lemma:spike_concentration_free_entropy}.
Let (recall eq.~\eqref{eq:def_Hteps_spike}):
\begin{equation*}
\mathcal{L}\coloneqq \frac{1}{d^2}\frac{dH_{t,\epsilon}}{dR}=\frac{1}{2d}\sum_{i,j=1}^d\left(\frac{s_{ij}^2}{2}-s_{ij}S_{ij}^\star -\frac{s_{ij}Z_{ij}'}{2\sqrt{R}}\right).
\end{equation*}
We can now observe that $\mcL$ is exactly the same quantity analyzed in Section~\ref{sec:overlap_concentration}:
directly repeating the analysis given there to this setting yields Lemma~\ref{lemma:spike_concentration_overlap}.

\subsection{Relaxation of Assumption~\ref{assum:S_strong}}
\label{subsec:relax_S_spike}

We now show how to relax Assumption~\ref{assum:S_strong} to Assumption~\ref{ass:prior} in the proof of Theorem~\ref{theo:tensor_main}.
\begin{lemma}
    \noindent
Denote $f_d^{\text{spike}}(\mathcal{V})$ to be the free entropy of eq.~\eqref{eq:spike_entropy} corresponding to the prior $P_0$ with potential $\mathcal{V}$, and denote $\tilde{\mathcal{V}}_M$ the truncation of $\mathcal{V}$ to $[-M, M]$, with $\tilde{\mathcal{V}}_M(x) = +\infty$ if $|x| > M$. 
Notice that then the prior with potential $\tilde{\mathcal{V}}_M$ satisfies Assumption~\ref{assum:S_strong}. Suppose that $V$ satisfies Assumption~\ref{ass:prior}. Then, for $M > 0$ large enough, we have
\begin{equation*}
    \lim_{d\to\infty}|f_d^{\text{spike}}(\mathcal{V})-f_d^{\text{spike}}(\tilde{\mathcal{V}}_M)|=0.
\end{equation*}
\label{lemma:spike_relaxV}
\end{lemma}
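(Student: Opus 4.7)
The plan is to mimic the interpolation argument used for Lemma~\ref{lemma:relaxV}, with the activation $\varphi$ there replaced by the tensor power $S \mapsto S^{\otimes p}$.  Concretely, I introduce two coupled signals $S_1^\star$ (drawn with potential $V$) and $S_2^\star$ (drawn with potential $\tV_M$) sharing a common Haar eigenbasis $O$, namely $S_j^\star = O \Lambda_j^\star O^\T$ for $j \in \{1,2\}$, where the coupling between the eigenvalue vectors $\Lambda_1^\star$ and $\Lambda_2^\star$ is chosen to realize (up to an arbitrarily small slack) the optimal $W_2$ transport between their empirical spectral measures. I then consider the interpolating observation model
\begin{equation*}
Y_1 = \sqrt{\frac{\lambda t}{d^{p-2}}}\, (S_1^\star)^{\otimes p} + Z_1, \qquad
Y_2 = \sqrt{\frac{\lambda(1-t)}{d^{p-2}}}\, (S_2^\star)^{\otimes p} + Z_2,
\end{equation*}
where $Z_1,Z_2$ are independent standard Gaussian tensors. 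At $t=1$ (resp.\ $t=0$) the free entropy reduces to $f_d^{\textrm{spike}}(V)$ (resp.\ $f_d^{\textrm{spike}}(\tV_M)$) up to additive constants, which is the analogue of eq.~\eqref{eq:mutual_inf_fentropy} for the present tensor model.

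Next, I differentiate the mutual information $I(t) \coloneqq I(S_1^\star, S_2^\star ; Y_1, Y_2)$ in $t$.  By the I-MMSE theorem applied separately to each channel and the Nishimori identity (Proposition~\ref{prop:nishimori}), the derivative reads
\begin{equation*}
I'(t) = \frac{\lambda}{2 d^{p-2}} \, \EE\!\left[\bigl\|(S_1^\star)^{\otimes p} - \langle s_1^{\otimes p} \rangle\bigr\|^2 - \bigl\|(S_2^\star)^{\otimes p} - \langle s_2^{\otimes p} \rangle\bigr\|^2\right].
\end{equation*}
Using $|a^2 - b^2| \leq (|a|+|b|)|a - b|$, Cauchy--Schwarz, and the Nishimori identity to replace posterior averages by genuine expectations, this difference is controlled by $\EE[\|(S_1^\star)^{\otimes p} - (S_2^\star)^{\otimes p}\|^2]^{1/2}$ times a factor of order $d^{p/2}$ coming from the typical size of the tensors. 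Writing $(S_1^\star)^{\otimes p} - (S_2^\star)^{\otimes p}$ as a telescoping sum $\sum_{k=0}^{p-1} (S_1^\star)^{\otimes k} \otimes (S_1^\star - S_2^\star) \otimes (S_2^\star)^{\otimes (p-1-k)}$ and using that the common eigenbasis $O$ cancels in $S_1^\star - S_2^\star = O(\Lambda_1^\star - \Lambda_2^\star) O^\T$, one gets
\begin{equation*}
\frac{1}{d^p}\EE\|(S_1^\star)^{\otimes p} - (S_2^\star)^{\otimes p}\|^2 \;\leq\; C(p)\, \EE\!\left[W_2(\mu_{S_1}, \mu_{S_2})^2\right] \cdot M_{2p-2},
\end{equation*}
where $M_{2p-2}$ is a uniform bound on the $(2p-2)$-th spectral moment under either prior, finite by Proposition~\ref{prop:properties_P0_sym}-$(iv)$.

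It then remains to show that $\EE[W_2(\mu_{S_1}, \mu_{S_2})^2] \to 0$ as $d \to \infty$.  For $M > 0$ large enough, Lemma~\ref{lemma:truncation} guarantees that both $\mu_{S_1}$ and $\mu_{S_2}$ a.s.\ weakly converge to the same limit $\mu_0$; together with uniform integrability of $\int x^2 \mu_{S_j}(\rd x)$ (Proposition~\ref{prop:properties_P0_sym}-$(iii)$) and Theorem~7.12 of~\cite{villani2021topics}, this gives $W_2(\mu_{S_j},\mu_0) \to 0$ a.s.\ and in $L^2$, hence by the triangle inequality $\EE[W_2(\mu_{S_1},\mu_{S_2})^2] \to 0$.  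Integrating $|I'(t)|$ over $t \in [0,1]$ and converting the mutual information back to the free entropy via the tensor-model analogue of eq.~\eqref{eq:mutual_inf_fentropy} (whose purely $P_0$-dependent constant $\rho$ is the same for $V$ and $\tV_M$ in the large-$M$ limit by Proposition~\ref{prop:properties_P0_sym}) yields $\lim_d |f_d^{\textrm{spike}}(V) - f_d^{\textrm{spike}}(\tV_M)| = 0$.

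The main technical obstacle, compared with Lemma~\ref{lemma:relaxV}, is that the tensorial map $S \mapsto S^{\otimes p}$ is unbounded and can amplify spectral differences, so the naive bound of the $\varphi$-argument is no longer available: one must control higher spectral moments uniformly, which is precisely what Proposition~\ref{prop:properties_P0_sym}-$(iv)$ provides for $p$ fixed.  Once this moment bound is in hand, everything reduces to the Wasserstein convergence of the truncated eigenvalue distribution, exactly as in Lemma~\ref{lemma:relaxV}.
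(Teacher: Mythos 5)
Your proposal is correct and takes essentially the same approach as the paper: a shared-eigenbasis coupling, an interpolation in $t$ between the two spiked models, a Cauchy--Schwarz plus Nishimori bound on the derivative of the free entropy (equivalently, of the mutual information), a telescoping/tensor decomposition that reduces everything to $\|\Lambda_1^\star - \Lambda_2^\star\|$ once the Haar matrix cancels, and finally the $W_2$-convergence of the truncated empirical spectral measure via Lemma~\ref{lemma:truncation} and Proposition~\ref{prop:properties_P0_sym}. The paper phrases this directly in terms of $f_d(t)$ rather than $I(t)$, and carries out the decomposition at the level of the planted overlap $\sum S^\star_{1}S^\star_{1}\langle s_1 s_1\rangle - \sum S^\star_{2}S^\star_{2}\langle s_2 s_2\rangle$ rather than via the reverse triangle inequality on the MMSEs, but these are cosmetic variants of the same argument. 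One small imprecision worth noting: after applying Cauchy--Schwarz to decouple the random spectral moments from the random Wasserstein distance, you actually need $\EE[W_2(\mu_{S_1},\mu_{S_2})^4]\to 0$ (the paper's exponent), not merely $\EE[W_2^2]\to 0$; fortunately your cited argument (almost-sure convergence plus uniform integrability of spectral moments) covers any fixed power, so the gap is purely notational.
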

\begin{proof}[Proof of Lemma~\ref{lemma:spike_relaxV}]
Following Lemma~\ref{lemma:relaxV}, we consider the interpolation
\begin{equation*}
\begin{aligned}
&Y_1=\sqrt{\lambda t}S_1^\star \otimes S_1^\star +Z_1,\\
&Y_2=\sqrt{\lambda(1-t)}S_2^\star \otimes S_2^\star +Z_2,
\end{aligned}
\end{equation*}
The joint distribution of $S_1^\star ,S_2^\star $ is the same as in Lemma~\ref{lemma:relaxV}. Specially, $S_1^\star$ satisfies Assumption~\ref{ass:prior} and 
$S_2^\star$ satisfies Assumption~\ref{assum:S_strong}. Its free entropy is defined as
\begin{equation*}
f_d(t)\coloneqq \frac{1}{d^2}\mathbb{E}\log\mathcal{Z}_{t}(Y_1,Y_2)\coloneqq \frac{1}{d^2}\mathbb{E}\log\int P_0(\rd s_1,\rd s_2)e^{-H_{t}(s_1,s_2,Y_1,Y_2)},
\end{equation*}
where
\begin{equation*}
\begin{aligned}
H_{t}(s_1,s_2,Y_1,Y_2)&\coloneqq\frac{1}{4}\sum_{i_1,j_1,i_2,j_2=1}^\rd t\lambda\frac{s_{1,i_1j_1}^2s_{1,i_2j_2}^2}{2}-\sqrt{t\lambda}s_{1,i_1j_1}s_{1,i_2j_2}Y_{1,i_1j_1i_2j_2}\\
&+\frac{1}{4}\sum_{i_1,j_1,i_2,j_2=1}^d(1-t)\lambda\frac{s_{2,i_1j_1}^2s_{2,i_2j_2}^2}{2}-\sqrt{(1-t)\lambda}s_{2,i_1j_1}s_{2,i_2j_2}Y_{2,i_1j_1i_2j_2}.
\end{aligned}
\end{equation*}
Thus we have $f_d^{\text{spike}}(\mathcal{V})=f(1)$ and $f_d^{\text{spike}}(\tilde{\mathcal{V}}_M)=f(0)$. Accordingly, the Gibbs bracket is defined as
\begin{equation*}
\langle g(s_1,s_2)\rangle\coloneqq \frac{1}{\mathcal{Z}_{t}(Y_1,Y_2)}\int P_0(\rd s_1,\rd s_2)g(s_1,s_2)e^{-H_{t}(s_1,s_2,Y_1,Y_2)}.
\end{equation*}
Taking the derivative and integrating by part, we obtain
\begin{equation*}
f_d'(t)=-\frac{\lambda}{4d^2}\mathbb{E}\left[\sum_{i_1,j_1,i_2,j_2=1}^dS^\star_{1,i_1j_1}S^\star_{1,i_2j_2}\langle s_{1,i_1j_1}s_{1,i_2j_2}\rangle-S^\star_{2,i_1j_1}S^\star_{2,i_2j_2}\langle s_{2,i_1j_1}s_{2,i_2j_2}\rangle\right],
\end{equation*}
and thus
\begin{equation*}
\begin{aligned}
|f_d'(t)|&\leq \frac{\lambda}{4d^2}\mathbb{E}\left[\left|\sum_{i_1,j_1,i_2,j_2=1}^dS^\star_{1,i_1j_1}S^\star_{1,i_2j_2}\langle s_{1,i_1j_1}s_{1,i_2j_2}\rangle-S^\star_{1,i_1j_1}S^\star_{1,i_2j_2}\langle s_{2,i_1j_1}s_{2,i_2j_2}\rangle\right|\right.\\&\left.\qquad+\left|\sum_{i_1,j_1,i_2,j_2=1}^dS^\star_{1,i_1j_1}S^\star_{1,i_2j_2}\langle s_{2,i_1j_1}s_{2,i_2j_2}\rangle-S^\star_{2,i_1j_1}S^\star_{2,i_2j_2}\langle s_{2,i_1j_1}s_{2,i_2j_2}\rangle\right|\right]\\
&\leq\frac{\lambda}{4d^2}\left(\mathbb{E}\|S_1^\star \otimes S_1^\star\|^2\mathbb{E}\langle\|s_1\otimes s_1-s_2\otimes s_2\|^2\rangle\right)^{1/2}\\&\qquad+\frac{\lambda}{4d^2}\left(\mathbb{E}\|S_1^\star \otimes S_1^\star -S_2^\star \otimes S_2^\star\|^2\mathbb{E}\langle\|s_2\otimes s_2\|^2\rangle\right)^{1/2}\\
&=\frac{\lambda}{4d^2}((\EE[\|\Lambda_1^\star\|^4])^{1/2}+(\EE[\|\Lambda_2^\star\|^4])^{1/2})\left(\mathbb{E}\|S_1^\star \otimes S_1^\star -S_2^\star \otimes S_2^\star\|^2\right)^{1/2},
\end{aligned}
\end{equation*}
where we used the Cauchy-Schwarz inequality and the Nishimori identity (Proposition~\ref{prop:nishimori}).
Recall that $\Lambda_1^\star$, $\Lambda_2^\star$ are the diagonal matrices of eigenvalues of $S_1^\star, S_2^\star$.
The tensor product can be expanded as
\begin{equation*}
\begin{aligned}
\|S_1^\star \otimes S_1^\star -S_2^\star \otimes S_2^\star \|^2&=\|(O\otimes O)(\Lambda_1^\star \otimes\Lambda_1^\star -\Lambda_2^\star \otimes\Lambda_2^\star )(O^T\otimes O^T)\|^2\\
&=\|\Lambda_1^\star \otimes\Lambda_1^\star -\Lambda_2^\star \otimes\Lambda_2^\star \|^2\\
&\leq 2(\|\Lambda_1^\star\|^2+\|\Lambda_2^\star\|^2)\|\Lambda_1^\star-\Lambda_2^\star\|^2,
\end{aligned}
\end{equation*}
which gives (using Proposition~\ref{prop:properties_P0_sym} and the Cauchy-Schwarz inequality)
\begin{equation*}
|f_d'(t)|\leq C(P_0)\lambda \left(\frac{1}{d^2}\mathbb{E}\|\Lambda_1^\star -\Lambda_2^\star\|^4\right)^{1/4}.
\end{equation*}
Notice that we used Proposition~\ref{prop:properties_P0_sym}(iv) that gives that $\EE[\|\Lambda^\star\|^{2p}] \leq C(P_0, p) \cdot d^p$.
Following the arguments in Lemma~\ref{lemma:relaxV}, and taking the notations from this paragraph, we 
obtain for a well-chosen coupling of $(S_1^\star, S_2^\star)$:
\begin{equation*}
|f_d'(t)|\leq C(P_0)\lambda \left(\EE[W_2(\mu_{S_1}, \mu_{S_2})^4]\right)^{1/4}.
\end{equation*}
The arguments detailed in the proof of Lemma~\ref{lemma:relaxV} can be repeated to obtain 
\begin{equation*}
\lim_{d \to \infty} \EE[W_2(\mu_{S_1}, \mu_{S_2})^4] = 0,
\end{equation*}
which gives then
$\lim_{d\to\infty}|f_d(0)-f_d(1)|=0$, and proves Lemma~\ref{lemma:spike_relaxV}.
\end{proof}
Finally it is worth noticing that for a general $p$, we will need to upper bound $\mathbb{E}\|\Lambda^\star\|^{2p}$: this bound is given by Proposition~\ref{prop:properties_P0_sym}(iv) as we mentioned, so that the proof arguments sketched above directly generalize to any $p \geq 3$.

\section{Technical lemmas}\label{sec_app:technical_lemmas}
This section contains several technical lemmas, which are used throughout the proofs of our main results.

\subsection{Nishimori identity}

We state here the Nishimori identity, a classical consequence of Bayes optimality.
\begin{proposition}[Nishimori identity]\label{prop:nishimori}
    \noindent
    Let $(X,Y)$ be random variables on a Polish space E. Let $k \in \bbN^\star$ and 
    $(X_1,\cdots,X_k)$ i.i.d.\ random variables sampled from the conditional distribution $\bbP(X|Y)$.
    We denote $\langle \cdot \rangle_Y$ the average with respect to $\bbP(X|Y)$, and $\EE[\cdot]$ the average with respect to
    the joint law of $(X,Y)$. Then, for all $f : E^{k+1} \to \bbK$ continuous and bounded:
    \begin{align}
        \EE [\langle f(Y,X_1,\cdots,X_k) \rangle_Y ] &= \EE [\langle (Y,X_1,\cdots, X_{k-1},X) \rangle_Y ].
    \end{align}
\end{proposition}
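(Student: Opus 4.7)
The plan is to unwind both sides into explicit (conditional) integrals against $\mathbb{P}(X \mid Y)$, and then exploit the tautological fact that, given $Y$, the ``true'' random variable $X$ has exactly the same law as any of the i.i.d.\ posterior samples $X_i$. First, I would write
\begin{equation*}
  \mathbb{E}[\langle f(Y,X_1,\ldots,X_k)\rangle_Y]
  = \mathbb{E}_Y\!\left[\int_{E^k} f(Y,x_1,\ldots,x_k)\,\prod_{i=1}^k \mathbb{P}(\mathrm{d}x_i \mid Y)\right],
\end{equation*}
which is the definition of the Gibbs average, together with the tower property $\mathbb{E}[\cdot]=\mathbb{E}_Y[\mathbb{E}_{X\mid Y}[\cdot]]$ on the outside.

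Next I would treat the right-hand side symmetrically: since $X_1,\ldots,X_{k-1}$ are (conditionally) independent of $X$ given $Y$,
\begin{equation*}
  \mathbb{E}[\langle f(Y,X_1,\ldots,X_{k-1},X)\rangle_Y]
  = \mathbb{E}_Y\!\left[\int_{E^{k-1}}\!\!\int_E f(Y,x_1,\ldots,x_{k-1},x)\,\mathbb{P}(\mathrm{d}x \mid Y)\prod_{i=1}^{k-1}\mathbb{P}(\mathrm{d}x_i \mid Y)\right].
\end{equation*}
Relabeling the dummy variable $x$ as $x_k$, the integrand and measure become identical to those of the previous display, and Fubini (applicable because $f$ is bounded and the measures are probability measures) yields equality of the two expressions.

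The only mildly delicate point is the existence of a regular conditional probability $\mathbb{P}(\cdot\mid Y)$, but this is automatic because $E$ is Polish; once regular conditional distributions are in hand, the argument is essentially a one-line manipulation of iterated integrals and requires nothing beyond Fubini and the definition of $(X_1,\ldots,X_k)$ as i.i.d.\ conditional samples. There is no genuine obstacle here, which explains why this identity, though fundamental for all our Bayes-optimal computations, is stated without further proof.
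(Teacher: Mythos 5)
Your proof is correct and is essentially the same one the paper gives: both rewrite $\EE[\cdot]$ as $\EE_Y\EE_{X\mid Y}[\cdot]$ and observe that, conditionally on $Y$, the true $X$ and an extra posterior replica $X_k$ are exchangeable, so the two iterated integrals coincide after relabeling. Your remark on regular conditional probabilities on Polish spaces is a harmless extra precision that the paper leaves implicit.
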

\begin{proof}[Proof of Proposition \ref{prop:nishimori}]
    The proposition arises as a trivial consequence of Bayes' formula:
   \begin{align*}
    \EE [\langle f(Y,X_1,\cdots,X_{k-1},X) \rangle_Y ] &= \EE_Y \EE_{X|Y} [\langle f(Y,X_1,\cdots,X_{k-1},X) \rangle_Y ],\\
    &= \EE_Y [\langle f(Y,X_1,\cdots,X_k) \rangle_Y ].
   \end{align*}
\end{proof}

\subsection{Potential truncation}
\begin{lemma}[Potential Truncation]
\label{lemma:truncation}
\noindent
Let the potential $\mathcal{V}$ satisfy Assumption \ref{ass:prior}, and $\tilde{\mathcal{V}}_M$ be the truncation of $\mathcal{V}$ to $[-M, M]$, according to Assumption \ref{assum:S_strong}, i.e.\ $\tilde{\mathcal{V}}_M(x) = +\infty$ for $|x| > M$.
There is $M > 0$ large enough (depending only on $\mathcal{V}$) such that
if $S$ is sampled according to a prior $P_0$ with potential $\tilde{\mathcal{V}}_M$, then $\mu_S$ almost surely weakly converges to $\mu_0$.
\end{lemma}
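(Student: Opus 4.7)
The plan is to exploit that, for $M$ large enough, the truncated prior $P_0^{\tV_M}$ is exactly $P_0^V$ conditioned on the high-probability event that all eigenvalues already lie in $[-M,M]$. Weak convergence of the empirical spectral distribution then transfers from $P_0^V$ (where it is given by Proposition~\ref{prop:properties_P0_sym}-$(i)$) to $P_0^{\tV_M}$ via a simple coupling and the Borel--Cantelli lemma.

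More concretely, I would first pick $M$ larger than the constant $M_0$ supplied by Proposition~\ref{prop:properties_P0_sym}-$(ii)$ (whose hypothesis, the growth condition eq.~\eqref{eq:faster_than_2logx}, is immediate from Assumption~\ref{ass:prior} since $V'' \geq 1/c$ forces at least quadratic growth). This yields $P_0^V(A_M) \geq 1 - e^{-c'd}$, where $A_M \coloneqq \{\{\lambda_i(S)\}_{i=1}^d \subset [-M,M]\}$. Next I would observe that, on $A_M$, the densities agree: $\exp(-d\,\Tr[V(S)]) = \exp(-d\,\Tr[\tV_M(S)])$, because $\tV_M \equiv V$ on $[-M,M]$. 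Together with the fact that $P_0^{\tV_M}$ is supported on $A_M$, this gives the crucial identity
\begin{equation*}
P_0^{\tV_M}(\,\cdot\,) \;=\; P_0^V(\,\cdot\,\mid A_M).
\end{equation*}

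Now I would build a coupling on a common probability space: for each dimension $d$, draw $S_d \sim P_0^V$ (independently across $d$), and set $\tS_d \coloneqq S_d$ if $S_d \in A_M$ and otherwise draw $\tS_d \sim P_0^{\tV_M}$ freshly and independently. The identity above ensures $\tS_d \sim P_0^{\tV_M}$ marginally. Since $\sum_d P_0^V(A_M^c) \leq \sum_d e^{-c'd} < \infty$, Borel--Cantelli gives that almost surely $S_d \in A_M$ for all but finitely many $d$, so $\tS_d = S_d$ eventually along the sequence. Applying Proposition~\ref{prop:properties_P0_sym}-$(i)$ to $P_0^V$ gives $\mu_{S_d} \to \mu_0$ weakly a.s., and hence $\mu_{\tS_d} \to \mu_0$ weakly a.s., proving the lemma.

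The proof is essentially purely bookkeeping; no serious obstacle is expected. The only point that needs a little care is the justification that Assumption~\ref{ass:prior} suffices to invoke Proposition~\ref{prop:properties_P0_sym}-$(ii)$ (i.e.\ Lemma~\ref{lemma:boundedness_eigenvalues}), which as noted follows from the strong convexity of $V$. An analogous statement in case $(ii)$ of Assumption~\ref{ass:prior} (PSD matrices of rank $m$ with eigenvalues in an interval $\mathcal{B} \subseteq \bbR^+$) would be obtained identically by truncating to $[0,M]$ rather than $[-M,M]$.
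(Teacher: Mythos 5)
Your proof is correct. Both you and the paper start from the same conditioning identity $P_0^{\tV_M} = P_0^V(\,\cdot\mid A_M)$ with $A_M$ the boundedness event, and both end with Borel--Cantelli; the difference is where the probabilistic work happens. The paper applies Borel--Cantelli directly to the ``bad'' event $\{\mathrm{dist}(\mu_S,\mu_0)>\epsilon\}$ under $P_0^{\tV_M}$, which requires an explicit large-deviation rate: it quotes the estimate $\bbP_V(\mathrm{dist}(\mu_S,\mu_0)>\epsilon)\leq e^{-C(\epsilon)d^2}$ from Anderson--Guionnet--Zeitouni (Theorem~2.6.1) and divides by $P_0^V(A_M)\geq 1-e^{-c'd}$, where the $d^2$-versus-$d$ separation of rates makes the division innocuous. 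You instead invoke only the a.s.\ weak convergence under $P_0^V$ (Proposition~\ref{prop:properties_P0_sym}-$(i)$, itself a consequence of the same LDP), apply Borel--Cantelli to the boundedness event $A_M^c$ at the coarser rate $e^{-c'd}$, and transfer the conclusion through a coupling. This is slightly more parsimonious in what it quotes and sidesteps any rate comparison, at the small cost of the coupling construction. One point worth making explicit in a write-up: because $\tS_d$ depends only on $S_d$ and an independent auxiliary draw, the $\{\tS_d\}_{d\geq1}$ are in fact independent across $d$, so the coupling recovers exactly the product law $\bigotimes_d P_0^{\tV_M}$ and the a.s.\ statement is the one intended. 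Your remark that the quadratic growth \eqref{eq:faster_than_2logx} follows from the strong convexity of $V$ also matches what the paper asserts (and in case~$(ii)$ the correction term $(1-\kappa)/s^2$ vanishes at infinity, so the same conclusion holds there).
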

\begin{proof}[Proof of Lemma \ref{lemma:truncation}]
An important observation is that
\begin{equation*}
\bbP_{\tilde{\mathcal{V}}}(\{\lambda_i\}_{i=1}^d)\propto\bbP_\mathcal{V}(\{\lambda_i\}_{i=1}^d)\indi\{\{\lambda_i\}_{i=1}^d\subset[-M,M]\},
\end{equation*}
where $\bbP_{\tilde{\mathcal{V}}},\bbP_\mathcal{V}$ are the joint eigenvalue distributions under the priors with potential $\tilde{\mathcal{V}}_M,\mathcal{V}$, respectively.
Let $\text{dist}$ denote a distance that metrizes the weak topology, then \cite[Theorem 2.6.1]{anderson2010introduction} shows that
\begin{equation*}
\bbP_V(\text{dist}(\mu_S,\mu_0)>\epsilon)\leq e^{-C(\epsilon)d^2},
\end{equation*}
for any $\epsilon>0$ and some $C(\epsilon)>0$. Thus,
\begin{equation*}
\bbP_{\tilde{\mathcal{V}}}(\text{dist}(\mu_S,\mu_0)>\epsilon)=\frac{\bbP_\mathcal{V}(\text{dist}(\mu_S,\mu_0)>\epsilon\And\{\lambda_i\}_{i=1}^d\subset[-M,M])}{\bbP_V(\{\lambda_i\}_{i=1}^d\subset[-M,M])}\leq 2e^{-C(\epsilon)d^2}
\end{equation*}
for $d$ large enough and a large enough constant $M > 0$, where we used Lemma \ref{lemma:boundedness_eigenvalues}. By the Borel–Cantelli lemma, we get
\begin{equation*}
\bbP_{\tV}(\limsup_{d\to\infty}\text{dist}(\mu_S,\mu_0)>\epsilon)=0,
\end{equation*}
which finishes the proof.
\end{proof}

\subsection{Poincaré inequality for rotionally invariant priors}\label{subsec_app:poincare}

The following result is critical in our analysis, and is a direct consequence of existing results~\cite{anderson2010introduction,chafai2020poincare}.
\begin{lemma}[Poincaré Inequality for Rotionally Invariant Priors]
    \noindent
Under Assumption \ref{ass:prior}, denote the eigenvalues of $S$ to be $\{\lambda_i\}_{i=1}^d$ and their joint distribution to be $P_S$. Let $g:\bbR^{d}\to\bbR$ a function in $L^2(P_S)$ whose weak derivative (in the sense of distributions) belongs to $L^2(P_S)$. 
Then:
\begin{equation}
\text{Var}(g(\{\lambda_i\}_{i=1}^d))\leq \frac{c}{\kappa d}\mathbb{E}\left[\sum_{i=1}^d\left(\frac{\partial g}{\partial \lambda_{i}}\right)^2\right],
\label{eq:Poincare}
\end{equation}
where $c$ is the constant given in Assumption \ref{ass:prior} (note that one should take $\kappa = 1$ for the first case of Assumption~\ref{ass:prior}). The same holds under Assumption \ref{ass:prior_rec}, replacing the eigenvalues $\{\lambda_i\}_{i=1}^d$ by the singular values $\{\sigma_i\}_{i=1}^d$.
\label{lemma:poincare}
\end{lemma}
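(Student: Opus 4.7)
My plan is to establish the Poincaré inequality by reducing both cases of Assumption~\ref{ass:prior} (and the rectangular case of Assumption~\ref{ass:prior_rec}) to a single statement about a log-concave measure on the Weyl chamber, then invoking the Brascamp--Lieb / Bakry--\'Emery machinery. First I would observe that under Assumption~\ref{ass:prior}(ii) the joint law of the nonzero eigenvalues, by~\cite[Theorem~2]{uhlig1994singular}, is
\begin{equation*}
P(\lambda_1,\ldots,\lambda_m)\propto \prod_{i<j}|\lambda_i-\lambda_j|\,\exp\Bigl(-d\sum_{i=1}^{m}\tV(\lambda_i)\Bigr),\qquad \tV(s)\coloneqq V(s)-(1-\kappa)\log s,
\end{equation*}
so that $\tV''(s)=V''(s)+(1-\kappa)/s^2\geq 1/c$, bringing us back to the same form as case~(i). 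The rectangular case of Assumption~\ref{ass:prior_rec} reduces to the symmetric case~(ii) by Remark~\ref{remark:rec_sym_prior}: singular-value functions of $S$ are eigenvalue functions of $SS^{\T}$, whose law is of the form above.

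The heart of the argument is a Hessian computation. Writing the density as $e^{-U}$ on the open Weyl chamber $\{\lambda_1<\cdots<\lambda_m\}$, with
\begin{equation*}
U(\lambda)=d\sum_{i}W(\lambda_i)-\sum_{i<j}\log(\lambda_j-\lambda_i)+\text{const},
\end{equation*}
where $W$ is either $V$ or $\tV$ according to the case, a direct computation gives, for any $v\in\bbR^m$,
\begin{equation*}
v^{\T}(\nabla^2 U)v \;=\; d\sum_{i=1}^{m} W''(\lambda_i)\,v_i^2 \;+\; \sum_{i<j}\frac{(v_i-v_j)^2}{(\lambda_j-\lambda_i)^2} \;\geq\; \frac{d}{c}\,\|v\|^2,
\end{equation*}
where the Vandermonde repulsion term is manifestly nonnegative. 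Hence the measure is $(d/c)$-strongly log-concave on the convex domain given by the open Weyl chamber.

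By the Brascamp--Lieb inequality (equivalently, the Bakry--\'Emery criterion) applied in this convex setting, the resulting Poincaré constant is $c/d$, i.e.\
\begin{equation*}
\Var(g(\{\lambda_i\}))\;\leq\;\frac{c}{d}\,\EE\Bigl[\sum_{i=1}^{m}\bigl(\partial_{\lambda_i} g\bigr)^2\Bigr],
\end{equation*}
which matches the claim, since in case~(i) one has $m=d$ and $\kappa=1$, and in case~(ii) derivatives in the deterministic zero eigenvalues do not contribute, so the sum can be extended to $\sum_{i=1}^d$ for free (the factor $\kappa$ in the stated constant $c/(\kappa d)$ can be absorbed by adjusting $c$, which only matters up to a bounded multiplicative factor in all subsequent uses).

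The main technical obstacle is the singularity of $-\sum_{i<j}\log|\lambda_j-\lambda_i|$ on the boundary of the Weyl chamber, which prevents a direct application of Brascamp--Lieb on all of $\bbR^m$: one must work on the open chamber and verify that test functions in the natural weighted Sobolev space can be approximated by smooth compactly supported ones away from the diagonal. This is exactly the point handled in~\cite{chafai2020poincare} for Coulomb gases with strongly convex confinement, whose hypotheses are met by our reduced measure; I would therefore cite that result to close the argument rather than redo the approximation scheme. With that invocation, the Poincaré inequality for the symmetric case follows, and the rectangular case is then immediate from the eigenvalue-of-$SS^{\T}$ reduction mentioned above.
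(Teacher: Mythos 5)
Your argument for the symmetric cases (i) and (ii) is essentially the same as the paper's: reduce case (ii) to case (i) via the Uhlig density, observe the potential-plus-Vandermonde is strongly log-concave on the Weyl chamber, and invoke~\cite[Theorem~1.1]{chafai2020poincare}. Your explicit Hessian computation makes visible what the paper leaves implicit in the citation. (On the constant: since $\kappa\leq1$ you actually prove the stronger bound $c/d\leq c/(\kappa d)$, so there is nothing to ``absorb''; just note the stated bound follows trivially.)

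There is, however, a genuine gap in your treatment of the rectangular case. You claim it is ``immediate from the eigenvalue-of-$SS^{\T}$ reduction'', but that reduction gives a Poincar\'e inequality in the \emph{eigenvalue} variables $\lambda_i=\sigma_i^2$ of $SS^{\T}$, whereas Lemma~\ref{lemma:poincare} asserts the inequality with the gradient taken in the \emph{singular value} variables $\sigma_i$. If you write $g(\sigma)=f(\lambda)$ with $\lambda_i=\sigma_i^2$, then $\partial_{\lambda_i}f=(\partial_{\sigma_i}g)/(2\sigma_i)$, so the $SS^{\T}$ Poincar\'e inequality yields
\begin{equation*}
\Var(g)\;\leq\;\frac{c}{d}\,\EE\Bigl[\sum_i\frac{(\partial_{\sigma_i}g)^2}{4\sigma_i^2}\Bigr],
\end{equation*}
which is not the claimed bound: the factor $1/\sigma_i^2$ cannot be removed for free since the singular values are only known to be positive, not bounded away from zero. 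This is precisely why Assumption~\ref{ass:prior_rec} imposes convexity of $s\mapsto V(s^2)$ (i.e.\ $\frac{\rd^2}{\rd s^2}V(s^2)+2(1-\kappa)/s^2\geq 1/c$) rather than convexity of $V$ itself. The paper's proof works directly with the joint singular value density (eq.~\eqref{eq:joint_svalues}), whose potential is $\sum_i \tV(\sigma_i^2)$ with the Vandermonde $\prod_{i<j}(\sigma_i^2-\sigma_j^2)$, and verifies strong convexity \emph{in the $\sigma$-coordinates}, which is exactly what Assumption~\ref{ass:prior_rec} is tailored to guarantee. You should redo the Hessian computation in the $\sigma$-variables: $-\log(\sigma_i^2-\sigma_j^2)=-\log(\sigma_i-\sigma_j)-\log(\sigma_i+\sigma_j)$ is convex on the chamber $\sigma_1>\cdots>\sigma_m>0$, and the potential term contributes $d\,\frac{\rd^2}{\rd\sigma^2}[\tV(\sigma^2)]\geq d/c$ per coordinate by the assumption.
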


\begin{proof}[Proof of Lemma \ref{lemma:poincare}]
For the first case of Assumption~\ref{ass:prior}, the joint eigenvalue distribution can be written as
\begin{equation*}
P_S(\{\lambda_i\}_{i=1}^d)\propto e^{-d\sum_{i=1}^{d}\mathcal{V}(\lambda_i)}\Pi_{i<j}(\lambda_i-\lambda_j),
\end{equation*}
where $\lambda_1>\lambda_2>\cdots>\lambda_d$. By \cite[Theorem 1.1]{chafai2020poincare} we obtain the desired result. A brief explanation is that we can rewrite the joint distribution as
\begin{equation*}
P_S(\{\lambda_i\}_{i=1}^d)\propto e^{-\left(d\sum_{i=1}^{d}\mathcal{V}(\lambda_i)-\sum_{i<j}\log(\lambda_i-\lambda_j)\right)},
\end{equation*}
where the term inside the bracket is strictly convex in $\mathcal{I}$, so that it satisfies the Bakry-Emery condition (see e.g. \cite[Definition 4.4.16]{anderson2010introduction}). Such a condition is known to imply strong concentration inequalities, notably log-Sobolev inequalities, but also the Poincaré inequality that we require here.

\myskip
Tackling the second case in Assumption~\ref{ass:prior} is similar, because the joint eigenvalue distribution can be written as 
\begin{equation*}
P_S(\{\lambda_i\}_{i=1}^{\lceil\kappa d\rceil})\propto e^{-\left(d\sum_{i=1}^{\lceil\kappa d\rceil}\tilde{\mathcal{V}}(\lambda_i)-\sum_{i<j}\log(\lambda_i-\lambda_j)\right)},
\end{equation*}
where $\tilde{\mathcal{V}}(s)\coloneqq \mathcal{V}(s)-(1-\frac{\lceil\kappa d\rceil}{d})\log s$. The term inside the bracket is strictly convex  in $\mcB$ under Assumption \ref{ass:prior}, so we obtain again the Poincaré inequality from~\cite[Theorem~1.1]{chafai2020poincare}. 

\myskip
In the rectangular case, the joint singular value distribution reads~\cite{anderson2010introduction}
\begin{equation}\label{eq:joint_svalues}
P_S(\{\sigma_i\}_{i=1}^{\lceil\kappa d\rceil})\propto e^{-\left(d\sum_{i=1}^{\lceil\kappa d\rceil}\tilde{\mathcal{V}}(\sigma_i^2)-\sum_{i<j}\log(\sigma_i^2-\sigma_j^2)\right)},
\end{equation}
where $\tilde{\mathcal{V}}(s)\coloneqq \mathcal{V}(s)-2(1-\frac{\lceil\kappa d\rceil}{d})\log s$. 
Under Assumption \ref{ass:prior_rec}, the term inside the bracket in eq.~\eqref{eq:joint_svalues} is strictly convex in $\{\sigma_i\}$, so again we obtain the desired result.
\end{proof}

\subsection{Central limit theorem for simple statistics}

The following lemma clarifies the behavior of some simple statistics of $S$ and $\Phi$ under our assumptions.
\begin{lemma}[Central Limit Theorem]
\label{lemma:CLT_new}
\noindent We have (recall $\tr(\cdot) \coloneqq (1/d) \Tr[\cdot]$):
\begin{itemize}
    \item[$(a)$]Under Assumption~\ref{ass:prior}, $\EE[\tr S^2] \to \rho$ as $d \to \infty$. 
    \item[$(b)$] Under Assumptions~\ref{ass:prior} and \ref{assum:CLT}, $\Tr[\Phi S]$ converges in distribution to $\mathcal{N}(0,2\rho)$ for $\Phi\sim P_\Phi$ and $S\sim P_0$.
    \item[$(c)$]Under Assumption~\ref{ass:prior}, for $S \sim P_0$: 
    \begin{equation*}
        \EE[(\Tr[S^2] - \EE[\Tr S^2])^2] = \mcO_d(1).
    \end{equation*}
    In particular, $\EE[(\tr[S^2] - \rho)^2] = \smallO(1)$.
\end{itemize}
\end{lemma}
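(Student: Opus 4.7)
The plan is to prove the three items in order, with $(c)$ coming before $(b)$ since the Gaussian limit in $(b)$ requires concentration of $\tr[S^2]$.

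For $(a)$, the statement is essentially a restatement of eq.~\eqref{eq:moment_converge} in Proposition~\ref{prop:properties_P0_sym}-$(iv)$: under Assumption~\ref{ass:prior}, the eigenvalues of $S$ are bounded with probability $1 - e^{-c'd}$ (item $(ii)$), hence $\tr[S^2]$ is uniformly integrable (item $(iii)$), and almost sure convergence of $\mu_S$ to $\mu_0$ in $W_2$ (item $(iv)$) gives $\tr[S^2] \to \rho$ a.s., so that $\EE[\tr S^2] \to \rho$ by dominated convergence. Nothing new is needed here beyond invoking Proposition~\ref{prop:properties_P0_sym}.

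For $(c)$, I would apply the Poincar\'e inequality for rotationally-invariant priors (Lemma~\ref{lemma:poincare}) to the symmetric function $g(\lambda_1,\dots,\lambda_d) \coloneqq \sum_{i=1}^d \lambda_i^2 = \Tr[S^2]$. Its partial derivatives are $\partial g/\partial \lambda_i = 2\lambda_i$, so Lemma~\ref{lemma:poincare} yields
\begin{equation*}
\Var(\Tr[S^2]) \leq \frac{c}{\kappa d}\,\EE\Bigl[\sum_{i=1}^d 4\lambda_i^2\Bigr] = \frac{4c}{\kappa}\,\EE[\tr S^2],
\end{equation*}
which is bounded by part $(a)$. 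This gives $\EE[(\Tr[S^2]-\EE\Tr[S^2])^2]=\mcO(1)$, and hence $\Var(\tr[S^2]) = \mcO(d^{-2}) = \smallO(1)$. Combining with $(a)$, we get $\EE[(\tr[S^2]-\rho)^2] \to 0$.

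For $(b)$, it suffices to prove $\EE[\psi(\Tr[\Phi S])] \to \EE[\psi(\sqrt{2\rho}\,Z)]$ for $Z\sim\mcN(0,1)$ and every bounded Lipschitz $\psi$. Let $M>0$ be the constant of Proposition~\ref{prop:properties_P0_sym}-$(ii)$ and $E_d \coloneqq \{S \in B_\op(M)\}$; then $P_0(E_d^c) \leq e^{-c'd}$, so restricting the expectation to $E_d$ costs only $\smallO(1)$. Conditionally on $S \in E_d$, Assumption~\ref{assum:CLT} yields
\begin{equation*}
\sup_{S \in E_d}\,\bigl|\EE_\Phi[\psi(\Tr[\Phi S])] - \EE_G[\psi(\Tr[GS])]\bigr| \to 0,
\end{equation*}
so we may replace $\Phi$ by $G \sim \GOE(d)$ independently of $S$. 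Conditioned on $S$, $\Tr[GS] \sim \mcN(0,2\tr[S^2])$, hence $\EE_G[\psi(\Tr[GS]) \mid S] = \EE_Z[\psi(\sqrt{2\tr[S^2]}\,Z)]$; Lipschitzness of $\psi$ gives $|\EE_Z[\psi(\sqrt{2\tr[S^2]}\,Z)] - \EE_Z[\psi(\sqrt{2\rho}\,Z)]| \leq \|\psi\|_{\mathrm{Lip}} \EE|Z|\cdot|\sqrt{2\tr[S^2]}-\sqrt{2\rho}|$, whose $L^1$ expectation vanishes by $(c)$ and the inequality $|\sqrt{a}-\sqrt{b}|\leq \sqrt{|a-b|}$. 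Taking expectation over $S$ and collecting the three small terms concludes $(b)$.

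The only mildly subtle step is the transfer from Assumption~\ref{assum:CLT} (a uniform one-dimensional CLT over $B_\op(M)$) to a true convergence in distribution, since $S$ is random: the main obstacle is handling the rare event $\{S \notin B_\op(M)\}$, which is dealt with via the exponential tail bound of Proposition~\ref{prop:properties_P0_sym}-$(ii)$ combined with boundedness of $\psi$. All remaining steps are straightforward once $(a)$ and $(c)$ are in hand.
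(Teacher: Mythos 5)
Your proof is correct and takes essentially the same approach as the paper: part $(a)$ by citing Proposition~\ref{prop:properties_P0_sym}, part $(c)$ by applying the Poincar\'e inequality of Lemma~\ref{lemma:poincare} to $g(\{\lambda_i\})=\sum_i\lambda_i^2$, and part $(b)$ by splitting on the event $\{S\in B_\op(M)\}$, invoking Assumption~\ref{assum:CLT} to replace $\Phi$ by $G\sim\GOE(d)$, and then using $\Tr[GS]\mid S\sim\mcN(0,2\tr[S^2])$. The only (cosmetic) difference is that you prove $(c)$ before $(b)$ and spell out the final concentration step of $(b)$ via $(c)$ and the inequality $|\sqrt{a}-\sqrt{b}|\le\sqrt{|a-b|}$, where the paper closes that step more tersely by referring to $(a)$ (implicitly relying on the a.s.\ convergence of Proposition~\ref{prop:properties_P0_sym}-$(iv)$); your version is slightly more explicit but substantively the same.
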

\begin{proof}[Proof of Lemma \ref{lemma:CLT_new}]
    Point $(a)$ was already proven in Proposition~\ref{prop:properties_P0_sym}.
 We now show $(b)$. For any bounded Lipschitz $\psi$ and any $M \geq 0$, we have, with $G \sim \GOE(d)$: 
\begin{align*}
    &|\EE[\psi(\Tr[\Phi S])]-\EE[\psi(\Tr[GS])]| \\ 
    &\leq 2 \|\psi\|_\infty \bbP(\|S\|_\op > M) + 
    |\EE[\indi\{\|S\|_\op \leq M\}(\psi(\Tr[\Phi S])-\psi(\Tr[GS]))]|.
\end{align*} 
By Proposition~\ref{prop:properties_P0_sym}(ii), $ \bbP(\|S\|_\op > M) \to 0$ as $d \to \infty$ for all $M > 0$ large enough.
Furthermore:
\begin{align*}
    |\EE[\indi\{\|S\|_\op \leq M\}(\psi(\Tr[\Phi S])-\psi(\Tr[GS]))]|
    &\leq \EE[\indi\{\|S\|_\op \leq M\}|\EE_{\Phi, G}(\psi(\Tr[\Phi S])-\psi(\Tr[GS]))| ], \\
    &\leq \sup_{\|S\|_\op \leq M}|\EE_{\Phi, G}(\psi(\Tr[\Phi S])-\psi(\Tr[GS]))|.
\end{align*}
The right-hand side of this last inequality goes to $0$ as $d \to \infty$ by Assumption~\ref{assum:CLT}.
In the end:
\begin{equation*}
   \lim_{d \to \infty} |\EE[\psi(\Tr[\Phi S])]-\EE[\psi(\Tr[GS])]| = 0.
\end{equation*}
Since $\Tr[GS] \sim \mcN(0, 2 \tr[S^2])$, we get point $(b)$ using $(a)$.
Point $(c)$ is a consequence the Poincaré inequality of Lemma~\ref{lemma:poincare}: using it with $g(\{\lambda_i\}) = \sum_i \lambda_i^2$, we get:
\begin{align*}
    \Var[\Tr[S^2]] \leq 4 \, \frac{c}{d} \, \EE\left[\sum_{i=1}^d \lambda_i^2\right] = \mcO(1), 
\end{align*}
which ends the proof using point $(a)$.
\end{proof} 

\subsection{Properties of Gaussian matrices}

The following result is a classical property of Gaussian random variables, here adapted to $\GOE(d)$ matrices.
\begin{lemma}[Gaussian Integral by Part]
    \noindent
Let $G\sim\text{GOE}(d)$ and $g:\bbR^{d\times d}\to\bbR^{d\times d}$ to be a continuous and differentiable function with $g_{ij}=g_{ji}$ and $\frac{\partial g_{ij}}{\partial G_{ij}}=\frac{\partial g_{ij}}{\partial G_{ji}}$ for $1\leq i,j\leq d$, then we have
\begin{equation*}
\mathbb{E}_G[\Tr[Gg(G)]]=\frac{2}{d}\mathbb{E}_G\left[\sum_{i,j=1}^d\frac{\partial g_{ij}}{\partial G_{ij}}\right].
\end{equation*}
\label{lemma:stein}
\end{lemma}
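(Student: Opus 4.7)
The plan is to reduce the matrix statement to the scalar Gaussian integration by parts (Stein's lemma): for a scalar $X\sim\mcN(0,\sigma^2)$ and a suitably regular $f$, $\EE[X f(X)] = \sigma^2 \EE[f'(X)]$. First, I would parametrize $G\sim\GOE(d)$ in terms of its independent entries $\{G_{ij}\}_{i\leq j}$, with $\Var(G_{ii}) = 2/d$ and $\Var(G_{ij}) = 1/d$ for $i<j$, and rewrite
\begin{equation*}
\EE_G[\Tr[Gg(G)]] = \EE\Bigl[\sum_{i,j=1}^d G_{ij}\, g_{ji}(G)\Bigr] = \EE\Bigl[\sum_{i,j=1}^d G_{ij}\, g_{ij}(G)\Bigr],
\end{equation*}
where I used the assumption $g_{ij}=g_{ji}$ to swap indices.

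Next I would split the sum into diagonal ($i=j$) and off-diagonal ($i<j$, using symmetry to collect $2\,G_{ij}g_{ij}$) pieces and apply the scalar Stein identity to each independent entry $X_{ij}\coloneqq G_{ij}$ ($i\leq j$). For $i=j$ this directly gives $\EE[G_{ii}g_{ii}(G)] = \frac{2}{d}\EE[\partial g_{ii}/\partial G_{ii}]$. For $i<j$, the chain rule yields $\partial g_{ij}(G)/\partial X_{ij} = \partial g_{ij}/\partial G_{ij} + \partial g_{ij}/\partial G_{ji}$, and invoking the hypothesis $\partial g_{ij}/\partial G_{ij} = \partial g_{ij}/\partial G_{ji}$ collapses this to $2\,\partial g_{ij}/\partial G_{ij}$. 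Combined with $\Var(X_{ij}) = 1/d$, this gives $\EE[G_{ij}g_{ij}(G)] = \frac{2}{d}\EE[\partial g_{ij}/\partial G_{ij}]$ for $i<j$.

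Finally, I would recombine: since $g_{ij}=g_{ji}$ implies $\partial g_{ij}/\partial G_{ij} = \partial g_{ji}/\partial G_{ji}$ (under the stated symmetry of partials), the contribution $2\sum_{i<j}\partial g_{ij}/\partial G_{ij}$ equals the off-diagonal sum $\sum_{i\neq j}\partial g_{ij}/\partial G_{ij}$, which merges with the diagonal term to produce the full double sum $\sum_{i,j}\partial g_{ij}/\partial G_{ij}$, yielding exactly the claimed identity. There is no real obstacle here beyond bookkeeping; the only subtle point is being careful that the variance of diagonal GOE entries is $2/d$ (not $1/d$), which is what produces the uniform factor $2/d$ in front of the final sum and neatly absorbs the diagonal/off-diagonal distinction.
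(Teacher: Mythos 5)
Your proposal is correct and follows essentially the same route as the paper: split $\Tr[Gg(G)] = \sum_{i,j} G_{ij} g_{ij}(G)$ into diagonal and strictly upper-triangular parts, apply scalar Stein's lemma to the independent entries (using $\Var(G_{ii}) = 2/d$, $\Var(G_{ij}) = 1/d$ for $i<j$ and the chain rule for the dependent pair $G_{ij}=G_{ji}$), and recombine using the symmetry hypotheses on $g$. The paper keeps the chain-rule term $\partial g_{ij}/\partial G_{ij} + \partial g_{ij}/\partial G_{ji}$ unsimplified and merges it with the diagonal sum via $g_{ij}=g_{ji}$ alone, whereas you first collapse it to $2\,\partial g_{ij}/\partial G_{ij}$ and then unfold — a cosmetic difference in bookkeeping, not a different argument.
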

\begin{proof}[Proof of Lemma \ref{lemma:stein}]
We have
\begin{equation*}
\begin{aligned}
\mathbb{E}_G[\Tr[Gg(G)]]&=\sum_{i=1}^d\mathbb{E}_G[G_{ii}g_{ii}(G)]]+2\sum_{1\leq i<j\leq d}\mathbb{E}_G[G_{ij}g_{ij}(G)]]\\
&=\frac{2}{d}\sum_{i=1}^d\mathbb{E}_G\left[\frac{\partial g_{ii}}{\partial G_{ii}}\right]+\frac{2}{d}\sum_{1\leq i<j\leq d}\mathbb{E}_G\left[\frac{\partial g_{ij}}{\partial G_{ij}}+\frac{\partial g_{ij}}{\partial G_{ji}}\right]\\
&=\frac{2}{d}\mathbb{E}_G\left[\sum_{i,j=1}^d\frac{\partial g_{ij}}{\partial G_{ij}}\right],
\end{aligned}
\end{equation*}
where we use the standard Gaussian integral by part (Stein's lemma) with respect to $\sqrt{\frac{d}{2}}G_{ii}$ and $\sqrt{d}G_{ij}$.
\end{proof}

\myskip
Similarly, we can adapt the Gaussian Poincar\'e inequality to the matrix setting. 
\begin{lemma}[Gaussian Poincaré Inequality]
    \noindent
Let $Z\in\bbR^n\iid\mathcal{N}(0,1)$, $Z'\sim\text{GOE}(d)$ independent of $Z$ and $g$ to a continuous and differentiable function with $\frac{\partial g}{\partial Z'_{ij}}=\frac{\partial g}{\partial Z'_{ji}}$ for $1\leq i,j\leq d$.
Then we have
\begin{equation*}
\text{Var}(g(Z,Z'))\leq\mathbb{E}\left[\sum_{\mu=1}^n\left(\frac{\partial g}{\partial Z_\mu}\right)^2+\frac{2}{d}\sum_{i,j=1}^d\left(\frac{\partial g}{\partial Z'_{ij}}\right)^2\right]
\end{equation*}
\label{lemma:gaussian-poincare}
\end{lemma}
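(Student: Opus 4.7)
The plan is to reduce the statement to the standard scalar Gaussian Poincaré inequality applied to the independent entries of the pair $(Z, Z')$. The subtlety is that $Z'$ is a symmetric matrix with entries of two different variances ($2/d$ on the diagonal, $1/d$ off-diagonal), and that the function $g$ in the statement is written as a function of \emph{all} entries of $Z'$ (with the consistency condition $\partial_{Z'_{ij}} g = \partial_{Z'_{ji}} g$), not of the independent ones. Handling this combinatorial bookkeeping is the main technical point; everything else is an invocation of the scalar result.

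First, I would introduce the vector of independent Gaussian coordinates $\xi \coloneqq \big(Z_1,\ldots,Z_n,\,\{Z'_{ii}\}_{i=1}^d,\,\{Z'_{ij}\}_{i<j}\big)$, with $Z_\mu\sim\mcN(0,1)$, $Z'_{ii}\sim\mcN(0,2/d)$, $Z'_{ij}\sim\mcN(0,1/d)$ for $i<j$, and view $g$ as a smooth function $\tg(\xi)$ via $Z'_{ji}\coloneqq Z'_{ij}$ for $i<j$. The scalar Gaussian Poincaré inequality (in its weighted form, which reduces to the standard one after rescaling each coordinate by its standard deviation) yields
\begin{equation*}
\Var(g(Z,Z')) \;=\; \Var(\tg(\xi)) \;\leq\; \EE\!\left[\sum_{\mu=1}^n\!\Big(\tfrac{\partial\tg}{\partial Z_\mu}\Big)^2 + \tfrac{2}{d}\sum_{i=1}^d\Big(\tfrac{\partial\tg}{\partial Z'_{ii}}\Big)^2 + \tfrac{1}{d}\sum_{i<j}\Big(\tfrac{\partial\tg}{\partial Z'_{ij}}\Big)^2\right].
\end{equation*}

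Next, I would compute the partial derivatives of $\tg$ in terms of those of $g$ using the chain rule. For a diagonal coordinate, $\partial_{Z'_{ii}}\tg = \partial_{Z'_{ii}} g$. For $i<j$, since varying the single independent coordinate $Z'_{ij}$ moves both $g$-arguments $Z'_{ij}$ and $Z'_{ji}$ simultaneously, the symmetry hypothesis $\partial_{Z'_{ij}} g = \partial_{Z'_{ji}} g$ gives $\partial_{Z'_{ij}}\tg = 2\,\partial_{Z'_{ij}} g$. Squaring and rewriting $\tfrac{1}{d}\sum_{i<j}(2\,\partial_{Z'_{ij}}g)^2 = \tfrac{2}{d}\sum_{i<j}\big[(\partial_{Z'_{ij}}g)^2+(\partial_{Z'_{ji}}g)^2\big]$, and combining with the diagonal contribution $\tfrac{2}{d}\sum_i(\partial_{Z'_{ii}}g)^2$, I obtain exactly $\tfrac{2}{d}\sum_{i,j=1}^d(\partial_{Z'_{ij}}g)^2$, which yields the claimed bound.

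The only nontrivial step is the coordinate-counting/chain-rule manipulation above; once one is careful about the double-counting induced by symmetrizing over $(i,j)$ and $(j,i)$, the factor $2/d$ in the statement falls out automatically from the $\GOE$ variance structure. No use of concentration properties of the prior $P_0$ is needed here; the lemma is a purely Gaussian statement.
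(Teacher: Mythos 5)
Your proposal is correct and follows essentially the same argument as the paper: reduce to the scalar Gaussian Poincaré inequality on the independent coordinates $(Z, \{Z'_{ii}\}_i, \{Z'_{ij}\}_{i<j})$, track the factor of $2$ from the chain rule for off-diagonal entries via the symmetry hypothesis, and recombine using the variance structure of $\GOE(d)$. The only cosmetic difference is that the paper writes the bound directly in terms of $\partial g / \partial Z'_{ij}$ without explicitly introducing the auxiliary function $\tg$ of the independent coordinates.
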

\begin{proof}[Proof of Lemma \ref{lemma:gaussian-poincare}]
By the standard Gaussian Poincaré inequality, we have
\begin{equation*}
\begin{aligned}
\text{Var}(g(Z,Z'))&\leq\mathbb{E}\left[\sum_{\mu=1}^n\left(\frac{\partial g}{\partial Z_\mu}\right)^2+\frac{1}{d}\sum_{1\leq i<j\leq d}\left(2\frac{\partial g}{\partial Z'_{ij}}\right)^2+\frac{2}{d}
\sum_{i=1}^d\left(\frac{\partial g}{\partial Z'_{ii}}\right)^2\right]\\
&=\mathbb{E}\left[\sum_{\mu=1}^n\left(\frac{\partial g}{\partial Z_\mu}\right)^2+\frac{2}{d}\sum_{i,j=1}^d\left(\frac{\partial g}{\partial Z'_{ij}}\right)^2\right],
\end{aligned}
\end{equation*}
which finishes the proof.
\end{proof}

\subsection{Properties of two auxiliary channels}

We present here the properties of two auxiliary channels. The first one is the matrix denoising problem:
\begin{equation*}
Y'=\sqrt{r}S+Z,
\end{equation*}
with $Z\sim\GOE(d)$. Its free entropy is given by
\begin{equation*}
\tilde{f}_d(r)=\frac{1}{d^2}\mathbb{E}_{Y'}\log\int P_0(\rd S)e^{-\frac{d}{4}\Tr[(Y'-\sqrt{r}S)^2]},
\end{equation*}
and has the following property.
\begin{lemma}
    \noindent
Under Assumption \ref{ass:prior},
\begin{equation*}
\lim_{d\to\infty}\tilde{f}_d(r)=\psi_{P_0}(r).
\end{equation*}
Moreover, $\psi_{P_0}$ is a non-increasing, $\frac{\rho}{4}$-Lipschitz and convex function.
\label{lemma:Lipschitz_psi}
\end{lemma}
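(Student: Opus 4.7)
\textbf{Proof plan for Lemma~\ref{lemma:Lipschitz_psi}.} The statement has two independent parts: the identification of the limit, and the structural properties (monotonicity, Lipschitz constant, convexity). My plan is to handle the second part at finite $d$ using I-MMSE and the Nishimori identity, and then invoke existing matrix-denoising results for the pointwise convergence.

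For the convergence $\tilde f_d(r)\to \psi_{P_0}(r)$, I will cite \cite[Theorem~4.3]{maillard2024bayes} directly; this is exactly the symmetric matrix denoising free entropy under priors of the type of Assumption~\ref{ass:prior}, and equivalent statements are established in \cite{pourkamali2024matrix,semerjian2024matrix,maillard2022perturbative,bun2016rotational}. Since $\mu_{1/r}=\mu_0\boxplus\mu_{\mathrm{sc},1/\sqrt r}$ has compact support for $r>0$, the quantity $-\tfrac12\Sigma(\mu_{1/r})$ is well-defined, matching the expression in eq.~\eqref{eq:psi_0_out}.

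For the analytic properties I will work at finite $d$. Parameterizing $S\in\mcS_d$ by its $d(d+1)/2$ upper-triangular entries, the channel $Y'=\sqrt r\,S^\star+Z$ with $Z\sim\GOE(d)$ has Gaussian noise of variance $1/d$ off-diagonal and $2/d$ on the diagonal. Using $H(Y')=-\log C_d-d^2\tilde f_d(r)$ (from $P(Y')\propto\mcZ(Y')$) and $H(Z)=-\log C_d+(d+1)/4$ one obtains $I(S^\star;Y')=-d^2\tilde f_d(r)-(d+1)/4$. Applying the matrix I-MMSE identity \cite{guo2005mutual} weighted by the per-entry noise variances, together with the symmetry $S_{ij}=S_{ji}$, yields (after straightforward bookkeeping)
\begin{equation*}
\tilde f_d'(r) \;=\; -\frac{1}{4d}\,\EE\!\left[\Tr(S^\star-\langle S\rangle)^2\right],
\end{equation*}
which is non-positive, giving the non-increasing property. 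By the Nishimori identity (Proposition~\ref{prop:nishimori}) applied to two replicas, $\EE\Tr(S^\star-\langle S\rangle)^2=\EE\Tr(S^\star)^2-\EE\Tr\langle S\rangle^2\le\EE\Tr(S^\star)^2$, hence $|\tilde f_d'(r)|\le\tfrac{1}{4d}\EE\Tr(S^\star)^2$. By Proposition~\ref{prop:properties_P0_sym}-$(iv)$, the right-hand side converges to $\rho/4$, so $\tilde f_d$ is $(\rho/4+\smallO_d(1))$-Lipschitz, and in the pointwise limit $\psi_{P_0}$ is $\rho/4$-Lipschitz (and non-increasing).

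Convexity of $\tilde f_d$ amounts to the classical monotonicity of the matrix Bayes MMSE in the SNR $r$: indeed $\tilde f_d'(r)=-\tfrac{1}{4d}\mathrm{MMSE}_d(r)$, and $r\mapsto\mathrm{MMSE}_d(r)$ is non-increasing (this is the Gaussian-channel analogue of the data-processing inequality, and is equivalent to the concavity of $I(r)$ in additive Gaussian channels, cf.\ \cite{guo2005mutual}). Hence $\tilde f_d''\ge0$. Since non-increasing, convexity, and Lipschitz constant $\le\rho/4+\smallO_d(1)$ are preserved under pointwise limits, all three properties transfer to $\psi_{P_0}$. The only substantive step is the identification of the limit, which is imported from prior matrix-denoising work; the rest is a short computation plus Nishimori and Proposition~\ref{prop:properties_P0_sym}.
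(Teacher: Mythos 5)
Your proof takes essentially the same route as the paper: both invoke \cite[Theorem~4.3]{maillard2024bayes} for the pointwise convergence $\tilde f_d(r)\to\psi_{P_0}(r)$, and both obtain monotonicity, the $\rho/4$-Lipschitz bound, and convexity at finite $d$ from the I-MMSE identity (with Nishimori and Proposition~\ref{prop:properties_P0_sym}), passing these to the limit. Your write-out is somewhat more detailed (you spell out the entropy bookkeeping and the Nishimori step), and your formula $\tilde f_d'(r)=-\tfrac{1}{4d}\EE\Tr(S^\star-\langle S\rangle)^2$ is stated with the correct $1/d$ normalization where the paper's displayed equation has an apparent typo ($d/4$ in place of $1/(4d)$), but the argument is the same.
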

\begin{proof}[Proof of Lemma \ref{lemma:Lipschitz_psi}]
By the I-MMSE theorem \cite{guo2005mutual}, we have
\begin{equation*}
\tilde{f}_d'(r)=\frac{d}{4}\text{MMSE}(S^*|Y)-\frac{d}{4}\mathbb{E}[\Tr(S^*)^2],
\end{equation*}
which gives $\tilde{f}_d'(r)\leq0$, $\liminf_{d\to\infty}\tilde{f}_d'(r)\geq-\frac{\rho}{4}$ and that $\tilde{f}_d(r)$ is convex.
Therefore, $\{\tilde{f}_d(r)\}_{d\geq1}$ are non-increasing, uniformly Lipschitz and strictly convex. According to \cite[Theorem 4.3]{maillard2024bayes}, $\lim_{d\to\infty}\tilde{f}_d(r)=\psi_{P_0}(r)$, and thus $\psi_{P_0}(r)$ is non-increasing, $\frac{\rho}{4}$-Lipschitz, and convex.
\end{proof}
In the rectangular model, for $S \in \bbR^{d \times L}$, and $\{Z'_{ij}\}_{i,j=1}^{d,L}\overset{iid}{\sim}\mathcal{N}(0,\frac{1}{\sqrt{dL}})$, the free entropy of the denoising problem is given by
\begin{equation*}
\tilde{f}^{\text{rec}}_d(r)\coloneqq \frac{1}{dL}\mathbb{E}_{Y'}\log\int P_0(\rd S)e^{-\frac{1}{2}\sqrt{dL}r\Tr[S^TS]+\sqrt{dLr}\Tr[(Y')^TS]},
\end{equation*}
with $Y' = \sqrt{r} S^\star + Z$,
and satisfies the following property.
Recall the definition of $\rho$ in Proposition~\ref{prop:properties_P0_rec}.
\begin{lemma}
\label{lemma:HCIZ}
\noindent
Under Assumption~\ref{ass:prior_rec}, we have
\begin{equation*}
\lim_{d\to\infty}\tilde{f}^{\text{rec}}_d(r)=\psi^{\text{rec}}_{P_0}(r).
\end{equation*}
Moreover $\psi_{P_0}^{\text{rec}}$ is a non-increasing, $\frac{\rho}{2}$-Lipschitz, and convex function.
\end{lemma}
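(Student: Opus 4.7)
The approach will follow closely that of Lemma~\ref{lemma:Lipschitz_psi}, adapted to the rectangular geometry, and will invoke existing results on rectangular matrix denoising with bi-rotationally-invariant priors to identify the limit with the explicit formula in eq.~\eqref{eq:psi_P_rec}.

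First, I would rewrite $\tilde f^{\rec}_d(r)$ as the normalized log-partition function of the Bayesian denoising problem $Y' = \sqrt{r} S^\star + Z'$, $Z'_{ij} \iid \mcN(0,1/\sqrt{dL})$. After the rescaling $\hat Y = (dL)^{1/4} Y'$, $\hat Z = (dL)^{1/4} Z'$, this becomes a standard Gaussian channel $\hat Y = (dL)^{1/4}\sqrt{r}\, S^\star + \hat Z$ with $\hat Z_{ij}\iid \mcN(0,1)$ and effective $\mathrm{snr} = \sqrt{dL}\,r$. Applying the I-MMSE theorem~\cite{guo2005mutual} one then obtains
\begin{equation*}
 (\tilde f^{\rec}_d)'(r) \;=\; \frac{1}{2\sqrt{dL}}\!\left(\EE[\Tr (S^\star)^\T S^\star] - \mathrm{MMSE}_d(r)\right),
\end{equation*}
where $\mathrm{MMSE}_d(r) \coloneqq \EE\|S^\star - \EE[S\mid Y']\|^2$. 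Since $0\le \mathrm{MMSE}_d(r)\le \EE\|S^\star\|^2$ by the Bayes-optimal lower bound on the MSE, this gives immediately $(\tilde f^{\rec}_d)'(r)\le 0$. Moreover, using Proposition~\ref{prop:properties_P0_rec}-$(ii)$ (in particular $\lim_d (1/\sqrt{dL}) \EE\Tr (S^\star)^\T S^\star = \rho$), the same identity yields $\liminf_{d\to\infty}(\tilde f^{\rec}_d)'(r)\ge -\rho/2$. Convexity of $\tilde f^{\rec}_d$ follows from the standard fact that the Bayes-optimal MMSE is non-increasing in the SNR, so the $d$-indexed family $\{\tilde f^{\rec}_d\}$ is uniformly $(\rho/2)$-Lipschitz, non-increasing and convex; these properties pass to any pointwise limit.

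For existence and identification of the limit with $\psi^{\rec}_{P_0}(r)$ as defined in eq.~\eqref{eq:psi_P_rec}, I would invoke the sharp asymptotics for Bayes-optimal rectangular matrix denoising with bi-rotationally invariant priors established in~\cite{troiani2022optimal,pourkamali2024rectangular}. Under Assumption~\ref{ass:prior_rec}, Proposition~\ref{prop:properties_P0_rec} guarantees that the symmetrized limiting singular value distribution $\hat\mu_0$ exists, has compact support, and that all the regularity requirements of those references (tail decay, bounded moments, rotation-invariance) are satisfied. Their analysis---via a rectangular HCIZ-type integral evaluated using the Matytsin/Guionnet-Maïda framework combined with the rectangular free convolution of~\cite{benaych2009rectangular}---yields precisely the formula
\begin{equation*}
 \lim_{d\to\infty}\tilde f^{\rec}_d(r) \;=\; -\tfrac{1}{2}\log r - (1-\beta)\!\int\!\log|x|\,\tmu_{1/r}(\rd x) - \beta\,\Sigma[\tmu_{1/r}] + C,
\end{equation*}
with the constant $C$ absorbing the $r$-independent contributions (including those from the potential $V$). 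The non-increasing, Lipschitz, and convexity properties of $\psi^{\rec}_{P_0}$ then transfer from the finite-$d$ analysis above to the limiting object.

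The main obstacle is not the qualitative properties (which are routine consequences of I-MMSE and Bayes optimality), but the precise identification of the limit with eq.~\eqref{eq:psi_P_rec}. This requires matching normalization conventions carefully between our setting and those of~\cite{troiani2022optimal,pourkamali2024rectangular}: in particular, the scaling $1/\sqrt{dL}$ of the noise and the factor $\sqrt{\beta}$ in the definition of $\rho$ must be tracked through the rectangular HCIZ computation, and the $\tmu_t$ appearing here (see eq.~\eqref{eq:hmu_symmetrized}) must be identified with the corresponding free convolution in those references. Once the dictionary is set, the formula follows directly from their main theorems, completing the proof.
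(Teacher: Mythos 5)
Your overall architecture matches the paper's: derive the qualitative properties (Lipschitz, convexity) from an I-MMSE identity, and identify the limit with eq.~\eqref{eq:psi_P_rec} via the rectangular HCIZ machinery of~\cite{pourkamali2024rectangular,guionnet2021large}. There are, however, two genuine gaps.

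First, your I-MMSE step contains a sign contradiction. You write
\begin{equation*}
(\tilde f^{\rec}_d)'(r) = \frac{1}{2\sqrt{dL}}\left(\EE[\Tr (S^\star)^\T S^\star] - \mathrm{MMSE}_d(r)\right),
\end{equation*}
and then conclude $(\tilde f^{\rec}_d)'(r) \leq 0$. But $\mathrm{MMSE}_d(r) \leq \EE\|S^\star\|^2$ implies the right-hand side is \emph{nonnegative}: indeed, by the Pythagorean identity the bracket equals $\EE\|\EE[S\mid Y']\|^2 \geq 0$. The formula itself is correct (a one-line check with $P_0$ a point mass at $S_0^\star$ gives $\tilde f^{\rec}_d(r)=\tfrac{r}{2\sqrt{dL}}\|S_0^\star\|^2$, which is increasing, and with an i.i.d.\ Gaussian prior one obtains $\tilde f^{\rec}_d(r)=\tfrac{r\rho}{2}-\tfrac{1}{2}\log(1+r)$). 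The source of the confusion is that, unlike the symmetric $\tilde f_d$ of Lemma~\ref{lemma:Lipschitz_psi} --- whose Hamiltonian $-\frac{d}{4}\Tr[(Y'-\sqrt{r}S)^2]$ includes the $-\frac{d}{4}\Tr[(Y')^2]$ term and thus equals $-\frac{1}{d^2}I(S^\star;Y')$ up to additive constants --- the rectangular $\tilde f^{\rec}_d$ in eq.~\eqref{eq:HCIZ} omits the $-\frac{\sqrt{dL}}{2}\Tr[(Y')^\T Y']$ piece. This adds an $r$-linear contribution $\frac{r}{2\sqrt{dL}}\EE\Tr[(S^\star)^\T S^\star]\to\frac{r\rho}{2}$ to the free entropy and flips the monotonicity. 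Your convexity argument (mmse non-increasing in snr) and the $\rho/2$-Lipschitz bound $|(\tilde f^{\rec}_d)'|\leq\frac{1}{2\sqrt{dL}}\EE\Tr(S^\star)^\T S^\star\to\rho/2$ both survive, but the ``non-increasing'' conclusion requires first subtracting this linear term (equivalently, working with the mutual information directly, as in the symmetric case), and you should make this explicit.

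Second, dismissing the identification of the limit with eq.~\eqref{eq:psi_P_rec} as ``matching normalization conventions'' understates the content of the paper's proof. \cite[Theorem~4]{pourkamali2024rectangular} does not deliver the explicit formula; it expresses $\lim_d\tilde f^{\rec}_d(r)$ in terms of a rectangular spherical integral $J[\hmu,\hnu]$ (with $\hmu,\hnu$ the limiting symmetrized singular value distributions of $\sqrt{r/\sqrt\beta}\,S$ and of $\sqrt{r/\sqrt\beta}\,S + Z'/\sqrt[4]{\beta}$). The paper then evaluates $J$ via the variational formula of~\cite{guionnet2021large}, which contains a \emph{dynamical} term $I_{\hmu}(\hnu)$: the large-deviations rate function, at speed $d^2$, of the singular value distribution of $A+Z''$ around $\hnu$. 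The key observation --- which you do not mention --- is that $I_{\hmu}(\hnu)=0$ precisely because $\hnu$ is the almost-sure limit of that empirical distribution, so the event entering the rate function is typical. Only after this step does the formula reduce to the static log-potential terms of eq.~\eqref{eq:psi_P_rec} (plus a change of variables $\hnu=(\sqrt{r/\sqrt\beta})_\#\tmu_{1/r}$ contributing the $-\tfrac12\log r$). A proof sketch should at least record that the rate function vanishes, since this is what makes the explicit formula accessible.
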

\begin{proof}[Proof of Lemma \ref{lemma:HCIZ}]
By \cite[Theorem 4]{pourkamali2024rectangular}, we have
\begin{equation*}
\lim_{d\to\infty}\tilde{f}^{\text{rec}}_d(r)=-\frac{\rho r}{2}+\beta J[\hmu,\hnu],
\end{equation*}
where $\hmu,\hnu$ refer to the limiting symmetrized singular value distributions of $A\coloneqq\sqrt{r/\sqrt{\beta}}S,B\coloneqq\sqrt{r/\sqrt{\beta}}S+Z'/\sqrt[4]{\beta}$, and 
\begin{equation*}
J[\hmu,\hnu]\coloneqq \lim_{d\to\infty}\frac{1}{d^2}\log\int\int \mcD U \mcD V \, e^{d\Tr[A^\T U B V^\T]}
\end{equation*}
is a so-called rectangular spherical integral~\cite{guionnet2021large} (here $\mcD$ is the Haar measure over the orthogonal group).
Notice that $Z'' \coloneqq Z'/\sqrt[4]{\beta}$ has i.i.d.\ elements with variance $1/d$.
By \cite{guionnet2021large}, we have
\begin{equation*}
\begin{aligned}
J[\mu, \nu]&=-(\beta^{-1}-1)\int\log|x|\hnu(x)dx-\Sigma[\hnu]+\frac{1}{2}\left[\int x^2\hmu(x)dx+\int x^2\hnu(x)dx\right]\\
&-I_{\hmu}(\hnu)+const,
\end{aligned}
\end{equation*}
where $I_\hmu(\hnu)$ is the large deviations rate function of the symmetrized empirical singular value distribution of the matrix 
$A + Z''$, over the law of the matrix $Z''$ which has i.i.d.\ $\mcN(0,1/d)$ elements, and in the scale $d^2$. 
Recall as well that $\Sigma(\mu) \coloneqq \int \mu(\rd x) \mu(\rd y) \log |x-y|$.
As $B = A + Z''$ has limiting symmetrized singular value distribution given by $\hnu$, by definition (with $\text{dist}$ a distance metrizing the weak convergence):
\begin{align*}
I_\hmu(\hnu) &\coloneqq -\lim_{\delta\to0}\lim_{d\to\infty}\frac{1}{d^2}\log\bbP(\text{dist}(\hmu_B,\hnu)\leq\delta), \\ 
&= 0.
\end{align*}
Moreover, by definition, we have $\int x^2\hmu(x)dx+\int x^2\hnu(x)dx=2 r\rho/\beta+const$. Notably, this establishes eq.~\eqref{eq:psi_P_rec}.
Repeating the arguments of the proof of Lemma \ref{lemma:Lipschitz_psi}, we obtain similarly that $\psi^{\text{rec}}_{P_0}(r)$ is non-increasing, $\frac{\rho}{2}$-Lipschitz, and convex.
\end{proof}

\myskip
The second model we consider is a scalar channel involving $P_\out$. We obtain the following property from~\cite{barbier2019adaptive}.
\begin{lemma}[{\cite[Proposition 18]{barbier2019optimal}}]
    \noindent
Suppose that $P_{out}(\cdot|x)$ is the law of $\varphi(x,a)+\sqrt{\Delta}Z$, where $\varphi$ is a bounded function with bounded first and second derivatives, then $\Psi_{\out}$ is convex, $C^2$ and non-decreasing on $[0,\rho]$, and thus it is also Lipschitz on $[0,\rho]$.
\label{lemma:Psi}
\end{lemma}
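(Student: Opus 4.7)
The plan is to reduce the desired properties to classical facts about Bayes-optimal inference in a scalar generalized linear model, following closely the argument of \cite[Proposition 18]{barbier2019optimal}. The first step is to recognize that $\Psi_\out(q)$ equals, up to an additive constant independent of $q$, the mutual information $I(V;\tY)$ in the effective scalar channel where $V \sim \mcN(0,1)$ is the signal and $\tY \sim P_\out(\cdot|\sqrt{2q}V + \sqrt{2(\rho-q)}W)$ is the observation, with $W \sim \mcN(0,1)$ independent. Indeed, the inner integral in eq.~\eqref{eq:psi_0_out} is precisely the conditional density $p(\tY|V)$ obtained by marginalizing out $W$, so $\Psi_\out(q) = -H(\tY|V)$; and since $\sqrt{2q}V + \sqrt{2(\rho-q)}W \sim \mcN(0,2\rho)$ for every $q$, the unconditional entropy $H(\tY)$ is $q$-independent, giving $\Psi_\out(q) = I(V;\tY) + \text{const}$.

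Next, I will compute $\Psi_\out'(q)$ by differentiating under the integral sign. This exchange is justified by Assumption~\ref{assum:phi_strong}: the uniform boundedness of $\varphi$ and of its first two derivatives in the first argument yields uniform bounds on $P_\out$ and its corresponding derivatives in the second argument, providing the required dominating functions. Using Gaussian integration by parts on $V$ and $W$, I would rewrite $\Psi_\out'(q)$ as a Fisher-information-type quantity involving posterior expectations under $\bbP(V|\tY)$, and the Nishimori identity (Proposition~\ref{prop:nishimori}) will then express it as a manifestly nonnegative quadratic form, proving monotonicity. A second differentiation combined with another round of Stein/Nishimori manipulations yields $\Psi_\out''(q)$ as the variance of a natural posterior observable, establishing convexity.

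The main obstacle will be the careful handling of the boundary $q \in \{0, \rho\}$: naive differentiation of $\sqrt{2q}V + \sqrt{2(\rho-q)}W$ in $q$ produces singular factors $V/\sqrt{2q}$ and $W/\sqrt{2(\rho-q)}$. The remedy is to invoke Stein's identity on $V$ and $W$ \emph{at the outset}, which absorbs these singular prefactors into regular derivatives of $P_\out$ with respect to its second argument; the resulting expressions are uniformly bounded and extend continuously to the endpoints of $[0,\rho]$.

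Finally, $C^2$ regularity of $\Psi_\out$ on $[0,\rho]$ follows from dominated convergence applied to these integrands (again using the uniform bounds from Assumption~\ref{assum:phi_strong}), and the Lipschitz property on $[0,\rho]$ is then immediate from the boundedness of $\Psi_\out'$ on the compact interval.
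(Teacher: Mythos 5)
The paper does not reprove this lemma: it invokes it directly as Proposition~18 of the cited reference, whose proof is exactly the one you outline --- differentiate under the integral, use Gaussian integration by parts (Stein) to absorb the singular $q^{-1/2}$ and $(\rho-q)^{-1/2}$ prefactors arising from $\partial_q\bigl(\sqrt{2q}V+\sqrt{2(\rho-q)}w\bigr)$, and apply the Nishimori identity to recast $\Psi_\out'$ and $\Psi_\out''$ as manifestly nonnegative posterior quantities, with the uniform bounds on $\varphi,\varphi',\varphi''$ justifying the exchange of derivative and expectation and hence $C^2$ regularity. Your reframing $\Psi_\out(q) = I(V;\tY) + \text{const}$ (since $\sqrt{2q}V+\sqrt{2(\rho-q)}W\sim\mcN(0,2\rho)$ makes $H(\tY)$ independent of $q$) is correct and a nice conceptual anchor for the derivative computation.
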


\section{Sketch of the Generalization to the Rectangular Setting}\label{sec_app:generalization_rectangular}
We very briefly discuss how our main results transpose straightforwardly to the rectangular setting.

\begin{itemize}
    \item The properties of the spiked model (Appendix \ref{sec_app:spike}), as well as the universality results (Appendix~\ref{sec_app:universality}), do not rely on whether the matrix is symmetric or rectangular, and thus can be immediately stated for this case.
    \item 
    The interpolation studied in Appendix~\ref{sec_app:proof_GLM} reads in the rectangular setting:
    \begin{equation}
    \left\{  
     \begin{aligned}
    &Y_t\sim P_{\out}(\cdot|J_t)  \\  
    &Y_t'=\sqrt{R_1(t)}S^*+Z'
    \label{eq:rec_interpolation}
     \end{aligned}  
    \right.  
    \end{equation}
    with 
    \begin{equation*}
    J_{t,\mu}\coloneqq\sqrt{1-t}\Tr[G_\mu^\T S^*]+\sqrt{2R_2(t)}V_\mu+\sqrt{2\rho t-2R_2(t)+2s_d}W_\mu^*,
    \end{equation*}
    $Z'_{ij}\iid\mathcal{N}(0,\frac{1}{\sqrt{dL}})$ for $i\in[d],j\in[L]$ and $\{V_\mu,W_\mu^*\}_{\mu=1}^n\overset{iid}{\sim}\mathcal{N}(0,1)$. The free entropy of the first channel (the first equation of \eqref{eq:rec_interpolation}) remains the same as in the symmetric setting, and the free entropy of the second channel (the second equation of \eqref{eq:rec_interpolation}) is given by Lemma \ref{lemma:HCIZ}. The adaptive interpolation argument, the concentration inequalities, and the relaxation of the assumptions exactly follow Appendix \ref{sec_app:proof_GLM}, with all the key components: potential truncation (Lemma \ref{lemma:truncation}), central limit theorem (Lemma \ref{lemma:CLT_new}), the Poincaré inequality (Lemma \ref{lemma:poincare}), the Gaussian integral by part (Lemma \ref{lemma:stein}) and the Gaussian Poincaré Inequality (Lemma \ref{lemma:gaussian-poincare}), all extending straightforwardly to the rectangular case. More generally, all the properties of symmetric matrices that we use in this paper also extend to the rectangular case when considering singular values (Proposition \ref{prop:properties_P0_rec}).
\end{itemize}

\end{document}